\theoremstyle{plain}
\newtheorem{theorem}{Theorem}[section]
\newtheorem{lemma}[theorem]{Lemma}
\theoremstyle{remark}
\newtheorem{definition}[theorem]{Definition}
\newtheorem{remark}{Remark}
\theoremstyle{plain}
\newtheorem{corollary}[theorem]{Corollary}
\newtheorem{proposition}[theorem]{Proposition}
\begin{document}

\begin{frontmatter}
\title{Estimating Unbounded Density Ratios: Applications in Error Control under Covariate Shift}
\runtitle{Density Ratio Estimation and Error Control}

\begin{aug}
\author[A]{\fnms{Shuntuo}~\snm{Xu}\ead[label=e1]{oaksword@163.com}},
\author[A]{\fnms{Zhou}~\snm{Yu}\ead[label=e2]{zyu@stat.ecnu.edu.cn}}
\and
\author[B]{\fnms{Jian}~\snm{Huang}$^*$\ead[label=e3]{j.huang@polyu.edu.hk}}
\address[A]{School of Statistics, East China Normal University\printead[presep={ ,\ }]{e1,e2}}

\address[B]{Departments of Data Science and Artificial Intelligence, and Applied Mathematics \\ The Hong Kong Polytechnic University\printead[presep={,\ }]{e3}}
\end{aug}

\begin{abstract}
The density ratio is an important metric for evaluating the relative likelihood of two probability distributions, with extensive applications in statistics and machine learning. However, existing estimation theories for density ratios often depend on stringent regularity conditions, mainly focusing on density ratio functions with bounded domains and ranges. In this paper, we study density ratio estimators using loss functions based on least squares and logistic regression. We establish upper bounds on estimation errors with standard minimax optimal rates, up to logarithmic factors. Our results accommodate density ratio functions with unbounded domains and ranges. We apply our results to nonparametric regression and conditional flow models under covariate shift and identify the tail properties of the density ratio as crucial for error control across domains affected by covariate shift. We provide sufficient conditions under which loss correction is unnecessary and demonstrate effective generalization capabilities of a source estimator to any suitable target domain. Our simulation experiments support these theoretical findings, indicating that the source estimator can outperform those derived from loss correction methods, even when the true density ratio is known.
\end{abstract}

\begin{keyword}[class=MSC]
\kwd[Primary ]{62G05}
\kwd{62G08}
\kwd[; secondary ]{68T07}
\end{keyword}

\begin{keyword}
\kwd{Conditional flow models}
\kwd{deep neural network}
\kwd{local H\"{o}lder class}
\kwd{nonparametric regression}
\kwd{transfer learning}
\end{keyword}

\end{frontmatter}

\section{Introduction}

The density ratio is a crucial metric for assessing the relative likelihood of two probability distributions. By comparing the densities of these distributions, the density ratio quantifies how one distribution differs from another. It has extensive applications across various areas, including  nonparametric regression \citep{sugiyama2007direct, tsuboi2009direct}, generative learning \citep{grover2018boosted, gao2022deep, heng2024generative}, change-point detection \citep{liu2013change, wang2023change}, and reinforcement learning \citep{liu2018breaking, chen2022offline}. In this paper, we study the theoretical properties of density ratio estimation using Bregman divergence. Our results accommodate density ratio functions with unbounded domains and ranges. We apply our results to covariate shift problems in the context of nonparametric regression and conditional distribution estimation using continuous flow models.

Let $X^s$ and $X^t$ represent two $d$-dimensional random vectors corresponding to the source and target domains, respectively. We assume that their probability measures are absolutely continuous with respect to the Lebesgue measure. As a consequence, $X^s$ and $X^t$ admit well-defined probability density functions, denoted by $p(x)$ and $q(x)$, respectively. Furthermore, assume that $X^t$ is absolutely continuous with respect to $X^s$. To be more specific, by defining $\mathcal{X}^s=\{x: p(x)>0\}$ and $\mathcal{X}^t=\{x: q(x)>0\}$, we suppose that $\mathcal{X}^t\subset \mathcal{X}^s$. The density ratio is then defined as $r_0(x)=q(x)/p(x)\in [0, \infty)$, where we adopt the convention that $0/0=0$. Covariate shift occurs when $p(x)\neq q(x)$ but the conditional distribution of the corresponding response variable given the covariate remains constant across both domains.

In practice, we only observe samples $\{X^s_1, \dots, X^s_n\}$ and $\{X^t_1, \dots, X^t_n\}$ from the source and target domains. Therefore, the true density ratio remains unknown and must be estimated.  Various density ratio estimation methods have been proposed in the literature \citep{suzuki2009mutual, sugiyama2012density, kremer2015nearest, rhodes2020telescoping}. However, the theoretical analyses accompanying these estimators are notably limited. Most theoretical advancements were derived under stringent conditions \citep{nguyen2010estimating, yamada2013relative, lin2023estimation}, often assuming that the density ratio was bounded from above or below, which may not be satisfied in practice.

{\color{black} We observe that unbounded density ratios are quite common. For instance, consider a scenario where the source distribution is $\text{Gamma}(1, 1)$ and the target distribution is $\text{Gamma}(2, 1)$. It is clear that the density ratio $r_0(x)=x\mathds{1}(x>0)$ diverges as $x\to \infty.$ In this example, both the domain and range of $r_0(x)$ are unbounded, and existing results on estimation error that assume a bounded density ratio do not apply to this simple case.}

We study the estimation error of density ratio estimators when both the domain and range of $r_0(x)$ are allowed to be unbounded. The estimators we consider are established based on the Bregman divergence induced by certain differentiable and strictly convex  functions 
\citep{bregman1967relaxation, kato2021non}. Particularly, we focus on two specific cases,
including the least squares loss and the logistic regression-based loss. It is important to note that these two types of loss functions present distinct continuity patterns, therefore, different regularity conditions are required for their respective analyses. Our results show that even when the density ratio is not bounded from both above and below, the estimation can still achieve nearly minimax optimal results, up to factors of logarithms.

Recently, \cite{feng2024deep} established a theoretical guarantee in handling unbounded density ratio functions under mild moment conditions. They required the truncated density ratio function to belong to a H{\"o}lder class defined on $[0, 1]^d$. This limitation inadvertently restricts flexibility when dealing with unbounded covariate domains. Furthermore, the rationale behind this constraint appears to be mainly for technical convenience rather than based on practical considerations. In contrast, the local H{\"o}lder class we propose in Subsection \ref{subsec: local_holder_class} effectively addresses challenges associated with unbounded covariate domains while enhancing interpretability.

Furthermore, another significant deficiency of assuming a bounded density ratio relates to downstream tasks. When the density ratio is bounded by a universal constant, the estimation process may be redundant in a supervised learning task subject to covariate shift. An estimator derived solely based on source data can potentially generalize to the target domain without loss of efficacy, provided that $\sup_x r_0(x)\le B$ for some constant $B>0$, in the sense that the expected excess risk in the target domain exhibits the same convergence rate to that in the source domain. Specifically, \cite{ma2023optimally} demonstrated such property in RKHS-based nonparametric regression. This raises a natural question: Is this still true for unbounded density ratios, and if so, under what conditions?

 To address this question, the tail behavior of the density ratio $r_0(X^s)$ is crucial. In more detail, the assumption that $r_0(X^s)$ is sub-exponentially distributed, articulated in our analysis of density ratio estimation,  enables us to concentrate on the region where $r_0(X^s)\le c\cdot\log n$ with $c$ being a constant. Contributions from the tail beyond this range have a negligible impact on the upper bound of the expected excess risk. This observation motivates us to rethink the necessity of loss correction (also termed as importance reweighting) through density ratios \citep{liu2015classification, fang2020rethinking, ma2023optimally}. Surprisingly and interestingly, we discover that, in the absence of such correction, the expected excess risk of a source estimator in the target domain can still be effectively controlled by its counterpart in the source domain, provided that the tail of $r_0(X^s)$ is not excessively heavy. To substantiate this phenomenon, we develop a series of general results with progressively relaxed assumptions. Furthermore, we illustrate these results through two specific cases related to nonparametric regression and conditional distribution estimation using continuous flow models.

To summarize, this paper makes two significant contributions. Firstly, we extend the estimation theory for density ratios to include cases with unbounded domains and ranges. By carefully analyzing the divergence pattern of $r_0(x)$ and the tail behavior of
$r_0(x^s)$ over the source domain, we obtain nearly optimal results. Our estimators are implemented using deep neural networks, which are particularly well-suited for handling unbounded multivariate functions using the truncation technique. Secondly, in nonparametric regression and conditional flow models under covariate shift, we identify specific regularity conditions under which classical loss correction is unnecessary, while still maintaining control over the expected excess risk in the target domain. This finding suggests that a plug-in strategy can be effectively employed in downstream tasks within the target domain by using an estimator derived from the source domain, especially when the source data is significantly more accessible.

The remainder of this paper is organized as follows: In Section \ref{sec: preliminaries}, we provide a brief introduction to density ratio estimation using Bregman divergence. Section \ref{sec: dre_efficiency} presents the theoretical results concerning density ratio estimators derived from two types of loss functions, which are special cases of Bregman divergence. In Section \ref{subsec: drc_generic_properties},
we provide the conditions under which loss correction is unnecessary under covariate shift.
In Section \ref{nrcfm}, we apply these results to nonparametric regression and conditional flow models under covariate shift.
Section \ref{sec: numerical_results} includes several numerical experiments that support our theoretical findings. Finally, Section \ref{sec: conclusion} offers a brief conclusion along with further discussions. Technical details are included in the Supplementary Materials.

\section{Preliminaries}\label{sec: preliminaries}
In this section, we present the preliminaries that will be important in the subsequent sections.

\subsection{Bregman divergence}\label{subsec: bregman_divergence}

The Bregman divergence \citep{bregman1967relaxation} quantifies the difference between two points based on a differentiable and strictly convex function $\varphi$. We present a formal definition of the Bregman divergence below.

\begin{definition}[Bregman divergence]\label{def: bregman_divergence}
Let $\varphi: \mathcal{X}\to\mathbb{R}$ be a differentiable and strictly convex function where $\mathcal{X}\subset\mathbb{R}$ is a convex set. The Bregman divergence associated with $\varphi$ for two scalars $x$ and $y$, denoted as $D_{\varphi}(x\| y)$, is defined by
$$
D_{\varphi}(x\| y)=\varphi(x)-\varphi(y)-\varphi'(y)(x-y),
$$
where $\varphi'$ represents the derivative of $\varphi$.
\end{definition}

Due to the convexity of $\varphi$, it follows that $D_{\varphi}(x\| y)=0$ implies $x=y$ (see the Supplementary Materials for more details). When $\varphi(x)=\varphi_{\mathrm{LS}}(x)=(x-1)^2$, the Bregman divergence $D_{\varphi}(x\| y)$ coincides with the least squares loss, specifically expressed as  $D_{\mathrm{LS}}(x\| y)=(x-y)^2$. On the other hand, when $\varphi(x)=\varphi_{\mathrm{LR}}(x)=x\log x-(x+1)\log(x+1)$, the Bregman divergence takes the form $D_{\varphi}(x\| y)=x\log x-(x+1)\log(x+1)+\log(y+1)-x\log y+x\log(y+1)$, which is associated with the logistic regression-based loss. This divergence will be referred to as $D_{\mathrm{LR}}(x\| y)$ hereafter.

\subsection{Local H{\"o}lder class} \label{subsec: local_holder_class}

The H{\"o}lder continuous condition is basic for analyzing the risk convergence rate of estimators in density estimation \citep{yang1999information} and nonparametric regression \citep{gyorfi2002distribution, jiao2023deep}. Definition \ref{def: holder_class} provides a characterization of the H{\"o}lder class.

\begin{definition}[H{\"o}lder class]\label{def: holder_class}
Let $\beta=s+r$ where $s=\lfloor \beta\rfloor\in \mathbb{N}$ and $r\in (0, 1]$. Here, $\lfloor \beta\rfloor$ denotes the integer strictly smaller than $\beta$ and $\mathbb{N}$ is the set of nonnegative integers. For a finite constant $B>0$, the H{\" o}lder class of functions defined on the $d$-dimensional unit hypercube and yielding a scalar as output, denoted by $\mathcal{H}^{\beta}([0, 1]^d, B)$, is defined as
$$
\mathcal{H}^{\beta}([0, 1]^d, B)=\left\{f: [0, 1]^d\to\mathbb{R}, \max_{\|\alpha\|_1\le s}\|\partial^{\alpha}f\|_{\infty}\le B, \max_{\|\alpha\|_1=s}\sup_{x\ne y}\frac{|\partial^{\alpha}f(x)-\partial^{\alpha}f(y)|}{\|x-y\|_2^r}\le B\right\}.
$$
Here, $\alpha=(\alpha_1, \dots, \alpha_d)^{\top}\in\mathbb{N}^d$, $\|\alpha\|_1=\sum_{i=1}^d\alpha_i$, and $\partial^{\alpha}=\partial^{\alpha_1}\cdots \partial^{\alpha_d}$. In addition, we call $\beta$ the smoothness index.
\end{definition}

Furthermore, for $u\in\mathbb{R}_+$ where $\mathbb{R}_+$ denotes the set of positive real numbers, let $B_u: \mathbb{R}_+\to \mathbb{R}_+$ be a function of $u$. For arbitrary function $f$, let $f_{|\mathcal{X}}$ represent the function $f$ constrained on a domain $\mathcal{X}$, such that
$$
f_{|\mathcal{X}}(x)=
\begin{cases}
    x, & x\in\mathcal{X}, \\
    0, & \text{otherwise}.
\end{cases}
$$
Then, the local H{\" o}lder class is defined in Definition \ref{def: local_holder_class}, which is a natural extension for the original H{\"o}lder class.

\begin{definition}[Local H{\"o}lder class]\label{def: local_holder_class}
 A local H{\" o}lder class with smoothness index $\beta$ and divergence regime $B_u$, denoted by $\mathcal{H}^{\beta}_{\mathrm{Loc}}(\mathbb{R}^d, B_u)$, is defined as
$$
\mathcal{H}^{\beta}_{\mathrm{Loc}}(\mathbb{R}^d, B_u)=\left\{f: \mathbb{R}^d\to \mathbb{R}, g(x)=f_{|[-u, u]^d}(2ux-u\mathrm{1}_{d})\in \mathcal{H}^{\beta}([0, 1]^d, B_u)\ \text{for any } u>0\right\}.
$$
Here, $\mathrm{1}_d$ represents the $d$-dimensional vector with all entries equal to 1.
\end{definition}

\begin{remark}
Consider the function $f(x)=x^m$ for $x\in\mathbb{R}$, where $m$ is an integer no less than 2. Let $s=m-1$ and $r=1$. For $u>0$ and $x\in [-u, u]$, let $g(x)=f(2ux-u)=u^m(2x-1)^m$. A straightforward calculation suggests that $f\in \mathcal{H}^{m}_{\mathrm{Loc}}(\mathbb{R}^d, m!2^mu^m)$. Furthermore, it can be verified that $f\in \mathcal{H}^{\ell}_{\mathrm{Loc}}(\mathbb{R}^d, m!2^mu^m)$ for any $\ell\ge m$.
\end{remark}

\subsection{Neural networks}

A neural network implemented in the multi-layer perceptron (MLP) architecture comprises a series of linear transformations and nonlinear activations. While diverse innovative network architectures have been proposed, demonstrating impressive performance, such as convolutional neural networks \citep{lecun1998gradient}, residual networks \citep{he2016deep}, and transformers \citep{vaswani2017attention}, we focus on the most fundamental representation, MLP, in this paper. Mathematically, a function $f$ implemented by an MLP with depth $L$ can be expressed as
$$
f(x)=\phi_{L+1}\circ \sigma_{L}\circ \phi_{L}\circ \cdots \circ \sigma_1\circ \phi_1(x),
$$
where $\phi_i(x)=W_ix+b_i$ with $W_i$ a matrix of $d_i$ rows and $d_{i-1}$ columns, $b_i$ a $d_i$-dimensional vector, and we let all the activations be the rectified linear unit (relu), i.e., $\sigma_i(x)=\max(x, 0)$ functioning by element, for $i=1, \dots, L+1$. The width of a network is defined as $\max_{i=1, \dots, L}d_i$.

A truncated neural network function can be represented by a deeper neural network. To illustrate this, let us define a truncation operator, denoted by $T_{a, b}: \mathbb{R}\to\mathbb{R}$ for $a<b$, expressed as
$$
T_{a, b}(x)=
\begin{cases}
    a, & x<a, \\
    x, & a\le x\le b, \\
    b, & x>b.
\end{cases}
$$
For the cases where $a<0<b$, it follows that
$$
T_{a, b}(x)=\mathrm{relu}(-\mathrm{relu}(-x+b)+b)-\mathrm{relu}(-\mathrm{relu}(x-a)-a).
$$
When $0\le a<b$, we have
$$
T_{a, b}(x)=\mathrm{relu}(-\mathrm{relu}(-x+b)+b-a)+a.
$$
This property facilitates the enforcement of boundedness within neural network functions, which is crucial in theoretical analysis where it is presumed that an estimator does not grow unrestrainedly.

For simplicity, we denote the function space consisting of elements implemented by MLPs with output dimension $d_{\mathrm{out}}$, depth $L$, width $M$, number of parameters $S$, uniformly upper bounded by a scalar $\bar{\delta}$ and lower bounded by another scalar $\underline{\delta}$, as $\mathcal{F}_{\mathrm{NN}}^{d_{\mathrm{out}}}$. The values of $\bar{\delta}$ and $\underline{\delta}$, typically relied on the sample size, should be carefully determined to avoid sub-optimal or trivial results. When $d_{\mathrm{out}}=1$, we abbreviate $\mathcal{F}_{\mathrm{NN}}^1$ to $\mathcal{F}_{\mathrm{NN}}$.

\subsection{Covariate shift}\label{subsec: covariate_shift}

Covariate shift is a prevalent challenge in supervised learning \citep{nair2019covariate}, signifying that the training and testing data are collected from different domains, namely the source domain and the target domain. Mathematically, the source domain is represented by a random pair $(X^s, Y^s)$, where $X^s$ denotes the covariate vector and $Y^s$ is the response. Concurrently, the target domain is characterized by $(X^t, Y^t)$ with $Y^t$ remaining unobservable. Under covariate shift, the conditional distributions are assumed identical, specifically that $Y^s|X^s=x$ and $Y^t|X^t=x$ share the same distribution given $x$. However, the marginal distributions of $X^s$ and $X^t$ may differ, which is a key aspect.

Let us denote the observations from source domain as $\{(X^s_1, Y^s_1), \dots, (X^s_N, Y^s_N)\}$ and those from target domain as $\{X^t_1, \dots, X^t_n\}$. For a nonparametric quantity $\theta_0(X)$ of interest, an empirical estimate $\hat{\theta}_N(X)$ is presumed to be constructed, by solely utilizing the source data. It is well-established for numerous classical methods that $\hat{\theta}_N(X)$ yields a sound performance within the source domain, in the sense that $\mathbb{E}\|\hat{\theta}_N(X^s)-\theta_0(X^s)\|_2^2$ converges to zero and even achieves an optimal rate \citep{van1996weak}. However, evaluating the performance of $\hat{\theta}_N(X)$ in the target domain, i.e., how $\mathbb{E}\|\hat{\theta}_N(X^t)-\theta_0(X^t)\|_2^2$ behaves, is not trivial.

Given that $X^s$ and $X^t$ possesses probability density functions, when the density ratio $r_0(x)$ is uniformly bounded, it becomes evident that $\hat{\theta}_N(X)$ generalizes effectively to the target domain, by noting that
$$
\begin{aligned}
\mathbb{E}\left\|\hat{\theta}_N(X^t)-\theta_0(X^t)\right\|_2^2 &= \mathbb{E}\left[\left\|\hat{\theta}_N(X^s)-\theta_0(X^s)\right\|_2^2\cdot r_0(X^s)\right] \\
&\le \|r_0\|_{\infty}\mathbb{E}\left\|\hat{\theta}_N(X^s)-\theta_0(X^s)\right\|_2^2.
\end{aligned}
$$
Here, $\|\cdot\|_{\infty}$ denotes the supremum norm of a function. Conversely, when $r_0(x)$ is unbounded, the situation becomes significantly more complicated, which necessitates a deeper investigation towards various types of discrepancies.

\subsection{Flow-based generative learning}\label{subsec: preliminary_flow_model}

Beyond regression tasks, modern machine leaning is rapidly advancing to explore entire data distributions through generative models \citep{goodfellow2014generative, song2019generative, ho2020denoising}. Flow-based generative models, among the notable recent developments, recover the data distribution from a base distribution (typically Gaussian noise) using bijective transformations \citep{rezende2015variational, papamakarios2021normalizing} or continuous-time dynamics \citep{liu2022flow, lipman2023flow}. In this paper, we adopt the conditional stochastic interpolation framework \citep{albergo2023building, huang2023conditional} due to its generality.

Specifically, let $\mathcal{I}(y_0, y_1, \tau)$ be a continuous interpolant connecting an initial point $y_0$ and a terminal point $y_1$ as the time parameter $\tau$ spans from 0 to 1, such that $\mathcal{I}(y_0, y_1, 0)=y_0$ and $\mathcal{I}(y_0, y_1, 1)=y_1$. Suppose that $(X, Y)$ is a random pair of interest, where $X\in\mathbb{R}^{d_x}$ and $Y\in\mathbb{R}^{d_y}$.
We consider the interpolation,
$$
Y_{\tau}=\mathcal{I}(\eta, Y, \tau),
$$
where $\eta$ represents the $d_y$-dimensional standard Gaussian distribution and is independent of $(X, Y)$. Let $v_0(x, y, \tau)$ be the time-dependent velocity field defined by
$$
v_0(x, y, \tau)=\mathbb{E}\left[\partial_{\tau}\mathcal{I}(\eta, Y, \tau)|Y_{\tau}=y, X=x\right].
$$
We denote the conditional probability density function of $Y_{\tau}|X=x$ by $\omega(x, \cdot, \tau)$ for $\tau\in [0, 1]$. A key property is that the family of densities $\{\omega(x, \cdot, \tau): \tau\in [0, 1]\}$ solves the following transport equation with respect to $\{\rho(\cdot, \tau): \mathbb{R}^{d_y}\to\mathbb{R}, \tau\in [0, 1]\}$ when $x$ is fixed \citep{huang2023conditional},
\begin{equation}\label{eqn: trainsport_equation}
\partial_{\tau}\rho(y, \tau)+\nabla_y\cdot \left[v_0(x, y, \tau)\rho(y, \tau)\right]=0.
\end{equation}
Here, $\nabla_y\cdot$ denotes the divergence operator over the variable $y$. Note that the transport equation $\eqref{eqn: trainsport_equation}$ corresponds tightly to the following ordinary differential equation (ODE),
\begin{equation}\label{eqn: true_ode}
\mathrm{d}Z_{\tau}=v_0(x, Z_{\tau}, \tau)\mathrm{d}\tau,
\end{equation}
for any fixed value of $x$. Under some regularity conditions, the terminal distribution deduced by ODE \eqref{eqn: true_ode} at $\tau=1$, with initial condition $Z_0\sim N(0, I_{d_y})$, is essentially the distribution of $Y|X=x$ \citep{gao2024convergence}.

\section{Density ratio estimation: error analysis}
\label{sec: dre_efficiency}

In this section, we conduct an in-depth analysis of density ratio estimators by examining two specific instances of the Bregman divergence: the least squares loss and the logistic regression loss. We establish error bounds for density ratio estimators based on these two loss functions.

\subsection{Estimation based on least squares loss}

In this subsection, we focus on a specific type of density ratio estimators deduced by the least squares loss function. As illustrated in Subsection \ref{subsec: bregman_divergence}, letting $\varphi(x)=\varphi_{\mathrm{LS}}(x)=(x-1)^2$ leads to that $D_{\mathrm{LS}}(x\| y)=(x-y)^2$. Hence, by assuming that $r_0\in \mathcal{L}^2(X^s)$, we have
$$
\begin{aligned}
r_0 &= \mathop{\mathrm{argmin}}_{f\in \mathcal{L}^2(X^s)}\mathbb{E}D_{\mathrm{LS}}(r_0(X^s)\| f(X^s)) \\
&= \mathop{\mathrm{argmin}}_{f\in \mathcal{L}^2(X^s)}\mathbb{E}\left[r_0(X^s)^2\right]+\mathbb{E}\left[f(X^s)^2\right]-2\mathbb{E}\left[f(X^s)r_0(X^s)\right] \\
&= \mathop{\mathrm{argmin}}_{f\in \mathcal{L}^2(X^s)}\mathbb{E}\left[f(X^s)^2\right]-2\mathbb{E}\left[f(X^t)\right].
\end{aligned}
$$
Here, $\mathcal{L}^2(X^s)$ represents the function space comprising all square-integrable functions with respect to the distribution of $X^s$. We note that the minimizer for $\mathbb{E}D_{\mathrm{LS}}(r_0(X^s)\| f(X^s))$ with respect to $f\in \mathcal{L}^2(X^s)$ is not unique. Nonetheless, every two minimizers are equivalent $X^s$-almost surely.

At the empirical level, given the observations $\{X^s_1, \dots, X^s_n\}$ and $\{X^t_1, \dots, X^t_n\}$, the least squares estimator, denoted as $\hat{r}_{\mathrm{LS}}$, is constructed by
$$
\hat{r}_{\mathrm{LS}}=\mathop{\mathrm{argmin}}_{f\in \mathcal{F}_{\mathrm{NN}}}\frac 1n\sum_{i=1}^nf(X^s_i)^2-\frac 2n\sum_{i=1}^nf(X^t_i).
$$
To evaluate the estimation error of  $\hat{r}_{\mathrm{LS}}$, we concentrate on the expected excess risk, defined as
$$
\mathcal{R}^{\jmath}(\hat{r}_{\mathrm{LS}})=\mathbb{E}\left[r_0(X^{\jmath})-\hat{r}_{\mathrm{LS}}(X^{\jmath})\right]^2,
$$
for $\jmath=s, t$.

Generally, $\mathcal{R}^{\jmath}(\hat{r}_{\mathrm{LS}})$ can be decomposed into two components, namely the stochastic error term and the approximation error term \citep{gyorfi2002distribution}. Both of these components are intricately related to the richness of the hypothesis space $\mathcal{F}_{\mathrm{NN}}$. Recall that every function in $\mathcal{F}_{\mathrm{NN}}$ is bounded from above by $\bar{\delta}$ and from below by $\underline{\delta}$. In this scenario, it is appropriate to set $\underline{\delta}$ to 0. However, the selection of $\bar{\delta}$ presents a more complex challenge. Since the true function $r_0$ may not be bounded, it is essential for $\bar{\delta}$ to increase with the sample size $n$. Consequently, the divergence pattern of $\bar{\delta}=\bar{\delta}_n$ becomes critical in ensuring that the hypothesis space adequately captures the complexity of $r_0$ as more data are available. On one hand, $\bar{\delta}_n$ should not be excessively large, as this may compromise control over stochastic error. On the other hand, an overly small value of $\bar{\delta}_n$ could hinder the ability of $\mathcal{F}_{\mathrm{NN}}$ to densely approximate the true function $r_0$. Here, we present Theorem \ref{thm: dre_convergence_rate_ls}, which achieves a nuanced balance.

\begin{theorem}\label{thm: dre_convergence_rate_ls}
Assume that
\begin{longlist}
    \item $r_0(x)\in \mathcal{H}^{\beta_r}_{\mathrm{Loc}}(\mathbb{R}^d, B_u)$ with $\beta_r>0$ and $B_u\le c(u^m+1)$ for some universal constants $c>0$, $m\ge 0$;
    \item $r_0(X^s)$ and $\|X^s\|_{\infty}$ are sub-exponentially distributed random variables.
\end{longlist}
Then, given the hyper-parameters $L$ set to $\mathcal{O}(n^{d/(2d+4\beta_r)}\log n)$, $M$ set to $\mathcal{O}(1)$, $\underline{\delta}$ set to 0 and $\bar{\delta}=\bar{\delta}_n$ set to $(\log n)^{1+\kappa}$ with an arbitrarily fixed $\kappa\in (0, 1]$, for $n\ge 2$, we have
$$
\begin{aligned}
\mathcal{R}^s(\hat{r}_{\mathrm{LS}}) &\le c^*n^{-\frac{2\beta_r}{d+2\beta_r}}(\log n)^{(8+4\kappa)\vee (2m)}, \\
\mathcal{R}^t(\hat{r}_{\mathrm{LS}}) &\le c^{**}n^{-\frac{2\beta_r}{d+2\beta_r}}(\log n)^{(8+4\kappa)\vee (2m)+1},
\end{aligned}
$$
where $c^*$ and $c^{**}$ are constants not depending on $n$.
\end{theorem}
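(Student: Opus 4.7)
The plan is to follow the standard pattern for analyzing nonparametric least-squares estimators (decompose the excess risk into an approximation error and a stochastic error, then balance the two) but with two layers of truncation to tame the unbounded range and unbounded domain. The risk at the source admits the usual least-squares offset decomposition
\[
\mathcal{R}^{s}(\hat r_{\mathrm{LS}})
= L(\hat r_{\mathrm{LS}}) - L(r_0),\qquad
L(f):=\mathbb{E}[f(X^s)^2]-2\mathbb{E}[f(X^t)],
\]
so for any candidate $\tilde r\in\mathcal{F}_{\mathrm{NN}}$ the ERM inequality yields
\[
\mathcal{R}^{s}(\hat r_{\mathrm{LS}})
\le \bigl(L(\hat r_{\mathrm{LS}})-\hat L(\hat r_{\mathrm{LS}})\bigr)
+\bigl(\hat L(\tilde r)-L(\tilde r)\bigr)+\mathcal{R}^{s}(\tilde r).
\]
I would take $\tilde r$ to be a neural-network approximation of $\bar r(x):=\min\{r_0(x),\bar\delta_n\}$ on the cube $[-u_n,u_n]^d$ with $u_n\asymp\log n$ and $\bar\delta_n=(\log n)^{1+\kappa}$, so that both the input-domain and the output-range truncations are calibrated to the sub-exponential tails in assumption~(ii).

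For the approximation error, the local Hölder assumption (Definition~\ref{def: local_holder_class}) applied to the affine change of variables $x\mapsto 2u_nx-u_n\mathbf{1}_d$ places the rescaled target in $\mathcal{H}^{\beta_r}([0,1]^d,B_{u_n})$ with $B_{u_n}\le c(u_n^m+1)\asymp(\log n)^m$. A Yarotsky-type ReLU approximation theorem for constant-width, depth-$L$ networks then gives an $L^{\infty}$ approximation error of order $B_{u_n}L^{-2\beta_r/d}$ on $[-u_n,u_n]^d$; the restriction to $\|X^s\|_\infty\le u_n$ is absorbed by the sub-exponential tail of $\|X^s\|_\infty$, while $\mathbb{E}[(r_0-\bar r)^2(X^s)]$ is controlled by sub-exponentiality of $r_0(X^s)$ together with $\bar\delta_n=(\log n)^{1+\kappa}$. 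With $L\asymp n^{d/(2d+4\beta_r)}\log n$, the squared approximation error scales like $n^{-2\beta_r/(d+2\beta_r)}(\log n)^{2m}$ modulo polylog factors, while both tail residuals are polynomially small in $n$.

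For the stochastic error, all functions in $\mathcal{F}_{\mathrm{NN}}$ lie in $[0,\bar\delta_n]$, so the two empirical-process deviations can be bounded by a Bernstein-type concentration inequality combined with a covering-number estimate for ReLU MLPs; the number of parameters is $S\asymp L\asymp n^{d/(2d+4\beta_r)}\log n$, yielding a stochastic contribution of order $\bar\delta_n^2\,S\log(nS)/n$, which produces the $(\log n)^{(8+4\kappa)}$ factor after accounting for $\bar\delta_n^2=(\log n)^{2+2\kappa}$, the $\log n$ in $S$, and an additional $\log n$ inside the covering number. Balancing approximation and stochastic error reproduces the claimed exponent $-2\beta_r/(d+2\beta_r)$ and the exponent $(8+4\kappa)\vee(2m)$ on the $\log n$. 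For the target-domain bound I would use the key identity $\mathcal{R}^{t}(\hat r_{\mathrm{LS}})=\mathbb{E}[(\hat r_{\mathrm{LS}}-r_0)^2(X^s)\,r_0(X^s)]$ and split at $\{r_0(X^s)\le c\log n\}$: on this event the weight contributes the extra $\log n$ factor in the bound on $\mathcal{R}^t$, while on the complement, boundedness of $\hat r_{\mathrm{LS}}$ by $\bar\delta_n$ together with the sub-exponential tail of $r_0(X^s)$ (so that $\mathbb{E}[r_0^{3}(X^s)\mathbb{1}\{r_0(X^s)>c\log n\}]$ is negligible) yields a polynomially small contribution.

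The main obstacle is coordinating the four competing scales: the input truncation $u_n$, the output truncation $\bar\delta_n$, the growth $B_{u_n}\asymp(\log n)^m$ of the Hölder constant, and the network depth $L$; each of them inflates either the approximation error, the covering number, or the range-dependent variance proxy in the Bernstein bound, and the exponent $(8+4\kappa)\vee(2m)$ in the stated $\log n$ power is exactly the one dictated by this balance. A second subtlety will be verifying the empirical-process step with the \emph{unbounded} target $r_0$: the offset $L(\hat r_{\mathrm{LS}})-\hat L(\hat r_{\mathrm{LS}})$ involves the term $\mathbb{E}[f(X^t)]-n^{-1}\sum f(X^t_i)$, which is well-behaved because $f\in\mathcal{F}_{\mathrm{NN}}$ is bounded by $\bar\delta_n$, whereas the term involving $r_0$ itself appears only as a constant offset and need never be controlled empirically, a convenient feature of the least-squares parametrization that I would exploit throughout.
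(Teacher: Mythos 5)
Your overall architecture (truncate the domain at $u_n\asymp\log n$ and the range at $\bar\delta_n=(\log n)^{1+\kappa}$, approximate the rescaled local H\"older function by a deep ReLU network, and obtain $\mathcal{R}^t$ from $\mathcal{R}^t(\hat r_{\mathrm{LS}})=\mathbb{E}[(\hat r_{\mathrm{LS}}(X^s)-r_0(X^s))^2 r_0(X^s)]$ by splitting at $r_0(X^s)\le c\log n$) matches the paper. The genuine gap is in the stochastic-error step. You use the additive basic inequality and claim that the two uniform deviations $L(\hat r_{\mathrm{LS}})-\hat L(\hat r_{\mathrm{LS}})$ and $\hat L(\tilde r)-L(\tilde r)$ can be bounded by ``Bernstein plus covering'' at the fast order $\bar\delta_n^2\,S\log(nS)/n$. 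That is not what this argument delivers: the quantities you are controlling are deviations of the \emph{raw} loss $f(X^s)^2-2f(X^t)$, whose variance is of order $\bar\delta_n^2$ uniformly over $\mathcal{F}_{\mathrm{NN}}$ and does not shrink as $f$ approaches $r_0$, so a covering-number union bound with Bernstein gives only the slow rate $\bar\delta_n^2\sqrt{\mathrm{Pdim}(\mathcal{F}_{\mathrm{NN}})\log n/n}$. With $\mathrm{Pdim}\asymp SL\log S\asymp n^{2d/(2d+4\beta_r)}$ this is of order $n^{-\beta_r/(d+2\beta_r)}$, and even re-optimizing the network size against the approximation error only yields $n^{-2\beta_r/(d+4\beta_r)}$; in either case the claimed exponent $-2\beta_r/(d+2\beta_r)$ is unreachable by this route, for every $\beta_r>0$ and $d$.

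What is missing is the fast-rate mechanism. The paper works with the excess-loss class $g(f,T)=[f(X^s)^2-r_0(X^s)^2]-2[f(X^t)-r_0(X^t)]$, truncated at $r_0\le\iota_n\asymp\log n$, for which the self-bounding relation $\mathbb{E}[g_{\iota_n}(f,T)^2]\le c\,\bar\delta_n^2\,\mathbb{E}[g_{\iota_n}(f,T)]$ holds, and it uses the offset decomposition $\mathcal{R}^s(\hat r_{\mathrm{LS}})\le\mathbb{E}[L(\hat r_{\mathrm{LS}})-2L_n(\hat r_{\mathrm{LS}})+L(r_0)]+2\inf_{f\in\mathcal{F}_{\mathrm{NN}}}\mathbb{E}_P[f(X)-r_0(X)]^2$ (note the factor $2$ in front of $L_n$) together with a relative-deviation inequality of the type of Theorem 11.4 in Gy\"orfi et al.\ (generalized in the appendix), in which the multiplicative slack $\epsilon\,\mathbb{E}[g]$ is absorbed by that factor $2$; only then does integrating the tail probability produce the $\mathrm{Pdim}\cdot(\log n)^{5+4\kappa}/n$ stochastic term that balances against the squared approximation error at the stated rate. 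Your sketch contains neither this decomposition nor any substitute (localization, peeling, or an offset/ratio-type empirical process bound exploiting the variance--mean relation), so as written the stochastic-error step fails and the final exponents $(8+4\kappa)\vee(2m)$ cannot be justified. The remaining ingredients of your plan (the rescaling into $\mathcal{H}^{\beta_r}([0,1]^d,B_{u_n})$ with $B_{u_n}\lesssim(\log n)^m$, absorbing the domain and range truncation errors through the sub-exponential tails, the extra ReLU layers enforcing the range $[0,\bar\delta_n]$, and the $\log n$ loss in passing to the target domain) are in line with the paper's proof.
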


In Theorem \ref{thm: dre_convergence_rate_ls}, we make two critical assumptions on $r_0$. Firstly, we require that the function $r_0(x)$ can be bounded by a universal polynomial function within any compact region. This assumption encompasses not only the trivial case where $r_0(x)$ is uniformly bounded, but also permits the density ratio to diverge across $\mathbb{R}^d$ at a polynomial rate.
The divergence rate $m$ plays a secondary role in determining the final upper bound of $\mathcal{R}^{\jmath}(\hat{r}_{\mathrm{LS}})$ for $\jmath=s, t$. While a more rapid rate yields a larger upper bound, the rate $m$ influences only the logarithmic term, thereby ensuring that the overall upper bound remains controllable and (nearly) optimal \citep{stone1982optimal, tsybakov2009introduction}.

Secondly, we suppose that $r_0(X^s)$ follows a sub-exponential distribution. This condition serves two purposes. On one hand, it ensures that $r_0(X^s)$ is square-integrable and is less stringent than conditions imposed in other works that require boundedness on $r_0(X^s)$. Additionally, our finding enhances the result presented in \cite{feng2024deep}, where a (nearly) minimax optimal rate was attained under the condition that the square of the density ratio was sub-exponential. On the other hand, this condition allows us to set $\bar{\delta}_n$ to $(\log n)^{1+\kappa}$ for $\kappa\in (0, 1]$, as we only need to consider values no greater than $c\cdot \log n$ for a sub-exponentially distributed random variable, where $c$ is a constant. Meanwhile, we assume that $\|X^s\|_{\infty}$ is also sub-exponentially distributed for similar reasons, facilitating the application of approximation theorems derived on compact sets \citep{schmidt2020nonparametric, shen2020deep}. Note that the condition of sub-exponential distribution for $r_0(X^s)$ could potentially be relaxed if the divergence rate $m$ is known. The proof of Theorem \ref{thm: dre_convergence_rate_ls} is presented in the Supplementary Materials.

\subsection{Estimation based on logistic regression loss}

In this subsection, we consider the
logistic regression-based loss function with $\varphi_{\mathrm{LR}}(x)=x\log x-(x+1)\log(x+1).$
The behavior of the range of $r_0(x)$ is crucial in this scenario. Recall that $\mathcal{X}^s=\{x: p(x)>0\}$ denotes the domain of $X^s$ and $\mathcal{X}^t=\{x: q(x)>0\}$ represents the domain of $X^t$. It is clear that $r_0$ equals to 0 over $\mathcal{X}^s\backslash\mathcal{X}^t$. While it is reasonable to assign $\varphi_{\mathrm{LR}}(0)=0$ so that the divergence $D_{\mathrm{LR}}(r_0(x)\| f(x))$ is well-defined on $\mathcal{X}^s\backslash\mathcal{X}^t$, we note that the estimation efficiency may be compromised when $\mathrm{P}(X^s\in \mathcal{X}^s\backslash\mathcal{X}^t)>0$. This inefficiency arises from an overwhelming approximation error within the region $\mathcal{X}^s\backslash\mathcal{X}^t$. To achieve a fast rate, we therefore assume that $\mathrm{P}(X^s\in \mathcal{X}^s\backslash\mathcal{X}^t)=0$. It is important to note that this condition was not imposed in the previous analysis, where we set $\underline{\delta}$ to 0. Contrastively, $\underline{\delta}=\underline{\delta}_n$ will vanish slowly in the sequel to align with the regularity of function $\varphi_{\mathrm{LR}}(x)$.

Let $\mathcal{L}_{\mathrm{LR}}(X^s)=\{h:\mathbb{R}^d\to\mathbb{R}_+\cup \{0\}, \mathbb{E}[h(X^s)^2]<\infty, \mathbb{E}[h(X^s)^{-2}\mathds{1}(X^s\in\mathcal{X}^t)]<\infty \text{ and } h(x)>0 \text{ for any } x\in\mathcal{X}^t\}$. At the population level, we have Lemma \ref{lem: dre_validity_lr_loss} which justifies the validity of the logistic regression-based loss; see the Supplementary Materials for its proof. We note that in Lemma \ref{lem: dre_validity_lr_loss}, the function space $\mathcal{L}_{\mathrm{LR}}(X^s)$ can be relaxed to $\check{\mathcal{L}}=\{h:\mathbb{R}^d\to\mathbb{R}_+\cup \{0\}, h \text{ is measurable} \text{ and } h(x)>0 \text{ for any } x\in\mathcal{X}^t\}$.

\begin{lemma}\label{lem: dre_validity_lr_loss}
Suppose that $r_0\in\mathcal{L}_{\mathrm{LR}}(X^s)$. Then, it holds that $r_0(X^s)=f^*(X^s)$ a.s.
$X^s,$
where $f^*$ is a minimizer of  $\mathbb{E}D_{\mathrm{LR}}(r_0(X^s)\| f(X^s))$ with respect to $f\in \mathcal{L}_{\mathrm{LR}}(X^s).$
\end{lemma}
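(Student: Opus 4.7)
The plan is to argue pointwise: exploit that the Bregman divergence induced by a strictly convex generator is nonnegative and vanishes exactly on the diagonal, then combine this with the fact that $r_0$ itself lies in the admissible class $\mathcal{L}_{\mathrm{LR}}(X^s)$ to identify it as the essentially unique minimizer.

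First I would verify that $\varphi_{\mathrm{LR}}$ is strictly convex on $(0,\infty)$: a direct computation gives $\varphi_{\mathrm{LR}}''(x)=1/[x(x+1)]>0$, and a standard property of Bregman divergences (mentioned just after Definition \ref{def: bregman_divergence}) then yields $D_{\mathrm{LR}}(x\|y)\ge 0$ with equality iff $x=y$ whenever both arguments are positive. To handle boundary values I would adopt the conventions $0\log 0 = 0$ and $\varphi'(0)\cdot 0 =0$, under which $\varphi_{\mathrm{LR}}(0)=0$ and a direct computation gives $D_{\mathrm{LR}}(0\|y)=\log(y+1)\ge 0$ with equality iff $y=0$, while $D_{\mathrm{LR}}(x\|0)=+\infty$ for $x>0$.

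Next I would check that for every $f\in\mathcal{L}_{\mathrm{LR}}(X^s)$ the integrand $D_{\mathrm{LR}}(r_0(X^s)\|f(X^s))$ is finite almost surely: on $\{X^s\in\mathcal{X}^t\}$ both $r_0$ and $f$ are positive by the definition of $\mathcal{L}_{\mathrm{LR}}(X^s)$, and on $\{X^s\in\mathcal{X}^s\setminus\mathcal{X}^t\}$ we have $r_0=0$ and the divergence reduces to the finite quantity $\log(f(X^s)+1)$. Since the integrand is nonnegative, the expectation is well-defined in $[0,\infty]$, so no $\infty-\infty$ issue arises. Because $r_0\in\mathcal{L}_{\mathrm{LR}}(X^s)$ by assumption, $r_0$ itself is admissible, and $D_{\mathrm{LR}}(r_0\|r_0)\equiv 0$ yields $\mathbb{E}\,D_{\mathrm{LR}}(r_0(X^s)\|r_0(X^s))=0$. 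Hence $\inf_{f} \mathbb{E}\,D_{\mathrm{LR}}(r_0\|f)=0$ and is attained.

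Finally, for any minimizer $f^*$ we must have $\mathbb{E}\,D_{\mathrm{LR}}(r_0(X^s)\|f^*(X^s))=0$, and nonnegativity of the integrand forces $D_{\mathrm{LR}}(r_0(X^s)\|f^*(X^s))=0$ a.s. Decomposing: on $\{X^s\in\mathcal{X}^t\}$, strict convexity yields $f^*(X^s)=r_0(X^s)$ a.s.; on $\{X^s\in\mathcal{X}^s\setminus\mathcal{X}^t\}$, the identity $D_{\mathrm{LR}}(0\|f^*(X^s))=\log(f^*(X^s)+1)=0$ together with $f^*\ge 0$ forces $f^*(X^s)=0=r_0(X^s)$. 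Combining the two cases delivers $f^*(X^s)=r_0(X^s)$ a.s., proving the lemma. The main subtlety I anticipate is not the convexity argument itself but the careful boundary bookkeeping at the point $0$, where $\varphi_{\mathrm{LR}}'$ diverges so that the Bregman formula must be interpreted through limits; this is also what motivates the unusual condition $f(x)>0$ on $\mathcal{X}^t$ built into $\mathcal{L}_{\mathrm{LR}}(X^s)$, and it is precisely that condition which keeps the divergence finite on $\mathcal{X}^t$ and makes the identification $f^*=r_0$ legitimate.
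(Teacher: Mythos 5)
Your proposal is correct and follows essentially the same route as the paper: since $r_0$ is itself admissible and achieves zero Bregman risk, any minimizer must satisfy $D_{\mathrm{LR}}(r_0(X^s)\|f^*(X^s))=0$ almost surely, and strict convexity of $\varphi_{\mathrm{LR}}$ on $\mathbb{R}_+$ (via the Bregman identity-of-arguments lemma) then forces $f^*=r_0$ a.s. The only differences are minor: the paper first verifies $\mathbb{E}D_{\mathrm{LR}}(r_0(X^s)\|f(X^s))<\infty$ for every admissible $f$ using the moment conditions in $\mathcal{L}_{\mathrm{LR}}(X^s)$ and argues on $\mathcal{X}^t$ (relying on the subsection's assumption $\mathrm{P}(X^s\in\mathcal{X}^s\setminus\mathcal{X}^t)=0$), whereas you treat $\mathcal{X}^s\setminus\mathcal{X}^t$ explicitly through $D_{\mathrm{LR}}(0\|y)=\log(y+1)$ and only need a.s.\ finiteness and nonnegativity of the integrand, a harmless and slightly more self-contained variation.
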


As a consequence, by assuming $r_0\in\mathcal{L}_{\mathrm{LR}}(X^s)$, we have
$$
\begin{aligned}
r_0 &= \mathop{\mathrm{argmin}}_{f\in\mathcal{L}_{\mathrm{LR}}(X^s)}\mathbb{E}D_{\mathrm{LR}}(r_0(X^s)\| f(X^s)) \\
&= \mathop{\mathrm{argmin}}_{f\in\mathcal{L}_{\mathrm{LR}}(X^s)}\mathbb{E}\left[\log(f(X^s)+1)-r_0(X^s)\log f(X^s)+r_0(X^s)\log(f(X^s)+1)\right] \\
&= \mathop{\mathrm{argmin}}_{f\in\mathcal{L}_{\mathrm{LR}}(X^s)}\mathbb{E}\left[\log(f(X^s)+1)-\log f(X^t)+\log(f(X^t)+1)\right].
\end{aligned}
$$
Thus, at the sample level, the estimator is given by
$$
\hat{r}_{\mathrm{LR}}=\mathop{\mathrm{argmin}}_{f\in\mathcal{F}_{\mathrm{NN}}}\frac 1n\sum_{i=1}^n\log(f(X^s_i)+1)+\frac 1n\sum_{i=1}^n\left[-\log f(X^t_i)+\log(f(X^t_i)+1)\right].
$$
The loss function above is similar to the logistic loss. Here, the lower bounded $\underline{\delta}$ should not be set to 0, which otherwise may result in the loss being undefined. The determination of both $\bar{\delta}$ and $\underline{\delta}$ relies on the continuity of $\varphi_{\mathrm{LR}}$. Specifically, let $x$ and $y$ be two positive scalars lying in the interval $[a, b]$ with $a>0$. Then, it can be verified that
\begin{equation}\label{eqn: second_order_expansion_of_lr_divergence}
\frac{1}{2b(b+1)}(x-y)^2\le D_{\mathrm{LR}}(x\| y)\le \frac{1}{2a(a+1)}(x-y)^2.
\end{equation}
The first inequality in Eqn.~\eqref{eqn: second_order_expansion_of_lr_divergence} illustrates the relationship between the estimation error and Bregman divergence, indicating that $b$ (associated with $\bar{\delta}$) is supposed to be small. Meanwhile, the second inequality in Eqn.~\eqref{eqn: second_order_expansion_of_lr_divergence} connects Bregman divergence with the approximation error, suggesting that $a^{-1}$ (corresponding to $\underline{\delta}^{-1}$) should also remain small. Through a careful balancing of these parameters, we obtain the erorr bounds of $\hat{r}_{\mathrm{LR}}$ in Theorem \ref{thm: dre_convergence_rate_lr}.

\begin{theorem}\label{thm: dre_convergence_rate_lr}
Assume that
\begin{longlist}
    \item $r_0(x)\in \mathcal{H}^{\beta_r}_{\mathrm{Loc}}(\mathbb{R}^d, B_u)$ with $\beta_r>0$ and $B_u\le c(u^m+1)$ for some universal constants $c>0$, $m\ge 0$;
    \item $r_0(X^s)$, $r_0(X^s)^{-1}\mathds{1}(X^s\in\mathcal{X}^t)$ and $\|X^s\|_{\infty}$ are sub-exponentially distributed random variables.
\end{longlist}
Then, given the hyper-parameters $L$ set to $\mathcal{O}(n^{d/(2d+4\beta_r)}\log n)$, $M$ set to $\mathcal{O}(1)$, $\underline{\delta}=\underline{\delta}_n$ set to $(\log n)^{-1-\kappa}$ and $\bar{\delta}=\bar{\delta}_n$ set to $(\log n)^{1+\kappa}$ with an arbitrarily fixed $\kappa\in (0, 1]$, for $n\ge 3$, we have
$$
\begin{aligned}
\mathcal{R}^s(\hat{r}_{\mathrm{LR}}) &\le c^*n^{-\frac{2\beta_r}{d+2\beta_r}}(\log n)^{(11+7\kappa)\vee (2m+3+3\kappa)}, \\
\mathcal{R}^t(\hat{r}_{\mathrm{LR}}) &\le c^{**}n^{-\frac{2\beta_r}{d+2\beta_r}}(\log n)^{(11+7\kappa)\vee (2m+3+3\kappa)+1},
\end{aligned}
$$
where $c^*$ and $c^{**}$ are constants not depending on $n$.
\end{theorem}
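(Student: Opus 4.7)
The plan is to adapt the strategy used for Theorem~\ref{thm: dre_convergence_rate_ls} to the logistic regression loss, with the key new ingredient being careful handling of the shrinking lower bound $\underline{\delta}_n$. First I would replace $r_0$ by its truncated surrogate $\tilde{r}_0 = T_{\underline{\delta}_n, \bar{\delta}_n}(r_0)$ on the compact cube $[-K\log n, K\log n]^d$ for a large constant $K$, so that the two-sided inequality \eqref{eqn: second_order_expansion_of_lr_divergence} is directly applicable. Assumption (ii) --- that $r_0(X^s)$, $r_0(X^s)^{-1}\mathds{1}(X^s\in\mathcal{X}^t)$ and $\|X^s\|_{\infty}$ are all sub-exponential --- ensures that
\[
\mathbb{P}\bigl(r_0(X^s) > \bar{\delta}_n\bigr) \vee \mathbb{P}\bigl(r_0(X^s)^{-1}\mathds{1}(X^s\in\mathcal{X}^t) > \underline{\delta}_n^{-1}\bigr) \vee \mathbb{P}\bigl(\|X^s\|_{\infty} > K\log n\bigr)
\]
is super-polynomially small in $n$, so the contributions from these bad events can be absorbed into logarithmic factors using the uniform bound $\bar{\delta}_n$ on $\hat{r}_{\mathrm{LR}}$.

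Next, on the good event where $r_0(X^s) \in [\underline{\delta}_n, \bar{\delta}_n]$, I would use \eqref{eqn: second_order_expansion_of_lr_divergence} to convert between squared error and divergence, turning the problem into bounding $\mathbb{E} D_{\mathrm{LR}}(r_0(X^s) \| \hat{r}_{\mathrm{LR}}(X^s))$. The standard oracle-inequality route gives
\[
\mathbb{E} D_{\mathrm{LR}}(r_0(X^s) \| \hat{r}_{\mathrm{LR}}(X^s)) \le 2 \inf_{f \in \mathcal{F}_{\mathrm{NN}}} \mathbb{E} D_{\mathrm{LR}}(r_0(X^s) \| f(X^s)) + (\text{stochastic remainder}),
\]
the latter controlled by a Bernstein-type empirical process bound over $\mathcal{F}_{\mathrm{NN}}$. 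Here the Lipschitz constants of $\log f$ and $\log(f+1)$ over the range $[\underline{\delta}_n, \bar{\delta}_n]$ are of order $\underline{\delta}_n^{-1} = (\log n)^{1+\kappa}$, which combined with the covering number bound for $\mathcal{F}_{\mathrm{NN}}$ (depth $L$, width $\mathcal{O}(1)$, parameters $S = \mathcal{O}(L)$) yields a stochastic error of order $n^{-2\beta_r/(d+2\beta_r)}$ times polylogarithmic factors driven by $\underline{\delta}_n^{-1}$ and $\bar{\delta}_n$.

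For the approximation error, I would apply a ReLU approximation theorem to the rescaled function $\tilde{r}_0(2u\cdot - u\mathbf{1}_d) \in \mathcal{H}^{\beta_r}([0,1]^d, B_u)$ with $u = K\log n$, exploiting the local H\"older assumption (i) via Definition~\ref{def: local_holder_class}. This produces $\tilde{r} \in \mathcal{F}_{\mathrm{NN}}$ with $L^\infty$-approximation error on the cube of order $B_u \cdot L^{-2\beta_r/d} \asymp (\log n)^m \cdot n^{-2\beta_r/(d+2\beta_r)}(\log n)^{-2\beta_r/d}$. The right half of \eqref{eqn: second_order_expansion_of_lr_divergence} with $a = \underline{\delta}_n$ then multiplies this squared error by $1/(2\underline{\delta}_n(\underline{\delta}_n+1)) \asymp (\log n)^{2+2\kappa}$ to bound $\mathbb{E} D_{\mathrm{LR}}(r_0(X^s)\|\tilde{r}(X^s))$, and converting the resulting divergence bound back to $\mathcal{R}^s$ via the left half of \eqref{eqn: second_order_expansion_of_lr_divergence} with $b = \bar{\delta}_n$ contributes another factor of $2\bar{\delta}_n(\bar{\delta}_n+1) \asymp (\log n)^{2+2\kappa}$. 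Tracking all these logarithmic factors together with the divergence rate $m$ from the H\"older seminorm and the stochastic remainder yields the stated exponent $(11+7\kappa) \vee (2m+3+3\kappa)$ for $\mathcal{R}^s$.

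The source-to-target transfer is then routine: writing $\mathcal{R}^t(\hat{r}_{\mathrm{LR}}) = \mathbb{E}[(\hat{r}_{\mathrm{LR}}(X^s) - r_0(X^s))^2 r_0(X^s)]$, splitting at $r_0(X^s) \le c\log n$ and using the sub-exponential tail on the complement (with the uniform bound $\bar{\delta}_n$) produces an additional factor of $\log n$, giving the target exponent $+1$. The main obstacle I anticipate is bookkeeping around $\underline{\delta}_n \to 0$: because $\varphi_{\mathrm{LR}}$ is singular at $0$, its derivatives blow up and the Lipschitz constants on $[\underline{\delta}_n, \bar{\delta}_n]$ grow, so both sides of \eqref{eqn: second_order_expansion_of_lr_divergence} amplify logarithmic factors. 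It is precisely the sub-exponential assumption on $r_0(X^s)^{-1}\mathds{1}(X^s\in\mathcal{X}^t)$ that allows the event $\{r_0(X^s) < \underline{\delta}_n\} \cap \{X^s \in \mathcal{X}^t\}$ to be controlled and the two-sided truncation to close; without it, the approximation error on the "small-$r_0$" region would dominate the rate.
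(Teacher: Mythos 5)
Your proposal follows essentially the same route as the paper's proof: an oracle inequality for $\mathbb{E}D_{\mathrm{LR}}(r_0(X^s)\|\hat{r}_{\mathrm{LR}}(X^s))$, a Bernstein-type empirical process bound with covering numbers for the stochastic term, a ReLU approximation of the rescaled local H\"older function on a cube of side $\asymp\log n$ with truncation to $[\underline{\delta}_n,\bar{\delta}_n]$, two-sided use of \eqref{eqn: second_order_expansion_of_lr_divergence} to pass between squared error and divergence, sub-exponential tail control of the events $\{r_0(X^s)>\bar{\delta}_n\}$, $\{r_0(X^s)<\underline{\delta}_n\}\cap\{X^s\in\mathcal{X}^t\}$, $\{\|X^s\|_{\infty}>c\log n\}$, and a final change of measure with truncation at $r_0\lesssim\log n$ for the target domain. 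One small bookkeeping correction: since $\underline{\delta}_n\to 0$, the approximation-side factor is $\tfrac{1}{2\underline{\delta}_n(\underline{\delta}_n+1)}\asymp(\log n)^{1+\kappa}$ rather than $(\log n)^{2+2\kappa}$ as you wrote, and it is this value (as in the paper) that makes the $m$-branch come out to $2m+3+3\kappa$ after the additional $(\log n)^{2+2\kappa}$ factor from converting the divergence back to squared loss.
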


The assumptions in Theorem \ref{thm: dre_convergence_rate_lr} closely parallel those outlined in Theorem \ref{thm: dre_convergence_rate_ls}, while we introduce an additional requirement concerning the sub-exponential restriction of $r_0(X^s)^{-1}\mathds{1}(X^s\in\mathcal{X}^t)$, which is well-defined under the convention $0/0=0$. This condition constrains the rate at which $r_0(X^s)$  approaches 0, which ensures that the region $r_0(X^s)<c(\log n)^{-1}$ remains sufficiently small for some constant $c$. In addition, we note that these assumptions may need modification depending on the formulation of the estimator. For instance, in cases where the estimator is expressed as $\exp(f)$ for $f\in\mathcal{F}_{\mathrm{NN}}$, attention should be given to the properties of $\log r_0$.

The upper bounds established in Theorem \ref{thm: dre_convergence_rate_lr} have a slower convergence rate compared to those derived in Theorem \ref{thm: dre_convergence_rate_ls}. This discrepancy can be attributed to two factors. Firstly, in this subsection, we work on the surrogate loss deduced by $\varphi_{\mathrm{LR}}$, which incurs a certain cost when converting the Bregman divergence into squared loss. Secondly, the logarithm terms enhances sharpness of $\varphi_{\mathrm{LR}}$ with respective to its derivative, leading to an increased stochastic error. The proof of Theorem \ref{thm: dre_convergence_rate_lr} is given in the Supplementary Materials.

\section{Error control under covariate shift} 
\label{subsec: drc_generic_properties}

In order to tackle covariate shift, a typical strategy is the density ratio correction \citep{sugiyama2008direct, stojanov2019low, zhang2024adapting}. Specifically, let us consider the population-level loss function associated with the parameter of interest, $\theta$, in the target domain, denoted as $\mathbb{E}\ell(X^t, Y^t, \theta)$. It is direct to observe that $\mathbb{E}\ell(X^t, Y^t, \theta)=\mathbb{E}[r_0(X^s)\ell(X^s, Y^s, \theta)]$. This relationship indicates that the original loss $\ell(x, y, \theta)$ can be adjusted to a corrected loss $r_0(x)\ell(x, y, \theta)$, which, when integrated with source data, essentially serves as the desired loss function for the target domain.

In situations where \textit{the density ratio is known} and has a finite second moment, \cite{ma2023optimally} showed that an estimator derived from a corrected loss function can achieve a minimax optimal rate. However, in practice, the true density ratio is typically unknown, necessitating the use of a density ratio estimator for correction. Consequently, the estimation error of
$\theta$  is also influenced by the estimation error of the density ratio, especially in transfer learning problems where source data is often much more accessible than target data \citep{weiss2016survey, li2020robust, wu2022power}. For example, \cite{rezaei2021robust} showed that density ratio correction can be fragile, exhibiting high variance and sensitivity to the methods used for density ratio estimation, even in cases of seemingly minor shifts \citep{cortes2010learning}. Therefore, it is crucial to explore the conditions under which density ratio correction can be avoided.

The generalization capacity of a source estimator in the target domain was examined by \cite{schmidt2024local} in their Lemma 12. Although they provided a clear upper bound, deriving explicit conditions to ensure that the convergence rate remains optimal is still a nontrivial task. Moreover, their hypothesis space was limited to the class of all 1-Lipschitz functions, and their analysis was focused on the nonparametric regression setting. Based on the analyses in Section \ref{sec: dre_efficiency}, we show that the sub-exponential property ensures effective generalization between source and target domains.

In this subsection, we outline some regularity conditions that facilitate the control for the performance of $\hat{\theta}_N(X)$ in the target domain based on its performance in the source domain, allowing for the presence of slowly divergent factors. We first present the following generic lemma; see the Supplementary Materials for its proof.

\begin{lemma}\label{lem: sub_exp_convergence}
Suppose that $U_1, \dots, U_n, U$ are $d$-dimensional random vectors, with $\|U_i\|_{\infty}\le \xi_i$ almost surely for $i=1, \dots, n$. Assume that $\|U\|_{\infty}$ has a finite fourth moment, and $V$ is a random variable such that $\mathbb{E}\exp(\varsigma|V|)<\infty$ for some positive constant $\varsigma$. Let $\gamma_n=\mathbb{E}\|U_n-U\|_2^2$. Then, for $n\ge 2$, we have
$$
\mathbb{E}\left(\|U_n-U\|_2^2|V|\right)\le c_1\gamma_n\log n+\frac{c_2d(\xi_n^2+1)}{n},
$$
where $c_1$ and $c_2$ are constants not depending on $n$.
\end{lemma}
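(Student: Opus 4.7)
The plan is to split the expectation according to whether $|V|$ is moderate or extreme, using a threshold of order $\log n$ that is tuned to the sub-exponential tail parameter $\varsigma$. Concretely, fix a constant $c>0$ (to be chosen at the end in terms of $\varsigma$) and write
$$
\mathbb{E}\left(\|U_n-U\|_2^2|V|\right)
=\mathbb{E}\left(\|U_n-U\|_2^2|V|\,\mathds{1}\{|V|\le c\log n\}\right)+\mathbb{E}\left(\|U_n-U\|_2^2|V|\,\mathds{1}\{|V|> c\log n\}\right).
$$
The first (bulk) piece is immediate: bounding $|V|\le c\log n$ on that event and factoring it out of the expectation yields at most $c\gamma_n\log n$, which matches the $c_1\gamma_n\log n$ term in the conclusion.

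For the tail piece I would crudely dominate $\|U_n-U\|_2^2\le 2\|U_n\|_2^2+2\|U\|_2^2\le 2d\xi_n^2+2d\|U\|_\infty^2$, giving two sub-terms. The first sub-term equals $2d\xi_n^2\,\mathbb{E}(|V|\mathds{1}\{|V|>c\log n\})$. Using the sub-exponential bound $\mathrm{P}(|V|>t)\le \mathbb{E}\exp(\varsigma|V|)\,e^{-\varsigma t}$ and integrating by parts shows
$$
\mathbb{E}\bigl(|V|\,\mathds{1}\{|V|>c\log n\}\bigr)\le C_1\,n^{-\varsigma c}\,(\log n+\varsigma^{-1}),
$$
which is $O(1/n)$ provided we choose $c$ so that $\varsigma c\ge 2$; this contributes to $c_2 d\xi_n^2/n$.

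For the second sub-term $2d\,\mathbb{E}(\|U\|_\infty^2|V|\mathds{1}\{|V|>c\log n\})$ I would apply Cauchy--Schwarz to separate $\|U\|_\infty$ from the indicator-weighted $V$ factor, obtaining
$$
\mathbb{E}\bigl(\|U\|_\infty^2|V|\mathds{1}\{|V|>c\log n\}\bigr)\le \sqrt{\mathbb{E}\|U\|_\infty^4}\,\sqrt{\mathbb{E}\bigl(V^2\mathds{1}\{|V|>c\log n\}\bigr)}.
$$
The first factor is finite by hypothesis. For the second factor, on $\{|V|>c\log n\}$ the elementary inequality $\mathds{1}\{|V|>c\log n\}\le e^{-\varsigma c\log n/2}\,e^{\varsigma|V|/2}$ combined with $\mathbb{E}(V^2 e^{\varsigma|V|/2})<\infty$ (which follows from the finiteness of $\mathbb{E}e^{\varsigma|V|}$, up to shrinking $\varsigma$) yields a bound of order $n^{-\varsigma c/2}$. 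Choosing $c$ sufficiently large, say $\varsigma c\ge 4$, makes this $O(n^{-2})$ and hence absorbable in $c_2d/n$.

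Combining the bulk and tail pieces and taking $c=c(\varsigma)$ large enough gives the stated inequality. I expect the main technical care to lie in the tail estimate with the Cauchy--Schwarz split, since we must convert the sub-exponential tail of $V$ together with only a \emph{fourth} moment of $\|U\|_\infty$ into a clean $O(1/n)$ remainder; the choice of $c$ in terms of $\varsigma$ is what makes the exponents line up. The rest is routine bookkeeping.
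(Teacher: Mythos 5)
Your proposal is correct and follows essentially the same route as the paper: truncate $|V|$ at a threshold of order $\varsigma^{-1}\log n$, bound the bulk term by (threshold)$\times\gamma_n$, and kill the tail using the exponential moment of $V$ together with Cauchy--Schwarz and the finite fourth moment of $\|U\|_\infty$, yielding an $O\bigl(d(\xi_n^2+1)/n\bigr)$ remainder. The only difference is bookkeeping: the paper dominates $|V|\mathds{1}(|V|>\iota_n)$ by $4\varsigma^{-1}e^{\varsigma|V|/2}e^{-\varsigma\iota_n/4}$ and applies Cauchy--Schwarz componentwise to $(U_{n,(j)}-U_{(j)})^2e^{\varsigma|V|/2}$, whereas you first split $\|U_n-U\|_2^2\le 2d\xi_n^2+2d\|U\|_\infty^2$ and treat the two pieces separately, which leads to the same bound.
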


\begin{remark}
The existence of a positive scalar $\varsigma$ such that $\mathbb{E}\exp(\varsigma|V|)<\infty$ is satisfied when $V$ is sub-exponentially distributed; see, e.g., \cite{wainwright2019high}.
\end{remark}

Lemma \ref{lem: sub_exp_convergence} illustrates that when the tail of random variable $V$ is not excessively heavy, the impact of multiplying by $|V|$ is minimal, resulting in only a sacrifice of $\log n$. By leveraging Lemma \ref{lem: sub_exp_convergence} within the framework of covariate shift, we derive Corollary \ref{cor: sub_exp_dr}. We note that the condition requiring $r_0(X^s)$ to be sub-exponentially distributed can be satisfied when $X^s\sim Ga(\alpha_1, \lambda)$ and $X^t\sim Ga(\alpha_2, \lambda)$ with $0<\alpha_2-\alpha_1\le 1$.

\begin{corollary}\label{cor: sub_exp_dr}
Suppose that
\begin{longlist}
    \item $\|\hat{\theta}_N(X^s)\|_{\infty}\le \xi_N$ almost surely for every $N\ge 1$;
    \item $\|\theta_0(X^s)\|_{\infty}$ has a finite fourth moment;
    \item $r_0(X^s)$ is sub-exponentially distributed.
\end{longlist}
Then, for $N\ge 2$, we have
$$
\begin{aligned}
\mathbb{E}\left\|\hat{\theta}_N(X^t)-\theta_0(X^t)\right\|_2^2 &= \mathbb{E}\left[\left\|\hat{\theta}_N(X^s)-\theta_0(X^s)\right\|_2^2\cdot r_0(X^s)\right] \\
&\le c_1\mathbb{E}\left\|\hat{\theta}_N(X^s)-\theta_0(X^s)\right\|_2^2\log N+\frac{c_2d_{\theta}(\xi_N^2+1)}{N},
\end{aligned}
$$
where $c_1, c_2$ are constants not depending on $N$, and $d_{\theta}$ represents the dimensionality of $\theta_0(X)$.
\end{corollary}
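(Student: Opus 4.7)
The plan is to deduce Corollary~\ref{cor: sub_exp_dr} as an essentially mechanical application of Lemma~\ref{lem: sub_exp_convergence} to the change-of-measure identity, with the three hypotheses of the corollary precisely mirroring the three hypotheses of the lemma. No new ingredients are needed beyond what the statement and the preceding lemma supply.

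First I would justify the displayed equality $\mathbb{E}\|\hat{\theta}_N(X^t) - \theta_0(X^t)\|_2^2 = \mathbb{E}[\|\hat{\theta}_N(X^s) - \theta_0(X^s)\|_2^2 \cdot r_0(X^s)]$. Because $\mathcal{X}^t \subset \mathcal{X}^s$ and $q(x) = r_0(x)p(x)$ Lebesgue-almost everywhere, and because the evaluation points $X^s, X^t$ are drawn independently of the source training sample used to build $\hat{\theta}_N$, I would condition on the training data and apply the standard change of variables $\int g(x)q(x)\,\mathrm{d}x = \int g(x) r_0(x) p(x)\,\mathrm{d}x$ pathwise to $g(x) = \|\hat{\theta}_N(x) - \theta_0(x)\|_2^2$. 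Taking an outer expectation over the training data then delivers the identity.

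Second I would invoke Lemma~\ref{lem: sub_exp_convergence} with the substitutions $U_N := \hat{\theta}_N(X^s)$, $U := \theta_0(X^s)$, $d := d_\theta$ and $V := r_0(X^s)$. Hypothesis (i) supplies $\|U_N\|_\infty \le \xi_N$ almost surely; hypothesis (ii) gives the finite fourth moment of $\|U\|_\infty$; and hypothesis (iii), together with Remark~1, furnishes a constant $\varsigma>0$ such that $\mathbb{E}\exp(\varsigma|V|) < \infty$. Since $r_0 \ge 0$ by definition, $|V| = V$, so the lemma applies directly. With $\gamma_N := \mathbb{E}\|\hat{\theta}_N(X^s)-\theta_0(X^s)\|_2^2$, its conclusion reads
\[
\mathbb{E}\bigl[\|\hat{\theta}_N(X^s) - \theta_0(X^s)\|_2^2\, r_0(X^s)\bigr] \le c_1 \gamma_N \log N + \frac{c_2\, d_\theta (\xi_N^2 + 1)}{N},
\]
which, combined with the change-of-measure identity from the first step, is precisely the bound claimed in the corollary.

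I do not anticipate a serious obstacle. The only delicate point is that Lemma~\ref{lem: sub_exp_convergence} is phrased for a generic sequence $U_1,\ldots,U_n$, whereas the corollary concerns a single index $N$; this is cosmetic, since the lemma uses only the properties of $(U_n, U, V, \xi_n)$ at the index of interest. One must also check that the boundedness in (i) and the sub-exponential tail in (iii) are unconditional (rather than merely conditional on the training data), but the hypotheses as stated provide exactly that, so the corollary follows with no additional work.
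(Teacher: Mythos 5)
Your proposal is correct and matches the paper's intended argument: the corollary is obtained exactly by combining the change-of-measure identity $\mathbb{E}\|\hat{\theta}_N(X^t)-\theta_0(X^t)\|_2^2=\mathbb{E}[\|\hat{\theta}_N(X^s)-\theta_0(X^s)\|_2^2\, r_0(X^s)]$ with Lemma \ref{lem: sub_exp_convergence} applied to $U_N=\hat{\theta}_N(X^s)$, $U=\theta_0(X^s)$, $V=r_0(X^s)$, $d=d_{\theta}$. The paper treats this as an immediate consequence, and your handling of the details (conditioning on the training sample for the identity, Remark 1 for the sub-exponential moment condition) is exactly what is needed.
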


While the condition that $r_0(X^s)$ follows a sub-exponential distribution is appealing, it is susceptible to violation under certain circumstances. For instance, consider the case where $X^s$ and $X^t$ are drawn from normal distributions, specifically $N(\mu_1, 1)$ and $N(\mu_2, 1)$ with $\mu_1\ne \mu_2$. In this scenario, the density ratio can be expressed as $r_0(X^s)=\exp((\mu_2-\mu_1)X^s-(\mu_2^2-\mu_1^2)/2)$, which clearly indicates a significant departure from the characteristics of a sub-exponential distribution. To effectively tackle this challenge, it is crucial to analyze the divergence pattern of $r_0(x)$ alongside the tail property of $X^s$, as shown in the following two propositions. Their proofs can be found in Supplementary Materials.

\begin{proposition}\label{prop: poly_dr}
Suppose that
\begin{longlist}
    \item $\|\hat{\theta}_N(X^s)\|_{\infty}\le \xi_N$ almost surely for every $N\ge 1$;
    \item $\|\theta_0(X^s)\|_{\infty}$ has a finite eighth moment;
    \item there exists a dominant function $G(u)=c(u^m+1)$ with constants $c>0$ and $m\ge 0$ such that $r_0(x)\le G(\|x\|_{\infty})$;
    \item $r_0(X^s)$ has a finite second moment;
    \item there exists a positive constant $\varsigma$ such that $\mathbb{E}\exp(\varsigma\|X^s\|_{\infty})<\infty$.
\end{longlist}
Then, for $N\ge 2$, we have
$$
\begin{aligned}
& \mathbb{E}\left\|\hat{\theta}_N(X^t)-\theta_0(X^t)\right\|_2^2 \\
=& \mathbb{E}\left[\left\|\hat{\theta}_N(X^s)-\theta_0(X^s)\right\|_2^2\cdot r_0(X^s)\right] \\
\le & c_1(\log N)^m\mathbb{E}\left\|\hat{\theta}_N(X^s)-\theta_0(X^s)\right\|_2^2+\frac{c_2d_{\theta}(\xi_N^2+1)}{N},
\end{aligned}
$$
where $c_1, c_2$ are constants not depending on $N$, and $d_{\theta}$ represents the dimensionality of $\theta_0(X)$.
\end{proposition}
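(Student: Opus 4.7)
The plan is to start from the change-of-measure identity
$\mathbb{E}\|\hat{\theta}_N(X^t)-\theta_0(X^t)\|_2^2=\mathbb{E}[\|\hat{\theta}_N(X^s)-\theta_0(X^s)\|_2^2\,r_0(X^s)]$,
which holds because $X^t$ is absolutely continuous with respect to $X^s$ with Radon--Nikodym derivative $r_0$, and then to split the right-hand expectation along the event $E_N=\{\|X^s\|_\infty\le C\log N\}$, where $C$ is a constant depending on $\varsigma$ that will be tuned at the end. Condition (iii) then does all of the work on $E_N$: it gives $r_0(X^s)\le G(C\log N)\le c((C\log N)^m+1)\lesssim(\log N)^m$ pointwise on $E_N$, so the restricted expectation is bounded by $c_1(\log N)^m\,\mathbb{E}\|\hat{\theta}_N(X^s)-\theta_0(X^s)\|_2^2$, matching the first term of the target inequality.

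For the complementary event $E_N^c$ I would apply Cauchy--Schwarz to obtain
$$
\mathbb{E}\bigl[\|\hat{\theta}_N-\theta_0\|_2^2\,r_0(X^s)\mathds{1}_{E_N^c}\bigr]\le\bigl(\mathbb{E}\|\hat{\theta}_N-\theta_0\|_2^4\bigr)^{1/2}\bigl(\mathbb{E}[r_0(X^s)^2\mathds{1}_{E_N^c}]\bigr)^{1/2}.
$$
The first factor is handled by the crude bound $\|\hat{\theta}_N-\theta_0\|_2^2\le 2d_\theta(\xi_N^2+\|\theta_0(X^s)\|_\infty^2)$, combined with condition (ii) (the finite eighth moment is more than enough and would also accommodate a H\"older step with $p=4$, $q=4/3$ as an alternative), yielding a bound of order $d_\theta(\xi_N^2+1)$. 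For the second factor, condition (iii) gives $r_0(X^s)^2\lesssim\|X^s\|_\infty^{2m}+1$, and then the elementary splitting
$$
\mathbb{E}\bigl[(\|X^s\|_\infty^{2m}+1)\mathds{1}(\|X^s\|_\infty>t)\bigr]\le e^{-\varsigma t/2}\,\mathbb{E}\bigl[(\|X^s\|_\infty^{2m}+1)e^{\varsigma\|X^s\|_\infty/2}\bigr]
$$
together with Cauchy--Schwarz on the outer expectation (using condition (v)) gives exponential decay in $t$. Choosing $C$ so that $\varsigma C/2\ge 2$ makes the right-hand side decay at rate $N^{-2}$, hence after taking the square root the tail contribution is controlled by $c_2 d_\theta(\xi_N^2+1)/N$, matching the second term.

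Summing the bulk and tail contributions finishes the proof. The main technical point is the calibration of the threshold $C\log N$: it must be large enough that the exponential-tail decay in $N$ dominates the polynomial factor $\xi_N^2$ produced by the worst-case bound on $\hat{\theta}_N$, yet the $(\log N)^m$ factor coming from evaluating $G$ at $C\log N$ on the bulk must remain only logarithmic in $N$. Condition (v), via a standard sub-exponential Markov argument, is precisely what makes this window nonempty; condition (iv) is a mild redundancy along this route (it already follows from (iii) and (v)) and is included to guarantee well-definedness at the population level. The computation in Step 2 is the only place where care is required, since one must track how the $\xi_N^2$ factor interacts with the tail decay; everything else is routine integral splitting.
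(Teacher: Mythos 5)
Your proposal is correct and follows the same overall strategy as the paper's proof: both start from the change-of-measure identity, truncate at $\|X^s\|_{\infty}\le \iota_N$ with $\iota_N$ of order $\varsigma^{-1}\log N$, use the dominant function $G$ on the bulk to produce the $(\log N)^m$ factor, and control the tail by Cauchy--Schwarz together with the exponential Markov bound $\mathds{1}(\|X^s\|_{\infty}>\iota_N)\le \exp(\varsigma\|X^s\|_{\infty}/a)\exp(-\varsigma\iota_N/a)$. The only real difference is how the tail is factored. The paper works componentwise and applies Cauchy--Schwarz twice, pairing the exponential weight with $[\hat{\theta}_{N,(j)}-\theta_{0,(j)}]^2$ and keeping $r_0(X^s)$ intact, which is why it invokes the finite second moment of $r_0(X^s)$ (condition (iv)) and the finite \emph{eighth} moment of $\|\theta_0(X^s)\|_{\infty}$ (condition (ii)). You instead attach the indicator to $r_0(X^s)^2$, bound $r_0^2\lesssim \|X^s\|_{\infty}^{2m}+1$ via (iii), and extract the $N^{-2}$ decay from (v); this needs only a single Cauchy--Schwarz and only a fourth moment of $\|\theta_0(X^s)\|_{\infty}$, and, as you correctly observe, makes (iv) redundant since it already follows from (iii) and (v). What the paper's arrangement buys in exchange is that its tail estimate never uses the growth bound (iii) at all (only (ii), (iv), (v)), so the identical computation transfers verbatim to the exponential-growth setting of Proposition \ref{prop: exp_dr}, where squaring $G$ as you do would force a stronger integrability requirement; your route is slightly leaner under the stated polynomial hypotheses, the paper's is the more uniform template across the two propositions. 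Your calibration of the threshold ($\varsigma C/2\ge 2$ so that the square root of the tail factor is $O(N^{-1})$, while $G(C\log N)\lesssim (\log N)^m$ on the bulk) is exactly right.
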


\begin{proposition}\label{prop: exp_dr}
Suppose that
\begin{longlist}
    \item $\|\hat{\theta}_N(X^s)\|_{\infty}\le \xi_N$ almost surely for every $N\ge 1$;
    \item $\|\theta_0(X^s)\|_{\infty}$ has a finite eighth moment;
    \item there exists a dominant function $G(u)=c\exp(mu)$ with constants $c>0$, $m\ge 0$ such that $r_0(x)\le G(\|x\|_{\infty})$;
    \item $r_0(X^s)$ has a finite second moment;
    \item there exists a positive constant $\varsigma$ such that $\mathbb{E}\exp(\varsigma\|X^s\|_{\infty}^2)<\infty$.
\end{longlist}
Then, for $N\ge 2$, we have
$$
\begin{aligned}
& \mathbb{E}\left\|\hat{\theta}_N(X^t)-\theta_0(X^t)\right\|_2^2 \\
=& \mathbb{E}\left[\left\|\hat{\theta}_N(X^s)-\theta_0(X^s)\right\|_2^2\cdot r_0(X^s)\right] \\
\le & c_1\exp\left\{c_2(\log N)^{1/2}\right\}\mathbb{E}\left\|\hat{\theta}_N(X^s)-\theta_0(X^s)\right\|_2^2+\frac{c_3d_{\theta}(\xi_N^2+1)}{N},
\end{aligned}
$$
where $c_1, c_2, c_3$ are constants not depending on $N$, and $d_{\theta}$ represents the dimensionality of $\theta_0(X)$.
\end{proposition}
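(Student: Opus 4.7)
The plan is to adapt the truncation argument used for Proposition \ref{prop: poly_dr}, but calibrate the truncation level to the (much stronger) tail hypothesis (v). Starting from the identity
$$
\mathbb{E}\|\hat{\theta}_N(X^t)-\theta_0(X^t)\|_2^2 = \mathbb{E}\bigl[\|\hat{\theta}_N(X^s)-\theta_0(X^s)\|_2^2\,r_0(X^s)\bigr],
$$
which is the same change-of-measure relation used in Subsection \ref{subsec: covariate_shift}, I would introduce the truncation event $A_N=\{\|X^s\|_\infty\le T_N\}$ with $T_N = C(\log N)^{1/2}$. By Markov's inequality applied to the random variable $\exp(\varsigma\|X^s\|_\infty^2)$, hypothesis (v) yields $\mathbb{P}(A_N^c)\le C'\exp(-\varsigma T_N^2)=C'N^{-\varsigma C^2}$, so by choosing the constant $C$ sufficiently large we can make $\mathbb{P}(A_N^c)\le N^{-p}$ for any desired polynomial rate $p$, at the cost of only a $\exp(O((\log N)^{1/2}))$ factor in the exponential bound for $r_0$.

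On $A_N$, hypothesis (iii) gives $r_0(X^s)\le G(T_N)=c\exp(mC(\log N)^{1/2})$ pointwise, so
$$
\mathbb{E}\bigl[\|\hat{\theta}_N(X^s)-\theta_0(X^s)\|_2^2\,r_0(X^s)\,\mathds{1}_{A_N}\bigr] \le c\exp\bigl(mC(\log N)^{1/2}\bigr)\,\mathbb{E}\|\hat{\theta}_N(X^s)-\theta_0(X^s)\|_2^2,
$$
which yields the first term of the stated bound with $c_2=mC$. On $A_N^c$, I would use $\|\hat{\theta}_N(X^s)-\theta_0(X^s)\|_2^2\le 2d_\theta\xi_N^2+2\|\theta_0(X^s)\|_2^2$. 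The $\xi_N^2$ piece reduces to bounding $\mathbb{E}[r_0(X^s)\mathds{1}_{A_N^c}]$, which by Cauchy--Schwarz and hypothesis (iv) is at most $(\mathbb{E}r_0(X^s)^2)^{1/2}\mathbb{P}(A_N^c)^{1/2}$; choosing $C$ so that $\mathbb{P}(A_N^c)\le N^{-2}$ delivers a $d_\theta\xi_N^2/N$ contribution. For the $\|\theta_0\|_2^2$ piece I plan to apply Hölder's inequality in the decomposition $\|\theta_0(X^s)\|_2^2\cdot r_0(X^s)\cdot \mathds{1}_{A_N^c}$: first Cauchy--Schwarz to split off $r_0$ against $(\mathbb{E}r_0^2)^{1/2}$, then a second Cauchy--Schwarz on the remaining $\mathbb{E}[\|\theta_0\|_2^4\mathds{1}_{A_N^c}]\le (\mathbb{E}\|\theta_0\|_2^8)^{1/2}\mathbb{P}(A_N^c)^{1/2}$, using hypothesis (ii). Overall this factor is bounded by a constant times $\mathbb{P}(A_N^c)^{1/4}$; enlarging $C$ so that $\mathbb{P}(A_N^c)\le N^{-4}$ yields the desired $O(1/N)$ rate.

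The main obstacle is the simultaneous calibration of $T_N$: it must be small enough that $\exp(mT_N)$ on $A_N$ grows only as $\exp(c_2(\log N)^{1/2})$, yet large enough that the tail probability $\mathbb{P}(A_N^c)$, after passing through two successive Cauchy--Schwarz steps, still contributes at order $1/N$. This balance is exactly what the sub-Gaussian-in-$\|X^s\|_\infty$ condition (v) makes possible, since it trades the scale $T_N=C(\log N)^{1/2}$ against tail probabilities of arbitrarily high polynomial order $N^{-\varsigma C^2}$. Assembling the three pieces gives the claimed inequality.
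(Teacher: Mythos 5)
Your proposal is correct and follows essentially the same route as the paper: truncate at $\|X^s\|_{\infty}\le c'(\log N)^{1/2}$, bound $r_0$ by $G$ on the bulk to get the $\exp\{c_2(\log N)^{1/2}\}$ factor, and control the tail using the moment condition $\mathbb{E}\exp(\varsigma\|X^s\|_{\infty}^2)<\infty$ together with $\mathbb{E}[r_0(X^s)^2]<\infty$, the eighth moment of $\theta_0$, and the almost-sure bound $\xi_N$. The only cosmetic difference is that you split $\|\hat{\theta}_N-\theta_0\|_2^2\le 2d_{\theta}\xi_N^2+2\|\theta_0\|_2^2$ and route the tail through $\mathbb{P}(A_N^c)$ via Markov with fractional powers (requiring a slightly larger truncation constant), whereas the paper keeps the squared difference intact and applies Cauchy--Schwarz twice after writing $\mathds{1}(\|X^s\|_{\infty}>\iota_N)\le \exp(\varsigma\|X^s\|_{\infty}^2/4)\exp(-\varsigma\iota_N^2/4)$; both give the stated bound up to constants.
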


\begin{remark}
For any positive scalar $\zeta$, we have $\exp\{(\log N)^{1/2}\}=o(N^{\zeta})$. The existence of a positive scalar $\varsigma$ such that $\mathbb{E}\exp(\varsigma\|X^s\|_{\infty}^2)<\infty$ can be satisfied when $\|X^s\|_{\infty}$ is sub-Gaussian distributed; see, e.g., \cite{wainwright2019high}.
\end{remark}

Propositions \ref{prop: poly_dr} -- \ref{prop: exp_dr} extend the result presented in Corollary \ref{cor: sub_exp_dr} to the scenarios where $r_0(X^s)$ may not be sub-exponentially distributed and instead possesses only a second finite moment. By assuming various divergence patterns of $r_0(x)$ as well as the tail properties of $X^s$, we derive distinct upper bounds for the expected excess risk in the target domain. Specifically, when $r_0(x)$ diverges according to a polynomial rate, the excess risk in target domain is shown to differ from that in source domain by a factor that is a polynomial function of logarithm order. Furthermore, if $r_0(x)$ diverges at a more rapid rate, we require that $X^s$ exhibits greater concentration. In this case, the difference in excess risks between the source and target domains becomes more pronounced, which is larger than any polynomial function of logarithm order but smaller than any positive power of $N$.

When the density ratio is unbounded, \cite{ma2023optimally} considered a reweighted RKHS least squares estimator using the truncated density ratio as weights. They demonstrated that this reweighted RKHS estimator is nearly optimal in the target domain under appropriate conditions. However, their estimator assumes that the density ratio is known, which is often not the case in practice, as the density ratio typically needs to be estimated. In contrast to \cite{ma2023optimally}, our analysis indicates that it is possible to construct a nearly optimal estimator without relying on the density ratio. In such scenarios, an estimator based solely on source domain data can still generalize effectively to the target domain.
For a class of parametric models, \cite{ma2023iclr} showed that the classical maximum likelihood estimator, using only source data without any modifications, achieves minimax optimality for covariate shift if the parametric model is correctly specified. Their results also hold without requiring any boundedness condition on the density ratio.


\section{
Nonparametric regression and conditional flow models}
\label{nrcfm}

In this section, we apply the  results from Section \ref{subsec: drc_generic_properties} to two important scenarios: nonparametric regression and conditional distribution estimation using flow models under covariate shift.

\subsection{Nonparametric regression}
Over the past few decades, nonparametric regression has emerged as an active area of research in statistical learning, with extensive studies established based on methods such as splines \citep{van1990estimating}, reproducing kernels \citep{caponnetto2007optimal} and neural networks \citep{schmidt2020nonparametric}. In this subsection, we aim to elucidate the generalization capacity of the regression estimator derived from source domain when applied to target domain. Specifically, the regression task is framed in a general case, where both the covariate domain and true conditional mean function may be unbounded, and the response can be multi-dimensional.

Consider the following models
$$
Y^{\jmath}=f_0(X^{\jmath})+\varepsilon^{\jmath}, \quad \text{for } \jmath=s, t.
$$
Here, $f_0$ represents the unknown regression function of interest, and $\varepsilon^{\jmath}$ denotes the noise term with $\mathbb{E}(\varepsilon^{\jmath}|X^{\jmath})=0$ and $\mathrm{Var}(\varepsilon^{\jmath}|X^{\jmath})=\Xi_0$ for some positive semi-definite matrix $\Xi_0$ and for $\jmath=s, t$; additionally, $\varepsilon^s$ and $\varepsilon^t$ are identically distributed. Suppose that the covariate vector is $d_x$-dimensional and the response vector is $d_y$-dimensional. Our estimation paradigm concentrate on the source domain. Given source data $\{(X^s_1, Y^s_1), \dots, (X^s_N, Y^s_N)\}$, the sample estimator is given by
$$
\hat{f}^s_{N}=\mathop{\mathrm{argmin}}_{f\in\mathcal{F}_{\mathrm{NN}}^{d_y}}\frac 1N\sum_{i=1}^N\left\|Y^s_i-f(X^s_i)\right\|_2^2.
$$
We note that $\mathcal{F}_{\mathrm{NN}}^{d_y}$ is a neural network function class such that $f: \mathbb{R}^{d_x}\to [\underline{\delta}, \bar{\delta}]^{d_y}$ for any $f\in\mathcal{F}_{\mathrm{NN}}^{d_y}$.

Theorem \ref{thm: reg_convergence_rate} delineates the estimation error, founded on the expected excess risk, associated with the source estimator $\hat{f}^s_N$ in both source and target domains. It is noticeable that, $\hat{f}^s_N$ attains, within the source domain, a (nearly) standard minimax optimal convergence rate \citep{stone1982optimal}, while offering an remarkably similar rate in the target domain, with only a logarithmic factor as the compromise. The proof of Theorem \ref{thm: reg_convergence_rate} is present in Supplementary Materials.

\begin{theorem}\label{thm: reg_convergence_rate}
Assume that
\begin{longlist}
    \item $e_j^{\top}f_0(x)\in \mathcal{H}^{\beta_f}_{\mathrm{Loc}}(\mathbb{R}^d, B_u)$ with $\beta_f>0$ and $B_u\le c(u^m+1)$ for some universal constants $c>0$, $m\ge 0$, and for any $j\in\{1, \dots, d_y\}$, where $e_j$ denotes a $d_y$-dimensional one-hot vector with the $j$-th component equal to 1 and all other components equal to 0;
    \item $\|Y^s\|_{\infty}$, $r_0(X^s)$ and $\|X^s\|_{\infty}$ are sub-exponentially distributed random variables.
\end{longlist}
Then, given the hyper-parameters $L$ set to $\mathcal{O}(N^{d_x/(2d_x+4\beta_f)}\log N)$, $M$ set to $\mathcal{O}(1)$, $\bar{\delta}=\bar{\delta}_N$ set to $(\log N)^{1+\kappa}$ with an arbitrarily fixed $\kappa\in (0, 1]$, and $\underline{\delta}=\underline{\delta}_N$ set to $-(\log N)^{1+\kappa}$, for $N\ge 2$, we have
$$
\begin{aligned}
\mathbb{E}\left\|\hat{f}^s_N(X^s)-f_0(X^s)\right\|_2^2 &\le c^*N^{-\frac{2\beta_f}{d_x+2\beta_f}}(\log N)^{(8+4\kappa)\vee (2m)}, \\
\mathbb{E}\left\|\hat{f}^s_N(X^t)-f_0(X^t)\right\|_2^2 &\le c^{**}N^{-\frac{2\beta_f}{d_x+2\beta_f}}(\log N)^{(8+4\kappa)\vee (2m)+1},
\end{aligned}
$$
where $c^*$ and $c^{**}$ are constants not depending on $N$.
\end{theorem}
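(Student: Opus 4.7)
The plan is to establish the source-domain bound first, and then deduce the target-domain bound by a direct application of Corollary \ref{cor: sub_exp_dr}. The hypotheses of that corollary are readily verified: the network class forces $\|\hat{f}^s_N(X^s)\|_\infty \le (\log N)^{1+\kappa} =: \xi_N$; the polynomial majorant $B_u\le c(u^m+1)$ together with the sub-exponential tail of $\|X^s\|_\infty$ implies $\|f_0(X^s)\|_\infty\le c(\|X^s\|_\infty^m+1)$ and hence has all finite moments; and $r_0(X^s)$ is sub-exponential by assumption. The corollary then yields
\begin{equation*}
\mathbb{E}\|\hat{f}^s_N(X^t)-f_0(X^t)\|_2^2 \;\le\; c_1(\log N)\,\mathbb{E}\|\hat{f}^s_N(X^s)-f_0(X^s)\|_2^2 \;+\; \frac{c_2\,d_y(\xi_N^2+1)}{N},
\end{equation*}
and since $\mathcal{O}((\log N)^{2+2\kappa}/N)$ is of smaller order than $N^{-2\beta_f/(d_x+2\beta_f)}$, only one additional $\log N$ factor is incurred, matching the claimed target-domain rate.

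For the source-domain bound I would follow the blueprint of Theorem \ref{thm: dre_convergence_rate_ls}, adapted from the Bregman loss to ordinary least squares. With $\Phi(f)(x,y)=\|y-f(x)\|_2^2-\|y-f_0(x)\|_2^2$, the basic inequality from the definition of $\hat{f}^s_N$ together with $\mathbb{E}(\varepsilon^s\mid X^s)=0$ decomposes $\mathcal{R}^s(\hat{f}^s_N)$ into a uniform empirical-process term over $\mathcal{F}_{\mathrm{NN}}^{d_y}$ and the approximation error $\inf_{f\in\mathcal{F}_{\mathrm{NN}}^{d_y}}\mathbb{E}\|f(X^s)-f_0(X^s)\|_2^2$. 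Truncating $\|Y^s\|_\infty$ at level $(\log N)^{1+\kappa}$ leaves only a super-polynomially small remainder thanks to the sub-exponential tail, so the response may be treated as bounded by $\bar{\delta}_N$. The empirical-process term is then controlled by standard pseudo-dimension or covering-number bounds for ReLU networks of depth $L$ and constant width, producing a contribution of order $\bar{\delta}_N^2 L\log(LN)/N$ up to a further $\log N$ from the Bernstein/Talagrand localization step.

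The approximation error is handled by restricting to the cube $[-R_N,R_N]^{d_x}$ with $R_N\asymp\log N$. The sub-exponential tail of $\|X^s\|_\infty$ combined with the polynomial majorant for $\|f_0(x)\|_\infty$ makes the complementary mass $\mathbb{E}[\|f_0(X^s)\|_2^2\mathds{1}(\|X^s\|_\infty>R_N)]$ super-polynomially small. On the cube, Definition \ref{def: local_holder_class} ensures that the rescaling of each coordinate $e_j^\top f_0$ to $[0,1]^{d_x}$ belongs to $\mathcal{H}^{\beta_f}([0,1]^{d_x}, B_{R_N})$ with $B_{R_N}=\mathcal{O}((\log N)^m)$; invoking the deep ReLU approximation theorems used earlier in the paper then gives squared approximation error of order $(\log N)^{2m}L^{-4\beta_f/d_x}$. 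Balancing the two error sources at $L\asymp N^{d_x/(2d_x+4\beta_f)}\log N$ yields the rate $N^{-2\beta_f/(d_x+2\beta_f)}(\log N)^{(8+4\kappa)\vee(2m)}$. The principal obstacle is coordinating the three simultaneously unbounded objects $Y^s$, $f_0$, and $X^s$ so that the response truncation, the network envelope $\bar{\delta}_N$, and the cube radius $R_N$ sit at the same logarithmic order and the accumulated tail residuals are genuinely absorbed into the dominant logarithmic factor rather than inflating it.
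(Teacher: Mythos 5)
Your proposal is correct and follows essentially the same route as the paper's own proof: the source bound via the decomposition into a stochastic term (log-level truncation of $Y^s$, then pseudo-dimension/covering bounds for the ReLU class as in Theorem \ref{thm: dre_convergence_rate_ls}) plus an approximation term (restriction to a cube of radius $\asymp\log N$, local H\"older $\Rightarrow$ H\"older with constant $\mathcal{O}((\log N)^m)$, deep ReLU approximation), and the target bound by Corollary \ref{cor: sub_exp_dr} exactly as in the paper. The only cosmetic differences are that the paper checks the moment condition on $\|f_0(X^s)\|_{\infty}$ via Jensen's inequality from the sub-exponential $\|Y^s\|_{\infty}$ rather than the polynomial envelope, and the stochastic term is of order $SL\log S\asymp L^2\log L$ (not $L\log(LN)$), which is precisely what balances to the stated $(8+4\kappa)\vee(2m)$ logarithmic exponent.
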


We note that Theorem \ref{thm: reg_convergence_rate} is a direct application of Corollary \ref{cor: sub_exp_dr}, under the assumption that the density ratio is distributed sub-exponentially. Furthermore, when $r_0(X^s)$ exhibits different patterns, as demonstrated in Propositions \ref{prop: poly_dr} and \ref{prop: exp_dr}, analogous results can be obtained with appropriate modifications.

In recent years, there has been considerable effort dedicated to the error analysis of nonparametric regression using deep neural network models \citep{bauer2019deep, nakada2020adaptive, schmidt2020nonparametric, farrell2021deep, jiao2023deep}. These studies typically rely on the crucial assumption that the regression function belongs to a uniformly bounded Hölder class defined on a bounded domain. This assumption simplifies the analysis by ensuring that the function's behavior is well-controlled across its entire domain.
In contrast, our results, as presented in Theorem \ref{thm: reg_convergence_rate}, relax this assumption by only requiring that the regression function belongs to a local Hölder class. This  allows for unbounded domains and ranges, which is more realistic for scenarios where data may not be neatly confined within bounded limits. Handling unbounded functions requires a careful analysis of the tail behavior of the relevant distributions and functions, making the analysis more technically challenging. Our results significantly enhance the understanding  of deep neural networks' performance in nonparametric regression tasks.

\subsection{Conditional flow models}

In this subsection, we focus on the task of learning a  conditional distribution using
generative flow models, as described in Section \ref{subsec: preliminary_flow_model}.
We consider a specific stochastic interpolant,
\begin{equation}\label{eqn: additive_interpolant}
Y^s_{\tau}=a_{\tau}\eta+b_{\tau}Y^s,
\end{equation}
where $a_{\tau}$ and $b_{\tau}$ are continuously differentiable with respect to $\tau\in [0, 1]$, satisfying the boundary conditions $a_0=b_1=1$ and $a_1=b_0=0$, and  $\eta$ denotes the $d_y$-dimensional standard Gaussian random vector.

We define the deduced velocity field as
$$
v_0(x, y, \tau)=\mathbb{E}\left(\dot{a}_{\tau}\eta+\dot{b}_{\tau}Y^s\big|Y^s_{\tau}=y, X^s=x\right).
$$
Let $\mathcal{L}^2(X^s, Y^s)=\{f: \mathbb{R}^{d_x}\times \mathbb{R}^{d_y}\times [0, 1], \mathbb{E}\|f(X^s, Y^s_{\tau}, \tau)\|_2^2<\infty \text{ for any } \tau\in [0, 1]\}$. Clearly, at the population level, it holds that
$$
v_0=\mathop{\mathrm{argmin}}_{f\in \mathcal{L}^2(X^s, Y^s)}\int_0^1\mathbb{E}\left\|\dot{a}_{\tau}\eta+\dot{b}_{\tau}Y^s-f(X^s, Y^s_{\tau}, \tau)\right\|_2^2\mathrm{d}\tau,
$$
provided that $\|Y^s\|_2$ has a finite second moment. At the empirical level, given source observations $\{(X^s_1, Y^s_1), \dots, (X^s_N, Y^s_N)\}$, we independently sample $N$ random vectors $\{\eta_1, \dots, \eta_N\}$ from the Gaussian distribution $N(0, I_{d_y})$, and $N$ random values $\{\tau_1, \dots, \tau_N\}$ from the uniform distribution $U(0, 1)$. Then, the empirical estimator of $v_0$ is constructed by
$$
\hat{v}^s_N=\mathop{\mathrm{argmin}}_{f\in \mathcal{F}_{\mathrm{NN}}^{d_y}}\frac{1}{N}\sum_{i=1}^N\left\|\dot{a}_{\tau_i}\eta_i+\dot{b}_{\tau_i}Y^s_i-f(X^s_i, Y^s_{i, \tau_i}, \tau_i)\right\|_2^2,
$$
where $Y^s_{i, \tau_i}=a_{\tau_i}\eta_i+b_{\tau_i}Y^s_i$.

With the estimate $\hat{v}^s_N$ and for any fixed $x\in\mathcal{X}^s$, an ODE, with respect to $\tau\in [0, 1]$, is established as follows,
$$
\mathrm{d}\hat{Z}_{\tau}=\hat{v}^s_N(x, \hat{Z}_{\tau}, \tau)\mathrm{d}\tau, \quad \hat{Z}_0\sim N(0, I_{d_y}).
$$
Intuitively, the distribution of $\hat{Z}_1$ is an approximation of the conditional distributions of $Y^s|X^s=x$ and $Y^t|X^t=x$. To quantify this approximation more concretely, we employ the 2-Wasserstein distance as a criterion for measuring the discrepancy of two distributions (see, e.g., \cite{kantorovich1960mathematical, shen2018wasserstein} for its definition). We use $W_2^2(\rho_1\| \rho_2)$ to denote the squared 2-Wasserstein distance for two probability density functions $\rho_1$ and $\rho_2$. Supposing that $Y^s|X^s=x$ admits a conditional density function denoted as $\rho_{0, x}$, we denote the density function of $\hat{Z}_1$, when given $x$, as $\hat{\rho}^s_x$. Then, the estimation errors for source and target domains are defined respectively as
$$
\begin{aligned}
\mathcal{E}^s &= \mathbb{E}\left[W_2^2(\rho_{0, X^s}\|\hat{\rho}^s_{X^s})\right]=\int \mathbb{E}\left[W_2^2(\rho_{0, x}\|\hat{\rho}^s_x)\right]p(x)\mathrm{d}x, \\
\mathcal{E}^t &= \mathbb{E}\left[W_2^2(\rho_{0, X^t}\|\hat{\rho}^s_{X^t})\right]=\int \mathbb{E}\left[W_2^2(\rho_{0, x}\|\hat{\rho}^s_x)\right]q(x)\mathrm{d}x.
\end{aligned}
$$
While the relationship between $\mathcal{E}^s$ and $\mathcal{E}^t$ is not immediately evident, Lemma \ref{lem: wasserstein_convergence} reveals that they exhibits similar property compared to the squared loss demonstrated in Corollary \ref{cor: sub_exp_dr}. The proof of Lemma \ref{lem: wasserstein_convergence} is present in the Supplementary Materials.

\begin{lemma}\label{lem: wasserstein_convergence}
Assume that
\begin{longlist}
    \item the solution of ODE \eqref{eqn: true_ode}, with standard Gaussian initialization, is unique such that $Z_1$ given $x$ follows the distribution of $Y^s|X^s=x$ for all $x\in\mathcal{X}^s$;
    \item $\|Y^s\|_2$ has a finite fourth moment;
    \item $r_0(X^s)$ is sub-exponentially distributed.
\end{longlist}
Then, for $N\ge 2$, we have
$$
\mathcal{E}^t\le c_1\mathcal{E}^s\log N+\frac{c_2d_y[\max(\bar{\delta}^2, \underline{\delta}^2)+1]}{N},
$$
where $c_1$ and $c_2$ are constants not depending on $N$.
\end{lemma}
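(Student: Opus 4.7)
The plan is to reduce $\mathcal{E}^t$ to a density-ratio-weighted average of the per-$x$ squared Wasserstein error, and then perform a truncation on the sub-exponential weight $r_0(X^s)$, in the same spirit as the proof of Lemma~\ref{lem: sub_exp_convergence}. Set $W^2(x):=W_2^2(\rho_{0,x}\|\hat\rho^s_x)$. Since $q(x)=r_0(x)p(x)$, the definition of $\mathcal{E}^t$ gives $\mathcal{E}^t=\mathbb{E}[r_0(X^s)W^2(X^s)]$, so the task is to control this weighted expectation in terms of $\mathcal{E}^s=\mathbb{E}W^2(X^s)$.

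\textbf{Step 1 (truncation).} I would pick $A_N:=C\log N$ with $C$ large enough, depending on the sub-exponential parameter $\varsigma$ of $r_0(X^s)$, so that $\mathbb{P}(r_0(X^s)>A_N)\le N^{-4}$, and split
\[
\mathcal{E}^t=\mathbb{E}\!\left[r_0(X^s) W^2(X^s)\mathds{1}\{r_0(X^s)\le A_N\}\right]+\mathbb{E}\!\left[r_0(X^s) W^2(X^s)\mathds{1}\{r_0(X^s)>A_N\}\right].
\]
The first summand is trivially bounded by $A_N\mathcal{E}^s=C\mathcal{E}^s\log N$, which furnishes the $c_1\mathcal{E}^s\log N$ contribution. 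For the second, Cauchy--Schwarz yields
\[
\mathbb{E}\!\left[r_0(X^s) W^2(X^s)\mathds{1}\{r_0(X^s)>A_N\}\right]\le \left[\mathbb{E}W^4(X^s)\right]^{1/2}\left[\mathbb{E}r_0^2(X^s)\mathds{1}\{r_0(X^s)>A_N\}\right]^{1/2},
\]
and the sub-exponential tail of $r_0(X^s)$ makes the second factor $O((\log N)/N^2)$ with the chosen $C$.

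\textbf{Step 2 (fourth-moment bound for $W^2$).} To control $\mathbb{E}W^4(X^s)$, I would use the elementary inequality $W_2^2(\mu,\nu)\le 2M_2(\mu)+2M_2(\nu)$ for second moments $M_2$. Here $M_2(\rho_{0,x})=\mathbb{E}[\|Y^s\|_2^2\mid X^s=x]$, so $\mathbb{E}[M_2(\rho_{0,X^s})^2]\le \mathbb{E}\|Y^s\|_2^4<\infty$ by assumption~(ii). For $\hat\rho^s_x$, the representation $\hat Z_1=\hat Z_0+\int_0^1\hat v^s_N(x,\hat Z_\tau,\tau)\,d\tau$ with $\hat Z_0\sim N(0,I_{d_y})$, together with the coordinate-wise bound $\hat v^s_N\in[\underline\delta,\bar\delta]^{d_y}$ inherited from $\mathcal{F}_{\mathrm{NN}}^{d_y}$, gives the uniform estimate $M_2(\hat\rho^s_x)\le 2d_y[1+\max(\bar\delta^2,\underline\delta^2)]$. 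Combining the two, $[\mathbb{E}W^4(X^s)]^{1/2}\lesssim d_y[1+\max(\bar\delta^2,\underline\delta^2)]$, so the tail contribution is at most $c'd_y[1+\max(\bar\delta^2,\underline\delta^2)](\log N)/N^2$, which is absorbed into the required $c_2d_y[\max(\bar\delta^2,\underline\delta^2)+1]/N$ term once $N\ge 2$.

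\textbf{Main obstacle.} The delicate point is Step~2: turning the coordinate-wise boundedness of the neural-network velocity $\hat v^s_N$ into a second-moment bound on the ODE-generated law $\hat\rho^s_x$ that is \emph{uniform in $x$}. This uniformity is what lets me take the subsequent expectation over $X^s$ freely. The finite fourth moment of $Y^s$ from assumption~(ii) handles the target side $\rho_{0,x}$, while assumption~(i) is used only to guarantee that $\rho_{0,x}$ coincides with the terminal law of ODE~\eqref{eqn: true_ode} so that the Wasserstein comparison is meaningful; once the moment control is in place, the remaining arithmetic is the standard truncation template already deployed in Lemma~\ref{lem: sub_exp_convergence}.
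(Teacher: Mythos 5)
Your proof is correct, and it shares the paper's overall skeleton — rewrite $\mathcal{E}^t=\mathbb{E}[r_0(X^s)W_2^2(\rho_{0,X^s}\|\hat\rho^s_{X^s})]$ and truncate $r_0$ at level of order $\log N$, the untruncated part giving the $c_1\mathcal{E}^s\log N$ term — but you handle the tail term by a genuinely different device. The paper bounds $W_2^2(\rho_{0,x}\|\hat\rho^s_x)$ pointwise via the synchronous coupling $(Z_1(x,z),\hat Z_1(x,z))$ driven by the same Gaussian seed, which is exactly where assumption (i) enters (so that the true ODE's terminal law is $\rho_{0,x}$), and it controls $\int\|\hat Z_1(x,z)\|_2^2\lambda(z)\mathrm{d}z$ by differentiating in $\tau$ and applying Gr\"onwall's inequality; it then kills the tail with exponential-Markov bounds ($a\le e^{a}$, $\mathds{1}(a>0)\le e^{a}$) and Cauchy--Schwarz on $H_1(x)=\mathbb{E}(\|Y^s\|_2^2\,|\,X^s=x)$, whose square-integrability is where the fourth moment of $\|Y^s\|_2$ is used. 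You instead use the elementary bound $W_2^2(\mu\|\nu)\le 2M_2(\mu)+2M_2(\nu)$, get the uniform-in-$x$ bound $M_2(\hat\rho^s_x)\le 2d_y[1+\max(\bar\delta^2,\underline\delta^2)]$ by directly integrating the coordinatewise-bounded velocity (no Gr\"onwall needed), and then apply Cauchy--Schwarz with $\mathbb{E}W^4$ against the sub-exponential tail of $r_0$. Your route is more elementary, and it does not actually use assumption (i) at all (the Wasserstein comparison is between the true conditional law and the generated law, both defined without the true ODE); the paper's coupling argument is slightly sharper in that it ties $W_2^2$ to the trajectory discrepancy, which is what Theorem 5.4 later exploits, but for this lemma both arguments land on the same $c_2 d_y[\max(\bar\delta^2,\underline\delta^2)+1]/N$ remainder. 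Your tail bookkeeping ("$O((\log N)/N^2)$") is loose but harmless: with $\mathbb{P}(r_0(X^s)>A_N)\le N^{-4}$ and $\mathbb{E}[r_0(X^s)^4]<\infty$, Cauchy--Schwarz already gives $[\mathbb{E}r_0^2\mathds{1}(r_0>A_N)]^{1/2}=O(N^{-1})$, which suffices.
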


\begin{remark}
Assumption \textit{(i)} in Lemma \ref{lem: wasserstein_convergence} can be satisfied by some regularity conditions on the probability structure of $(X^s, Y^s)$ and on the continuity of $v_0$; see, e.g., \cite{bris2008existence, huang2023conditional, gao2024gaussian}.
\end{remark}

\begin{remark}
The result present in Lemma \ref{lem: wasserstein_convergence} is not restricted to the specific interpolant~\eqref{eqn: additive_interpolant}.
\end{remark}

We now proceed to present explicit upper bounds for both $\mathcal{E}^s$ and $\mathcal{E}^t$. Essentially, the estimation error of conditional density function heavily relies on the estimation error of the velocity field. As in previous analyses, it is pivotal to specify the continuity pattern of the underlying function, which is $v_0$ in this case. Here, we adopt a Sobolev-type function class, which facilitates the application of Gr{\"o}nwall's inequality. The classical Sobolev space is defined as follows.
\begin{definition}[Sobolev space]
Let $\beta\in\mathbb{N}$. The Sobolev space $\mathcal{W}^{\beta, \infty}(\Omega)$ is defined by
$$
\mathcal{W}^{\beta, \infty}(\Omega)=\left\{f: \Omega\to\mathbb{R}, \|f\|_{\infty}<\infty, \|D^{\alpha}f\|_{\infty}<\infty \text{ for all } \alpha\in\mathbb{N}^d \text{ with } \|\alpha\|_1\le \beta\right\}.
$$
Furthermore, for any $f\in\mathcal{W}^{\beta, \infty}(\Omega)$, we define the Sobolev norm $\|\cdot\|_{\mathcal{W}^{\beta, \infty}(\Omega)}$ as
$$
\|f\|_{\mathcal{W}^{\beta, \infty}(\Omega)}=\max_{0\le \|\alpha\|_1\le \beta}\|D^{\alpha}f\|_{\infty}.
$$
\end{definition}
Then, we introduce the local and time-space version of $\mathcal{W}^{\beta, \infty}(\Omega)$, denoted as $\mathcal{W}_{\mathrm{Gen}}^{\beta, \infty}(\mathbb{R}^d, B_u)$, which is defined by
$$
\begin{aligned}
\mathcal{W}_{\mathrm{Gen}}^{\beta, \infty}(\mathbb{R}^d, B_u)=\Big\{& f: \mathbb{R}^d\times [0, 1]\to \mathbb{R}, \\
& g(x, \tau)=f_{|[-u, u]^d\times [0, 1]}(2ux-u\mathrm{1}_d, \tau)\in \mathcal{W}^{\beta, \infty}([0, 1]^{d+1})\\
& \text{with } \|g\|_{\mathcal{W}^{\beta, \infty}([0, 1]^{d+1})}\le B_u \text{ for any } u>0\Big\}.
\end{aligned}
$$
It is worthy noting that this function class is chosen for simplicity. Recent studies indicate that a general velocity field may exhibit singular behavior at $\tau=1$ \citep{gao2024convergence, jiao2024convergence}; however, this aspect falls outside the scope of this paper and deserves a more thorough investigation.

To be compatible with the Lipschitz continuity inherent in Sobolev space, the modified neural network class $\mathcal{F}^{d}_{\mathrm{NN}, \Lambda}=\mathcal{F}^{d}_{\mathrm{NN}}\cap \mathcal{F}_{\mathrm{Lip}, \Lambda}^d$ is employed, where
$$
\mathcal{F}_{\mathrm{Lip}, \Lambda}^d=\{f: \Omega\to \mathbb{R}^d, \|f(z_1)-f(z_2)\|_2\le \Lambda\|z_1-z_2\|_2, \text{ for any } z_1, z_2\in\Omega\}.
$$
Furthermore, the depth and width of $\mathcal{F}^{d}_{\mathrm{NN}, \Lambda}$ correspond to the depth and width of $\mathcal{F}^{d}_{\mathrm{NN}}$. Theorem \ref{thm: gen_convergence_rate} establishes the sample convergence of the conditional density estimation error; see the Supplementary Materials for its proof.

\begin{theorem}\label{thm: gen_convergence_rate}
Assume that
\begin{longlist}
    \item the solution of ODE \eqref{eqn: true_ode}, with standard Gaussian initialization, is unique such that $Z_{\tau}$ given $x$ follows the distribution of $Y^s_{\tau}|X^s=x$ for all $x\in\mathcal{X}^s$ and $\tau\in [0, 1]$;
    \item $e_j^{\top}v_0\in \mathcal{W}^{1, \infty}_{\mathrm{Gen}}(\mathbb{R}^{d_x+d_y}, B_u)$ with $B_u\le c(u^m+1)$ for some universal constants $c>0$, $m\in [0, 1]$, and for any $j\in\{1, \dots, d_y\}$, where $e_j$ denotes a $d_y$-dimensional one-hot vector with the $j$-th component equal to 1 and all other components equal to 0;
    \item $\|Y^s\|_{\infty}$ and $\|X^s\|_{\infty}$ are sub-Gaussian random variables;
    \item $r_0(X^s)$ is sub-exponentially distributed.
\end{longlist}
Then, given the hyper-parameters $L$ set to $\mathcal{O}(N^{(d_x+d_y+1)/[2(d_x+d_y+1)+4]}\log N)$, $M$ set to $\mathcal{O}(1)$, $\bar{\delta}=\bar{\delta}_N$ set to $(\log N)^{(1+\kappa)/2}$ with an arbitrarily fixed $\kappa\in (0, 1)$, $\underline{\delta}=\underline{\delta}_N$ set to $-(\log N)^{(1+\kappa)/2}$ and $\Lambda=\Lambda_N$ set to $(\log N)^{(1+\kappa)/2}$, for $N\ge 2$, we have
$$
\begin{aligned}
\mathcal{E}^s &\le c^*N^{-\frac{2}{d_x+d_y+3}}(\log N)^{6+2\kappa}\exp\left(1+2(\log N)^{(1+\kappa)/2}\right), \\
\mathcal{E}^t &\le c^{**}N^{-\frac{2}{d_x+d_y+3}}(\log N)^{7+2\kappa}\exp\left(1+2(\log N)^{(1+\kappa)/2}\right)
\end{aligned}
$$
where $c^*$ and $c^{**}$ are constants not depending on $N$.
\end{theorem}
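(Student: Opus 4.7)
The overall strategy is to decompose $\mathcal{E}^s$ into a flow-stability factor times the mean squared $L^2$ error of the velocity estimate $\hat v^s_N$, bound the latter by standard nonparametric regression rates adapted to the local Sobolev class, and then transfer the resulting source-domain bound to $\mathcal{E}^t$ via Lemma \ref{lem: wasserstein_convergence}. Concretely, for each fixed $x\in\mathcal{X}^s$ I would couple the true ODE \eqref{eqn: true_ode} with the estimated ODE through a common Gaussian initialization $Z_0=\hat Z_0\sim N(0,I_{d_y})$, so that $(Z_1,\hat Z_1)$ is an admissible coupling for $(\rho_{0,x},\hat\rho_x^s)$ and hence $W_2^2(\rho_{0,x}\|\hat\rho_x^s)\le \mathbb{E}\|Z_1-\hat Z_1\|_2^2$.

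For the ODE stability step, writing $\Delta=\hat v^s_N-v_0$ and adding/subtracting $\hat v^s_N(x,Z_s,s)$, the Lipschitz constraint $\hat v^s_N\in\mathcal{F}_{\mathrm{Lip},\Lambda_N}^{d_y}$ gives
\begin{equation*}
\|Z_\tau-\hat Z_\tau\|_2\le \int_0^\tau\|\Delta(x,Z_s,s)\|_2\,\mathrm{d}s+\Lambda_N\int_0^\tau\|Z_s-\hat Z_s\|_2\,\mathrm{d}s,
\end{equation*}
and Gr\"onwall followed by Cauchy--Schwarz yields $\|Z_1-\hat Z_1\|_2^2\le e^{2\Lambda_N}\int_0^1\|\Delta(x,Z_s,s)\|_2^2\,\mathrm{d}s$. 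Since assumption (i) identifies the law of $Z_s$ given $x$ with that of $Y^s_s\mid X^s=x$, taking expectations in $x$ produces
\begin{equation*}
\mathcal{E}^s\le e^{2\Lambda_N}\int_0^1\mathbb{E}\bigl\|\hat v^s_N(X^s,Y^s_\tau,\tau)-v_0(X^s,Y^s_\tau,\tau)\bigr\|_2^2\,\mathrm{d}\tau,
\end{equation*}
which with $\Lambda_N=(\log N)^{(1+\kappa)/2}$ accounts for the factor $\exp\{1+2(\log N)^{(1+\kappa)/2}\}$ in the final bound.

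The remaining task is a nonparametric regression analysis for $\hat v^s_N$ viewed as the least-squares estimator of $v_0$ with input $(X^s,Y^s_\tau,\tau)\in\mathbb{R}^{d_x+d_y+1}$, noise $\dot a_\tau\eta+\dot b_\tau Y^s-v_0(X^s,Y^s_\tau,\tau)$ that is sub-exponential by assumption (iii), and target smoothness $\beta=1$ from $v_0\in\mathcal{W}_{\mathrm{Gen}}^{1,\infty}(\mathbb{R}^{d_x+d_y},B_u)$. I would decompose the risk into approximation and stochastic error over the hypothesis class $\mathcal{F}_{\mathrm{NN},\Lambda_N}^{d_y}$. For approximation, sub-Gaussianity of $\|X^s\|_\infty,\|Y^s\|_\infty$ restricts the relevant input domain to a hypercube of radius $u_N=\mathcal{O}(\sqrt{\log N})$ with negligible tail mass; on this cube $v_0$ is uniformly bounded and Lipschitz with constants $\mathcal{O}((\log N)^{m/2})\le\mathcal{O}(\sqrt{\log N})$ (since $m\in[0,1]$), which fit inside the chosen $\bar\delta_N$, $\underline\delta_N$, and $\Lambda_N$. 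Invoking a Sobolev approximation theorem for Lipschitz ReLU networks with $L\asymp N^{(d_x+d_y+1)/(2(d_x+d_y+1)+4)}\log N$ and $M=\mathcal{O}(1)$ produces an approximation error of order $N^{-2/(d_x+d_y+3)}$ times a polylog term. For the stochastic error, covering-number bounds on $\mathcal{F}_{\mathrm{NN},\Lambda_N}^{d_y}$ combined with truncation of the sub-exponential noise at level $\mathcal{O}(\log N)$ yield matching rates, giving altogether $\int_0^1\mathbb{E}\|\hat v^s_N-v_0\|_2^2\,\mathrm{d}\tau\le CN^{-2/(d_x+d_y+3)}(\log N)^{6+2\kappa}$; multiplying by $e^{2\Lambda_N}$ produces the claimed bound on $\mathcal{E}^s$. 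Finally, the target bound follows by applying Lemma \ref{lem: wasserstein_convergence} (whose hypotheses are met by (i), (iii), and (iv)), which costs only one additional $\log N$ factor and an $\mathcal{O}(d_y\bar\delta_N^2/N)$ remainder that is dominated by the main term.

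The main technical obstacle is the simultaneous management of three interacting scales: the truncation radius $u_N\asymp\sqrt{\log N}$ needed to localize the Sobolev approximation on an unbounded domain, the Lipschitz cap $\Lambda_N$ that controls flow stability but inflates both the approximation error constant and the covering-number bound, and the divergence rate $B_u\le c(u^m+1)$ which forces $m\le 1$ so that $B_{u_N}$ stays below $\Lambda_N$. Ensuring that the choices $L\asymp N^{(d_x+d_y+1)/[2(d_x+d_y+1)+4]}\log N$, $\bar\delta_N=\Lambda_N=(\log N)^{(1+\kappa)/2}$ simultaneously accommodate the local Sobolev norm, deliver the target polynomial rate $N^{-2/(d_x+d_y+3)}$, and keep all logarithmic overhead within the exponent $6+2\kappa$ (resp.\ $7+2\kappa$ in the target domain) is where the bookkeeping is most delicate.
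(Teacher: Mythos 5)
Your proposal is correct and follows essentially the same route as the paper: a Gr\"onwall-type ODE stability bound (applied to the coupled flows with common Gaussian initialization) reduces $W_2^2(\rho_{0,x}\|\hat\rho^s_x)$ to $e^{O(\Lambda_N)}$ times the time-integrated $L^2$ error of $\hat v^s_N$, which is then bounded by the usual approximation-plus-stochastic-error decomposition with sub-Gaussian truncation to a cube of radius $O(\sqrt{\log N})$ and the Lipschitz ReLU Sobolev approximation result, and finally Lemma \ref{lem: wasserstein_convergence} transfers the bound to $\mathcal{E}^t$ at the cost of one $\log N$ factor. The only differences are cosmetic (you apply Gr\"onwall pathwise and then Cauchy--Schwarz, whereas the paper applies it to the Gaussian-averaged squared distance $H_\tau(x)$, and your remark that the noise is truncated at level $O(\log N)$ should read $O(\sqrt{\log N})$ to stay within $\bar\delta_N^2$), neither of which changes the argument.
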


The unique-solution assumption in Theorem \ref{thm: gen_convergence_rate} is an enhancement of assumption (i) in Lemma \ref{lem: wasserstein_convergence}. This refinement stems from the examination of the whole dynamics concerning the induced ODEs, whose validity is also contingent on the probability structure of $(X^s, Y^s)$. Furthermore, we assume the divergent rate of each component in $v_0$ does not exceed a linear rate. Therefore, the Lipschitz constant $(\log N)^{(1+\kappa)/2}$ with $\kappa\in (0, 1)$ is sufficient for performing the approximation, while simultaneously ensuring the convergence of $\mathcal{E}^s$. Notably, $\|Y^s\|_{\infty}$ and $\|X^s\|_{\infty}$ are assumed to exhibit sub-Gaussian behavior, as tighter concentration is essential for the approximation process. We emphasize that Theorem \ref{thm: gen_convergence_rate} aims to offer a concrete instance of the controllability for $\mathcal{E}^t$ given the convergence of $\mathcal{E}^s$. The convergence rates of both $\mathcal{E}^s$ and $\mathcal{E}^t$ can potentially be improved through a more nuanced investigation into the continuity properties of $v_0$.

\section{Simulation studies}\label{sec: numerical_results}

To practically justify the theoretical findings, we here present some empirical results from simulation experiments, demonstrating the consistency of our density ratio estimators and the risk controllability under covariate shift. In particular, we concentrate on the scenarios where the source and target covariates follow gamma distributions, thereby fulfilling or surpassing the sub-exponential assumptions. For clarity of notation, given a $d$-dimensional vector $x$, its $j$-th entry is denoted as $x_{(j)}$ for $j=1, \dots, d$.

\subsection{Performance of density ratio estimators} \label{subsec: experiments_dre}

Regarding the source covariate $X^s$ and the target covariate $X^t$, let $X^s_{(j)}$ independently follows $Ga(j, 2)$ and let $X^t_{(j)}$ independently follows $Ga(j+1, 2)$, for $j=1, \dots, d$. It is straightforward to verify that the true density ratio function can be expressed as $r_0(x)=2^d(d!)^{-1}\prod_{j=1}^dx_{(j)}$. Hence, $r_0(x)\in \mathcal{H}^{\beta_r}_{\mathrm{Loc}}(\mathbb{R}^d, 2^{2d}(d!)^{-1}u^d)$ for any $\beta_r\ge d$, and $\|X^s\|_{\infty}$ is sub-exponentially distributed. Notably, $r_0(X^s)$ is a sub-exponential random variable when $d=1$ while exhibiting a heavier tail for larger $d\ge 2$.

We adopted the least squares loss for estimation. The neural network was designed with $\lfloor (\log n)/2\rfloor$ hidden layers, where $n$ represented the sample size. Each hidden layer contained 64 neurons. We simply set $\kappa=0.5$. The training algorithm was implemented using Pytorch framework \citep{paszke2019pytorch} along with the Adam optimizer \citep{kingma2014adam}. We specified a learning rate at 1e-4 and a batch size of 100. The sample size was varied among $\{200, 500, 1000, 1500, 2000, 3000\}$; the dimension $d$ was set to 1, 2 and 5, with the number of training iterations being 1000, 2000 and 5000, respectively. After obtaining an estimator, we evaluated its performance by calculating the mean squared loss within both source and target domains based on 1000 testing samples per domain. To enhance robustness and reliability, we conducted 100 replications for each $(n, d)$ combination.

\begin{table*}[t]
\centering
\caption{Averages and standard deviations (shown in the brackets) of mean squared errors between the true density ratio values and predicted values, based on results from 100 replications.}
\begin{tabular}{ccrrr}
\hline
Domain & Sample size & $d=1$ & $d=2$ & $d=5$ \\ \hline
\multirow{6}{*}{Source} & 200 & 0.099 (0.206) & 0.508 (0.447) & 4.032 (1.753) \\
 & 500 & 0.049 (0.065) & 0.302 (0.226) & 3.979 (1.706) \\
 & 1000 & 0.022 (0.023) & 0.211 (0.178) & 2.434 (1.044) \\
 & 1500 & 0.021 (0.051) & 0.149 (0.091) & 1.801 (1.033) \\
 & 2000 & 0.013 (0.023) & 0.130 (0.063) & 1.263 (0.560) \\
 & 3000 & 0.013 (0.019) & 0.117 (0.107) & 1.351 (0.852) \\ \hline
\multirow{6}{*}{Target} & 200 & 0.390 (0.789) & 2.337 (1.343) & 64.130 (26.729) \\
 & 500 & 0.226 (0.316) & 1.574 (0.906) & 57.939 (25.431) \\
 & 1000 & 0.095 (0.112) & 1.240 (0.896) & 49.817 (25.358) \\
 & 1500 & 0.089 (0.261) & 1.116 (0.699) & 43.566 (19.931) \\
 & 2000 & 0.053 (0.098) & 1.017 (0.478) & 39.076 (18.381) \\
 & 3000 & 0.051 (0.082) & 0.926 (0.739) & 37.078 (19.932) \\
\hline
\end{tabular}
\label{tab: results_dre}
\end{table*}

\begin{figure}[t]
    \centering
    \includegraphics[width=0.3\linewidth]{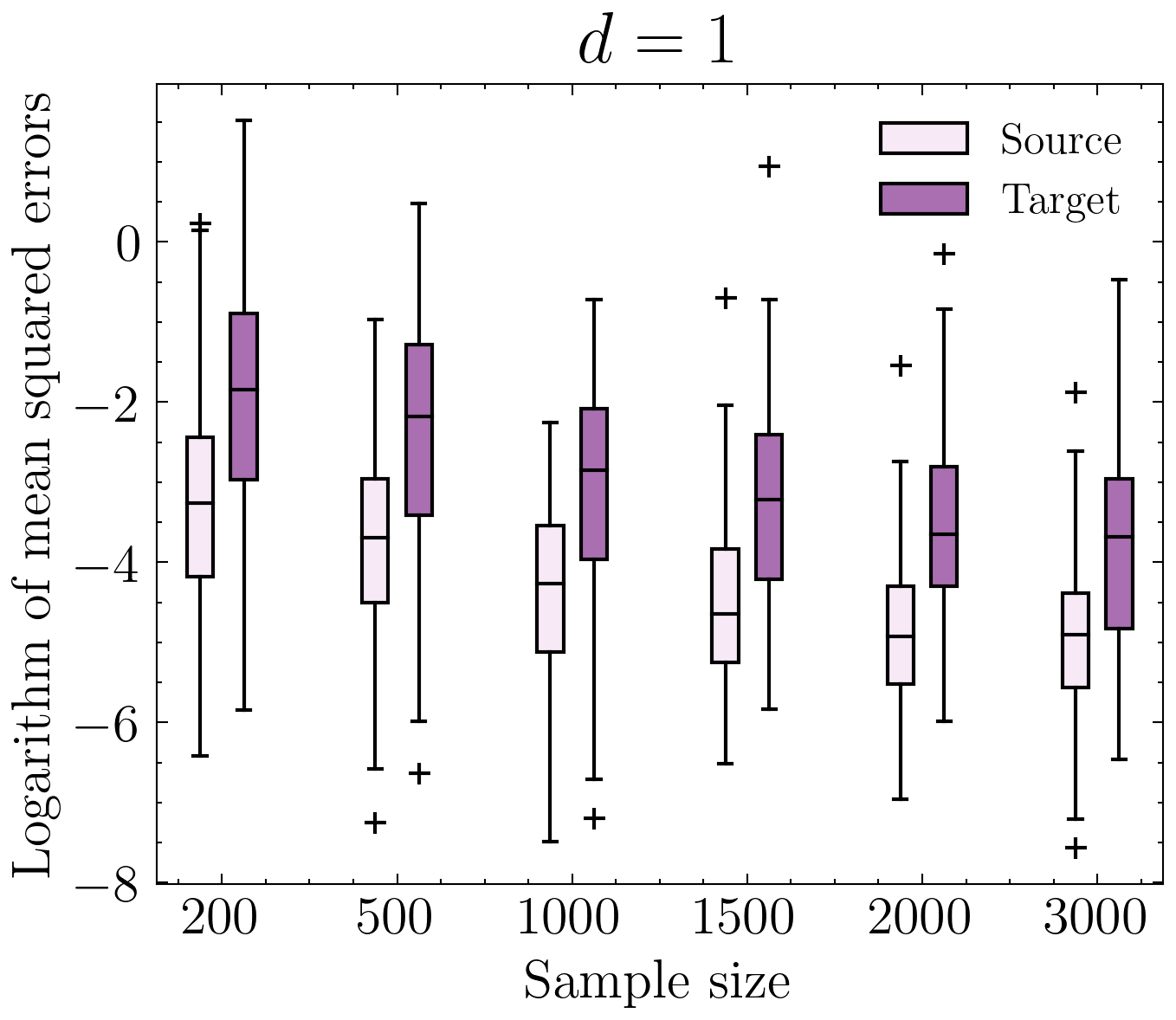}
    \includegraphics[width=0.3\linewidth]{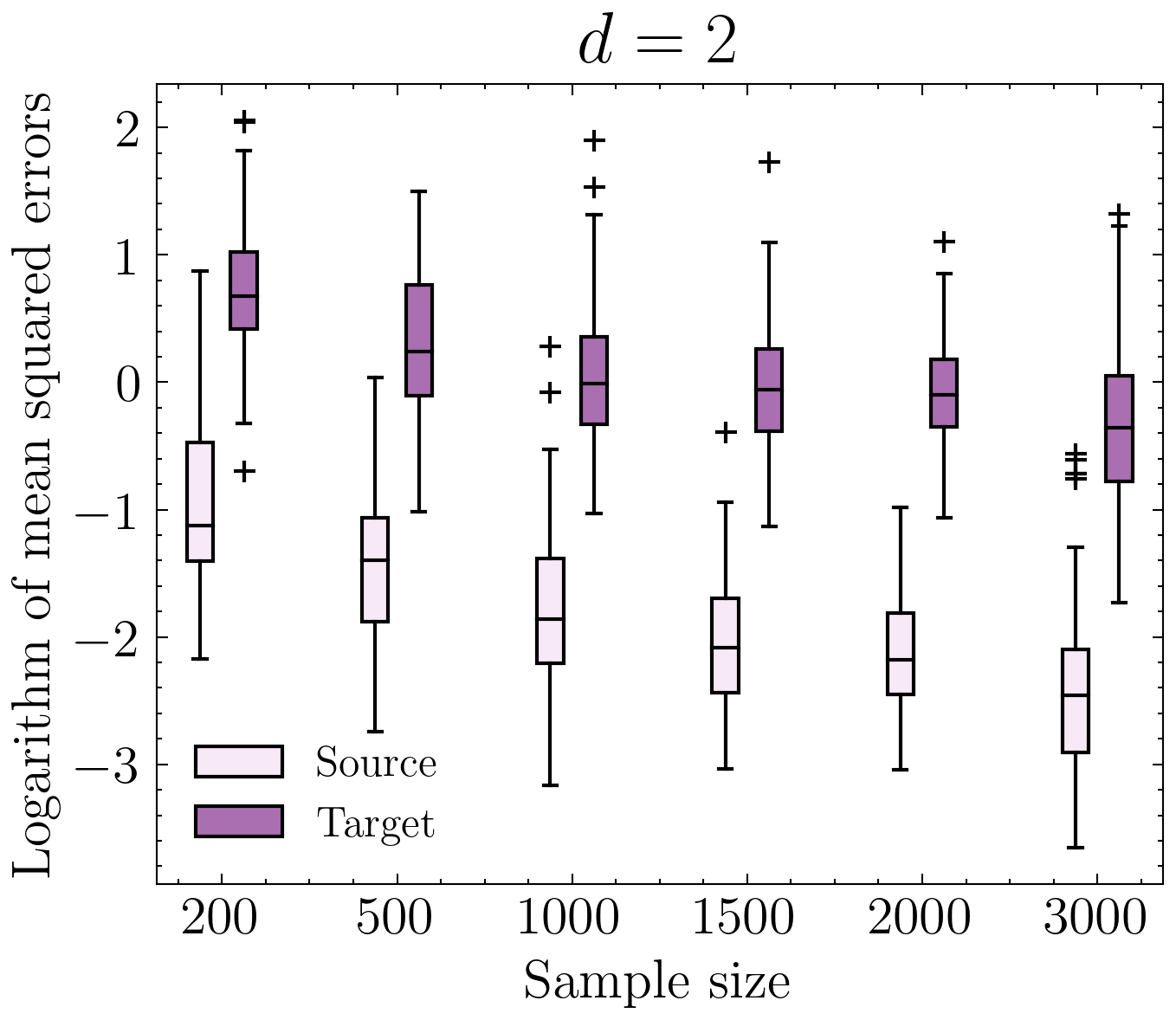}
    \includegraphics[width=0.3\linewidth]{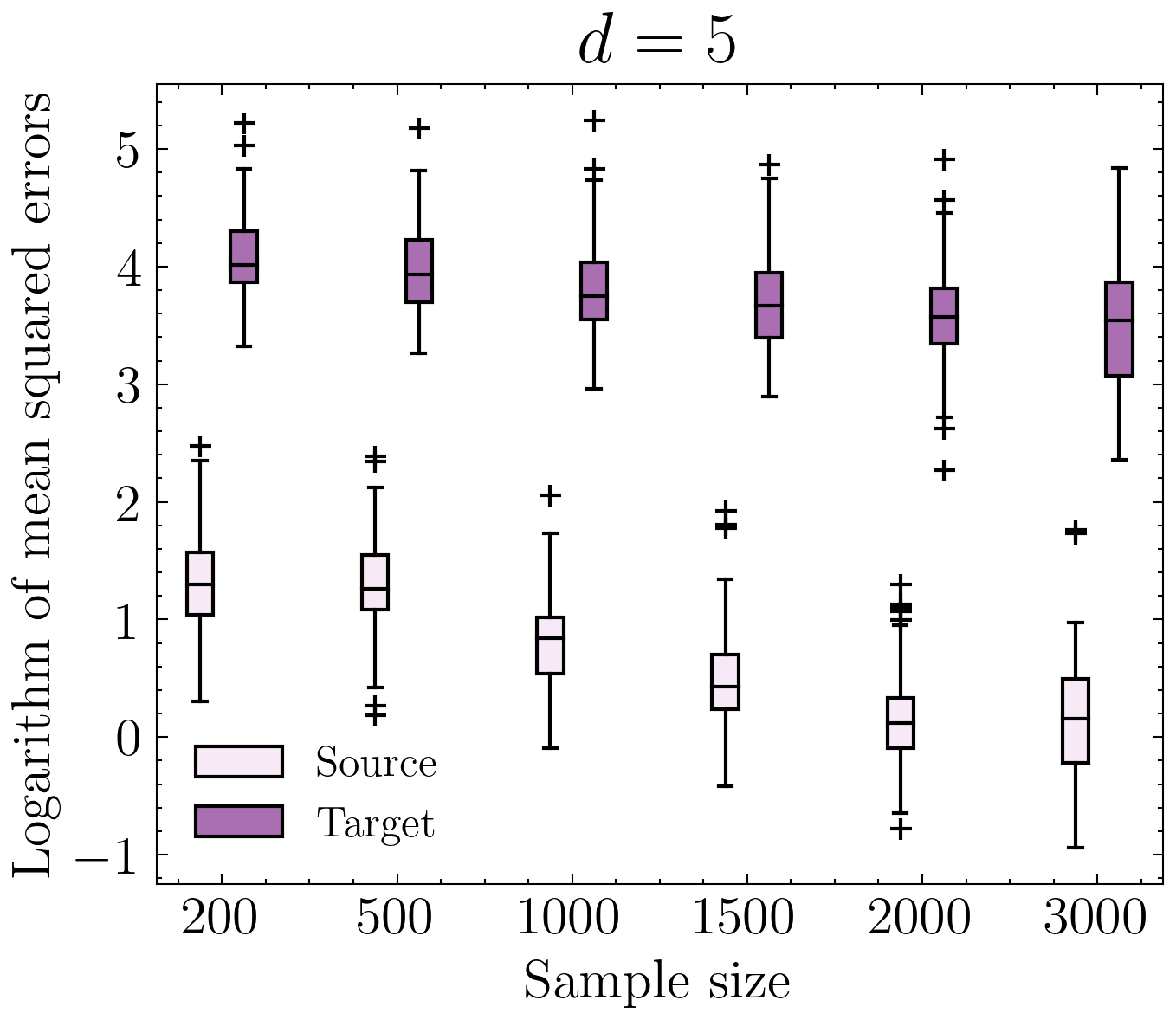}
    \caption{Boxplots of logarithm of mean square errors between the true density ratio values and predicted values, based on results from 100 replications.}
    \label{fig: results_dre}
\end{figure}

Table \ref{tab: results_dre} and Figure \ref{fig: results_dre} illustrate the performance of density ratio estimators across source and target domains. A consistent inverse relationship between sample size and mean squared errors, with the latter asymptotically approaching zero as the former increased, was noticeable. In the univariate cases ($d=1$), the estimation errors in the target domain exhibited a magnitude approximately four times those of the source domain. As the dimensionality increased, the disparity in estimation errors between the two domains became more pronounced.

\subsection{Sufficiency of source estimators for covariate shift}

In this subsection, we consider the regression model $Y=f_0(X)+\nu\varepsilon$ where $\nu>0$ and
$$
\begin{aligned}
f_0(X) &=
\begin{pmatrix}
    f_{01}(X) \\
    f_{02}(X)
\end{pmatrix}=
\begin{pmatrix}
    \sin(\pi(X_{(1)}-X_{(2)}))\log(1+X_{(3)}^2) \\
    \exp(-X_{(2)})\mathds{1}(X_{(4)}>2)
\end{pmatrix}, \\
\varepsilon &= \begin{pmatrix}
    W \\
    -W
\end{pmatrix}, \quad W\sim N(0, 1).
\end{aligned}
$$
The covariate $X$ was drawn from either the source domain ($X^s$) or the target domain ($X^t$), as defined in Subsection \ref{subsec: experiments_dre} with the dimensionality $d=5$. The parameter $\nu$ was assigned values of 0.1, 0.2 and 1, corresponding to low, moderate, and high noise levels, respectively. Such levels were calibrated relative to the variances of $f_{01}(X^s)$ and $f_{02}(X^s)$, with the moderate noise level (that is, $0.2^2$) approximating the variance of $f_{02}(X^s)$ and the high noise level approximating that of $f_{01}(X^s)$. This setup generated both source and target datasets. Specifically, for training, we sampled $n_{11}$ observations of covariates and responses from the source domain, and $n_{12}=500$ observations containing only covariates from the target domain. Here, $n_{11}$ varied among $\{500, 1000, 1500, 2000, 2500, 3000\}$. Additionally, we generated $n_2=1000$ testing data in the form $(X, f_0(X))$ for each domain.

\begin{table*}[t]
\centering
\caption{The averages and standard deviations (indicated in brackets) of the mean squared errors between the true conditional mean values and the predicted values, based on results from 100 replications. SERS denotes the source estimator risk in the source domain, while SERT represents the source estimator risk in the target domain. EDRC stands for the estimated density ratio correction, and ODRC refers to the oracle density ratio correction.}

\begin{tabular}{ccrrrr}
\hline
Noise level & Sample size ($n_{11}$) & SERS & SERT & EDRC & ODRC \\ \hline
\multirow{6}{*}{$\nu=0.1$} & 500 & 0.201 (0.061) & 0.623 (0.162) & 2.080 (0.375) & 0.819 (0.184) \\
 & 1000 & 0.110 (0.034) & 0.391 (0.092) & 1.157 (0.338) & 0.530 (0.119) \\
 & 1500 & 0.071 (0.024) & 0.265 (0.068) & 0.829 (0.280) & 0.394 (0.094) \\
 & 2000 & 0.059 (0.021) & 0.220 (0.053) & 0.572 (0.151) & 0.322 (0.084) \\
 & 2500 & 0.054 (0.022) & 0.193 (0.040) & 0.485 (0.121) & 0.283 (0.063) \\
 & 3000 & 0.032 (0.013) & 0.119 (0.042) & 0.332 (0.106) & 0.193 (0.056) \\ \hline
\multirow{6}{*}{$\nu=0.2$} & 500 & 0.246 (0.060) & 0.698 (0.164) & 2.142 (0.384) & 0.926 (0.186) \\
 & 1000 & 0.133 (0.035) & 0.439 (0.099) & 1.250 (0.355) & 0.651 (0.130) \\
 & 1500 & 0.092 (0.025) & 0.308 (0.069) & 0.859 (0.280) & 0.432 (0.089) \\
 & 2000 & 0.074 (0.021) & 0.256 (0.056) & 0.611 (0.150) & 0.355 (0.083) \\
 & 2500 & 0.063 (0.021) & 0.218 (0.045) & 0.539 (0.132) & 0.323 (0.067) \\
 & 3000 & 0.045 (0.014) & 0.150 (0.037) & 0.371 (0.104) & 0.235 (0.061) \\ \hline
\multirow{6}{*}{$\nu=1.0$} & 500 & 0.959 (0.155) & 2.000 (0.363) & 3.691 (0.616) & 2.435 (0.402) \\
 & 1000 & 0.454 (0.079) & 1.036 (0.184) & 2.586 (0.440) & 1.710 (0.309) \\
 & 1500 & 0.321 (0.048) & 0.773 (0.124) & 2.039 (0.410) & 1.394 (0.288) \\
 & 2000 & 0.264 (0.037) & 0.658 (0.093) & 1.725 (0.306) & 1.100 (0.199) \\
 & 2500 & 0.243 (0.039) & 0.606 (0.085) & 1.551 (0.354) & 0.979 (0.183) \\
 & 3000 & 0.210 (0.042) & 0.480 (0.092) & 1.344 (0.400) & 0.851 (0.157) \\
\hline
\end{tabular}
\label{tab: results_reg}
\end{table*}

By minimizing the least squares loss and utilizing merely source data, we obtained the source estimator. The neural network architecture remained identical to that described in Subsection \ref{subsec: experiments_dre}, while maintaining $\kappa$ to 0.5. For the regression task, we employed a learning rate of 1e-3, and carefully selected the number of iterations through cross validation among the candidate list $\{1000, 2000, 3000, 4000, 5000\}$. Subsequently, as benchmarks, we performed two types of loss correction methods, namely the estimated density ratio correction (EDRC) and the oracle density ratio correction (ODRC). For EDRC, we first conducted the density ratio estimation using $n_{11}$ source covariates and $n_{12}$ target covariates. Then, a corrected least squares loss based on this estimated ratio was applied to construct an estimator for $f_0$ (see Section \ref{subsec: drc_generic_properties}). For ODRC, we corrected the least squares loss using the oracle density ratio. For both correction methods, we maintained the same neural network architecture and determined the optimal number of training iterations for estimating $f_0$ through cross validation.

We assessed the performance through the mean squared error. To be more specific, we recorded the mean squared errors between true and predicted conditional mean values for the source estimator in both source and target domains using testing data. In addition, we calculated mean squared errors in the target domain with respect to estimators derived from EDRC and ODRC methods. To summarize, we obtained four risk measures, namely the source estimator risk in source domain, the source estimator risk in target domain, the EDRC estimator risk in target domain and the ODRC estimator risk in target domain. Furthermore, for each combination of $(n_{11}, \nu)$, we ran 100 replications.

\begin{figure}[!t]
    \centering
    \includegraphics[width=0.7\linewidth]{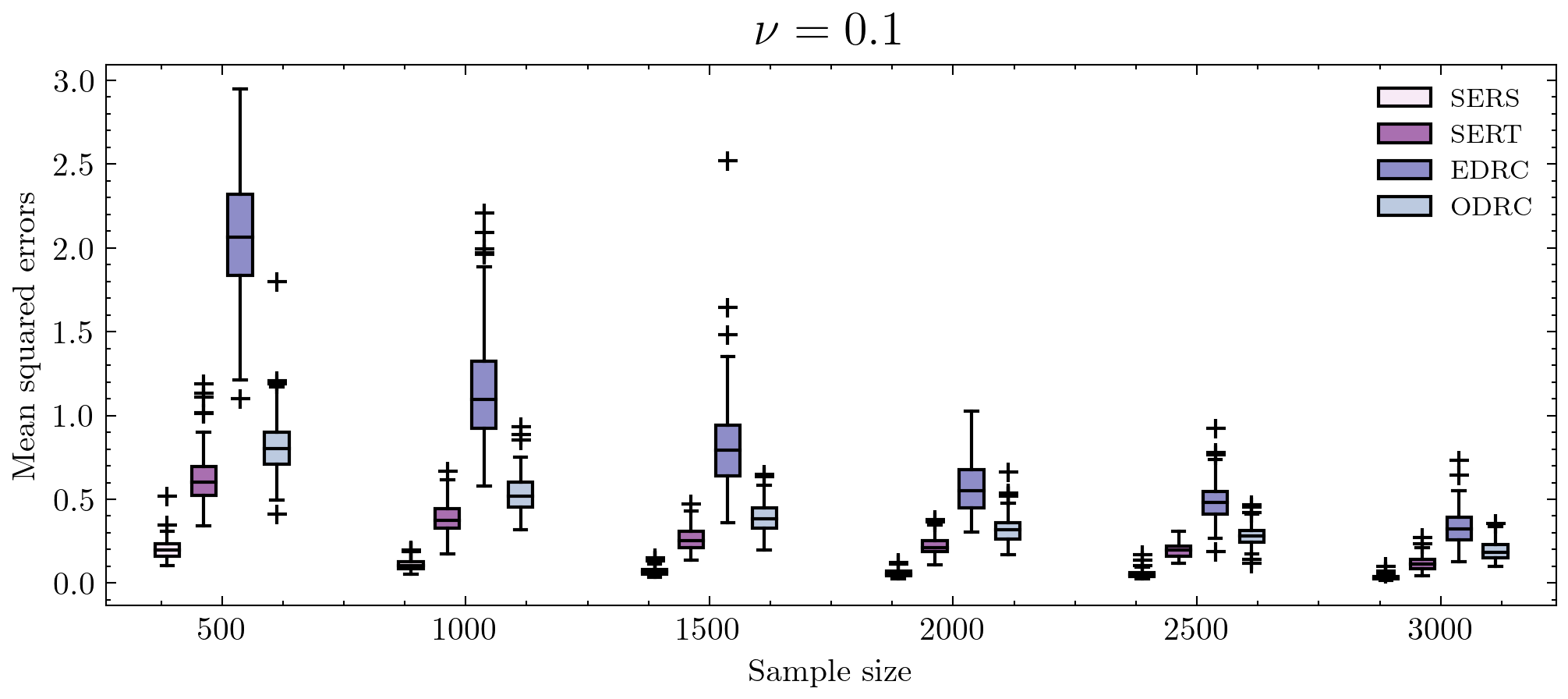}
    \includegraphics[width=0.7\linewidth]{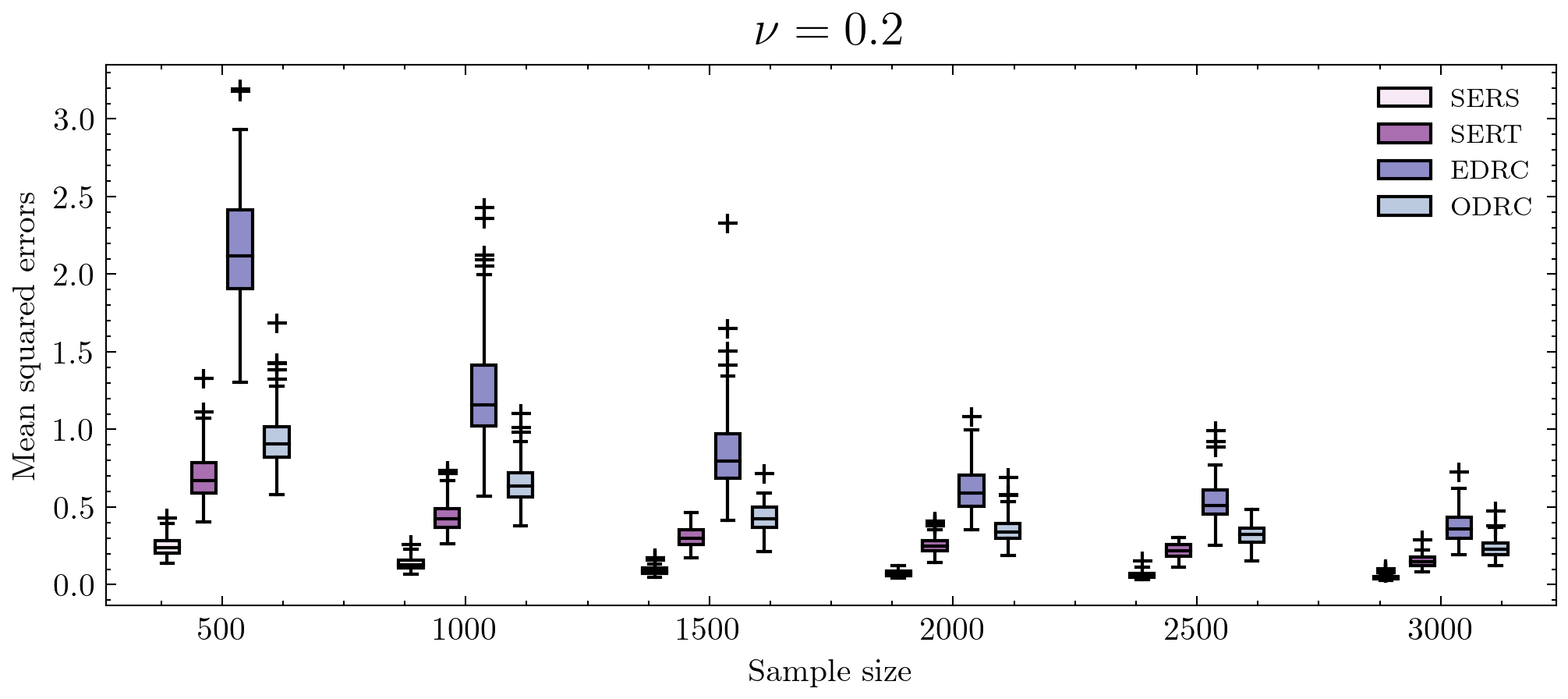}
    \includegraphics[width=0.7\linewidth]{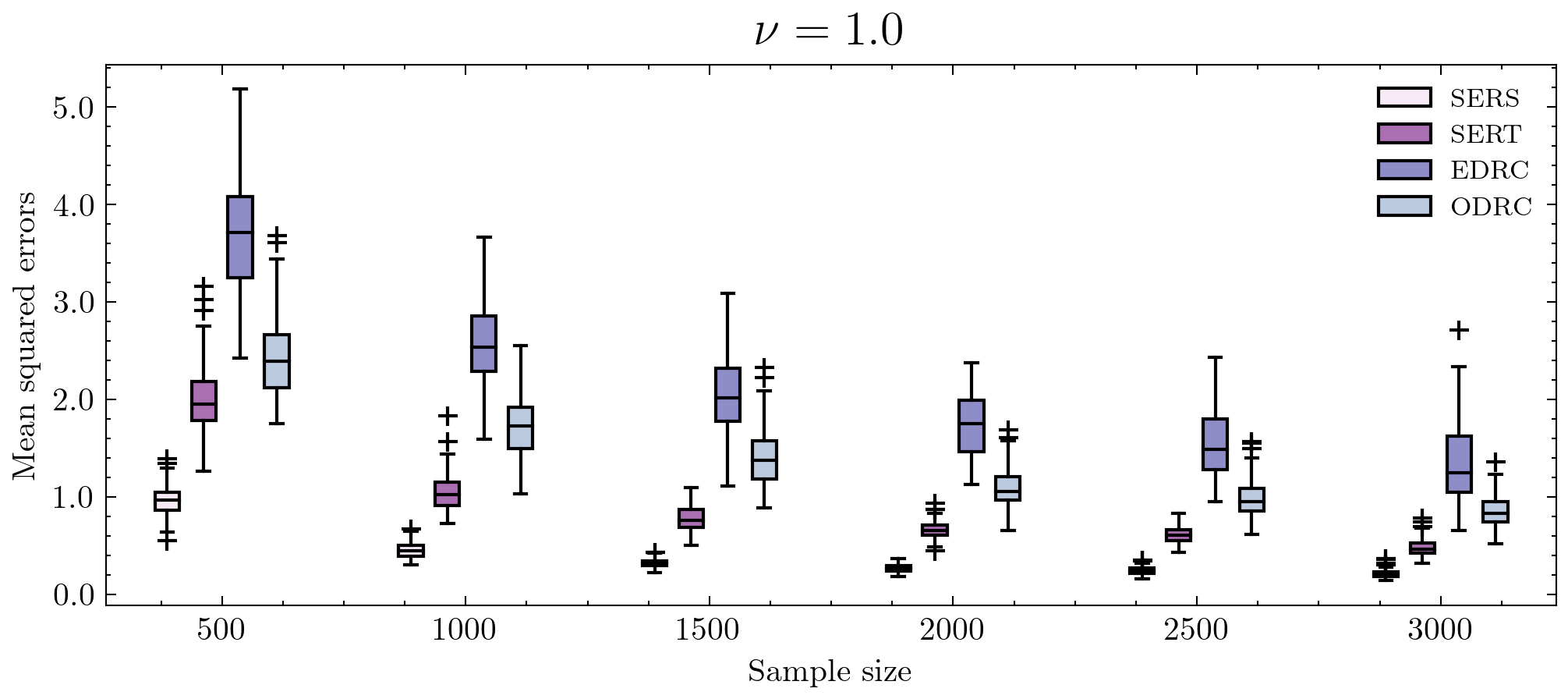}
    \caption{Boxplots of mean square errors between the true conditional mean values and predicted values, based on results from 100 replications. SERS means the source estimator risk in source domain and SERT represents the source estimator risk in target domain.}
    \label{fig: results_reg}
\end{figure}

It is evident from Table \ref{tab: results_reg} and Figure \ref{fig: results_reg} that the source estimator's risk in the target domain decreased commensurately with its risk in the source domain, demonstrating the risk controllability of the source estimator in the target domain. Notably, the source estimator significantly outperformed the estimator based on EDRC method. It is particularly surprising and interesting that even with access to the true density ratio, the ODRC estimator showed its weakness compared to the source estimator, which lacked this additional information. This observation substantiates the fragility of density ratio correction \citep{li2020robust} from the empirical perspective.

\section{Conclusion}\label{sec: conclusion}
In this paper, we address the problem of density ratio estimation, allowing density ratios with unbounded domains and ranges. We develop a rigorous theoretical framework for density ratio estimators based on Bregman divergences, including  least squares and logistic regression loss. Our findings contribute significantly to the existing literature on the estimation theory of density ratios.

To demonstrate the applications of our results in unbounded density ratio estimation, we study nonparametric regression and conditional flow models under covariate shift. We discover that the tail properties of the density ratio are crucial for ensuring risk transferability across different domains. Based on suitable tail conditions and divergent patterns of the density ratio function, we show that the source estimator is nearly optimal in the target domain.
Our numerical results support these theoretical insights, particularly highlighting that the source estimator can outperform estimators derived from loss correction methods, even when the true density ratio is known.

We caution that, in general, the near-optimality of the source estimator cannot be guaranteed without explicitly accounting for covariate shift. For instance, in constrained RKHS-based nonparametric regression, \cite{ma2023optimally} showed that
 there exists a specific pair of random elements $((X^s, Y^s), (X^t, Y^t))$ characterized by a particular probability structure, such that a kernel regression estimator for the conditional mean has a slower convergence rate in the target domain compared to the source domain.
However, if the density ratio is unknown, constructing an optimal estimator in this constrained kernel regression setting requires further investigation.

Several other directions merit exploration. Beyond the covariate shift problem addressed in this work, our density estimation results have potential applications in areas where density ratios are crucial, such as transfer learning, optimal transport methods for generative learning \citep{gao2022deep}, mutual information estimation, and propensity score estimation \citep{lei2021conformal}. Moreover, the techniques developed in this work for handling density ratios with unbounded domains and ranges could be adapted and extended to other settings where unbounded functions arise, such as score-based generative models.

\bigskip\bigskip

\newpage
\begin{appendix}

\bigskip\noindent
\textbf{\LARGE Appendix}

\medskip
In the Appendix, we provide proofs of the results presented in the paper, along with additional technical details.

\section{Auxiliary lemmas}

\subsection{Regularity of Bregman divergence}

\begin{lemma}\label{lem: bregman_divergence_solution}
Let $\varphi: \mathcal{X}\to\mathbb{R}$ be a differentiable and strictly convex function where $\mathcal{X}\subset\mathbb{R}$ is a convex set. Then, the deduced Bregman divergence $D_{\varphi}(x\| y)=0$ implies $x=y$.
\end{lemma}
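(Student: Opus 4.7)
The plan is to reduce the claim to the first-order characterization of strict convexity, namely that for a differentiable strictly convex $\varphi$ on a convex set, $\varphi(x) > \varphi(y) + \varphi'(y)(x-y)$ whenever $x \neq y$. Once this inequality is in hand, the conclusion is immediate: $D_{\varphi}(x \| y) = \varphi(x) - \varphi(y) - \varphi'(y)(x-y) > 0$ for $x \neq y$, which is the contrapositive of the statement.

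To justify the strict first-order inequality from the hypotheses, I would introduce the one-variable auxiliary function $g: [0,1] \to \mathbb{R}$ defined by $g(t) = \varphi(y + t(x-y)) - \varphi(y) - t\,\varphi'(y)(x-y)$. Clearly $g(0) = 0$ and $g(1) = D_{\varphi}(x \| y)$, and differentiating gives $g'(t) = \bigl[\varphi'(y + t(x-y)) - \varphi'(y)\bigr](x-y)$. The key step is to observe that differentiability together with strict convexity of $\varphi$ forces $\varphi'$ to be strictly increasing on $\mathcal{X}$; consequently, if $x \neq y$, then for every $t \in (0,1]$ the two factors $\varphi'(y + t(x-y)) - \varphi'(y)$ and $(x-y)$ carry the same strict sign, so $g'(t) > 0$. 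Integrating, $g(1) > g(0) = 0$, which establishes $D_{\varphi}(x \| y) > 0$.

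The only substantive point is the strict monotonicity of $\varphi'$. I would justify it briefly: ordinary convexity gives $\varphi'$ nondecreasing; if $\varphi'$ were constant on some subinterval, then $\varphi$ would be affine there, contradicting strict convexity on that segment of $\mathcal{X}$. Thus $\varphi'$ is strictly increasing wherever defined, and the argument above closes. I do not anticipate a real obstacle here, since the result is essentially a restatement of a textbook property of Bregman divergences; the only care needed is to invoke strictness at the right place (in the monotonicity of $\varphi'$), which is precisely what distinguishes this lemma from the corresponding nonnegativity statement under mere convexity.
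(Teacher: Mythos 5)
Your proposal is correct, but it travels a different road than the paper. The paper argues by contradiction: assuming $D_{\varphi}(x\| y)=0$ with $x\ne y$, it substitutes $\varphi(x)=\varphi(y)+\varphi'(y)(x-y)$ into the strict chord inequality $\varphi(tx+(1-t)y)<t\varphi(x)+(1-t)\varphi(y)$ to get $\varphi(y+t(x-y))<\varphi(y)+t\varphi'(y)(x-y)$, which violates the (non-strict) first-order tangent inequality that plain convexity already guarantees; no property of $\varphi'$ beyond its existence is needed. You instead prove the strict first-order inequality directly, by first upgrading convexity to strict monotonicity of $\varphi'$ (your argument for this — constancy of $\varphi'$ on a subinterval would make $\varphi$ affine there — is sound) and then running the auxiliary function $g(t)=\varphi(y+t(x-y))-\varphi(y)-t\varphi'(y)(x-y)$ with $g'(t)>0$ on $(0,1]$. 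The only technical wrinkle is your phrase ``integrating'': a priori you should not assume $g'$ is integrable, but this is harmless because either the mean value theorem on $[0,1]$ gives $g(1)-g(0)=g'(\xi)>0$ directly, or one notes that a monotone derivative is automatically continuous (Darboux plus monotonicity), so the integral is legitimate. Net comparison: the paper's contradiction is shorter and uses only the two standard characterizations of convexity as black boxes, while your route yields the slightly stronger and reusable statement $D_{\varphi}(x\|y)>0$ for all $x\ne y$ at the cost of establishing strict monotonicity of $\varphi'$ first. Either argument suffices for the lemma.
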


\begin{proof}[Proof of Lemma \ref{lem: bregman_divergence_solution}]
Suppose that there exist distinct $x, y\in \mathcal{X}$ such that $D_{\varphi}(x\| y)=0$. By the definition of $D_{\varphi}(x\| y)$, we have
\begin{equation}\label{eqn: bregman_divergence_solution-proof_1}
\varphi(x)=\varphi(y)+\varphi'(y)(x-y).
\end{equation}
For arbitrary $t\in (0, 1)$, the strictly convexity of $\varphi$ indicates that
\begin{equation}\label{eqn: bregman_divergence_solution-proof_2}
\varphi(y+t(x-y))=\varphi(tx+(1-t)y)<t\varphi(x)+(1-t)\varphi(y)
\end{equation}
Combining Eqns.~\eqref{eqn: bregman_divergence_solution-proof_1} and \eqref{eqn: bregman_divergence_solution-proof_2}, we obtain
$$
\varphi(y+t(x-y))<\varphi(y)+t\varphi'(y)(x-y).
$$
However, this contradicts to the fact that $\varphi$ is a convex function. Therefore, distinct $x, y$ can not yield that $D_{\varphi}(x\| y)=0$. When $x$ equals to $y$, it is straightforward to verify that $D_{\varphi}(x\| y)=0$. This completes the proof.
\end{proof}

\subsection{Approximation properties of neural networks}

\begin{lemma}[Theorem 3.3 in \cite{jiao2023deep}]\label{lem: thm_3.3_jiao}
Assume $g\in \mathcal{H}^{\beta}([0, 1]^d, B)$. For any $S_1, S_2\in\mathbb{N}_+$, there exists a function $f$ implemented by ReLU feedforward neural network with depth $L=21(\lfloor\beta\rfloor+1)^2S_1\lceil\log_2(8S_1)\rceil+2d$, width $M=38(\lfloor\beta\rfloor+1)^2d^{\lfloor\beta\rfloor+1}S_2\lceil\log_2(8S_2)\rceil$, such that
$$
|f(x)-g(x)|\le 18B(\lfloor\beta\rfloor+1)^2d^{\lfloor\beta\rfloor+(\beta\vee 1)/2}(S_1S_2)^{-2\beta/d},
$$
for all $x\in [0, 1]^d\backslash \Omega([0, 1]^d, K, \delta)$. Here, $\mathbb{N}_+$ denotes the set of positive integers, $\lceil a\rceil$ means the smallest integer no less than $a$, $a\vee b=\max(a, b)$, and
$$
\Omega([0, 1]^d, K, \delta)=\bigcup_{i=1}^d\left\{x=(x_1, \dots, x_d)^{\top}: x_i\in\bigcup_{k=1}^{K-1}(k/K-\delta, k/K)\right\},
$$
where $K=\lceil(S_1S_2)^{2/d}\rceil$ and $\delta$ is an arbitrary scalar in $(0, 1/(3K)]$.
\end{lemma}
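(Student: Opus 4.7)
Since Lemma \ref{lem: thm_3.3_jiao} is quoted verbatim from \cite{jiao2023deep}, the shortest honest route is to invoke that reference. For a self-contained sketch, however, I would follow the partition-plus-Taylor paradigm that has become standard in ReLU approximation theory, so the proof in the appendix could either cite the original statement or assemble the same ingredients directly.

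First, I would partition $[0,1]^d$ into $K^d$ closed sub-cubes of side $1/K$ with $K=\lceil(S_1S_2)^{2/d}\rceil$. On the cube indexed by $\mathbf{k}\in\{0,\dots,K-1\}^d$ I would take the Taylor polynomial $T_{\mathbf{k}}$ of $g$ of order $s=\lfloor\beta\rfloor$ centered at $\mathbf{k}/K$. The Hölder hypothesis yields the pointwise bound $|g(x)-T_{\mathbf{k}}(x)|\lesssim B\,d^{\lfloor\beta\rfloor+(\beta\vee 1)/2}K^{-\beta}\asymp B\,d^{\lfloor\beta\rfloor+(\beta\vee 1)/2}(S_1S_2)^{-2\beta/d}$ on that cube, which is already the target accuracy up to constants.

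Next, I would realize a ReLU network $\widehat{f}$ of the schematic form $\widehat{f}(x)=\sum_{\mathbf{k}}\psi_{\mathbf{k}}(x)\,\widetilde{T}_{\mathbf{k}}(x)$, where $\psi_{\mathbf{k}}$ is a tensor product of trapezoidal one-dimensional hats that equals $1$ on the interior of the $\mathbf{k}$-th bin, equals $0$ outside a slightly enlarged bin, and is linear on a transition strip of width $\delta/K$; these shapes are representable exactly by tiny ReLU blocks of the form $\mathrm{relu}(Kx_i-k)-\mathrm{relu}(Kx_i-k-1+K\delta)-\cdots$. The polynomial $\widetilde{T}_{\mathbf{k}}$ is a Yarotsky-style subnetwork that approximates each degree-$s$ monomial by iterated application of an $\varepsilon$-accurate multiplication gadget of depth $\mathcal{O}(\log(1/\varepsilon))$ and constant width. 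Multiplying the indicator against the polynomial is handled by the same gadget. The family $\{\psi_{\mathbf{k}}\}$ resolves the constant $1$ only off the set $\Omega([0,1]^d,K,\delta)$; this is precisely the region where two neighbouring trapezoids overlap nontrivially, so that the single-bin Taylor estimate no longer applies, and it is therefore excluded from the statement.

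The main obstacle will be the bookkeeping required to separate the two budgets $S_1$ and $S_2$: $S_2$ must govern the width via the number of parallel monomials $\binom{s+d}{d}$ together with the $K^d$ bin channels, while $S_1$ must govern the depth via the accuracy of the multiplication gadgets chained to form each monomial. Calibrating $\varepsilon\asymp(S_1S_2)^{-2\beta/d}$ inside every multiplication block, propagating that error through the $\mathcal{O}(s)$ nested products, and combining with the Taylor remainder yields the claimed total error up to the stated constant $18B(\lfloor\beta\rfloor+1)^2 d^{\lfloor\beta\rfloor+(\beta\vee 1)/2}$. Summing the per-block depths and widths, and absorbing $\log_2(S_1)$ and $\log_2(S_2)$ factors coming from the gadget depths, reproduces the explicit expressions $L=21(\lfloor\beta\rfloor+1)^2 S_1\lceil\log_2(8S_1)\rceil+2d$ and $M=38(\lfloor\beta\rfloor+1)^2 d^{\lfloor\beta\rfloor+1}S_2\lceil\log_2(8S_2)\rceil$ in the statement.
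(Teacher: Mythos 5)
The paper itself gives no proof of this lemma: it is imported verbatim from \cite{jiao2023deep}, so your primary route—simply invoking that reference—is exactly what the paper does and is sufficient for the purposes of the appendix.

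Your self-contained sketch, however, would not reproduce the stated depth/width budgets as written, so it should not be presented as an alternative proof. A partition-of-unity architecture of the form $\sum_{\mathbf{k}}\psi_{\mathbf{k}}\,\widetilde T_{\mathbf{k}}$ requires one channel per bin, i.e.\ on the order of $K^d=\lceil (S_1S_2)^{2/d}\rceil^d\asymp (S_1S_2)^2$ parallel subnetworks, which cannot fit inside a width of order $S_2\log S_2$ and depth of order $S_1\log S_1$; this is precisely why Yarotsky-type constructions yield rates in the total parameter count rather than in the product of depth and width. The result quoted here rests on the Lu--Shen--Yang--Zhang-style construction used in \cite{jiao2023deep}: an approximate coordinate-quantization (step) subnetwork maps $x$ to the index of its subcube, the Taylor coefficients at the $K^d$ grid points are encoded by a bit-extraction/point-fitting subnetwork whose accuracy improves in the product $S_1S_2$, and the monomials $x^{\alpha}$ are assembled with multiplication gadgets. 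The trifling region $\Omega([0,1]^d,K,\delta)$ is a by-product of that quantization step—a continuous ReLU network cannot realize the exact floor map, so near the hyperplanes $x_i=k/K$ the computed cube index is wrong—not, as you suggest, of overlapping trapezoids; in a genuine partition of unity the overlap causes no loss, since both neighbouring Taylor polynomials are accurate there, which is exactly why Yarotsky's construction needs no excluded set. Either cite the reference, as the paper does, or rebuild the sketch along the quantization/bit-extraction lines.
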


\begin{lemma}[Corollary B.2 in \cite{gao2024convergence}]\label{lem: cor_b.2_gao}
Given any $g\in\mathcal{W}^{1, \infty}((0, 1)^d)$ with $\|g\|_{\mathcal{W}^{1, \infty}((0, 1)^d)}<\infty$, for any $S_1, S_2\in\mathbb{N}_+$, there exists a function $f$ implemented by a deep ReLU network with depth $\mathcal{O}(d^2S_1\log S_1)$ and width $\mathcal{O}(2^ddS_2\log S_2)$ such that $\|f\|_{\mathcal{W}^{1, \infty}((0, 1)^d)}\le C_1\|g\|_{\mathcal{W}^{1, \infty}((0, 1)^d)}$ and
$$
|f(x)-g(x)|\le C_2\|g\|_{\mathcal{W}^{1, \infty}((0, 1)^d)}(S_1S_2)^{-2/d},
$$
for all $x\in (0, 1)^d$. Here, $C_1$ and $C_2$ are constants depending only on $d$.
\end{lemma}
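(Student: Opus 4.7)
The plan is to follow the classical Yarotsky-style local polynomial construction, which is the same skeleton that underlies Lemma~\ref{lem: thm_3.3_jiao} specialized to the smoothness index $\beta=1$, but with the additional bookkeeping needed to bound the $W^{1,\infty}$ norm of the approximant (and not merely the $L^{\infty}$ error). First I would tile $(0,1)^{d}$ by a regular grid of side length $h=1/K$ with $K=\lceil(S_{1}S_{2})^{2/d}\rceil$, index the $K^{d}$ cells by $\alpha\in\{0,\dots,K-1\}^{d}$, and attach to each grid point $x_{\alpha}$ the first-order Taylor polynomial $P_{\alpha}(x)=g(x_{\alpha})+\nabla g(x_{\alpha})^{\top}(x-x_{\alpha})$ (or, if one prefers to avoid differentiating $g$ pointwise, the best-fit affine Bramble--Hilbert surrogate). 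Because $g\in\mathcal{W}^{1,\infty}((0,1)^{d})$, each local error $|g(x)-P_{\alpha}(x)|$ is bounded by $C\,\|g\|_{\mathcal{W}^{1,\infty}}\,h=C\,\|g\|_{\mathcal{W}^{1,\infty}}K^{-1}=C\,\|g\|_{\mathcal{W}^{1,\infty}}(S_{1}S_{2})^{-2/d}$, which is already the target rate; crucially, $P_{\alpha}$ itself is affine with slope bounded by $\|g\|_{\mathcal{W}^{1,\infty}}$.

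Next I would assemble these local pieces with a ReLU partition of unity. Let $\psi(t)=\mathrm{relu}(1-K|t|)$ be the univariate hat on $[-1/K,1/K]$ and let $\psi_{\alpha}(x)=\prod_{j=1}^{d}\psi(x_{j}-x_{\alpha,j})$; the $\psi_{\alpha}$ sum to $1$ on $(0,1)^{d}$ and each is Lipschitz with constant $\mathcal{O}(K)$ but uniformly bounded by $1$. The candidate approximant is $\tilde f(x)=\sum_{\alpha}\psi_{\alpha}(x)P_{\alpha}(x)$. Each product $\psi_{\alpha}P_{\alpha}$ would be realized by a ReLU subnetwork: the $d$-fold product $\prod_{j}\psi(\cdot)$ is built by cascading the approximate-multiplication gate of Yarotsky (which achieves accuracy $\epsilon$ in depth $\mathcal{O}(\log(1/\epsilon))$ and width $\mathcal{O}(1)$ on $[0,1]^{2}$), while the scalar-times-affine product $P_{\alpha}\cdot(\prod_{j}\psi)$ uses one further multiplication gate. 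The $2^{d}$ factor in the width arises from the $d$-fold product tree; the logarithmic depth factor comes from driving the multiplication gates to accuracy $\epsilon=(S_{1}S_{2})^{-2/d}/K^{d}$.

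The main obstacle, and the step I would treat most carefully, is controlling $\|f\|_{\mathcal{W}^{1,\infty}}$ simultaneously with the $L^{\infty}$ error. Two sources of derivative blow-up must be tamed. First, although each bump has Lipschitz constant $\mathcal{O}(K)$, only $2^{d}$ bumps are simultaneously nonzero at any $x$, and the partition-of-unity identity $\sum_{\alpha}\nabla\psi_{\alpha}(x)=0$ lets me rewrite $\nabla\tilde f(x)=\sum_{\alpha}\psi_{\alpha}(x)\nabla P_{\alpha}(x)+\sum_{\alpha}(P_{\alpha}(x)-g(x))\nabla\psi_{\alpha}(x)$; the first sum is bounded by $\|g\|_{\mathcal{W}^{1,\infty}}$ and the second by $(2^{d})\cdot\mathcal{O}(K)\cdot\mathcal{O}(h\|g\|_{\mathcal{W}^{1,\infty}})=\mathcal{O}(2^{d}\|g\|_{\mathcal{W}^{1,\infty}})$, absorbing all the $K$ factors. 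Second, the Yarotsky multiplication gate has Lipschitz constant bounded by the $L^{\infty}$ bound of its inputs plus a small additive error; choosing the gate accuracy as above keeps its Lipschitz overhead $\mathcal{O}(1)$. Combining these two estimates yields $\|f\|_{\mathcal{W}^{1,\infty}}\le C_{1}\|g\|_{\mathcal{W}^{1,\infty}}$ with $C_{1}$ depending only on $d$, matching the claim.

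Finally I would do the depth/width bookkeeping. The partition-of-unity layer plus all multiplication gates compose into a network whose depth is dominated by the cascade of $\mathcal{O}(d)$ products each at accuracy $\epsilon=(S_{1}S_{2})^{-2/d}$, giving depth $\mathcal{O}(d^{2}S_{1}\log S_{1})$, and whose width is dominated by the $2^{d}K^{d}$ local pieces after the sparsity of the partition of unity is exploited, giving width $\mathcal{O}(2^{d}dS_{2}\log S_{2})$. Summing the cell-wise $L^{\infty}$ error with the multiplication-gate error and invoking the two bounds above delivers both conclusions of the lemma. The most delicate step, and the one I would flag as the main obstacle, is the second one: ensuring that the cascaded multiplication gates do not inflate the Lipschitz constant beyond $C_{1}$, which requires propagating an $L^{\infty}$-and-Lipschitz bound jointly through the composition rather than either one alone.
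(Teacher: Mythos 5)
First, a remark on provenance: the paper does not prove this statement at all --- it is imported verbatim as Corollary B.2 of \cite{gao2024convergence} and used as a black box (exactly as Lemma \ref{lem: thm_3.3_jiao} is imported from \cite{jiao2023deep}). So there is no in-paper proof to compare against; your proposal has to be judged against the construction actually used in that literature.

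Judged on those terms, there is a genuine gap, and it sits precisely in your final ``bookkeeping'' step. Your grid has $K^{d}=(S_{1}S_{2})^{2}$ cells, and your construction attaches an independently parameterized affine piece $P_{\alpha}$ (values $g(x_{\alpha})$ and $\nabla g(x_{\alpha})$) to each cell. A network of width $\mathcal{O}(2^{d}dS_{2}\log S_{2})$ and depth $\mathcal{O}(d^{2}S_{1}\log S_{1})$ has only $\mathcal{O}(S_{1}S_{2}^{2}\,\mathrm{polylog})$ parameters, so it cannot store $(S_{1}S_{2})^{2}$ independent local values; ``exploiting the sparsity of the partition of unity'' reduces the number of bumps \emph{active at a point} to $2^{d}$, but the network must still \emph{represent} all $K^{d}$ of them. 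A partition-of-unity construction of the kind you describe is the classical Yarotsky argument and yields error $\epsilon$ with $\Theta(\epsilon^{-d})$ parameters, i.e.\ error of order $(S_{1}S_{2})^{-1/d}$ for this architecture --- the square root of the claimed rate. The rate $(S_{1}S_{2})^{-2/d}$ as a function of (width)$\times$(depth) is the ``super-convergence'' regime of Shen--Yang--Zhang and Lu et al., and it is obtained by a bit-extraction/point-fitting argument: the $K^{d}$ local values are encoded into the binary expansions of $\mathcal{O}(S_{1}S_{2}^{2})$ parameters and decoded using depth. That device is the essential missing idea in your proposal; without it the stated depth/width cannot be reached. (It is also the reason the companion result, Lemma \ref{lem: thm_3.3_jiao}, carries a trifling-region exclusion $\Omega([0,1]^{d},K,\delta)$; the version you are proving holds on all of $(0,1)^{d}$, which additionally requires the median-of-shifted-networks patch, plausibly the source of the $2^{d}$ width factor.)

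The remaining ingredients of your sketch are sound and are indeed the right way to handle the $W^{1,\infty}$ bound once the rate issue is fixed: the first-order Taylor (or Bramble--Hilbert) local error $\mathcal{O}(h\|g\|_{\mathcal{W}^{1,\infty}})$ is correct for $g\in\mathcal{W}^{1,\infty}$, and the cancellation $\sum_{\alpha}\nabla\psi_{\alpha}\equiv 0$ together with the joint $L^{\infty}$-and-Lipschitz tracking of the multiplication gates is the standard mechanism (G\"uhring--Kutyniok--Petersen) for keeping $\|f\|_{\mathcal{W}^{1,\infty}}\le C_{1}\|g\|_{\mathcal{W}^{1,\infty}}$. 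But these must be grafted onto the bit-extraction construction, not onto a one-bump-per-cell partition of unity.
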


\subsection{Complexity of neural network function space}

Let $\mathcal{D}$ be a subset of $\mathbb{R}^d$. Given a positive real number $\epsilon$, a set $\mathcal{C}$ is called an $\epsilon$-covering set of $\mathcal{D}$ with respect to the supremum norm if $\mathcal{C}\subset \mathcal{D}$ and for any $x\in\mathcal{D}$, there exists $z\in\mathcal{C}$ such that $\|x-z\|_{\infty}<\epsilon$. Then, the minimal cardinality of all possible $\mathcal{C}$ is termed the covering number of $\mathcal{D}$, denoted as $\mathcal{N}(\epsilon, \|\cdot\|_{\infty}, \mathcal{D})$.

Furthermore, consider a function space $\mathcal{F}$ whose elements are defined on $\mathcal{X}$. Given an integer $n$ and $\mathcal{D}_n=\{x_1, \dots, x_n\}\subset \mathcal{X}^n$, the covering number of $\mathcal{F}$ constrained on $\mathcal{D}_n$ is defined as the covering number of $\mathcal{F}_{|\mathcal{D}_n}$, where
$$
\mathcal{F}_{|\mathcal{D}_n}=\left\{(f(x_1), \dots, f(x_n))^{\top}: f\in\mathcal{F}\right\}.
$$
Then, the covering number of $\mathcal{F}$ with respect to $n$, denoted as $\mathcal{N}_n(\epsilon, \|\cdot\|_{\infty}, \mathcal{F})$, is defined as $\sup_{\mathcal{D}_n}\mathcal{N}(\epsilon, \|\cdot\|_{\infty}, \mathcal{F}_{|\mathcal{D}_n})$.

\begin{lemma}[Theorem 12.2 in \cite{anthony1999neural}]\label{lem: thm_12.2_anthony}
Let $\mathcal{F}$ be a set of real functions that map from a domain $\mathcal{X}$ to a bounded interval $[0, B]$. Denote the pseudo-dimension of $\mathcal{F}$ as $\mathrm{Pdim}(\mathcal{F})$. Then, for $n\ge \mathrm{Pdim}(\mathcal{F})$ and $B\ge \epsilon$, we have
$$
\mathcal{N}_n(\epsilon, \|\cdot\|_{\infty}, \mathcal{F})\le \left(\frac{eBn}{\epsilon \mathrm{Pdim}(\mathcal{F})}\right)^{\mathrm{Pdim}(\mathcal{F})}.
$$
\end{lemma}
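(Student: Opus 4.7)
The plan is to reduce the sup-norm covering-number bound to a combinatorial growth-function bound via a discretization of the range, and then to invoke the Sauer--Shelah lemma applied to the binary class of threshold indicators associated to $\mathcal{F}$.

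First I would fix an arbitrary $D_n=\{x_1,\dots,x_n\}\subset\mathcal{X}$ and partition $[0,B]$ into $K=\lceil B/\epsilon\rceil$ half-open intervals of length at most $\epsilon$. Assign to each $f\in\mathcal{F}$ the cell-index vector $(q(f(x_i)))_{i=1}^{n}\in\{1,\dots,K\}^{n}$, where $q$ records which interval its argument lies in. Any two functions sharing the same cell-index vector differ pointwise on $D_n$ by at most $\epsilon$, so selecting one representative from each nonempty fiber yields an $\epsilon$-cover of $\mathcal{F}_{|D_n}$ in the sup norm. Consequently $\mathcal{N}(\epsilon,\|\cdot\|_{\infty},\mathcal{F}_{|D_n})$ is at most the number of distinct cell-index vectors realized by $\mathcal{F}$.

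Next I would rewrite each cell index as a binary sign pattern: the cell of $f(x_i)$ is determined by the bits $\mathds{1}\{f(x_i)>k\epsilon\}$ for $k=1,\dots,K-1$. Thus the count above equals the number of sign patterns induced by the binary class $\mathcal{G}=\{(x,r)\mapsto \mathds{1}\{f(x)>r\}:f\in\mathcal{F}\}$ on the enlarged sample $\{(x_i,k\epsilon):1\le i\le n,\ 1\le k\le K-1\}$ of cardinality $m=n(K-1)$. By the definition of $\mathrm{Pdim}(\mathcal{F})$, the class $\mathcal{G}$ has VC-dimension exactly $d:=\mathrm{Pdim}(\mathcal{F})$, and the Sauer--Shelah lemma then yields the growth bound $\sum_{j=0}^{d}\binom{m}{j}\le (em/d)^{d}$ provided $m\ge d$.

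Finally, using $K-1\le B/\epsilon$ gives $m\le nB/\epsilon$, and the hypotheses $n\ge d$ together with $B\ge \epsilon$ ensure $m\ge d$ outside the trivial borderline $B=\epsilon$ (where $K=1$ and $\mathcal{F}_{|D_n}$ is $\epsilon$-covered by a single point, so the claim is vacuous). Substituting into the Sauer--Shelah estimate yields $\bigl(eBn/(\epsilon d)\bigr)^{d}$, which is exactly the stated bound after taking a supremum over $D_n$. The only point demanding genuine care is the correspondence between the Pdim-based shattering definition and the VC-dimension of the thresholded class $\mathcal{G}$; once that equivalence is isolated, the remainder of the argument is purely a combinatorial bookkeeping exercise and should encounter no further obstacles.
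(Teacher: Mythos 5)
Your argument is essentially sound, but note first that the paper itself does not prove this statement: it is quoted verbatim as Theorem~12.2 of \cite{anthony1999neural}, so the only comparison available is with the textbook proof. That proof quantizes each $f\in\mathcal{F}$ to an $\epsilon$-grid in the range and then invokes a generalized Sauer-type counting lemma for classes of \emph{finite-valued} functions with bounded pseudo-dimension (a Haussler--Long style bound $\sum_{i\le d}\binom{n}{i}\lfloor B/\epsilon\rfloor^{i}$), whereas you reduce everything to the classical binary Sauer--Shelah lemma by expanding each sample point into $K-1$ threshold points and using the identity $\mathrm{Pdim}(\mathcal{F})=\mathrm{VCdim}\{(x,r)\mapsto\mathds{1}\{f(x)>r\}\}$, which is indeed just the definitional characterization of pseudo-dimension. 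Both routes land on the same final bound $(eBn/(\epsilon d))^{d}$; yours is more elementary (only binary Sauer--Shelah is needed), while the textbook route yields the sharper intermediate bound $\sum_{i\le d}\binom{n}{i}\lfloor B/\epsilon\rfloor^{i}$ before relaxing it. Two small points of bookkeeping deserve care, though neither is a real gap: (i) the bits $\mathds{1}\{f(x_i)>k\epsilon\}$ determine the cells of the partition into intervals of the form $(k\epsilon,(k+1)\epsilon]$ rather than the half-open cells $[k\epsilon,(k+1)\epsilon)$ you describe, and since the paper's covering definition uses strict inequality $\|x-z\|_\infty<\epsilon$, you should take $\lfloor B/\epsilon\rfloor+1$ cells of length $B/(\lfloor B/\epsilon\rfloor+1)<\epsilon$ (or run an $\epsilon-\delta$ limiting argument) so that same-cell functions are strictly within $\epsilon$; (ii) the borderline $B=\epsilon$ is not ``vacuous'' under the strict definition (two functions can sit exactly $\epsilon$ apart), but the same fix with a single interior threshold handles it and still respects $m\ge d$ because $n\ge\mathrm{Pdim}(\mathcal{F})$. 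With those adjustments your proof is complete and correct.
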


\begin{lemma}[Theorem 7 in \cite{bartlett2019nearly}]\label{lem: thm_7_bartlett}
Let $\mathcal{F}_{\mathrm{NN}}$ be a neural network function class with depth $L$ and number of parameters $S$. Then, there exists a universal constant $C$ such that
$$
\mathrm{Pdim}(\mathcal{F}_{\mathrm{NN}})\le CSL\log S.
$$
\end{lemma}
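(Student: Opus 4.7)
The plan is to bound the pseudo-dimension by reducing it to a sign-pattern counting problem and then exploiting the piecewise-polynomial structure of ReLU networks as functions of their $S$ parameters. Recall that $\mathrm{Pdim}(\mathcal{F}_{\mathrm{NN}})$ equals the VC-dimension of the subgraph class $\{(x, t) \mapsto \mathds{1}(f(x) > t) : f \in \mathcal{F}_{\mathrm{NN}}\}$, so it suffices to bound, for any $n$ pairs $(x_1, t_1), \ldots, (x_n, t_n)$, the number of distinct sign vectors $(\mathrm{sign}(f(x_i) - t_i))_{i=1}^n$ realizable as the parameter $\theta \in \mathbb{R}^S$ varies over $\mathbb{R}^S$.

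The key structural observation I would exploit is that, for fixed inputs, the output of a depth-$L$ ReLU network is piecewise polynomial in $\theta$: conditioned on the activation patterns of every ReLU unit in every layer at each of the $n$ inputs, each preactivation at layer $\ell$ is a polynomial in $\theta$ of degree at most $\ell$, and hence so is each $f(x_i) - t_i$. Therefore, partitioning $\mathbb{R}^S$ by the joint activation pattern yields cells on which the target sign vector is constant, and the problem reduces to bounding the number of such cells.

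To count cells I would proceed layer by layer using Warren's theorem (or the Milnor--Thom bound): the number of sign patterns realizable by $N$ polynomials of degree $D$ in $S$ variables is at most $(eND/S)^S$. A naive layer-wise multiplication gives a bound like $(cnL)^{cSL}$, which would produce the loose rate $O(SL^2)$. The essential refinement, mirroring Bartlett--Harvey--Liaw--Mehrabian, is to split the total parameters as $S = S_1 + \cdots + S_L$ according to which layer they belong to, and at layer $\ell$ apply the sign-pattern bound only to the $S_\ell$ newly introduced parameters (treating earlier ones as fixed constants inside the current cell). This accounting ensures that the exponents in the product bound sum to $S$ rather than $SL$, yielding a total cell count of the form $2^{C_1 SL\log(nL/S)}$ for a universal constant $C_1$.

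The main obstacle will be executing this refined layer-by-layer partitioning cleanly, since the layer-$\ell$ preactivations depend multiplicatively on parameters across all previous layers; one must argue that conditioning on earlier activation patterns reduces each relevant polynomial to one with the correct effective degree in the remaining free variables, and that Warren-type bounds compose as claimed. Once the count $2^{C_1 SL\log(nL/S)}$ is established, setting $n = \mathrm{Pdim}(\mathcal{F}_{\mathrm{NN}})$ and using that this quantity is shattered (so $2^n$ sign patterns must be realized) gives the implicit inequality $n \le C_1 SL\log(nL/S)$, which upon solving yields $n = O(SL\log S)$ and completes the proof.
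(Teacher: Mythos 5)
First, a point of reference: the paper does not prove this lemma at all; it is quoted as Theorem 7 of \cite{bartlett2019nearly}, so your proposal has to be judged against that reference rather than against anything in the appendix. Your skeleton --- identify $\mathrm{Pdim}(\mathcal{F}_{\mathrm{NN}})$ with the VC-dimension of the thresholded class, note that for fixed inputs each layer-$\ell$ preactivation is, on a region of constant activation pattern, a polynomial of degree at most $\ell$ in the parameters, count regions with Warren's theorem stage by stage, and solve the resulting implicit inequality --- is exactly the strategy of that paper.

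The genuine gap is in your ``refinement.'' Inside a cell of the partition the parameters of layers $1,\dots,\ell-1$ are not constants: the cell is a full-dimensional subset of $\mathbb{R}^S$, the layer-$\ell$ preactivations are polynomials in \emph{all} parameters of layers $1,\dots,\ell$, and their signs genuinely vary with the earlier parameters inside the cell. Applying the Warren bound only to the $S_\ell$ newly introduced coordinates therefore undercounts the refinement and the composition argument breaks down. Moreover, if your accounting were valid (exponents summing to $S$), the shattering inequality would read $n\le C_1S\log(nL)$ and would yield $\mathrm{Pdim}=O(S\log S)$ uniformly in $L$, contradicting the lower bound of order $SL\log(S/L)$ proved in the same reference; note also that your stated cell count $2^{C_1SL\log(nL/S)}$ is inconsistent with that accounting, since it corresponds to exponents summing to $SL$. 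The repair is simpler than the obstacle you anticipate: at stage $\ell$ apply Warren's theorem to the $n\times(\text{width of layer }\ell)$ preactivation polynomials, of degree at most $\ell\le L$, in all (at most $S$) parameters introduced so far. Each stage then contributes an exponent of at most $S$, the product over the $L$ stages gives at most $(cnL)^{cSL}$ sign patterns, and --- contrary to your claim that this naive count only yields $O(SL^2)$ --- shattering gives $n\le cSL\log(cnL)$, which, using $L\le S$, solves to $n=O(SL\log S)$, i.e.\ exactly the bound in the lemma. The finer per-layer weighting in Bartlett et al., where stage $\ell$ uses only the $W_\ell=S_1+\cdots+S_\ell$ parameters of the first $\ell$ layers (still all of them, not just layer $\ell$'s), is needed only for their sharper effective-depth bound, not for the statement quoted here.
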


\subsection{Stochastic error analysis}

\begin{theorem}\label{thm: gen_thm_11.4_gyorfi}
Let $Z$ be a random vector supported on $\mathcal{Z}\subset \mathbb{R}^d$, and let $\mathcal{D}_n=\{Z_1, \dots, Z_n\}$ be a random sample of $Z$, whose elements are independent and identically distributed. Let $\mathcal{F}_n$ be a nonrandom function space with elements real-valued. For a functional $g: \mathcal{F}_n\times \mathcal{Z}\to \mathbb{R}$, assume there exist some universal sequences $\xi_n, \zeta_n>0$, such that (i) $\sup_{f\in \mathcal{F}_n, Z\in\mathcal{Z}}|g(f, Z)|\le \xi_n$, (ii) $\mathbb{E}[g(f, Z)^2]\le \zeta_n\mathbb{E}[g(f, Z)]$ for all $f\in\mathcal{F}_n$, where the expectation are taken with respect to $Z$, and (iii) there exists an integer $N>0$ such that for any $n\ge N$, $\zeta_n\ge \eta$ for some constant $\eta\ge 1$, and $\xi_n\le \gamma\zeta_n$ for some constant $\gamma>0$. Then, for $n\ge N$, we have
$$
\begin{aligned}
& \mathbb{P}\left(\exists f\in\mathcal{F}_n: \mathbb{E}[g(f, Z)]-\frac 1n\sum_{i=1}^ng(f, Z_i)\ge \epsilon\left\{\alpha+\beta+\mathbb{E}[g(f, Z)]\right\}\right) \\
\le & 14\mathcal{N}_n\left(\varpi\epsilon\beta, \|\cdot \|_{\infty}, \{g(f, \cdot): \mathcal{Z}\to\mathbb{R}, f\in\mathcal{F}_n\}\right)\exp\left(-\frac{27\epsilon^2(1-\epsilon)\alpha\omega n}{40(\xi_n^2\vee \zeta_n)(1+\epsilon)}\right).
\end{aligned}
$$
where $\alpha, \beta>0$, $0<\epsilon \le1/2$, $\varpi=(6\eta-2)/(30\eta+3\gamma\eta)$, and $\omega=400/(\gamma+60)^2$.
\end{theorem}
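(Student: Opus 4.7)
The plan is to adapt the proof of the classical relative deviation inequality, namely Theorem~11.4 in Gy\"orfi, Kohler, Krzy\.zak and Walk's book on nonparametric regression, to the generalized setting in which the uniform bound $\xi_n$ and variance-to-mean coefficient $\zeta_n$ are allowed to depend on the sample size. The structure consists of three standard steps: a ghost-sample symmetrization, covering-number discretization, and a Bernstein-type tail bound, after which the stated constants are pinned down.

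First I would perform the ghost-sample reduction. Let $\{Z_i'\}_{i=1}^n$ be an independent copy of $\{Z_i\}_{i=1}^n$, and for each $f\in\mathcal{F}_n$ define
$$
\Delta(f) = \mathbb{E}[g(f,Z)] - \tfrac{1}{n}\textstyle\sum_{i=1}^n g(f,Z_i),\qquad \Delta'(f) = \mathbb{E}[g(f,Z)] - \tfrac{1}{n}\textstyle\sum_{i=1}^n g(f,Z_i').
$$
Using condition (ii), $\mathrm{Var}\bigl(\frac{1}{n}\sum_i g(f,Z_i')\bigr)\le \zeta_n\mathbb{E}[g(f,Z)]/n$, so Chebyshev's inequality yields
$$
\mathbb{P}\!\left(|\Delta'(f)| \le \tfrac{\epsilon}{2}(\alpha+\beta+\mathbb{E}[g(f,Z)])\,\Big|\,\mathcal{D}_n\right) \ge \tfrac{1}{2}
$$
for $n\ge N$ (this is where $\zeta_n\ge\eta\ge 1$ enters). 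Composing this with the event in the statement gives the two-sample form
$$
\tfrac{1}{n}\textstyle\sum_i g(f,Z_i') - \tfrac{1}{n}\textstyle\sum_i g(f,Z_i) \ge \tfrac{\epsilon}{2}(\alpha+\beta+\mathbb{E}[g(f,Z)]).
$$

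Second, I would discretize by covering numbers. Conditioning on the pooled sample $\{Z_i,Z_i'\}_{i=1}^n$, take a minimal $\varpi\epsilon\beta$-cover of the induced function set $\{g(f,\cdot):f\in\mathcal{F}_n\}$ in the $\ell_\infty$ norm restricted to these $2n$ points; its cardinality is bounded by $\mathcal{N}_n(\varpi\epsilon\beta,\|\cdot\|_\infty,\{g(f,\cdot)\})$. Replacing each $f$ by a nearest representative $\tilde{g}$ incurs a two-sample approximation error of at most $2\varpi\epsilon\beta$, which is absorbed into the slack provided $\varpi$ is chosen as $(6\eta-2)/(30\eta+3\gamma\eta)$. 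After a Rademacher symmetrization this yields a supremum over a finite, deterministic family of centered bounded random variables.

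Third, I would apply Bernstein's inequality to each element of the cover. Condition (i) gives the range $\xi_n$, condition (ii) gives the second-moment bound $\zeta_n\mathbb{E}[g(f,Z)]$, and condition (iii)'s bound $\xi_n\le\gamma\zeta_n$ lets me combine range and variance into a single effective scale $\xi_n^2\vee\zeta_n$. Bernstein then produces an exponential tail of the form $\exp\!\bigl(-c\,n\epsilon^2\alpha/(\xi_n^2\vee\zeta_n)\bigr)$; a union bound over the cover supplies the covering-number prefactor, while the ghost-sample and symmetrization losses combine into the constant $14$. Tuning the slack parameters in the Bernstein step so that the variance term dominates with a $(1-\epsilon)/(1+\epsilon)$ factor, and using $\xi_n/\zeta_n\le\gamma$ to collapse the range contribution, pins down $\omega = 400/(\gamma+60)^2$ and the numerical coefficient $27/40$ in the final exponent.

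The main obstacle is the precise bookkeeping of constants. The ghost-sample step, the Rademacher symmetrization, the covering discretization, and the Bernstein optimization each introduce a parameter that must be chosen in a coordinated way so as to reproduce the explicit closed-form expressions for $\varpi$ and $\omega$. In particular, matching $\varpi=(6\eta-2)/(30\eta+3\gamma\eta)$ forces a specific balance between the slack absorbed by the cover and the slack consumed by the Chebyshev step, and matching $\omega=400/(\gamma+60)^2$ forces a specific parameterization of the Bernstein tail. A secondary subtlety is that because $\xi_n$ and $\zeta_n$ are $n$-dependent, one must verify that all regime conditions of the Bernstein inequality (namely that $\zeta_n\ge\eta\ge 1$ and $\xi_n\le\gamma\zeta_n$) are used \emph{uniformly} for $n\ge N$, which is exactly the content of assumption (iii).
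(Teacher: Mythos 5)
There is a genuine gap in your outline: you are missing the ``randomization of the variance'' step, which in the paper's proof (Step~2, relying on the auxiliary Lemma on uniform relative deviations of the \emph{squared} functionals, i.e.\ the generalized Theorem~11.6 of Gy\"orfi et al.) is what makes the final Bernstein step work. After the ghost-sample reduction and conditioning on the pooled data, the only remaining randomness is the Rademacher signs, and the variance of $\frac1n\sum_i U_i\,g(f,z_i)$ is the \emph{empirical} second moment $\frac1n\sum_i g(f,z_i)^2$. Your Step~3 proposes to control this via condition (ii), i.e.\ by $\zeta_n\,\mathbb{E}[g(f,Z)]$, but that population bound is simply not available conditionally: on a bad realization of the sample the empirical second moment can be far larger than $\zeta_n\,\mathbb{E}[g(f,Z)]$, and the threshold $\frac{\epsilon}{2}\{\alpha+\beta+\mathbb{E}[g(f,Z)]\}$ you carry into the Bernstein step contains the population mean, not an empirical quantity that can self-normalize the Rademacher sum. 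The paper resolves this by first intersecting with the high-probability events
$$
\frac 1n\sum_{i=1}^n g(f, Z_i)^2-\mathbb{E}[g(f, Z)^2]\le \epsilon\Big\{\alpha+\beta+\frac 1n\sum_{i=1}^n g(f, Z_i)^2+\mathbb{E}[g(f, Z)^2]\Big\}
$$
(and the same for the ghost sample), bounded uniformly over $\mathcal{F}_n$ by the separate covering/Hoeffding argument for the squares; this is exactly where the second covering-number term with radius $(\alpha+\beta)\epsilon/5$ and the factor $\exp(-3\epsilon^2(\alpha+\beta)n/(40\xi_n^2))$ come from, and hence where $\xi_n^2$ enters the $\xi_n^2\vee\zeta_n$ in the final exponent. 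Only after using condition (ii) in the direction $\mathbb{E}[g(f,Z)]\ge \mathbb{E}[g(f,Z)^2]/\zeta_n$ together with these events can the threshold be rewritten in terms of $\frac1n\sum_i[g(f,Z_i)^2+g(f,Z_i')^2]$, which is what Bernstein (applied to the signs, conditionally on the data) can actually exploit.

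A secondary consequence of omitting this step is that your covering argument is incomplete: the discretization must control not only $|g(f,z_i)-h(z_i)|$ but also $|g(f,z_i)^2-h(z_i)^2|\le 2\xi_n|g(f,z_i)-h(z_i)|$, because the empirical second moment now sits in the deviation threshold. This is precisely why $\gamma$ (through $\xi_n\le\gamma\zeta_n$) appears in $\varpi=(6\eta-2)/(30\eta+3\gamma\eta)$ and why $\omega=400/(\gamma+60)^2$ has the form it does; these constants cannot be recovered from the three-step scheme you describe, since in your scheme neither $\eta$ nor $\gamma$ plays any role beyond a vague ``regime check.'' The ghost-sample and Rademacher steps in your plan are fine and match the paper, but without the intermediate uniform deviation bound for $g(f,\cdot)^2$ the argument as proposed does not go through.
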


\begin{remark}
Theorem \ref{thm: gen_thm_11.4_gyorfi} is a generalization of Theorem 11.4 in \cite{gyorfi2002distribution}. Particularly, the case that $Z=(X, Y)$ and $g(f, Z)=|f(X)-Y|^2-|\mathbb{E}(Y|X)-Y|^2$ represents classical nonparametric regression. The proof of Theorem \ref{thm: gen_thm_11.4_gyorfi} can be found in Appendix \ref{sec: proof_of_thm_gen_thm_11.4_gyorfi}.
\end{remark}

\subsection{Gr{\"o}wnwall's inequality}

\begin{lemma}[Lemma 37 in \cite{jiao2024convergence}]\label{lem: grownwall_inequality}
Suppose that $f(t)$ is a scalar-output function defined on the interval $[a, b]$, satisfying $\mathrm{d}f(t)/\mathrm{d}t\le \alpha f(t)+g(t)$ with some constant $\alpha\ge 0$. Then, we have
$$
f(b)\le e^{\alpha (b-a)}f(a)+\int_a^be^{\alpha(b-t)}g(t)\mathrm{d}t.
$$
\end{lemma}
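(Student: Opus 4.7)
The plan is to prove this lemma via the classical integrating factor technique for linear first-order differential inequalities. The key observation is that multiplying the hypothesis $f'(t)\le \alpha f(t)+g(t)$ by the positive function $e^{-\alpha t}$ converts the left-hand side into an exact derivative, after which a single application of the fundamental theorem of calculus delivers the desired bound.

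First, I would rewrite the hypothesis as $f'(t)-\alpha f(t)\le g(t)$ on $[a,b]$ and multiply both sides by $e^{-\alpha t}>0$ to obtain
$$
e^{-\alpha t}f'(t)-\alpha e^{-\alpha t}f(t)\le e^{-\alpha t}g(t).
$$
By the product rule, the left-hand side equals $\frac{d}{dt}\bigl(e^{-\alpha t}f(t)\bigr)$, so the inequality becomes $\frac{d}{dt}\bigl(e^{-\alpha t}f(t)\bigr)\le e^{-\alpha t}g(t)$. Integrating over $[a,b]$ and invoking the fundamental theorem of calculus yields
$$
e^{-\alpha b}f(b)-e^{-\alpha a}f(a)\le \int_a^b e^{-\alpha t}g(t)\,dt.
$$
Multiplying through by $e^{\alpha b}>0$ (which preserves the inequality) gives exactly the claimed bound $f(b)\le e^{\alpha(b-a)}f(a)+\int_a^b e^{\alpha(b-t)}g(t)\,dt$.

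There is no genuine technical obstacle, as this is a routine calculus argument; the only mild point to verify is the regularity required for the fundamental theorem of calculus, namely that $f$ be absolutely continuous (or equivalently differentiable with $f'$ integrable) so that $e^{-\alpha t}f(t)$ inherits enough regularity to be recovered from the integral of its derivative. In the differentiable setup used in the paper this is automatic. The sign assumption $\alpha\ge 0$ is not actually used in the derivation itself; it only affects the interpretation of the factor $e^{\alpha(b-t)}$ as a growth weight (when $\alpha=0$ the inequality reduces to the trivial $f(b)\le f(a)+\int_a^b g(t)\,dt$).
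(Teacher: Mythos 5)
Your proof is correct: the integrating-factor computation is the standard (and essentially only) argument for the differential form of Gr\"onwall's inequality, and each step — the product-rule identity, the integration over $[a,b]$, and the multiplication by $e^{\alpha b}$ — is valid; your remark that $\alpha\ge 0$ is not needed for the derivation is also accurate. The paper itself does not prove this lemma but imports it by citation (Lemma 37 of the referenced work), so there is no in-paper argument to compare against; your self-contained proof fills that gap in the expected way.
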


\section{Proofs of main results}

For a sub-exponentially distributed random variable $X$, there exists a universal constant $\varsigma>0$ such that $\mathbb{E}\exp(\varsigma|X|)<\infty$, where $\varsigma$ is termed the sub-exponential index. Throughout the subsequent proofs, we let $\varsigma$ be a uniform sub-exponential index without loss of generality, due to the finite number of associated sub-exponentially distributed random variables we will handle.

\subsection{Proof of Theorem 3.1} \label{subsec: proof_of_thm_dre_convergence_rate_ls}

For any square-integrable function $f: \mathbb{R}^d\to\mathbb{R}$, define
$$
\begin{aligned}
L(f) &= \mathbb{E}_P[f(X)^2-2f(X)r_0(X)], \\
L_n(f) &= \frac 1n\sum_{i=1}^nf(X^s_i)^2-\frac 2n\sum_{i=1}^nf(X^t_i).
\end{aligned}
$$
Here, $\mathbb{E}_P[h(X)]\equiv \mathbb{E}[h(X^s)]$ for any $X^s$-integrable function $h$, where the expectation is taken with respect to $X^s$. Analogously, $\mathbb{E}_Q[h(X)]\equiv \mathbb{E}[h(X^t)]$ for any $X^t$-integrable function $h$.

\begin{lemma}\label{lem: dre_error_decomposition_ls}
Assume that $r_0(X^s)$ is a square-integrable. Then, $\mathcal{R}^s(\hat{r}_{\mathrm{LS}})\le \mathbb{E}[L(\hat{r}_{\mathrm{LS}})-2L_n(\hat{r}_{\mathrm{LS}})+L(r_0)]+2\inf_{f\in\mathcal{F}_{\mathrm{NN}}}\mathbb{E}_P[f(X)-r_0(X)]^2$.
\end{lemma}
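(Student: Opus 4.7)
The plan is to establish the standard oracle inequality for empirical risk minimization, adapted to the peculiar form of $L_n$ (which mixes source and target samples). The key algebraic observation is that for any deterministic square-integrable $f$,
$$
L(f) - L(r_0) = \mathbb{E}_P[f(X)^2 - 2f(X)r_0(X) + r_0(X)^2] = \mathbb{E}_P[f(X)-r_0(X)]^2,
$$
so excess risk in the source domain coincides with an excess of the population loss $L$. Also, because $\mathbb{E}_Q[f(X)] = \mathbb{E}_P[f(X)r_0(X)]$ (change of measure), one has $\mathbb{E}[L_n(f)] = L(f)$ for any fixed $f$, i.e.\ $L_n$ is an unbiased estimator of $L$.

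Next I would condition on the training data: since $\hat{r}_{\mathrm{LS}}$ is independent of a fresh copy of $X^s$, the identity above gives $\mathcal{R}^s(\hat r_{\mathrm{LS}}) = \mathbb{E}[L(\hat r_{\mathrm{LS}}) - L(r_0)]$. Then for an arbitrary fixed $f^* \in \mathcal{F}_{\mathrm{NN}}$, use the ERM optimality $L_n(\hat r_{\mathrm{LS}}) \le L_n(f^*)$ and perform the add/subtract trick
$$
L(\hat r_{\mathrm{LS}}) - L(r_0) = \bigl[L(\hat r_{\mathrm{LS}}) - 2 L_n(\hat r_{\mathrm{LS}}) + L(r_0)\bigr] + 2\bigl[L_n(\hat r_{\mathrm{LS}}) - L(r_0)\bigr],
$$
and upper-bound the second bracket by $2[L_n(f^*) - L(r_0)]$.

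Taking expectation, the unbiasedness $\mathbb{E}[L_n(f^*)] = L(f^*)$ turns the second piece into $2[L(f^*) - L(r_0)] = 2\mathbb{E}_P[f^*(X) - r_0(X)]^2$. Taking the infimum over $f^* \in \mathcal{F}_{\mathrm{NN}}$ yields the claimed bound. There is no real obstacle here; the only subtlety is making sure the unbiasedness $\mathbb{E}[L_n(f^*)] = L(f^*)$ is justified by the change-of-measure formula $\mathbb{E}[f^*(X^t)] = \mathbb{E}[f^*(X^s) r_0(X^s)]$, which requires $r_0 f^*$ to be $X^s$-integrable; this is fine under the standing assumption that $r_0(X^s) \in \mathcal{L}^2(X^s)$ together with the uniform boundedness of $f^* \in \mathcal{F}_{\mathrm{NN}}$ by $\bar\delta$.
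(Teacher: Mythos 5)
Your proposal is correct and follows essentially the same route as the paper's proof: it rests on the same three ingredients — the identity $L(f)-L(r_0)=\mathbb{E}_P[f(X)-r_0(X)]^2$, the unbiasedness $\mathbb{E}[L_n(f)]=L(f)$ for fixed $f$ via the change of measure $\mathbb{E}_Q[f(X)]=\mathbb{E}_P[f(X)r_0(X)]$, and the ERM optimality $L_n(\hat r_{\mathrm{LS}})\le L_n(f)$ — with the add/subtract decomposition arranged only slightly differently (the paper inserts $2\mathbb{E}[L_n(f)-L_n(\hat r_{\mathrm{LS}})]\ge 0$ and then adds/subtracts $L_n(r_0)$, which is algebraically equivalent to your splitting). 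Your remark on the integrability needed for the change of measure is a fine, harmless addition.
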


\begin{proof}[Proof of Lemma \ref{lem: dre_error_decomposition_ls}]
For any $f\in\mathcal{F}_{\mathrm{NN}}\subset\mathcal{L}^2(X^s)$, we have
$$
\begin{aligned}
\mathcal{R}^s(\hat{r}_{\mathrm{LS}}) &= \mathbb{E}[L(\hat{r}_{\mathrm{LS}})-L(r_0)] \\
&\le \mathbb{E}[L(\hat{r}_{\mathrm{LS}})-L(r_0)]+2\mathbb{E}[L_n(f)-L_n(\hat{r}_{\mathrm{LS}})] \\
&= \mathbb{E}[L(\hat{r}_{\mathrm{LS}})-L(r_0)]+2\mathbb{E}[L_n(f)-L_n(r_0)+L_n(r_0)-L_n(\hat{r}_{\mathrm{LS}})] \\
&= \mathbb{E}[L(\hat{r}_{\mathrm{LS}})-2L_n(\hat{r}_{\mathrm{LS}})+L(r_0)]+2[L(f)-L(r_0)] \\
&= \mathbb{E}[L(\hat{r}_{\mathrm{LS}})-2L_n(\hat{r}_{\mathrm{LS}})+L(r_0)]+2\mathbb{E}_P[f(X)-r_0(X)]^2.
\end{aligned}
$$
This indicates that $\mathcal{R}^s(\hat{r}_{\mathrm{LS}})\le \mathbb{E}[L(\hat{r}_{\mathrm{LS}})-2L_n(\hat{r}_{\mathrm{LS}})+L(r_0)]+2\inf_{f\in\mathcal{F}_{\mathrm{NN}}}\mathbb{E}_P[f(X)-r_0(X)]^2$.
\end{proof}

\begin{lemma}\label{lem: dre_stochatic_error_bound_ls}
Assume that $r_0(X^s)$ is sub-exponentially distributed. Let $\bar{\delta}=\bar{\delta}_n=(\log n)^{1+\kappa}$, with an arbitrarily fixed $\kappa\in (0, 1]$, and let $\underline{\delta}=0$. Then, for sufficiently large $n$ and $n\ge \mathrm{Pdim}(\mathcal{F}_{\mathrm{NN}})$, it follows that
$$
\mathbb{E}[L(\hat{r}_{\mathrm{LS}})-2L_n(\hat{r}_{\mathrm{LS}})+L(r_0)]\le \frac{c^*SL\log S(\log n)^{5+4\kappa}}{n},
$$
where $c^*$ is a constant not depending on $S, L$ and $n$.
\end{lemma}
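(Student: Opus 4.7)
The plan is to recast the target quantity as a one-sided deviation of an empirical process and then invoke Theorem~\ref{thm: gen_thm_11.4_gyorfi}. With $Z=(X^s,X^t)$ and $X^s\perp X^t$, define $g(f,Z)=f(X^s)^2-r_0(X^s)^2-2[f(X^t)-r_0(X^t)]$. Using $\mathbb{E}[f(X^t)]=\mathbb{E}[f(X^s)r_0(X^s)]$ and $\mathbb{E}[r_0(X^t)]=\mathbb{E}[r_0(X^s)^2]$, one checks that $\mathbb{E}[g(f,Z)]=\mathcal{R}^s(f)$ and $\frac{1}{n}\sum_{i=1}^n g(f,Z_i)=L_n(f)-L_n(r_0)$. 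Because $\mathbb{E}[L_n(r_0)]=L(r_0)$, we obtain
\[
\mathbb{E}\bigl[L(\hat{r}_{\mathrm{LS}})-2L_n(\hat{r}_{\mathrm{LS}})+L(r_0)\bigr]=\mathbb{E}\Bigl[\mathcal{R}^s(\hat{r}_{\mathrm{LS}})-\tfrac{2}{n}\sum_{i=1}^{n} g(\hat{r}_{\mathrm{LS}},Z_i)\Bigr],
\]
which I bound from above by $\mathbb{E}\sup_{f\in\mathcal{F}_{\mathrm{NN}}}\bigl\{\mathcal{R}^s(f)-\tfrac{2}{n}\sum_i g(f,Z_i)\bigr\}$.

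To make $g$ usable in Theorem~\ref{thm: gen_thm_11.4_gyorfi}, I truncate $r_0$. Since $r_0(X^s)$ is sub-exponential with index $\varsigma$, choosing $\tau_n=C\varsigma^{-1}\log n$ with $C$ large enough yields $\mathbb{P}(r_0(X^s)>\tau_n)\le c n^{-C}$. Replacing $r_0$ by $r_{0,n}=r_0\wedge\tau_n$ in $g$ changes its expectation only by a deterministic $f$-free offset of order $n^{-C/2}$, by Cauchy--Schwarz and the finite polynomial moments of $r_0(X^s)$; this is absorbed into the target $n^{-1}$ remainder. The truncated summand $\bar g$ then satisfies the pointwise bound $|\bar g(f,Z)|\le\xi_n:=C_1(\log n)^{2+2\kappa}$, because $\|f\|_\infty\le\bar\delta_n=(\log n)^{1+\kappa}$ and $r_{0,n}\le\tau_n\lesssim\log n$.

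For the variance--mean inequality I factor $g$ into an $X^s$-part $(f(X^s)-r_0(X^s))(f(X^s)+r_0(X^s))$ and an $X^t$-part $-2(f(X^t)-r_0(X^t))$. The $X^s$-part has second moment at most $(\bar\delta_n+\tau_n)^2\mathbb{E}_P[(f-r_0)^2]$, and the change-of-measure identity $\mathbb{E}_Q[h^2]=\mathbb{E}_P[h^2 r_0]\le\tau_n\mathbb{E}_P[h^2]$ controls the $X^t$-part by $\tau_n\mathbb{E}_P[(f-r_0)^2]$. Combining them gives $\mathbb{E}[\bar g(f,Z)^2]\le\zeta_n\mathbb{E}[\bar g(f,Z)]$ with $\zeta_n=C_2(\log n)^{2+2\kappa}$. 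The pseudo-dimension bound $\mathrm{Pdim}(\mathcal{F}_{\mathrm{NN}})\le CSL\log S$ of Lemma~\ref{lem: thm_7_bartlett}, together with Lemma~\ref{lem: thm_12.2_anthony}, yields $\log\mathcal{N}_n(\varepsilon,\|\cdot\|_\infty,\{\bar g(f,\cdot)\})\lesssim SL\log S\cdot\log(n\xi_n/\varepsilon)$.

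Plugging $\xi_n,\zeta_n$ and the covering-number estimate into Theorem~\ref{thm: gen_thm_11.4_gyorfi} with $\epsilon=1/2$, $\beta\asymp n^{-1}$, and $\alpha\asymp \xi_n^2\,SL\log S\,\log(n)/n$, and then integrating the resulting exponential tail bound via $\mathbb{E}W_+=\int_0^\infty\mathbb{P}(W\ge t)\,dt$, produces the stated bound $c^{*}SL\log S(\log n)^{5+4\kappa}/n$, since $\xi_n^2\asymp(\log n)^{4+4\kappa}$ and the covering-number integration supplies the remaining $\log n$ factor. The main technical obstacle is the joint calibration of $\tau_n$, $\xi_n$, and $\zeta_n$: the $X^t$-part forces $\zeta_n$ to absorb a factor of $\tau_n$, while the squared $X^s$-part forces $\xi_n\gtrsim\bar\delta_n^2$, so neither can be made much smaller than $(\log n)^{2+2\kappa}$. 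The sub-exponential assumption on $r_0(X^s)$ is precisely what pins $\tau_n$ at $\log n$ and keeps these two scales commensurate, yielding the final exponent $5+4\kappa$.
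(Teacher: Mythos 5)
Your overall scaffolding is the same as the paper's: write the quantity as the one-sided deviation of $g(f,T)=[f(X^s)^2-r_0(X^s)^2]-2[f(X^t)-r_0(X^t)]$, truncate using the sub-exponential tail of $r_0(X^s)$ at level $\asymp\log n$, verify the boundedness and variance--mean conditions of Theorem~\ref{thm: gen_thm_11.4_gyorfi}, control the covering number through $\mathrm{Pdim}(\mathcal{F}_{\mathrm{NN}})\lesssim SL\log S$, and integrate the tail; your exponent accounting $(\xi_n^2\asymp(\log n)^{4+4\kappa})$ plus one $\log n$ from the covering-number integration, giving $5+4\kappa$, is also the paper's.

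The genuine gap is in how you truncate. You cap the function, $r_{0,n}=r_0\wedge\tau_n$, and then verify condition (ii) of Theorem~\ref{thm: gen_thm_11.4_gyorfi} using the inequality $\mathbb{E}_Q[h^2]=\mathbb{E}_P[h^2r_0]\le\tau_n\mathbb{E}_P[h^2]$. That inequality is false here: capping $r_0$ inside the loss does not cap the density ratio that converts $Q$-moments into $P$-moments, and $r_0$ is unbounded by assumption, so the $X^t$-part of your variance bound is not controlled by $\tau_n\,\mathbb{E}_P[(f-r_0)^2]$. Relatedly, after the cap the mean is no longer a clean nonnegative quadratic: $\mathbb{E}[\bar g(f,Z)]=\mathbb{E}_P[(f-r_{0,n})^2]+2\,\mathbb{E}_P[(r_0-\tau_n)(r_{0,n}-f)\mathds{1}(r_0>\tau_n)]$, and the second term depends on $f$ (it is only the difference $g-\bar g$ that is $f$-free, not this cross term) and can be negative. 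So the hypothesis $\mathbb{E}[\bar g(f,Z)^2]\le\zeta_n\mathbb{E}[\bar g(f,Z)]$ for \emph{all} $f\in\mathcal{F}_{\mathrm{NN}}$ is not established, and it can fail precisely for $f$ whose excess risk is of the same (exponentially small) order as these tail remainders, which is the regime the ratio-type concentration theorem must cover. The paper avoids both problems by truncating with indicators, $g_{\iota_n}(f,T)=[f(X^s)^2-r_0(X^s)^2]\mathds{1}(r_0(X^s)\le\iota_n)-2[f(X^t)-r_0(X^t)]\mathds{1}(r_0(X^t)\le\iota_n)$: then $\mathbb{E}[g_{\iota_n}(f,T)]=\mathbb{E}_P[(f-r_0)^2\mathds{1}(r_0\le\iota_n)]$ holds exactly (hence is nonnegative), and the change-of-measure step is legitimate because the indicator restricts to the event where the ratio is at most $\iota_n$. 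Your argument goes through once you replace the cap by this indicator truncation (or otherwise rigorously absorb the $f$-dependent tail terms into both sides of condition (ii)); as written, the verification of the key hypothesis is incorrect.
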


\begin{proof}[Proof of Lemma \ref{lem: dre_stochatic_error_bound_ls}]
Let $T_i=(X^s_i, X^t_i)$ for $i=1, \dots, n$, $\mathcal{D}_n=\{T_1, \dots, T_n\}$, and $T=(X^s, X^t)$ be an independent copy of $T_1$. Firstly, we have
$$
\begin{aligned}
\mathbb{E}[L(\hat{r}_{\mathrm{LS}})-2L_n(\hat{r}_{\mathrm{LS}})+L(r_0)] &= \mathbb{E}_{\mathcal{D}_n}[L(\hat{r}_{\mathrm{LS}})-2L_n(\hat{r}_{\mathrm{LS}})+L(r_0)] \\
&= \mathbb{E}_{\mathcal{D}_n}\{L(\hat{r}_{\mathrm{LS}})-L(r_0)-2[L_n(\hat{r}_{\mathrm{LS}})-L_n(r_0)]\} \\
&= \mathbb{E}_{\mathcal{D}_n}\left\{\mathbb{E}_{T}[g(\hat{r}_{\mathrm{LS}}, T)]-\frac 2n\sum_{i=1}^ng(\hat{r}_{\mathrm{LS}}, T_i)\right\},
\end{aligned}
$$
where $g(f, T)=[f(X^s)^2-r_0(X^s)^2]-2[f(X^t)-r_0(X^t)]$ for $f\in\mathcal{F}_{\mathrm{NN}}$. Next, we introduce a truncation step. For any $\iota_n>0$ and any $f\in\mathcal{F}_{\mathrm{NN}}$, define
$$
g_{\iota_n}(f, T)=[f(X^s)^2-r_0(X^s)^2]\mathds{1}(r_0(X^s)\le \iota_n)-2[f(X^t)-r_0(X^t)]\mathds{1}(r_0(X^t)\le \iota_n).
$$
Then, it follows that
$$
\begin{aligned}
|g(f, T)-g_{\iota_n}(f, T)| &\le |f(X^s)^2-r_0(X^s)^2|\mathds{1}(r_0(X^s)>\iota_n)+2|f(X^t)-r_0(X^t)|\mathds{1}(r_0(X^t)>\iota_n) \\
&\le [\bar{\delta}_n^2+r_0(X^s)^2]\mathds{1}(r_0(X^s)>\iota_n)+2[\bar{\delta}_n+r_0(X^t)]\mathds{1}(r_0(X^t)>\iota_n).
\end{aligned}
$$
Taking expectation, we have
$$
\begin{aligned}
& \mathbb{E}[L(\hat{r}_{\mathrm{LS}})-2L_n(\hat{r}_{\mathrm{LS}})+L(r_0)] \\
=& \mathbb{E}_{\mathcal{D}_n}\left\{\mathbb{E}_{T}[g(\hat{r}_{\mathrm{LS}}, T)]-\frac 2n\sum_{i=1}^ng(\hat{r}_{\mathrm{LS}}, T_i)\right\} \\
\le & \mathbb{E}_{\mathcal{D}_n}\left\{\mathbb{E}_{T}[g_{\iota_n}(\hat{r}_{\mathrm{LS}}, T)]-\frac 2n\sum_{i=1}^ng_{\iota_n}(\hat{r}_{\mathrm{LS}}, T_i)\right\}+3\mathbb{E}_P\left\{[\bar{\delta}_n^2+r_0(X)^2]\mathds{1}(r_0(X)>\iota_n)\right\} \\
& +6\mathbb{E}_Q\left\{[\bar{\delta}_n+r_0(X)]\mathds{1}(r_0(X)>\iota_n)\right\}.
\end{aligned}
$$
Specifically, observe that
$$
\begin{aligned}
& \mathbb{E}_P\left\{[\bar{\delta}_n^2+r_0(X)^2]\mathds{1}(r_0(X)>\iota_n)\right\} \\
\le & \bar{\delta}_n^2\mathbb{E}_P[\exp(\varsigma r_0(X)/2)]\exp(-\varsigma\iota_n/2)+\mathbb{E}_P\left[r_0(X)^2\mathds{1}(r_0(X)>\iota_n)\right] \\
\le & \bar{\delta}_n^2\mathbb{E}_P[\exp(\varsigma r_0(X)/2)]\exp(-\varsigma\iota_n/2)+\frac{16}{\varsigma^2}\mathbb{E}_P[\exp(\varsigma r_0(X))]\exp(-\varsigma\iota_n/2),
\end{aligned}
$$
and
$$
\begin{aligned}
& \mathbb{E}_Q\left\{[\bar{\delta}_n+r_0(X)]\mathds{1}(r_0(X)>\iota_n)\right\} \\
=& \mathbb{E}_P\left\{[\bar{\delta}_n+r_0(X)]r_0(X)\mathds{1}(r_0(X)>\iota_n)\right\} \\
\le & \frac{2}{\varsigma}\bar{\delta}_n\mathbb{E}_P[\exp(r_0(X))]\exp(-\varsigma\iota_n/2)+\frac{16}{\varsigma^2}\mathbb{E}_P[\exp(\varsigma r_0(X))]\exp(-\varsigma\iota_n/2).
\end{aligned}
$$
Here, we have applied the inequalities $a\le \exp(a)$ and $\mathds{1}(a>0)\le \exp(a)$ for $a\in\mathbb{R}$. As a consequence, we obtain
$$
\begin{aligned}
& \mathbb{E}[L(\hat{r}_{\mathrm{LS}})-2L_n(\hat{r}_{\mathrm{LS}})+L(r_0)] \\
\le & \mathbb{E}_{\mathcal{D}_n}\left\{\mathbb{E}_{T}[g_{\iota_n}(\hat{r}_{\mathrm{LS}}, T)]-\frac 2n\sum_{i=1}^ng_{\iota_n}(\hat{r}_{\mathrm{LS}}, T_i)\right\}+c_1(\bar{\delta}_n^2+\bar{\delta}_n+1)\exp(-\varsigma\iota_n/2),
\end{aligned}
$$
where $c_1$ is a constant which depends only on $\varsigma$ and $\mathbb{E}_P[\exp(\varsigma r_0(X))]$. Recall that $\bar{\delta}_n=(\log n)^{1+\kappa}$. Setting $\iota_n$ to $(2\varsigma^{-1}\log n)\vee 1$ yields
$$
\begin{aligned}
& \mathbb{E}[L(\hat{r}_{\mathrm{LS}})-2L_n(\hat{r}_{\mathrm{LS}})+L(r_0)] \\
\le & \mathbb{E}_{\mathcal{D}_n}\left\{\mathbb{E}_{T}[g_{\iota_n}(\hat{r}_{\mathrm{LS}}, T)]-\frac 2n\sum_{i=1}^ng_{\iota_n}(\hat{r}_{\mathrm{LS}}, T_i)\right\}+c_1[(\log n)^{1+\kappa}+1]^2n^{-1}.
\end{aligned}
$$
Then, we proceed to verify the conditions in Theorem \ref{thm: gen_thm_11.4_gyorfi}. Notice that
$$
\sup_{f\in\mathcal{F}_{\mathrm{NN}}, T\in\mathbb{R}^{2d}}|g_{\iota_n}(f, T)|\le \bar{\delta}_n^2+\iota_n^2+2\bar{\delta}_n+2\iota_n\le 6\bar{\delta}_n^2=6(\log n)^{2+2\kappa},
$$
whenever $\bar{\delta}_n\ge \iota_n$. Furthermore, for any $f\in\mathcal{F}_{\mathrm{NN}}$,
$$
\begin{aligned}
& \mathbb{E}[g_{\iota_n}(f, T)] \\
=& \mathbb{E}_P\left\{[f(X)^2-r_0(X)^2]\mathds{1}(r_0(X)\le\iota_n)\right\}-2\mathbb{E}_Q\left\{[f(X)-r_0(X)]\mathds{1}(r_0(X)\le\iota_n)\right\} \\
=& \mathbb{E}_P\left\{[f(X)^2-r_0(X)^2]\mathds{1}(r_0(X)\le\iota_n)\right\}-2\mathbb{E}_P\left\{[f(X)-r_0(X)]r_0(X)\mathds{1}(r_0(X)\le\iota_n)\right\} \\
=& \mathbb{E}_P\left\{[f(X)-r_0(X)]^2\mathds{1}(r_0(X)\le\iota_n)\right\},
\end{aligned}
$$
and
$$
\begin{aligned}
& \mathbb{E}[g_{\iota_n}(f, T)^2] \\
=& \mathbb{E}_P\left\{[f(X)^2-r_0(X)^2]^2\mathds{1}(r_0(X)\le\iota_n)\right\}+4\mathbb{E}_Q\left\{[f(X)-r_0(X)]^2\mathds{1}(r_0(X)\le\iota_n)\right\} \\
& -4\mathbb{E}_P\left\{[f(X)^2-r_0(X)^2]\mathds{1}(r_0(X)\le\iota_n)\right\}\mathbb{E}_Q\left\{[f(X)-r_0(X)]\mathds{1}(r_0(X)\le\iota_n)\right\} \\
\le & 2(\bar{\delta}_n^2+\iota_n^2)\mathbb{E}_P\left\{[f(X)-r_0(X)]^2\mathds{1}(r_0(X)\le\iota_n)\right\} \\
& +4\iota_n\mathbb{E}_P\left\{[f(X)-r_0(X)]^2\mathds{1}(r_0(X)\le\iota_n)\right\} \\
& +4\iota_n(\bar{\delta}_n+\iota_n)\mathbb{E}_P\left\{[f(X)-r_0(X)]^2\mathds{1}(r_0(X)\le\iota_n)\right\} \\
\le & 16\bar{\delta}_n^2\mathbb{E}[g_{\iota_n}(f, T)] \\
=& 16(\log n)^{2+2\kappa}\mathbb{E}[g_{\iota_n}(f, T)],
\end{aligned}
$$
provided that $n$ is sufficiently large such that $\bar{\delta}_n\ge \iota_n$. Hence, Theorem \ref{thm: gen_thm_11.4_gyorfi} suggests that, for sufficiently large $n$ such that $\bar{\delta}_n\ge \iota_n$, with $n\ge \mathrm{Pdim}(\mathcal{F}_{\mathrm{NN}})$, and for arbitrary $t>0$, we have
$$
\begin{aligned}
& \mathbb{P}_{\mathcal{D}_n}\left\{\mathbb{E}_{T}[g_{\iota_n}(\hat{r}_{\mathrm{LS}}, T)]-\frac 2n\sum_{i=1}^ng_{\iota_n}(\hat{r}_{\mathrm{LS}}, T_i)\ge t\right\} \\
\le & \mathbb{P}_{\mathcal{D}_n}\left\{\mathbb{E}_{T}[g_{\iota_n}(\hat{r}_{\mathrm{LS}}, T)]-\frac 1n\sum_{i=1}^ng_{\iota_n}(\hat{r}_{\mathrm{LS}}, T_i)\ge \frac 12\left\{\frac t2+\frac t2+\mathbb{E}_{T}[g_{\iota_n}(\hat{r}_{\mathrm{LS}}, T)]\right\}\right\} \\
\le & \mathbb{P}_{\mathcal{D}_n}\left(\exists f\in\mathcal{F}_{\mathrm{NN}}: \mathbb{E}[g_{\iota_n}(f, Z)]-\frac 1n\sum_{i=1}^ng_{\iota_n}(f, Z_i)\ge \frac 12\left\{\frac t2+\frac t2+\mathbb{E}_{T}[g_{\iota_n}(f, T)]\right\}\right) \\
\le & 14\mathcal{N}_n\left(c_2t, \|\cdot \|_{\infty}, \{g_{\iota_n}(f, \cdot): \mathcal{X}^t\times \mathcal{X}^s\to\mathbb{R}, f\in\mathcal{F}_{\mathrm{NN}}\}\right)\exp\left(-\frac{nt}{c_3(\log n)^{4+4\kappa}}\right),
\end{aligned}
$$
where $c_2, c_3$ are universal constants and $\mathcal{X}^s, \mathcal{X}^t$ represents the domain of $X^s, X^t$, respectively. Subsequently, we bound the covering number. Fix $\{x^s_1, \dots, x^s_n\}\subset (\mathcal{X}^s)^n$ and $\{x^t_1, \dots, x^t_n\}\subset (\mathcal{X}^t)^n$. Let $\mathcal{C}=\{x^s_1, \dots, x^s_n, x^t_1, \dots, x^t_n\}$, and let $h^{\sharp}=\{h_1, \dots, h_k\}$ be an $\epsilon$-covering set of $\mathcal{F}_{\mathrm{NN}|\mathcal{C}}$ where $h_i=f_{i|\mathcal{C}}$ for some $f_i\in\mathcal{F}_{\mathrm{NN}} (i=1, \dots, k)$, such that for any $f\in \mathcal{F}_{\mathrm{NN}}$, there exists $h^*=f^*_{|\mathcal{C}}\in h^{\sharp}$ satisfying $\|h^*-f_{|\mathcal{C}}\|_{\infty}<\epsilon$. This indicates
$$
\begin{aligned}
& |g_{\iota_n}(f, (x^s_i, x^t_i))-g_{\iota_n}(f^*, (x^s_i, x^t_i))| \\
\le & |f(x^s_i)^2-f^*(x^s_i)^2|+2|f(x^t_i)-f^*(x^t_i)| \\
\le & 2(\bar{\delta}_n+1)\epsilon.
\end{aligned}
$$
Therefore,
$$
\mathcal{N}_n(c_2t, \|\cdot \|_{\infty}, \{g_{\iota_n}(f, \cdot): \mathcal{X}^t\times \mathcal{X}^s\to\mathbb{R}, f\in\mathcal{F}_{\mathrm{NN}}\})\le \mathcal{N}_{2n}(c_2t/[2(\bar{\delta}_n+1)], \|\cdot \|_{\infty}, \mathcal{F}_{\mathrm{NN}}).
$$
Then, with Lemma \ref{lem: thm_12.2_anthony} and Lemma \ref{lem: thm_7_bartlett}, for sufficiently large $n$ with $n\ge \mathrm{Pdim}(\mathcal{F}_{\mathrm{NN}})$ and any $a_n\ge 1/n$, we have
$$
\begin{aligned}
& \mathbb{E}_{\mathcal{D}_n}\left\{\mathbb{E}_{T}[g_{\iota_n}(\hat{r}_{\mathrm{LS}}, T)]-\frac 2n\sum_{i=1}^ng_{\iota_n}(\hat{r}_{\mathrm{LS}}, T_i)\right\} \\
\le & a_n+14\int_{a_n}^{\infty}\mathcal{N}_{2n}(c_2t/[2(\bar{\delta}_n+1)], \|\cdot \|_{\infty}, \mathcal{F}_{\mathrm{NN}})\exp\left(-\frac{nt}{c_3(\log n)^{4+4\kappa}}\right)\mathrm{d}t \\
\le & a_n+14\mathcal{N}_{2n}(c_2a_n/[2(\bar{\delta}_n+1)], \|\cdot \|_{\infty}, \mathcal{F}_{\mathrm{NN}})\int_{a_n}^{\infty}\exp\left(-\frac{nt}{c_3(\log n)^{4+4\kappa}}\right)\mathrm{d}t \\
\le & a_n+14\left(c_4n^2\bar{\delta}_n^2\right)^{c_5SL\log S}\cdot \frac{c_3(\log n)^{4+4\kappa}}{n}\exp\left(-\frac{na_n}{c_3(\log n)^{4+4\kappa}}\right),
\end{aligned}
$$
where $c_4$ and $c_5$ are universal constants. Choose
$$
a_n=\frac{c_3c_5(\log n)^{4+4\kappa}}{n}SL\log S\log\left(c_4n^2\bar{\delta}_n^2\right).
$$
For sufficiently large $n$, we have
$$
\mathbb{E}_{\mathcal{D}_n}\left\{\mathbb{E}_{T}[g_{\iota_n}(\hat{r}_{\mathrm{LS}}, T)]-\frac 2n\sum_{i=1}^ng_{\iota_n}(\hat{r}_{\mathrm{LS}}, T_i)\right\}\le \frac{c_6SL\log S(\log n)^{5+4\kappa}}{n},
$$
where $c_6$ is a constant not depending on $S, L$ and $n$. This completes the proof.
\end{proof}

\begin{lemma}\label{lem: dre_approximation_error_bound_ls}
Assume that
\begin{enumerate}[label=(\roman*)]
    \item $r_0(x)\in \mathcal{H}^{\beta_r}_{\mathrm{Loc}}(\mathbb{R}^d, B_u)$ with $\beta_r>0$ and $B_u\le c(u^m+1)$ for some universal constants $c>0$, $m\ge 0$;
    \item $r_0(X^s)$ and $\|X^s\|_{\infty}$ are sub-exponentially distributed random variables.
\end{enumerate}
Suppose that the depth $L$ and width $M$ of $\mathcal{F}_{\mathrm{NN}}$ are expressed as
$$
\begin{aligned}
L &= 21(\lfloor\beta_r\rfloor+1)^2S_1\lceil\log_2(8S_1)\rceil+2d+3, \\
M &= 38(\lfloor\beta_r\rfloor+1)^2d^{\lfloor\beta_r\rfloor+1}S_2\lceil\log_2(8S_2)\rceil,
\end{aligned}
$$
for any $S_1, S_2\in\mathbb{N}_+$. Let $\bar{\delta}=\bar{\delta}_n=(\log n)^{1+\kappa}$, with an arbitrarily fixed $\kappa\in (0, 1]$, and $\underline{\delta}=0$. Then, for sufficiently large $n$, it follows that
$$
\begin{aligned}
& \inf_{f\in\mathcal{F}_{\mathrm{NN}}}\mathbb{E}_P[f(X)-r_0(X)]^2 \\
\le & c^*\left\{\left[(\lfloor\beta_r\rfloor+1)^2d^{\lfloor\beta_r\rfloor+(\beta_r\vee 1)/2}(S_1S_2)^{-2\beta_r/d}(\log n)^m\right]^2+\frac{(\log n)^{2+2\kappa}}{n}\right\},
\end{aligned}
$$
where $c^*$ is a constant not depending on $S_1, S_2$ and $n$.
\end{lemma}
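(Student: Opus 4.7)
The plan is to fix a growing truncation radius $u_n=2\varsigma^{-1}\log n$ and decompose
$$
\mathbb{E}_P[(f(X)-r_0(X))^2]=\mathbb{E}_P[(f(X)-r_0(X))^2\mathds{1}(\|X\|_{\infty}\le u_n)]+\mathbb{E}_P[(f(X)-r_0(X))^2\mathds{1}(\|X\|_{\infty}>u_n)],
$$
treating the tail by sub-exponentiality and the interior by the local H\"older approximation result. For the tail, every $f\in\mathcal{F}_{\mathrm{NN}}$ satisfies $0\le f\le\bar\delta_n=(\log n)^{1+\kappa}$, so $(f-r_0)^2\le 2\bar\delta_n^2+2r_0^2$. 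Markov's inequality applied to $\exp(\varsigma\|X^s\|_{\infty})$ yields $\mathbb{P}(\|X^s\|_{\infty}>u_n)=O(n^{-2})$, and the same exponential-Markov device used in Lemma~\ref{lem: dre_stochatic_error_bound_ls}, applied to $r_0(X^s)$, gives $\mathbb{E}_P[r_0(X)^2\mathds{1}(\|X\|_{\infty}>u_n)]=O(n^{-1})$. The tail piece is therefore $O(\bar\delta_n^2/n)=O((\log n)^{2+2\kappa}/n)$, which matches the second term of the claim.

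\textbf{Interior approximation.} Define $g(y)=r_0(2u_ny-u_n\mathbf{1}_d)$ for $y\in[0,1]^d$. By assumption \textit{(i)} and the local H\"older definition, $g\in\mathcal{H}^{\beta_r}([0,1]^d,B_{u_n})$ with $B_{u_n}\le c(u_n^m+1)=O((\log n)^m)$. Apply Lemma~\ref{lem: thm_3.3_jiao} with the prescribed $S_1,S_2$ and some $\delta\in(0,1/(3K)]$ to obtain a ReLU network $\tilde g$ of depth $21(\lfloor\beta_r\rfloor+1)^2S_1\lceil\log_2(8S_1)\rceil+2d$ and the stated width, with
$$
|\tilde g(y)-g(y)|\le 18B_{u_n}(\lfloor\beta_r\rfloor+1)^2d^{\lfloor\beta_r\rfloor+(\beta_r\vee 1)/2}(S_1S_2)^{-2\beta_r/d}=:\mathcal{E}_{\mathrm{app}}
$$
for all $y\in[0,1]^d\setminus\Omega$, where $\mathrm{Leb}(\Omega)\le dK\delta$ and $K=\lceil(S_1S_2)^{2/d}\rceil$. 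Compose with the affine pullback $y=(x+u_n\mathbf{1}_d)/(2u_n)$ (an affine layer can be absorbed into the first linear map) and with a constant-depth sub-network implementing $T_{0,\bar\delta_n}$ (this consumes the extra "+3" in the depth), producing $\hat f\in\mathcal{F}_{\mathrm{NN}}$. On the good region $\{\|X\|_{\infty}\le u_n\}\setminus\Omega'$, with $\Omega'$ the affinely rescaled bad set, we obtain $(\hat f-r_0)^2\le\mathcal{E}_{\mathrm{app}}^2$, matching the first term of the claim.

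\textbf{Main obstacle and conclusion.} The delicate step is controlling the $X^s$-mass of $\Omega'$. On $\Omega'\cap\{\|X\|_{\infty}\le u_n\}$ we only have the loose bound $(\hat f-r_0)^2\le(\bar\delta_n+B_{u_n})^2=O((\log n)^{2[(1+\kappa)\vee m]})$, while the Lebesgue measure of $\Omega'$ is at most $(2u_n)^ddK\delta=O((\log n)^dK\delta)$. The free parameter $\delta$ in Lemma~\ref{lem: thm_3.3_jiao} will be chosen polynomially small in $n$ (e.g.\ $\delta=n^{-c_4}$ with $c_4$ large), so that $\mathrm{Leb}(\Omega')$ beats any fixed polylogarithmic factor and the contribution of $\Omega'$ to the $L^2$ error is $O(1/n)$, absorbed into the second term of the bound. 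This is the hardest part because $p$ is not assumed uniformly bounded; the point is that $\Omega$ consists of axis-parallel strips of width $\delta$, so polynomial smallness of $\delta$ makes even crude mass bounds (Cauchy--Schwarz against finite moments of $r_0(X^s)$ and $\|X^s\|_{\infty}$) sufficient. Summing the interior good-region bound, the interior bad-region bound, and the tail bound yields the stated inequality with a single constant $c^*$ independent of $S_1,S_2,n$.
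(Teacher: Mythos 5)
Your overall architecture matches the paper's proof: truncate the covariate at radius of order $\log n$, bound the tail using sub-exponentiality of $\|X^s\|_{\infty}$ and $r_0(X^s)$ (giving the $(\log n)^{2+2\kappa}/n$ term), rescale $r_0$ to $[0,1]^d$, invoke Lemma \ref{lem: thm_3.3_jiao}, pull back by the affine map, and append a truncation sub-network to land in $\mathcal{F}_{\mathrm{NN}}$ within the stated depth budget. However, your treatment of the exceptional set $\Omega'$ has a genuine gap. You argue that choosing $\Delta$ (your $\delta$) polynomially small, e.g.\ $\delta=n^{-c_4}$, makes the contribution of $\Omega'$ of order $1/n$ because its Lebesgue measure is tiny. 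But the quantity you need to control is $\mathrm{P}(X^s\in\Omega')$, not $\mathrm{Leb}(\Omega')$, and under the stated assumptions the density $p$ is only assumed to exist — it need not be bounded or lie in any $L^q$ space. Absolute continuity gives no rate: a perfectly admissible density (all moments finite, $\|X^s\|_{\infty}$ sub-exponential, $r_0(X^s)$ sub-exponential) can place mass of order $1/\log(1/\delta)\asymp 1/\log n$ on an axis-parallel strip of width $\delta=n^{-c_4}$, e.g.\ via an integrable singularity of type $x^{-1}(\log(1/x))^{-2}$ located at a grid line that recurs for infinitely many $n$. In that case the bad-set contribution is of order $(\log n)^{O(1)}/\log n$, which swamps the target rate. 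The Cauchy--Schwarz step you gesture at does not repair this, since finite moments of $r_0(X^s)$ or $\|X^s\|_{\infty}$ say nothing about how much probability mass sits on thin strips inside a compact region.

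The paper closes this gap with a purely qualitative argument that you should adopt: the network class $\mathcal{F}_{\mathrm{NN}}$ (its depth, width, and the bound $\bar{\delta}_n$) and the approximation error $\mathcal{E}_{\mathrm{app}}$ do not depend on $\Delta$, so for each fixed $n$ one may let $\Delta\to 0^+$. For every $\Delta$ there is a network $f^{\ddagger}_{\Delta}\in\mathcal{F}_{\mathrm{NN}}$ with
\begin{equation*}
\mathbb{E}_P\left\{[f^{\ddagger}_{\Delta}(X)-r_0(X)]^2\mathds{1}(\|X\|_{\infty}\le \iota_n)\right\}\le \mathcal{E}_{\mathrm{app}}^2+\left(\bar{\delta}_n+B_{\iota_n}\right)^2\mathrm{P}\left(X^s\in\Omega'_{\Delta}\right),
\end{equation*}
and since the law of $X^s$ is absolutely continuous with respect to Lebesgue measure and $\mathrm{Leb}(\Omega'_{\Delta})\to 0$, the last probability tends to $0$ as $\Delta\to 0$ with $n$ fixed. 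Taking the infimum over $\mathcal{F}_{\mathrm{NN}}$ then yields the bound $\mathcal{E}_{\mathrm{app}}^2$ with no bad-set remainder at all — no quantitative choice of $\Delta$, and no assumption on $p$ beyond absolute continuity, is required. With that replacement (and the rest of your argument as is), the proof is correct and essentially coincides with the paper's.
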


\begin{proof}[Proof of Lemma \ref{lem: dre_approximation_error_bound_ls}]
For any $\iota_n>0$, observe that
$$
\begin{aligned}
& \mathbb{E}_P[f(X)-r_0(X)]^2\\
=& \mathbb{E}_P\left\{[f(X)-r_0(X)]^2\mathds{1}(\|X\|_{\infty}\le \iota_n)\right\}+\mathbb{E}_P\left\{[f(X)-r_0(X)]^2\mathds{1}(\|X\|_{\infty}>\iota_n)\right\}.
\end{aligned}
$$
On the one hand, it follows that
$$
\begin{aligned}
& \mathbb{E}_P\left\{[f(X)-r_0(X)]^2\mathds{1}(\|X\|_{\infty}>\iota_n)\right\} \\
\le & 2\mathbb{E}_P\left\{f(X)^2\mathds{1}(\|X\|_{\infty}>\iota_n)\right\}+2\mathbb{E}_P\left\{r_0(X)^2\mathds{1}(\|X\|_{\infty}>\iota_n)\right\} \\
\le & 2\bar{\delta}_n^2\mathbb{E}_P[\exp(\varsigma\|X\|_{\infty}/2)]\exp(-\varsigma\iota_n/2) \\
& +\frac{32}{\varsigma^2}\mathbb{E}_P[\exp(\varsigma r_0(X)/2)\exp(\varsigma\|X\|_{\infty}/2)]\exp(-\varsigma\iota_n/2) \\
\le & 2\bar{\delta}_n^2\mathbb{E}_P[\exp(\varsigma\|X\|_{\infty}/2)]\exp(-\varsigma\iota_n/2) \\
& +\frac{32}{\varsigma^2}\left\{\mathbb{E}_P[\exp(\varsigma r_0(X))]\mathbb{E}_P[\exp(\varsigma\|X\|_{\infty})]\right\}^{1/2}\exp(-\varsigma\iota_n/2) \\
\le & c_1(\bar{\delta}_n^2+1)\exp(-\varsigma\iota_n/2),
\end{aligned}
$$
where $c_1$ is a constant which depends only on $\varsigma, \mathbb{E}_P[\exp(\varsigma\|X\|_{\infty})]$ and $\mathbb{E}_P[\exp(\varsigma r_0(X))]$. On the other hand, we first notice that
$$
\begin{aligned}
& \mathbb{E}_P\left\{[f(X)-r_0(X)]^2\mathds{1}(\|X\|_{\infty}\le \iota_n)\right\} \\
=& \mathbb{E}_P\left\{[f(X)-r_0(X)]^2\mathds{1}(r_0(X)\le \iota_n)\mathds{1}(\|X\|_{\infty}\le \iota_n)\right\} \\
& +\mathbb{E}_P\left\{[f(X)-r_0(X)]^2\mathds{1}(r_0(X)>\iota_n)\mathds{1}(\|X\|_{\infty}\le \iota_n)\right\} \\
\le & \mathbb{E}_P\left\{[f(X)-r_0(X)]^2\mathds{1}(r_0(X)\le \iota_n)\mathds{1}(\|X\|_{\infty}\le \iota_n)\right\} \\
& +2\mathbb{E}_P\left[f(X)^2\mathds{1}(r_0(X)>\iota_n)\right]+2\mathbb{E}_P\left[r_0(X)^2\mathds{1}(r_0(X)>\iota_n)\right] \\
\le & \mathbb{E}_P\left\{[f(X)-r_0(X)]^2\mathds{1}(r_0(X)\le \iota_n)\mathds{1}(\|X\|_{\infty}\le \iota_n)\right\} \\
& +2\bar{\delta}_n^2\mathbb{E}_P[\exp(\varsigma r_0(X)/2)]\exp(-\varsigma\iota_n/2)+\frac{32}{\varsigma^2}\mathbb{E}_P[\exp(\varsigma r_0(X))]\exp(-\varsigma\iota_n/2) \\
\le & \mathbb{E}_P\left\{[f(X)-r_0(X)]^2\mathds{1}(r_0(X)\le \iota_n)\mathds{1}(\|X\|_{\infty}\le \iota_n)\right\}+c_2(\bar{\delta}_n^2+1)\exp(-\varsigma\iota_n/2),
\end{aligned}
$$
where $c_2$ is a constant which depends only on $\varsigma$ and $\mathbb{E}_P[\exp(\varsigma r_0(X))]$. Then, we focus on $\{x: \|x\|_{\infty}\le \iota_n\}=[-\iota_n, \iota_n]^d$. Let $r_0^*(x)=r_0(2\iota_nx-\iota_n\mathrm{1}_d)$ for $x\in [0, 1]^d$. Lemma \ref{lem: thm_3.3_jiao} demonstrates that for any $S_1, S_2\in\mathbb{N}_+$, there exists a function $f^*$ implemented by a ReLU network with depth $L^*=21(\lfloor\beta_r\rfloor+1)^2S_1\lceil\log_2(8S_1)\rceil+2d$, width $M^*=38(\lfloor\beta_r\rfloor+1)^2d^{\lfloor\beta_r\rfloor+1}S_2\lceil\log_2(8S_2)\rceil$, such that
$$
|f^*(x)-r_0^*(x)|\le 18c(\iota_n^m+1)(\lfloor\beta_r\rfloor+1)^2d^{\lfloor\beta_r\rfloor+(\beta_r\vee 1)/2}(S_1S_2)^{-2\beta_r/d},
$$
for all $x\in [0, 1]^d\backslash \Omega([0, 1]^d, K, \Delta)$. Here,
$$
\Omega([0, 1]^d, K, \Delta)=\bigcup_{i=1}^d\left\{x=(x_1, \dots, x_d)^{\top}: x_i\in\bigcup_{k=1}^{K-1}(k/K-\Delta, k/K)\right\},
$$
where $K=\lceil(S_1S_2)^{2/d}\rceil$ and $\Delta$ is an arbitrary scalar in $(0, 1/(3K)]$. Let $f^{\dagger}(x)=f^*((x+\iota_n\mathrm{1}_d)/(2\iota_n))$ for $x\in[-\iota_n, \iota_n]^d$. We obtain that
$$
|f^{\dagger}(x)-r_0(x)|\le 18c(\iota_n^m+1)(\lfloor\beta_r\rfloor+1)^2d^{\lfloor\beta_r\rfloor+(\beta_r\vee 1)/2}(S_1S_2)^{-2\beta_r/d},
$$
for all $x\in [-\iota_n, \iota_n]^d\backslash \Omega^{\dagger}$, where $\Omega^{\dagger}=\{x: (x+\iota_n\mathrm{1}_d)/(2\iota_n)\in \Omega([0, 1]^d, K, \Delta)\}$. Furthermore, note that
$$
f^{\dagger}(x)=f^*\left(\frac{x+\iota_n\mathrm{1}_d}{2\iota_n}\right)=f^*\left(\mathrm{relu}\left(\frac{x+\iota_n\mathrm{1}_d}{2\iota_n}\right)-\mathrm{relu}\left(-\frac{x+\iota_n\mathrm{1}_d}{2\iota_n}\right)\right),
$$
which is implemented by a neural network with ReLU activations, depth $L^{\dagger}=L^*+1$, and width $M^{\dagger}=M^*$. In addition, let
$$
f^{\ddagger}(x)=
\begin{cases}
    \bar{\delta}_n, & f^{\dagger}(x)>\bar{\delta}_n, \\
    f^{\dagger}(x), & 0\le f^{\dagger}(x)\le \bar{\delta}_n, \\
    0, & f^{\dagger}(x)<0.
\end{cases}
$$
A straightforward calculation shows that
$$
f^{\ddagger}(x)=\mathrm{relu}(-\mathrm{relu}(-f^{\dagger}(x)+\bar{\delta}_n)+\bar{\delta}_n),
$$
indicating that $f^{\ddagger}(x)$ can be implemented by a ReLU network with depth $L=L^*+3$ and width $M=M^*$. Due to the arbitrariness of $\Delta$, when $\bar{\delta}_n\ge \iota_n$, it follows that
$$
\begin{aligned}
& \inf_{f\in \mathcal{F}_{\mathrm{NN}}}\mathbb{E}_P\left\{[f(X)-r_0(X)]^2\mathds{1}(r_0(X)\le \iota_n)\mathds{1}(\|X\|_{\infty}\le \iota_n)\right\} \\
\le & \mathbb{E}_P\left\{[f^{\ddagger}(X)-r_0(X)]^2\mathds{1}(r_0(X)\le \iota_n)\mathds{1}(\|X\|_{\infty}\le \iota_n)\right\} \\
\le & \left[18c(\iota_n^m+1)(\lfloor\beta_r\rfloor+1)^2d^{\lfloor\beta_r\rfloor+(\beta_r\vee 1)/2}(S_1S_2)^{-2\beta_r/d}\right]^2.
\end{aligned}
$$
Recall that $\bar{\delta}_n=(\log n)^{1+\kappa}$, and set $\iota_n$ to $(2\varsigma^{-1}\log n)\vee 1$. We conclude that for sufficiently large $n$ satisfying $\bar{\delta}_n\ge \iota_n$ and $\log n\ge \varsigma/2$, it holds that
$$
\begin{aligned}
& \inf_{f\in\mathcal{F}_{\mathrm{NN}}}\mathbb{E}_P[f(X)-r_0(X)]^2 \\
\le & \inf_{f\in \mathcal{F}_{\mathrm{NN}}}\mathbb{E}_P\left\{[f(X)-r_0(X)]^2\mathds{1}(r_0(X)\le \iota_n)\mathds{1}(\|X\|_{\infty}\le \iota_n)\right\} \\
& +(c_1+c_2)(\bar{\delta}_n^2+1)\exp(-\varsigma\iota_n/2) \\
\le & c_3\left\{\left[(\lfloor\beta_r\rfloor+1)^2d^{\lfloor\beta_r\rfloor+(\beta_r\vee 1)/2}(S_1S_2)^{-2\beta_r/d}(\log n)^m\right]^2+\frac{(\log n)^{2+2\kappa}}{n}\right\},
\end{aligned}
$$
where $c_3$ is a constant not depending on $S_1, S_2$ and $n$.
\end{proof}

\begin{proof}[Proof of Theorem 3.1]
To commence, we notice that Lemma \ref{lem: dre_error_decomposition_ls}, Lemma \ref{lem: dre_stochatic_error_bound_ls} and Lemma \ref{lem: dre_approximation_error_bound_ls} indicate
$$
\begin{aligned}
\mathcal{R}^s(\hat{r}_{\mathrm{LS}}) \le & \frac{c_1SL\log S(\log n)^{5+4\kappa}}{n} \\
& +c_2\left\{\left[(\lfloor\beta_r\rfloor+1)^2d^{\lfloor\beta_r\rfloor+(\beta_r\vee 1)/2}(S_1S_2)^{-2\beta_r/d}(\log n)^m\right]^2+\frac{(\log n)^{2+2\kappa}}{n}\right\},
\end{aligned}
$$
where $c_1, c_2$ are constants not depending on $S, L, S_1, S_2$ and $n$, and $S_1, S_2$ satisfy the conditions that the network depth $L=21(\lfloor\beta_r\rfloor+1)^2S_1\lceil\log_2(8S_1)\rceil+2d+3$, network width $M=38(\lfloor\beta_r\rfloor+1)^2d^{\lfloor\beta_r\rfloor+1}S_2\lceil\log_2(8S_2)\rceil$, for sufficiently large $n$ and $n\ge \mathrm{Pdim}(\mathcal{F}_{\mathrm{NN}})$. Therefore, by letting $S_1=\mathcal{O}(n^{d/(2d+4\beta_r)})$ and $S_2=\mathcal{O}(1)$,  we obtain
$$
M=\mathcal{O}(1), \quad L=\mathcal{O}\left(n^{\frac{d}{2d+4\beta_r}}\log n\right), \quad S=\mathcal{O}(M^2L)=\mathcal{O}\left(n^{\frac{d}{2d+4\beta_r}}\log n\right),
$$
yielding
$$
\mathcal{R}^s(\hat{r}_{\mathrm{LS}})\le c_3n^{-\frac{2\beta_r}{d+2\beta_r}}(\log n)^{(8+4\kappa)\vee (2m)},
$$
where $c_3$ is a constant not depending on $n$, for $n\ge 2$. Furthermore, observe that
$$
\begin{aligned}
\mathcal{R}^t(\hat{r}_{\mathrm{LS}}) &= \mathbb{E}\left[\hat{r}_{\mathrm{LS}}(X^t)-r_0(X^t)\right]^2 \\
&= \mathbb{E}\left\{\left[\hat{r}_{\mathrm{LS}}(X^s)-r_0(X^s)\right]^2r_0(X^s)\right\}
\end{aligned}
$$
Similar truncation operation suggests that, for any $\iota_n>0$, we have
$$
\begin{aligned}
\mathcal{R}^t(\hat{r}_{\mathrm{LS}}) =& \mathbb{E}\left\{\left[\hat{r}_{\mathrm{LS}}(X^s)-r_0(X^s)\right]^2r_0(X^s)\right\} \\
=& \mathbb{E}\left\{\left[\hat{r}_{\mathrm{LS}}(X^s)-r_0(X^s)\right]^2r_0(X^s)\mathds{1}(r_0(X^s)\le \iota_n)\right\} \\
& +\mathbb{E}\left\{\left[\hat{r}_{\mathrm{LS}}(X^s)-r_0(X^s)\right]^2r_0(X^s)\mathds{1}(r_0(X^s)> \iota_n)\right\} \\
\le & \iota_n\mathbb{E}\left\{\left[\hat{r}_{\mathrm{LS}}(X^s)-r_0(X^s)\right]^2\right\} \\
& +\mathbb{E}\left\{\left[\hat{r}_{\mathrm{LS}}(X^s)-r_0(X^s)\right]^2r_0(X^s)\mathds{1}(r_0(X^s)> \iota_n)\right\} \\
=& \iota_n\mathcal{R}^s(\hat{r}_{\mathrm{LS}})+\mathbb{E}\left\{\left[\hat{r}_{\mathrm{LS}}(X^s)-r_0(X^s)\right]^2r_0(X^s)\mathds{1}(r_0(X^s)> \iota_n)\right\}.
\end{aligned}
$$
Subsequently, note that
$$
\begin{aligned}
& \mathbb{E}\left\{\left[\hat{r}_{\mathrm{LS}}(X^s)-r_0(X^s)\right]^2r_0(X^s)\mathds{1}(r_0(X^s)> \iota_n)\right\} \\
\le & 2\mathbb{E}\left[\hat{r}_{\mathrm{LS}}(X^s)^2r_0(X^s)\mathds{1}(r_0(X^s)> \iota_n)\right]+2\mathbb{E}\left[r_0(X^s)^3\mathds{1}(r_0(X^s)> \iota_n)\right] \\
\le & 2\bar{\delta}_n^2\mathbb{E}\left[r_0(X^s)\mathds{1}(r_0(X^s)> \iota_n)\right]+2\mathbb{E}\left[r_0(X^s)^3\mathds{1}(r_0(X^s)> \iota_n)\right] \\
\le & 4\varsigma^{-1}\bar{\delta}_n^2\mathbb{E}[\exp(\varsigma r_0(X^s))]\exp(-\varsigma\iota_n/2)+432\varsigma^{-3}\mathbb{E}[\exp(\varsigma r_0(X^s))]\exp(-\varsigma\iota_n/2) \\
\le & c_4(\bar{\delta}_n^2+1)\exp(-\varsigma\iota_n/2) \\
=& c_4[(\log n)^{2+2\kappa}+1]\exp(-\varsigma\iota_n/2),
\end{aligned}
$$
where $c_4$ is a constant depending only on $\varsigma$ and $\mathbb{E}[\exp(\varsigma r_0(X^s))]$. Hence, by taking $\iota_n=(2\varsigma^{-1}\log n)\vee 1$, we obtain that
$$
\mathcal{R}^t(\hat{r}_{\mathrm{LS}})\le [(2\varsigma^{-1}\log n)\vee 1]\mathcal{R}^s(\hat{r}_{\mathrm{LS}})+\frac{c_4[(\log n)^{2+2\kappa}+1]}{n}.
$$
This completes the proof.
\end{proof}

\subsection{Proof of Lemma 3.2}\label{subsec: proof_of_lem_dre_validity_lr_loss}

\begin{proof}[Proof of Lemma 3.2]

Given $r_0\in\mathcal{L}_{\mathrm{LR}}(X^s)$, we first claim that for any function $f\in \mathcal{L}_{\mathrm{LR}}(X^s)$, we have $\mathbb{E}D_{\mathrm{LR}}(r_0(X^s)\| f(X^s))<\infty$. In fact, it follows that
$$
\begin{aligned}
0 \le & \mathbb{E}D_{\mathrm{LR}}(r_0(X^s)\| f(X^s)) \\
=& \mathbb{E}_P\left[D_{\mathrm{LR}}(r_0(X)\| f(X))\mathds{1}(X\in\mathcal{X}^t)\right] \\
=& \mathbb{E}_P\Big(\big\{r_0(X)\log r_0(X)-[r_0(X)+1]\log(r_0(X)+1)+\log(f(X)+1) \\
& -r_0(X)\log f(X)+r_0(X)\log(f(X)+1)\big\}\mathds{1}(X\in\mathcal{X}^t)\Big) \\
\le & \mathbb{E}_P\Big(\big\{\left|r_0(X)\log r_0(X)\right|+[r_0(X)+1]\log(r_0(X)+1)+\log(f(X)+1) \\
& +\left|r_0(X)\log f(X)\right|+r_0(X)\log(f(X)+1)\big\}\mathds{1}(X\in\mathcal{X}^t)\Big).
\end{aligned}
$$
Recall that $\mathbb{E}_Ph(X)\equiv \mathbb{E}h(X^s)$ for any $X^s$-integrable function $h$. Then, observe that for any scalars $x, y\in\mathbb{R}_+$, we have $\max(\log x, \log(x+1))\le x$ and $|x\log y|\le xy^{-1}+xy$. Hence, it holds that
$$
\begin{aligned}
0 \le & \mathbb{E}D_{\mathrm{LR}}(r_0(X^s)\| f(X^s)) \\
\le & \mathbb{E}_P\Big(\big\{\left|r_0(X)\log r_0(X)\right|+[r_0(X)+1]\log(r_0(X)+1)+\log(f(X)+1) \\
& +\left|r_0(X)\log f(X)\right|+r_0(X)\log(f(X)+1)\big\}\mathds{1}(X\in\mathcal{X}^t)\Big) \\
\le & \mathbb{E}_P\Big(\big\{1+r_0(X)^2+[r_0(X)+1]r_0(X)+f(X) \\
& +r_0(X)f(X)^{-1}+r_0(X)f(X)\big\}\mathds{1}(X\in\mathcal{X}^t)\Big) \\
=& \mathbb{E}_P\big\{1+r_0(X)^2+[r_0(X)+1]r_0(X)+f(X) \\
& +r_0(X)f(X)^{-1}+r_0(X)f(X)\big\}<\infty.
\end{aligned}
$$
Next, note that $\mathbb{E}D_{\mathrm{LR}}(r_0(X^s)\| r_0(X^s))=0$. Therefore, for any minimizer $f^*$ of $\mathbb{E}D_{\mathrm{LR}}(r_0(X^s)\| f(X^s))$ with respect to $f\in\mathcal{L}_{\mathrm{LR}}(X^s)$, we have $\mathbb{E}D_{\mathrm{LR}}(r_0(X^s)\| f^*(X^s))=0$, which indicates that
$$
D_{\mathrm{LR}}(r_0(X^s)\| f^*(X^s))=0 \text{ a.s. } X^s.
$$
Let the set $\mathcal{A}=\{x\in\mathcal{X}^t: D_{\mathrm{LR}}(r_0(x)\| f^*(x))=0\}$. Notice that $\varphi_{\mathrm{LR}}''(x)=[x(x+1)]^{-1}>0$ for any $x\in \mathbb{R}_+$, thus $\varphi_{\mathrm{LR}}$ is strictly convex over $\mathbb{R}_+$. By Lemma \ref{lem: bregman_divergence_solution}, we obtain that
$$
1=\mathrm{P}(X^s\in\mathcal{A})\le \mathrm{P}(r_0(X^s)=f^*(X^s)).
$$
Consequently, it follows that $r_0(X^s)=f^*(X^s)$ a.s. $X^s$.
\end{proof}

\subsection{Proof of Theorem 3.3} \label{subsec: proof_of_thm_dre_convergence_rate_lr}

For any function $f\in\mathcal{L}_{\mathrm{LR}}(X^s)$, define
$$
\begin{aligned}
J(f) &= \mathbb{E}_P\left[\log(f(X)+1)-r_0(X)\log f(X)+r_0(X)\log(f(X)+1)\right], \\
J_n(f) &= \frac 1n\sum_{i=1}^n\log(f(X^s_i)+1)+\frac 1n\sum_{i=1}^n\left[-\log f(X^t_i)+\log(f(X^t_i)+1)\right].
\end{aligned}
$$
Here, $\mathbb{E}_P[h(X)]\equiv \mathbb{E}[h(X^s)]$ for any $X^s$-integrable function $h$, where the expectation is taken with respect to $X^s$. Analogously, $\mathbb{E}_Q[h(X)]\equiv \mathbb{E}[h(X^t)]$ for any $X^t$-integrable function $h$.

\begin{lemma}\label{lem: dre_error_decomposition_lr}
Assume that $r_0(X^s)$ and $r_0(X^s)^{-1}\mathds{1}(X^s\in\mathcal{X}^t)$ are square-integrable. Then, it follows that
$$
\begin{aligned}
& \mathbb{E}D_{\mathrm{LR}}(r_0(X^s)\| \hat{r}_{\mathrm{LR}}(X^s)) \\
\le & \mathbb{E}[J(\hat{r}_{\mathrm{LR}})-2J_n(\hat{r}_{\mathrm{LR}})+J(r_0)]+2\inf_{f\in\mathcal{F}_{\mathrm{NN}}}\mathbb{E}_PD_{\mathrm{LR}}(r_0(X)\| f(X)).
\end{aligned}
$$
\end{lemma}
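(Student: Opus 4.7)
The plan is to mirror the argument used in the least-squares counterpart (the preceding error-decomposition lemma). The crucial ingredient is the population-level identity
$$
J(f) - J(r_0) = \mathbb{E}_P D_{\mathrm{LR}}(r_0(X) \| f(X))
$$
for any admissible $f$. This follows by expanding $D_{\mathrm{LR}}$ via the definition of $\varphi_{\mathrm{LR}}$ and then using $\mathbb{E}_Q[h(X)] = \mathbb{E}_P[r_0(X) h(X)]$ to see that $J(f)$ is exactly the $f$-dependent part of $\mathbb{E}_P D_{\mathrm{LR}}(r_0 \| f)$, while the remaining purely-$r_0$ piece equals $-J(r_0)$. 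Integrability of every term follows from the square-integrability assumptions on $r_0(X^s)$ and $r_0(X^s)^{-1}\mathds{1}(X^s \in \mathcal{X}^t)$, combined with the pointwise bound $|x \log y| \le xy + xy^{-1}$ used in the proof of Lemma \ref{lem: dre_validity_lr_loss}; for any $f \in \mathcal{F}_{\mathrm{NN}}$ one additionally uses that $f$ is sandwiched between $\underline{\delta} > 0$ and $\bar{\delta} < \infty$, so $\mathcal{F}_{\mathrm{NN}} \subset \mathcal{L}_{\mathrm{LR}}(X^s)$ and all expectations are finite.

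With this identity in hand, the left-hand side of the target inequality equals $\mathbb{E}[J(\hat{r}_{\mathrm{LR}}) - J(r_0)]$. For any fixed $f \in \mathcal{F}_{\mathrm{NN}}$, the empirical-minimizer property yields $J_n(\hat{r}_{\mathrm{LR}}) \le J_n(f)$, hence $2\mathbb{E}[J_n(f) - J_n(\hat{r}_{\mathrm{LR}})] \ge 0$. Adding this nonnegative quantity and using $\mathbb{E}[J_n(f)] = J(f)$ (since $f$ is deterministic), a routine regrouping—insert and subtract $J(r_0)$ inside $\mathbb{E}[J(\hat{r}_{\mathrm{LR}})] - 2\mathbb{E}[J_n(\hat{r}_{\mathrm{LR}})] - J(r_0) + 2J(f)$—produces
$$
\mathbb{E}[J(\hat{r}_{\mathrm{LR}}) - J(r_0)] \le \mathbb{E}[J(\hat{r}_{\mathrm{LR}}) - 2J_n(\hat{r}_{\mathrm{LR}}) + J(r_0)] + 2\bigl[J(f) - J(r_0)\bigr].
$$
Reapplying the identity $J(f) - J(r_0) = \mathbb{E}_P D_{\mathrm{LR}}(r_0(X) \| f(X))$ and then taking the infimum over $f \in \mathcal{F}_{\mathrm{NN}}$ yields the stated bound.

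I do not anticipate a substantive obstacle: the argument is structurally identical to the LS proof, and the only delicate point is the verification of the $D_{\mathrm{LR}}$–$J$ identity together with integrability, since $\varphi_{\mathrm{LR}}$ involves logarithms that could in principle blow up. The assumptions on $r_0(X^s)$ and $r_0(X^s)^{-1}\mathds{1}(X^s \in \mathcal{X}^t)$, combined with $\underline{\delta} > 0$ enforcing $\log f$ to be well-defined on $\mathcal{X}^t$ for $f \in \mathcal{F}_{\mathrm{NN}}$, neutralize exactly these concerns.
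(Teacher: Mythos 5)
Your proposal is correct and follows essentially the same route as the paper's proof: the identity $J(f)-J(r_0)=\mathbb{E}_P D_{\mathrm{LR}}(r_0(X)\|f(X))$, the nonnegative term $2\mathbb{E}[J_n(f)-J_n(\hat{r}_{\mathrm{LR}})]$ from the empirical-minimizer property, the regrouping via $\mathbb{E}[J_n(f)]=J(f)$, and the final infimum over $f\in\mathcal{F}_{\mathrm{NN}}$, together with the observation that $\underline{\delta}>0$ gives $\mathcal{F}_{\mathrm{NN}}\subset\mathcal{L}_{\mathrm{LR}}(X^s)$ and the square-integrability assumptions give $r_0\in\mathcal{L}_{\mathrm{LR}}(X^s)$.
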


\begin{proof}[Proof of Lemma \ref{lem: dre_error_decomposition_lr}]
Given that $r_0(X^s)$ and $r_0(X^s)^{-1}\mathds{1}(X^s\in\mathcal{X}^t)$ are square-integrable, it is evident that $r_0\in \mathcal{L}_{\mathrm{LR}}(X^s)$. In addition, setting $\underline{\delta}$ to arbitrary positive scalar implies $\mathcal{F}_{\mathrm{NN}}\subset\mathcal{L}_{\mathrm{LR}}(X^s)$. Subsequently, for any $f\in\mathcal{F}_{\mathrm{NN}}$, we have
$$
\begin{aligned}
\mathbb{E}D_{\mathrm{LR}}(r_0(X^s)\| \hat{r}_{\mathrm{LR}}(X^s)) &= \mathbb{E}[J(\hat{r}_{\mathrm{LR}})-J(r_0)] \\
&\le \mathbb{E}[J(\hat{r}_{\mathrm{LR}})-J(r_0)]+2\mathbb{E}[J_n(f)-J_n(\hat{r}_{\mathrm{LR}})] \\
&= \mathbb{E}[J(\hat{r}_{\mathrm{LR}})-J(r_0)]+2\mathbb{E}[J_n(f)-J_n(r_0)+J_n(r_0)-J_n(\hat{r}_{\mathrm{LR}})] \\
&= \mathbb{E}[J(\hat{r}_{\mathrm{LR}})-2J_n(\hat{r}_{\mathrm{LR}})+J(r_0)]+2[J(f)-J(r_0)] \\
&= \mathbb{E}[J(\hat{r}_{\mathrm{LR}})-2J_n(\hat{r}_{\mathrm{LR}})+J(r_0)]+2\mathbb{E}_PD_{\mathrm{LR}}(r_0(X)\| f(X)).
\end{aligned}
$$
Take the infimum on both sides with respect to $f\in\mathcal{F}_{\mathrm{NN}}$ and we complete the proof.
\end{proof}

\begin{lemma}\label{lem: dre_stochatic_error_bound_lr}
Assume that $r_0(X^s)$ and $r_0(X^s)^{-1}\mathds{1}(X^s\in\mathcal{X}^t)$ are sub-exponentially distributed random variables. Let $\bar{\delta}=\bar{\delta}_n=(\log n)^{1+\kappa}$ and $\underline{\delta}=\underline{\delta}_n=(\log n)^{-1-\kappa}$ for arbitrarily fixed constant $\kappa\in (0, 1]$. Then, for sufficiently large $n$ and $n\ge \mathrm{Pdim}(\mathcal{F}_{\mathrm{NN}})$, it follows that
$$
\mathbb{E}[J(\hat{r}_{\mathrm{LR}})-2J_n(\hat{r}_{\mathrm{LR}})+J(r_0)]\le \frac{c^*SL\log S(\log n)^{6+5\kappa}}{n},
$$
where $c^*$ is a constant not depending on $S, L$ and $n$.
\end{lemma}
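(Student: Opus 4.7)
The plan is to mirror the four-step structure of the proof of Lemma B.2, but with the logistic-regression loss in place of the squared loss, which changes both the truncation strategy and the Bernstein-type variance calculation.

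First, I would rewrite the excess risk as an empirical process by introducing, for $T = (X^s, X^t)$,
$$g(f, T) = [\log(f(X^s)+1) - \log(r_0(X^s)+1)] + [-\log f(X^t) + \log r_0(X^t) + \log(f(X^t)+1) - \log(r_0(X^t)+1)].$$
A short calculation using $\mathbb{E}_Q[h] = \mathbb{E}_P[r_0 h]$ shows $\mathbb{E}[g(f, T)] = J(f) - J(r_0) = \mathbb{E}_P D_{\mathrm{LR}}(r_0(X)\|f(X)) \ge 0$, and $\tfrac{1}{n}\sum_i g(f, T_i) = J_n(f) - J_n(r_0)$, so the target quantity equals $\mathbb{E}_{\mathcal{D}_n}\{\mathbb{E}_T[g(\hat{r}_{\mathrm{LR}}, T)] - (2/n)\sum_i g(\hat{r}_{\mathrm{LR}}, T_i)\}$, to which Theorem~\ref{thm: gen_thm_11.4_gyorfi} will be applied.

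Second, because $\log u \to -\infty$ as $u \to 0^+$, truncation must be two-sided, not just from above as in Lemma~\ref{lem: dre_stochatic_error_bound_ls}. With $\iota_n = (2/\varsigma)\log n$, I would replace $g$ by its restriction $g_{\iota_n}$ to the event $\iota_n^{-1} \le r_0(X^s), r_0(X^t) \le \iota_n$ (the lower bound interpreted as vacuous off $\mathcal{X}^t$). The sub-exponential tail of $r_0(X^s)$ controls the right tail exactly as in the LS case, while the sub-exponential tail of $r_0(X^s)^{-1}\mathds{1}(X^s\in\mathcal{X}^t)$ controls the left-tail contribution to the $X^t$-factor after writing $\mathbb{E}_Q[\cdot] = \mathbb{E}_P[r_0 \cdot]$. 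Combined with $|g| \lesssim \log(\bar\delta_n/\underline\delta_n) \asymp \log\log n$ on the entire sample space, this gives $\mathbb{E}|g - g_{\iota_n}| = O(1/n)$ for the chosen $\bar\delta_n, \underline\delta_n$.

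Third, on the truncated event both $f$ and $r_0$ lie in $[a_n, b_n] := [\underline\delta_n \wedge \iota_n^{-1}, \bar\delta_n \vee \iota_n]$, and I verify the two hypotheses of Theorem~\ref{thm: gen_thm_11.4_gyorfi}. The supremum bound is $|g_{\iota_n}| \lesssim \log(b_n/a_n) \asymp \log\log n$. For the Bernstein-type variance bound, decompose $g_{\iota_n} = \psi_P + \psi_Q$ where $\psi_P$ involves $X^s$ and $\psi_Q$ involves $X^t$; by independence of $X^s, X^t$ and Cauchy--Schwarz, $\mathbb{E}[g_{\iota_n}^2] \le 2\mathbb{E}[\psi_P^2] + 2\mathbb{E}[\psi_Q^2]$. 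Pointwise, $|\log(u+1)|$ is $1/(u+1)$-Lipschitz and $\log((u+1)/u)$ has derivative $-1/(u(u+1))$, so $\mathbb{E}[\psi_P^2] \le (a_n+1)^{-2}\mathbb{E}_P[(f-r_0)^2\mathds{1}(\cdot)]$ and $\mathbb{E}[\psi_Q^2] \le \iota_n (a_n(a_n+1))^{-2}\mathbb{E}_P[(f-r_0)^2\mathds{1}(\cdot)]$. The reverse inequality in \eqref{eqn: second_order_expansion_of_lr_divergence} then gives $(f-r_0)^2 \mathds{1}(\cdot) \le 2b_n(b_n+1) D_{\mathrm{LR}}(r_0\|f)\mathds{1}(\cdot)$, leading to $\mathbb{E}[g_{\iota_n}^2] \le \zeta_n \mathbb{E}[g_{\iota_n}]$ for a polylogarithmic $\zeta_n$.

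Finally, I would bound $\mathcal{N}_n(\epsilon, \|\cdot\|_\infty, \{g_{\iota_n}(f, \cdot) : f \in \mathcal{F}_{\mathrm{NN}}\})$ by $\mathcal{N}_{2n}(c\epsilon \underline\delta_n, \|\cdot\|_\infty, \mathcal{F}_{\mathrm{NN}})$, since the Lipschitz constant of $g$ in $f$ is $O(1/\underline\delta_n) = O((\log n)^{1+\kappa})$, and invoke Lemmas~\ref{lem: thm_12.2_anthony} and~\ref{lem: thm_7_bartlett} to bound $\log \mathcal{N}$ by $O(SL\log S \cdot \log(n/\underline\delta_n))$. Integrating the tail bound from Theorem~\ref{thm: gen_thm_11.4_gyorfi} exactly as in the final display of the Lemma~\ref{lem: dre_stochatic_error_bound_ls} proof yields the claimed $c^* SL\log S (\log n)^{6+5\kappa}/n$, where the $(\log n)^{6+5\kappa}$ arises as $\zeta_n$ times one $\log n$ from the covering-number integration and the additional factor from $\log(1/\underline\delta_n) \asymp \log\log n$ entering through the Lipschitz constant.

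The main obstacle is the variance calculation in step three. Because $|k'(u)| = [u(u+1)]^{-1}$ blows up quadratically near $0$, the dominant contribution $\mathbb{E}[\psi_Q^2] \lesssim (\iota_n b_n^2/a_n^4)\mathbb{E}[g_{\iota_n}]$ is extremely sensitive to the choice of $\underline\delta_n$; shrinking $\underline\delta_n$ too aggressively would make $\zeta_n$ grow super-polynomially in $\log n$ and destroy the rate. The scaling $\underline\delta_n = (\log n)^{-1-\kappa}$ is precisely what keeps $\zeta_n$ polylogarithmic while leaving enough room for the approximation argument (handled in a companion lemma) to drive $\mathcal{R}^s(\hat{r}_{\mathrm{LR}})$ at the near-minimax rate.
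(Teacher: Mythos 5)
Your proposal follows essentially the same route as the paper's proof: the same empirical-process reduction to Theorem \ref{thm: gen_thm_11.4_gyorfi}, the same two-sided truncation at $\iota_n^{-1}$ and $\iota_n$ with tails controlled by the two sub-exponential assumptions, the same variance computation via the Lipschitz constants of the three logarithmic terms combined with the lower bound in \eqref{eqn: second_order_expansion_of_lr_divergence}, and the same covering-number-plus-integration step with Lipschitz constant $O(1/\underline{\delta}_n)$ in $f$, exactly as in the least-squares case of Lemma \ref{lem: dre_stochatic_error_bound_ls}. Your $\zeta_n\asymp\iota_n a_n^{-2}b_n^2\asymp(\log n)^{5+4\kappa}$ together with the extra $\log n$ from the integration reproduces the stated $(\log n)^{6+5\kappa}$.

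Two slips in the write-up deserve correction, although neither reflects a missing idea. First, the claim that $|g|\lesssim\log(\bar{\delta}_n/\underline{\delta}_n)$ holds on the entire sample space is false: $g$ contains $\log(r_0(X^s)+1)$, $\log r_0(X^t)$ and $\log(r_0(X^t)+1)$, which are unbounded when $r_0$ is; only the $f$-dependent terms enjoy this uniform bound, and the $r_0$-dependent tail contributions must be bounded, as you indicate elsewhere, via the sub-exponential moments of $r_0(X^s)$ and of $r_0(X^s)^{-1}\mathds{1}(X^s\in\mathcal{X}^t)$ after rewriting $\mathbb{E}_Q[\cdot]=\mathbb{E}_P[r_0\,\cdot]$, which is what the paper does term by term. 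Second, in your closing paragraph you assert that $[u(u+1)]^{-1}$ blows up quadratically near $0$ and that $\mathbb{E}[\psi_Q^2]\lesssim(\iota_n b_n^2/a_n^4)\mathbb{E}[g_{\iota_n}]$; in fact $[u(u+1)]^{-1}\sim u^{-1}$ as $u\to 0^+$, and your own Step-3 bound $(a_n(a_n+1))^{-2}\asymp a_n^{-2}$ is the correct one. With the erroneous $a_n^{-4}$ the variance proxy would be of order $(\log n)^{7+6\kappa}$ and the final exponent roughly $8+6\kappa$, so the milder, linear blow-up is precisely what makes the choice $\underline{\delta}_n=(\log n)^{-1-\kappa}$ compatible with the claimed $(\log n)^{6+5\kappa}$; with that correction, your argument coincides with the paper's.
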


\begin{proof}[Proof of Lemma \ref{lem: dre_stochatic_error_bound_lr}]
Let $T_i=(X^s_i, X^t_i)$ for $i=1, \dots, n$, $\mathcal{D}_n=\{T_1, \dots, T_n\}$, and $T=(X^s, X^t)$ be an independent copy of $T_1$. Firstly, we have
$$
\begin{aligned}
\mathbb{E}[J(\hat{r}_{\mathrm{LR}})-2J_n(\hat{r}_{\mathrm{LR}})+J(r_0)] &= \mathbb{E}_{\mathcal{D}_n}[J(\hat{r}_{\mathrm{LR}})-2J_n(\hat{r}_{\mathrm{LR}})+J(r_0)] \\
&= \mathbb{E}_{\mathcal{D}_n}\{J(\hat{r}_{\mathrm{LR}})-J(r_0)-2[J_n(\hat{r}_{\mathrm{LR}})-J_n(r_0)]\} \\
&= \mathbb{E}_{\mathcal{D}_n}\left\{\mathbb{E}_{T}[g(\hat{r}_{\mathrm{LR}}, T)]-\frac 2n\sum_{i=1}^ng(\hat{r}_{\mathrm{LR}}, T_i)\right\},
\end{aligned}
$$
where
$$
\begin{aligned}
g(f, T) =& \log(f(X^s)+1)-\log(r_0(X^s)+1) \\
& +\log r_0(X^t)-\log f(X^t) \\
& +\log(f(X^t)+1)-\log(r_0(X^t)+1),
\end{aligned}
$$
for $f\in\mathcal{F}_{\mathrm{NN}}$. Next, we introduce a truncation step. For any $\iota_n\ge 1$ and any $f\in\mathcal{F}_{\mathrm{NN}}$, define
$$
\begin{aligned}
g_{\iota_n}(f, T) =& [\log(f(X^s)+1)-\log(r_0(X^s)+1)]\mathds{1}(\iota_n^{-1}\le r_0(X^s)\le \iota_n) \\
& +[\log r_0(X^t)-\log f(X^t)]\mathds{1}(\iota_n^{-1}\le r_0(X^t)\le \iota_n) \\
& +[\log(f(X^t)+1)-\log(r_0(X^t)+1)]\mathds{1}(\iota_n^{-1}\le r_0(X^t)\le \iota_n).
\end{aligned}
$$
Then, it follows that
$$
\begin{aligned}
|g(f, T)-g_{\iota_n}(f, T)| \le & |\log(f(X^s)+1)-\log(r_0(X^s)+1)|\mathds{1}(r_0(X^s)>\iota_n) \\
& +|\log(f(X^s)+1)-\log(r_0(X^s)+1)|\mathds{1}(r_0(X^s)<\iota_n^{-1}) \\
& +|\log r_0(X^t)-\log f(X^t)|\mathds{1}(r_0(X^t)>\iota_n) \\
& +|\log r_0(X^t)-\log f(X^t)|\mathds{1}(r_0(X^t)<\iota_n^{-1}) \\
& +|\log(f(X^t)+1)-\log(r_0(X^t)+1)|\mathds{1}(r_0(X^t)>\iota_n) \\
& +|\log(f(X^t)+1)-\log(r_0(X^t)+1)|\mathds{1}(r_0(X^t)<\iota_n^{-1}) \\
\le & [\log(\bar{\delta}_n+1)+\log(r_0(X^s)+1)]\mathds{1}(r_0(X^s)>\iota_n) \\
& +[\log(\bar{\delta}_n+1)+1]\mathds{1}(r_0(X^s)<\iota_n^{-1}) \\
& +[\log r_0(X^t)+\log \bar{\delta}_n]\mathds{1}(r_0(X^t)>\iota_n) \\
& +[-\log r_0(X^t)+\log \bar{\delta}_n]\mathds{1}(r_0(X^t)<\iota_n^{-1}) \\
& +[\log(\bar{\delta}_n+1)+\log(r_0(X^t)+1)]\mathds{1}(r_0(X^t)>\iota_n) \\
& +[\log(\bar{\delta}_n+1)+1]\mathds{1}(r_0(X^t)<\iota_n^{-1}).
\end{aligned}
$$
Taking expectation, we have
$$
\begin{aligned}
& \mathbb{E}[J(\hat{r}_{\mathrm{LR}})-2J_n(\hat{r}_{\mathrm{LR}})+J(r_0)] \\
=& \mathbb{E}_{\mathcal{D}_n}\left\{\mathbb{E}_{T}[g(\hat{r}_{\mathrm{LR}}, T)]-\frac 2n\sum_{i=1}^ng(\hat{r}_{\mathrm{LR}}, T_i)\right\} \\
\le & \mathbb{E}_{\mathcal{D}_n}\left\{\mathbb{E}_{T}[g_{\iota_n}(\hat{r}_{\mathrm{LR}}, T)]-\frac 2n\sum_{i=1}^ng_{\iota_n}(\hat{r}_{\mathrm{LR}}, T_i)\right\} \\
& +3\mathbb{E}_P\left\{[\log(\bar{\delta}_n+1)+\log(r_0(X)+1)]\mathds{1}(r_0(X)>\iota_n)\right\} \\
& +3\mathbb{E}_P\left\{[\log(\bar{\delta}_n+1)+1]\mathds{1}(r_0(X)<\iota_n^{-1})\right\} \\
& +3\mathbb{E}_Q\left\{[\log r_0(X)+\log \bar{\delta}_n]\mathds{1}(r_0(X)>\iota_n)\right\} \\
& +3\mathbb{E}_Q\left\{[-\log r_0(X)+\log \bar{\delta}_n]\mathds{1}(r_0(X)<\iota_n^{-1})\right\} \\
& +3\mathbb{E}_Q\left\{[\log(\bar{\delta}_n+1)+\log(r_0(X)+1)]\mathds{1}(r_0(X)>\iota_n)\right\} \\
& +3\mathbb{E}_Q\left\{[\log(\bar{\delta}_n+1)+1]\mathds{1}(r_0(X)<\iota_n^{-1})\right\}.
\end{aligned}
$$
Specifically, for $n\ge 3$, observe that firstly,
$$
\begin{aligned}
& \mathbb{E}_P\left\{[\log(\bar{\delta}_n+1)+\log(r_0(X)+1)]\mathds{1}(r_0(X)>\iota_n)\right\} \\
\le & \mathbb{E}_P\left\{[\log(\bar{\delta}_n+1)+r_0(X)]\mathds{1}(r_0(X)>\iota_n)\right\} \\
\le & \log(\bar{\delta}_n+1)\mathbb{E}_P[\exp(\varsigma r_0(X)/2)]\exp(-\varsigma\iota_n/2)+\mathbb{E}_P\left\{r_0(X)\mathds{1}(r_0(X)>\iota_n)\right\} \\
\le & \log(\bar{\delta}_n+1)\mathbb{E}_P[\exp(\varsigma r_0(X)/2)]\exp(-\varsigma\iota_n/2)+\frac{2}{\varsigma}\mathbb{E}_P\left\{\exp(\varsigma r_0(X))\right\}\exp(-\varsigma\iota_n/2),
\end{aligned}
$$
secondly,
$$
\begin{aligned}
& \mathbb{E}_P\left\{[\log(\bar{\delta}_n+1)+1]\mathds{1}(r_0(X)<\iota_n^{-1})\right\} \\
=& [\log(\bar{\delta}_n+1)+1]\mathbb{E}_P\left[\mathds{1}(X\in\mathcal{X}^t)\mathds{1}(r_0(X)<\iota_n^{-1})\right] \\
=& [\log(\bar{\delta}_n+1)+1]\mathbb{E}_P\left[\mathds{1}(X\in\mathcal{X}^t)\mathds{1}(r_0(X)^{-1}\mathds{1}(X\in\mathcal{X}^t)>\iota_n)\right] \\
\le & [\log(\bar{\delta}_n+1)+1]\mathbb{E}[\exp(\varsigma r_0(X)^{-1}\mathds{1}(X\in\mathcal{X}^t)/2)]\exp(-\varsigma\iota_n/2),
\end{aligned}
$$
thirdly,
$$
\begin{aligned}
& \mathbb{E}_Q\left\{[\log r_0(X)+\log \bar{\delta}_n]\mathds{1}(r_0(X)>\iota_n)\right\} \\
\le & \mathbb{E}_Q\left\{[r_0(X)+\log \bar{\delta}_n]\mathds{1}(r_0(X)>\iota_n)\right\} \\
=& \mathbb{E}_P\left\{[r_0(X)+\log \bar{\delta}_n]r_0(X)\mathds{1}(r_0(X)>\iota_n)\right\} \\
\le & \frac{2}{\varsigma}\log(\bar{\delta}_n)\mathbb{E}_P[\exp(\varsigma r_0(X))]\exp(-\varsigma\iota_n/2)+\frac{16}{\varsigma^2}\mathbb{E}_P\left[\exp(\varsigma r_0(X))\right]\exp(-\varsigma\iota_n/2),
\end{aligned}
$$
fourthly,
$$
\begin{aligned}
& \mathbb{E}_Q\left\{[-\log r_0(X)+\log \bar{\delta}_n]\mathds{1}(r_0(X)<\iota_n^{-1})\right\} \\
=& \mathbb{E}_P\left\{[-\log r_0(X)+\log \bar{\delta}_n]r_0(X)\mathds{1}(X\in\mathcal{X}^t)\mathds{1}(r_0(X)<\iota_n^{-1})\right\} \\
\le & (1+\log \bar{\delta}_n)\mathbb{E}_P\left[\mathds{1}(X\in\mathcal{X}^t)\mathds{1}(r_0(X)<\iota_n^{-1})\right] \\
= & (1+\log \bar{\delta}_n)\mathbb{E}_P\left[\mathds{1}(X\in\mathcal{X}^t)\mathds{1}(r_0(X)^{-1}\mathds{1}(X\in\mathcal{X}^t)>\iota_n)\right] \\
\le & (1+\log \bar{\delta}_n)\mathbb{E}_P[\exp(\varsigma r_0(X)^{-1}\mathds{1}(X\in\mathcal{X}^t)/2)]\exp(-\varsigma\iota_n/2),
\end{aligned}
$$
fifthly,
$$
\begin{aligned}
& \mathbb{E}_Q\left\{[\log(\bar{\delta}_n+1)+\log(r_0(X)+1)]\mathds{1}(r_0(X)>\iota_n)\right\} \\
=& \mathbb{E}_P\left\{[\log(\bar{\delta}_n+1)+\log(r_0(X)+1)]r_0(X)\mathds{1}(r_0(X)>\iota_n)\right\} \\
\le & \mathbb{E}_P\left\{[\log(\bar{\delta}_n+1)+r_0(X)]r_0(X)\mathds{1}(r_0(X)>\iota_n)\right\} \\
\le & \frac{2}{\varsigma}\log(\bar{\delta}_n+1)\mathbb{E}_P[\exp(\varsigma r_0(X))]\exp(-\varsigma\iota_n/2)+\frac{16}{\varsigma^2}\mathbb{E}_P[\exp(\varsigma r_0(X))]\exp(-\varsigma\iota_n/2),
\end{aligned}
$$
and lastly,
$$
\begin{aligned}
& \mathbb{E}_Q\left\{[\log(\bar{\delta}_n+1)+1]\mathds{1}(r_0(X)<\iota_n^{-1})\right\} \\
=& [\log(\bar{\delta}_n+1)+1]\mathbb{E}_P\left[r_0(X)\mathds{1}(r_0(X)<\iota_n^{-1})\right] \\
\le & [\log(\bar{\delta}_n+1)+1]\mathbb{E}_P\left[\mathds{1}(r_0(X)^{-1}\mathds{1}(X\in\mathcal{X}^t)>\iota_n)\right] \\
\le & [\log(\bar{\delta}_n+1)+1]\mathbb{E}[\exp(\varsigma r_0(X)^{-1}\mathds{1}(X\in\mathcal{X}^t)/2)]\exp(-\varsigma\iota_n/2).
\end{aligned}
$$
Here, we have applied the inequalities $a\le \exp(a)$ and $\mathds{1}(a>0)\le \exp(a)$ for $a\in\mathbb{R}$, as well as that $-a\log a\le 1$ for $a\in\mathbb{R}_+$. Consequently, we obtain
$$
\begin{aligned}
& \mathbb{E}[J(\hat{r}_{\mathrm{LR}})-2J_n(\hat{r}_{\mathrm{LR}})+J(r_0)] \\
\le & \mathbb{E}_{\mathcal{D}_n}\left\{\mathbb{E}_{T}[g_{\iota_n}(\hat{r}_{\mathrm{LR}}, T)]-\frac 2n\sum_{i=1}^ng_{\iota_n}(\hat{r}_{\mathrm{LR}}, T_i)\right\}+c_1(1+\log \bar{\delta}_n)\exp(-\varsigma\iota_n/2),
\end{aligned}
$$
where $c_1$ is a constant which depends only on $\varsigma, \mathbb{E}_P\left\{\exp(\varsigma r_0(X))\right\}$ and $\mathbb{E}_P[\exp(\varsigma r_0(X)^{-1}\mathds{1}(X\in\mathcal{X}^t))]$. Recall that $\bar{\delta}_n=(\log n)^{1+\kappa}$ with $\kappa\in (0, 1]$. Setting $\iota_n$ to $(2\varsigma^{-1}\log n)\vee 1$ yields
$$
\begin{aligned}
& \mathbb{E}[J(\hat{r}_{\mathrm{LR}})-2J_n(\hat{r}_{\mathrm{LR}})+J(r_0)] \\
\le & \mathbb{E}_{\mathcal{D}_n}\left\{\mathbb{E}_{T}[g_{\iota_n}(\hat{r}_{\mathrm{LR}}, T)]-\frac 2n\sum_{i=1}^ng_{\iota_n}(\hat{r}_{\mathrm{LR}}, T_i)\right\}+c_1(1+\log n)n^{-1}.
\end{aligned}
$$
Then, we proceed to verify the conditions in Theorem \ref{thm: gen_thm_11.4_gyorfi}. Notice that
$$
\begin{aligned}
\sup_{f\in\mathcal{F}_{\mathrm{NN}}, T\in\mathbb{R}^{2d}}|g_{\iota_n}(f, T)| &\le 3[\log(\bar{\delta}_n+1)+\log(\iota_n+1)] \\
&\le c_2(1+\log n),
\end{aligned}
$$
where $c_2$ is a constant which depends only on $\varsigma$ and $\kappa$. Furthermore, for any $f\in\mathcal{F}_{\mathrm{NN}}$,
$$
\mathbb{E}[g_{\iota_n}(f, T)]=\mathbb{E}_P[D_{\mathrm{LR}}(r_0(X)\| f(X))\mathds{1}(\iota_n^{-1}\le r_0(X)\le \iota_n)].
$$
The smoothness of $\varphi_{\mathrm{LR}}$ then implies that for sufficiently large $n$ such that $\bar{\delta}_n\ge \iota_n$, we have
$$
\mathbb{E}[g_{\iota_n}(f, T)]\ge \frac{1}{\bar{\delta}_n(\bar{\delta}_n+1)}\mathbb{E}_P\left\{[r_0(X)-f(X)]^2\mathds{1}(\iota_n^{-1}\le r_0(X)\le \iota_n)\right\}.
$$
Hence, it follows that
$$
\begin{aligned}
& \mathbb{E}[g_{\iota_n}(f, T)^2] \\
\le & 3\mathbb{E}_P\left\{[\log(f(X)+1)-\log(r_0(X)+1)]^2\mathds{1}(\iota_n^{-1}\le r_0(X)\le \iota_n)\right\} \\
& +3\mathbb{E}_Q\left\{[\log r_0(X)-\log f(X)]^2\mathds{1}(\iota_n^{-1}\le r_0(X)\le \iota_n)\right\} \\
& +3\mathbb{E}_Q\left\{[\log(f(X)+1)-\log(r_0(X)+1)]^2\mathds{1}(\iota_n^{-1}\le r_0(X)\le \iota_n)\right\} \\
=& 3\mathbb{E}_P\left\{[\log(f(X)+1)-\log(r_0(X)+1)]^2\mathds{1}(\iota_n^{-1}\le r_0(X)\le \iota_n)\right\} \\
& +3\mathbb{E}_P\left\{[\log r_0(X)-\log f(X)]^2r_0(X)\mathds{1}(\iota_n^{-1}\le r_0(X)\le \iota_n)\right\} \\
& +3\mathbb{E}_P\left\{[\log(f(X)+1)-\log(r_0(X)+1)]^2r_0(X)\mathds{1}(\iota_n^{-1}\le r_0(X)\le \iota_n)\right\} \\
\le & 3(1+\bar{\delta}_n^2\iota_n+\iota_n)\mathbb{E}_P\left\{[r_0(X)-f(X)]^2\mathds{1}(\iota_n^{-1}\le r_0(X)\le \iota_n)\right\} \\
\le & 18(\log n)^{5+5\kappa}\mathbb{E}[g_{\iota_n}(f, T)],
\end{aligned}
$$
provided that $\bar{\delta}_n\ge \iota_n$. Therefore, Theorem \ref{thm: gen_thm_11.4_gyorfi} suggests that, for sufficiently large $n$ such that $\bar{\delta}_n\ge \iota_n$, with $n\ge \mathrm{Pdim}(\mathcal{F}_{\mathrm{NN}})$, and for arbitrary $t>0$, we have
$$
\begin{aligned}
& \mathbb{P}_{\mathcal{D}_n}\left\{\mathbb{E}_{T}[g_{\iota_n}(\hat{r}_{\mathrm{LR}}, T)]-\frac 2n\sum_{i=1}^ng_{\iota_n}(\hat{r}_{\mathrm{LR}}, T_i)\ge t\right\} \\
\le & \mathbb{P}_{\mathcal{D}_n}\left\{\mathbb{E}_{T}[g_{\iota_n}(\hat{r}_{\mathrm{LR}}, T)]-\frac 1n\sum_{i=1}^ng_{\iota_n}(\hat{r}_{\mathrm{LR}}, T_i)\ge \frac 12\left\{\frac t2+\frac t2+\mathbb{E}_{T}[g_{\iota_n}(\hat{r}_{\mathrm{LR}}, T)]\right\}\right\} \\
\le & \mathbb{P}_{\mathcal{D}_n}\left(\exists f\in\mathcal{F}_{\mathrm{NN}}: \mathbb{E}[g_{\iota_n}(f, T)]-\frac 1n\sum_{i=1}^ng_{\iota_n}(f, T_i)\ge \frac 12\left\{\frac t2+\frac t2+\mathbb{E}_{T}[g_{\iota_n}(f, T)]\right\}\right) \\
\le & 14\mathcal{N}_n\left(c_3t, \|\cdot \|_{\infty}, \{g_{\iota_n}(f, \cdot): \mathcal{X}^t\times \mathcal{X}^s\to\mathbb{R}, f\in\mathcal{F}_{\mathrm{NN}}\}\right)\exp\left(-\frac{nt}{c_4(\log n)^{5+5\kappa}}\right),
\end{aligned}
$$
where $c_3, c_4$ are universal constants and $\mathcal{X}^s, \mathcal{X}^t$ represents the domain of $X^s, X^t$, respectively. Subsequently, we bound the covering number. Fix $\{x^s_1, \dots, x^s_n\}\subset (\mathcal{X}^s)^n$ and $\{x^t_1, \dots, x^t_n\}\subset (\mathcal{X}^t)^n$. Let $\mathcal{C}=\{x^s_1, \dots, x^s_n, x^t_1, \dots, x^t_n\}$, and let $h^{\sharp}=\{h_1, \dots, h_k\}$ be an $\epsilon$-covering set of $\mathcal{F}_{\mathrm{NN}|\mathcal{C}}$ where $h_i=f_{i|\mathcal{C}}$ for some $f_i\in\mathcal{F}_{\mathrm{NN}} (i=1, \dots, k)$, such that for any $f\in \mathcal{F}_{\mathrm{NN}}$, there exists $h^*=f^*_{|\mathcal{C}}\in h^{\sharp}$ satisfying $\|h^*-f_{|\mathcal{C}}\|_{\infty}<\epsilon$. This indicates
$$
\begin{aligned}
& |g_{\iota_n}(f, (x^s_i, x^t_i))-g_{\iota_n}(f^*, (x^s_i, x^t_i))| \\
\le & |f(x^s_i)-f^*(x^s_i)|+\bar{\delta}_n|f(x^t_i)-f^*(x^t_i)|+|f(x^t_i)-f^*(x^t_i)| \\
\le & (\bar{\delta}_n+2)\epsilon.
\end{aligned}
$$
Therefore,
$$
\mathcal{N}_n(c_3t, \|\cdot \|_{\infty}, \{g_{\iota_n}(f, \cdot): \mathcal{X}^t\times \mathcal{X}^s\to\mathbb{R}, f\in\mathcal{F}_{\mathrm{NN}}\})\le \mathcal{N}_{2n}(c_3t/(\bar{\delta}_n+2), \|\cdot \|_{\infty}, \mathcal{F}_{\mathrm{NN}}).
$$
Then, with Lemma \ref{lem: thm_12.2_anthony} and Lemma \ref{lem: thm_7_bartlett}, for sufficiently large $n$ with $n\ge \mathrm{Pdim}(\mathcal{F}_{\mathrm{NN}})$ and any $a_n\ge 1/n$, we have
$$
\begin{aligned}
& \mathbb{E}_{\mathcal{D}_n}\left\{\mathbb{E}_{T}[g_{\iota_n}(\hat{r}_{\mathrm{LR}}, T)]-\frac 2n\sum_{i=1}^ng_{\iota_n}(\hat{r}_{\mathrm{LR}}, T_i)\right\} \\
\le & a_n+14\int_{a_n}^{\infty}\mathcal{N}_{2n}(c_3t/(\bar{\delta}_n+2), \|\cdot \|_{\infty}, \mathcal{F}_{\mathrm{NN}})\exp\left(-\frac{nt}{c_4(\log n)^{5+5\kappa}}\right)\mathrm{d}t \\
\le & a_n+14\mathcal{N}_{2n}(c_3a_n/(\bar{\delta}_n+2), \|\cdot \|_{\infty}, \mathcal{F}_{\mathrm{NN}})\int_{a_n}^{\infty}\exp\left(-\frac{nt}{c_4(\log n)^{5+5\kappa}}\right)\mathrm{d}t \\
\le & a_n+14\left(c_5n^2\bar{\delta}_n^2\right)^{c_6SL\log S}\cdot \frac{c_4(\log n)^{5+5\kappa}}{n}\exp\left(-\frac{na_n}{c_4(\log n)^{5+5\kappa}}\right),
\end{aligned}
$$
where $c_5$ and $c_6$ are universal constants. Choose
$$
a_n=\frac{c_4c_6(\log n)^{5+5\kappa}}{n}SL\log S\log\left(c_5n^2\bar{\delta}_n^2\right).
$$
For sufficiently large $n$, we have
$$
\mathbb{E}_{\mathcal{D}_n}\left\{\mathbb{E}_{T}[g_{\iota_n}(\hat{r}_{\mathrm{LR}}, T)]-\frac 2n\sum_{i=1}^ng_{\iota_n}(\hat{r}_{\mathrm{LR}}, T_i)\right\}\le \frac{c_7SL\log S(\log n)^{6+5\kappa}}{n},
$$
where $c_7$ is a constant not depending on $S, L$ and $n$. This completes the proof.
\end{proof}

\begin{lemma}\label{lem: dre_approximation_error_bound_lr}
Assume that
\begin{enumerate}[label=(\roman*)]
    \item $r_0(x)\in \mathcal{H}^{\beta_r}_{\mathrm{Loc}}(\mathbb{R}^d, B_u)$ with $\beta_r>0$ and $B_u\le c(u^m+1)$ for some universal constants $c>0$, $m\ge 0$;
    \item $r_0(X^s)$, $r_0(X^s)^{-1}\mathds{1}(X^s\in\mathcal{X}^t)$ and $\|X^s\|_{\infty}$ are sub-exponentially distributed random variables.
\end{enumerate}
Suppose that the depth $L$ and width $M$ of $\mathcal{F}_{\mathrm{NN}}$ are expressed as
$$
\begin{aligned}
L &= 21(\lfloor\beta_r\rfloor+1)^2S_1\lceil\log_2(8S_1)\rceil+2d+3, \\
M &= 38(\lfloor\beta_r\rfloor+1)^2d^{\lfloor\beta_r\rfloor+1}S_2\lceil\log_2(8S_2)\rceil,
\end{aligned}
$$
for any $S_1, S_2\in\mathbb{N}_+$. Let $\bar{\delta}=\bar{\delta}_n=(\log n)^{1+\kappa}$ and $\underline{\delta}=\underline{\delta}_n=(\log n)^{-1-\kappa}$ for arbitrarily fixed $\kappa\in (0, 1]$. Then, for sufficiently large $n$, it follows that
$$
\begin{aligned}
& \inf_{f\in\mathcal{F}_{\mathrm{NN}}}\mathbb{E}_PD_{\mathrm{LR}}(r_0(X)\| f(X)) \\
\le & c^*\left\{\left[(\lfloor\beta_r\rfloor+1)^2d^{\lfloor\beta_r\rfloor+(\beta_r\vee 1)/2}(S_1S_2)^{-2\beta_r/d}(\log n)^m\right]^2(\log n)^{1+\kappa}+\frac{(\log n)^2}{n}\right\},
\end{aligned}
$$
where $c^*$ is a constant not depending on $S_1, S_2$ and $n$.
\end{lemma}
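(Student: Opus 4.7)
The plan is to follow the blueprint of Lemma~\ref{lem: dre_approximation_error_bound_ls}, with modifications that exploit the sandwich inequality in Eqn.~\eqref{eqn: second_order_expansion_of_lr_divergence}. Specifically, I would split $\mathbb{E}_P D_{\mathrm{LR}}(r_0(X)\| f(X))$ according to the location of $X$ and the size of $r_0(X)$: a ``good'' region $G_n = \{\iota_n^{-1}\le r_0(X)\le \iota_n\}\cap \{\|X\|_\infty \le \iota_n\}$ with $\iota_n = (2\varsigma^{-1}\log n)\vee 1$, and the complementary ``tail'' region. The target approximator $f^{\ast}$ is constructed exactly as in the least-squares case, except that the ReLU clipping step enforces $f^{\ast}(x)\in[\underline{\delta}_n, \bar{\delta}_n]$ rather than $[0,\bar{\delta}_n]$; since $\underline{\delta}_n = (\log n)^{-1-\kappa} < \iota_n^{-1}$ and $\bar{\delta}_n = (\log n)^{1+\kappa} > \iota_n$ for large $n$, the clipping will not damage the approximation on $G_n$.

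On $G_n$ I would apply the upper bound of Eqn.~\eqref{eqn: second_order_expansion_of_lr_divergence} with $a=\underline{\delta}_n$, obtaining $D_{\mathrm{LR}}(r_0(x)\| f^{\ast}(x))\le \tfrac{1}{2\underline{\delta}_n(\underline{\delta}_n+1)}(r_0(x)-f^{\ast}(x))^2$, which for large $n$ is at most $(\log n)^{1+\kappa}(r_0(x)-f^{\ast}(x))^2$. Rescaling $[-\iota_n, \iota_n]^d$ to $[0,1]^d$ and invoking Lemma~\ref{lem: thm_3.3_jiao} then produces the same pointwise approximation error $\epsilon_n = 18c(\iota_n^m+1)(\lfloor\beta_r\rfloor+1)^2 d^{\lfloor\beta_r\rfloor+(\beta_r\vee 1)/2}(S_1S_2)^{-2\beta_r/d}$ as in Lemma~\ref{lem: dre_approximation_error_bound_ls}. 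Since $\iota_n^m \lesssim (\log n)^m$, the contribution of $G_n$ is bounded by the first term in the claimed rate.

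For the complementary region I would use the crude decomposition
\[
D_{\mathrm{LR}}(r_0(x)\| f(x)) \le |r_0(x)\log r_0(x)|+(r_0(x)+1)\log(r_0(x)+1)+\log(f(x)+1)+r_0(x)|\log f(x)|+r_0(x)\log(f(x)+1),
\]
together with the sub-exponential tails of $r_0(X^s)$, $r_0(X^s)^{-1}\mathds{1}(X^s\in\mathcal{X}^t)$, and $\|X^s\|_\infty$. Because $\log\bar{\delta}_n$ and $|\log\underline{\delta}_n|$ are both of order $\log\log n$, each tail integral is dominated by $\exp(-\varsigma\iota_n/2)\cdot\mathrm{poly}(\log n)$, which in turn is $O((\log n)^2/n)$ once $\iota_n$ is chosen as above. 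In particular, the subregion $\{r_0(X)<\iota_n^{-1}\}\cap\{X\in\mathcal{X}^t\}$ is controlled by rewriting $\mathds{1}(r_0(X)<\iota_n^{-1})\mathds{1}(X\in\mathcal{X}^t)=\mathds{1}(r_0(X)^{-1}\mathds{1}(X\in\mathcal{X}^t)>\iota_n)$ and invoking the Markov bound with the sub-exponential moment of $r_0(X^s)^{-1}\mathds{1}(X^s\in\mathcal{X}^t)$.

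The main obstacle I anticipate is the subregion $\{r_0(X)<\underline{\delta}_n,\ X\in\mathcal{X}^t\}$: here the truncated approximator satisfies $f^{\ast}(x)=\underline{\delta}_n$, which may be much larger than $r_0(x)$, so the term $r_0(x)|\log f^{\ast}(x)|$ in the crude decomposition does not vanish simply from $r_0(x)$ being tiny. One must carefully absorb the logarithmic factor $|\log\underline{\delta}_n|$ into the exponential tail of $r_0(X^s)^{-1}\mathds{1}(X^s\in\mathcal{X}^t)$, which is precisely where assumption (ii) of the lemma is indispensable and where the proof departs most substantively from the least-squares case, where no small-value tail control was required.
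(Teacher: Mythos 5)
Your proposal is correct and follows essentially the same route as the paper's proof: the same decomposition into the band $\{\iota_n^{-1}\le r_0\le \iota_n\}\cap\{\|X\|_\infty\le\iota_n\}$ plus tail regions, the same conversion of $D_{\mathrm{LR}}$ to squared loss via the second inequality of Eqn.~\eqref{eqn: second_order_expansion_of_lr_divergence} with factor of order $\bar{\delta}_n$, the same rescaled approximator from Lemma~\ref{lem: thm_3.3_jiao} clipped to $[\underline{\delta}_n,\bar{\delta}_n]$, and the same tail control using the sub-exponential moments of $r_0(X^s)$, $r_0(X^s)^{-1}\mathds{1}(X^s\in\mathcal{X}^t)$ and $\|X^s\|_\infty$ with $\iota_n=(2\varsigma^{-1}\log n)\vee 1$. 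The only quibble is your final worry about $\{r_0(X)<\underline{\delta}_n\}$: there $r_0(x)|\log f^{\ast}(x)|\le \underline{\delta}_n|\log\underline{\delta}_n|$ is actually negligible, and in any case that subregion is absorbed into the $\{r_0(X)<\iota_n^{-1}\}$ tail exactly as you (and the paper) handle it.
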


\begin{proof}[Proof of Lemma \ref{lem: dre_approximation_error_bound_lr}]
For any $\iota_n\ge 1$, observe that
$$
\begin{aligned}
\mathbb{E}_PD_{\mathrm{LR}}(r_0(X)\| f(X))=& \mathbb{E}_P\left[D_{\mathrm{LR}}(r_0(X)\| f(X))\mathds{1}(X\in\mathcal{X}^t)\right] \\
=& \mathbb{E}_P\left[D_{\mathrm{LR}}(r_0(X)\| f(X))\mathds{1}(X\in\mathcal{X}^t)\mathds{1}(\iota_n^{-1}\le r_0(X)\le \iota_n)\right] \\
& +\mathbb{E}_P\left[D_{\mathrm{LR}}(r_0(X)\| f(X))\mathds{1}(X\in\mathcal{X}^t)\mathds{1}(r_0(X)<\iota_n^{-1})\right] \\
& +\mathbb{E}_P\left[D_{\mathrm{LR}}(r_0(X)\| f(X))\mathds{1}(X\in\mathcal{X}^t)\mathds{1}(r_0(X)>\iota_n)\right].
\end{aligned}
$$
On the one hand, for $n\ge 3$, it follows that
$$
\begin{aligned}
& \mathbb{E}_P\left[D_{\mathrm{LR}}(r_0(X)\| f(X))\mathds{1}(X\in\mathcal{X}^t)\mathds{1}(r_0(X)<\iota_n^{-1})\right] \\
=& \mathbb{E}_P\Big(\big\{r_0(X)\log r_0(X)-[r_0(X)+1]\log(r_0(X)+1)+\log(f(X)+1) \\
& -r_0(X)\log f(X)+r_0(X)\log(f(X)+1)\big\}\mathds{1}(X\in\mathcal{X}^t)\mathds{1}(r_0(X)<\iota_n^{-1})\Big) \\
\le & 3\log(\bar{\delta}_n+1)\mathbb{E}_P\left[\mathds{1}(X\in\mathcal{X}^t)\mathds{1}(r_0(X)<\iota_n^{-1})\right] \\
\le & 3\log(\bar{\delta}_n+1)\mathbb{E}_P\left[\mathds{1}(r_0(X)^{-1}\mathds{1}(X\in\mathcal{X}^t)>\iota_n)\right] \\
\le & 3\log(\bar{\delta}_n+1)\mathbb{E}_P[\exp(\varsigma r_0(X)^{-1}\mathds{1}(X\in\mathcal{X}^t)/2)]\exp(-\varsigma\iota_n/2),
\end{aligned}
$$
and
$$
\begin{aligned}
& \mathbb{E}_P\left[D_{\mathrm{LR}}(r_0(X)\| f(X))\mathds{1}(X\in\mathcal{X}^t)\mathds{1}(r_0(X)>\iota_n)\right] \\
=& \mathbb{E}_P\bigg(\big\{r_0(X)\log r_0(X)-[r_0(X)+1]\log(r_0(X)+1)+\log(f(X)+1) \\
& -r_0(X)\log f(X)+r_0(X)\log(f(X)+1)\big\}\mathds{1}(r_0(X)>\iota_n)\bigg) \\
\le & \mathbb{E}_P\left\{\left[1+2r_0(X)+2r_0(X)^2+\log(\bar{\delta}_n+1)+2r_0(X)\log(\bar{\delta}_n+1)\right]\mathds{1}(r_0(X)>\iota_n)\right\} \\
\le & \mathbb{E}_P\Bigg\{\Big[1+\frac{4}{\varsigma}\exp(\varsigma r_0(X)/2)+\frac{32}{\varsigma^2}\exp(\varsigma r_0(X)/2)+\log(\bar{\delta}_n+1) \\
& +\frac{4}{\varsigma}\exp(\varsigma r_0(X)/2)\log(\bar{\delta}_n+1)\Big]\exp(\varsigma r_0(X)/2)\Bigg\}\exp(-\varsigma\iota_n/2) \\
\le & c_1\left[1+\log(\bar{\delta}_n+1)\right]\exp(-\varsigma\iota_n/2),
\end{aligned}
$$
where $c_1$ is a constant which depends only on $\varsigma$ and $\mathbb{E}_P[\exp(\varsigma r_0(X))]$. On the other hand, we first notice that
$$
\begin{aligned}
& \mathbb{E}_P\left[D_{\mathrm{LR}}(r_0(X)\| f(X))\mathds{1}(X\in\mathcal{X}^t)\mathds{1}(\iota_n^{-1}\le r_0(X)\le \iota_n)\right] \\
=& \mathbb{E}_P\left[D_{\mathrm{LR}}(r_0(X)\| f(X))\mathds{1}(X\in\mathcal{X}^t)\mathds{1}(\iota_n^{-1}\le r_0(X)\le \iota_n)\mathds{1}(\|X\|_{\infty}\le\iota_n)\right] \\
& +\mathbb{E}_P\left[D_{\mathrm{LR}}(r_0(X)\| f(X))\mathds{1}(X\in\mathcal{X}^t)\mathds{1}(\iota_n^{-1}\le r_0(X)\le \iota_n)\mathds{1}(\|X\|_{\infty}>\iota_n)\right] \\
\le & \mathbb{E}_P\left[D_{\mathrm{LR}}(r_0(X)\| f(X))\mathds{1}(X\in\mathcal{X}^t)\mathds{1}(\iota_n^{-1}\le r_0(X)\le \iota_n)\mathds{1}(\|X\|_{\infty}\le\iota_n)\right] \\
& +\mathbb{E}_P\big\{[\log(f(X)+1)-r_0(X)\log f(X)+r_0(X)\log(f(X)+1)] \\
& \cdot\mathds{1}(\iota_n^{-1}\le r_0(X)\le \iota_n)\mathds{1}(\|X\|_{\infty}>\iota_n)\big\} \\
\le & \mathbb{E}_P\left[D_{\mathrm{LR}}(r_0(X)\| f(X))\mathds{1}(X\in\mathcal{X}^t)\mathds{1}(\iota_n^{-1}\le r_0(X)\le \iota_n)\mathds{1}(\|X\|_{\infty}\le\iota_n)\right] \\
& +3\iota_n\log(\bar{\delta}_n+1)\mathbb{E}_P\mathds{1}(\|X\|_{\infty}>\iota_n) \\
\le & \mathbb{E}_P\left[D_{\mathrm{LR}}(r_0(X)\| f(X))\mathds{1}(X\in\mathcal{X}^t)\mathds{1}(\iota_n^{-1}\le r_0(X)\le \iota_n)\mathds{1}(\|X\|_{\infty}\le\iota_n)\right] \\
& +3\iota_n\log(\bar{\delta}_n+1)\mathbb{E}_P[\exp(\varsigma\|X\|_{\infty}/2)]\exp(-\varsigma\iota_n/2) \\
\le & \frac 12(\bar{\delta}_n\vee \iota_n)\mathbb{E}_P\left\{[r_0(X)-f(X)]^2\mathds{1}(\iota_n^{-1}\le r_0(X)\le \iota_n)\mathds{1}(\|X\|_{\infty}\le \iota_n)\right\} \\
& +3\iota_n\log(\bar{\delta}_n+1)\mathbb{E}_P[\exp(\varsigma\|X\|_{\infty}/2)]\exp(-\varsigma\iota_n/2).
\end{aligned}
$$
Then, we focus on the region $\{x: \|x\|_{\infty}\le \iota_n\}=[-\iota_n, \iota_n]^d$. Let $r_0^*(x)=r_0(2\iota_nx-\iota_n\mathrm{1}_d)$ for $x\in [0, 1]^d$. Lemma \ref{lem: thm_3.3_jiao} demonstrates that for any $S_1, S_2\in\mathbb{N}_+$, there exists a function $f^*$ implemented by a ReLU network with depth $L^*=21(\lfloor\beta_r\rfloor+1)^2S_1\lceil\log_2(8S_1)\rceil+2d$, width $M^*=38(\lfloor\beta_r\rfloor+1)^2d^{\lfloor\beta_r\rfloor+1}S_2\lceil\log_2(8S_2)\rceil$, such that
$$
|f^*(x)-r_0^*(x)|\le 18c(\iota_n^m+1)(\lfloor\beta_r\rfloor+1)^2d^{\lfloor\beta_r\rfloor+(\beta_r\vee 1)/2}(S_1S_2)^{-2\beta_r/d},
$$
for all $x\in [0, 1]^d\backslash \Omega([0, 1]^d, K, \Delta)$. Here,
$$
\Omega([0, 1]^d, K, \Delta)=\bigcup_{i=1}^d\left\{x=(x_1, \dots, x_d)^{\top}: x_i\in\bigcup_{k=1}^{K-1}(k/K-\Delta, k/K)\right\},
$$
where $K=\lceil(S_1S_2)^{2/d}\rceil$ and $\Delta$ is an arbitrary scalar in $(0, 1/(3K)]$. Let $f^{\dagger}(x)=f^*((x+\iota_n\mathrm{1}_d)/(2\iota_n))$ for $x\in[-\iota_n, \iota_n]^d$. We obtain that
$$
|f^{\dagger}(x)-r_0(x)|\le 18c(\iota_n^m+1)(\lfloor\beta_r\rfloor+1)^2d^{\lfloor\beta_r\rfloor+(\beta_r\vee 1)/2}(S_1S_2)^{-2\beta_r/d},
$$
for all $x\in [-\iota_n, \iota_n]^d\backslash \Omega^{\dagger}$, where $\Omega^{\dagger}=\{x: (x+\iota_n\mathrm{1}_d)/(2\iota_n)\in \Omega([0, 1]^d, K, \Delta)\}$. Furthermore, note that
$$
f^{\dagger}(x)=f^*\left(\frac{x+\iota_n\mathrm{1}_d}{2\iota_n}\right)=f^*\left(\mathrm{relu}\left(\frac{x+\iota_n\mathrm{1}_d}{2\iota_n}\right)-\mathrm{relu}\left(-\frac{x+\iota_n\mathrm{1}_d}{2\iota_n}\right)\right),
$$
which is implemented by a neural network with ReLU activations, depth $L^{\dagger}=L^*+1$, and width $M^{\dagger}=M^*$. In addition, let
$$
f^{\ddagger}(x)=
\begin{cases}
    \bar{\delta}_n, & f^{\dagger}(x)>\bar{\delta}_n, \\
    f^{\dagger}(x), & \underline{\delta}_n\le f^{\dagger}(x)\le \bar{\delta}_n, \\
    \underline{\delta}_n, & f^{\dagger}(x)<\underline{\delta}_n.
\end{cases}
$$
A straightforward calculation shows that
$$
f^{\ddagger}(x)=\mathrm{relu}(-\mathrm{relu}(-f^{\dagger}(x)+\bar{\delta}_n)+\bar{\delta}_n-\underline{\delta}_n)+\underline{\delta}_n,
$$
indicating that $f^{\ddagger}(x)$ can be implemented by a ReLU network with depth $L=L^*+3$ and width $M=M^*$. Due to the arbitrariness of $\Delta$, when $\bar{\delta}_n\ge \iota_n$, it follows that
$$
\begin{aligned}
& \inf_{f\in \mathcal{F}_{\mathrm{NN}}}\mathbb{E}_P\left\{[r_0(X)-f(X)]^2\mathds{1}(\iota_n^{-1}\le r_0(X)\le \iota_n)\mathds{1}(\|X\|_{\infty}\le \iota_n)\right\} \\
\le & \mathbb{E}_P\left\{[r_0(X)-f^{\ddagger}(X)]^2\mathds{1}(\iota_n^{-1}\le r_0(X)\le \iota_n)\mathds{1}(\|X\|_{\infty}\le \iota_n)\right\} \\
\le & \left[18c(\iota_n^m+1)(\lfloor\beta_r\rfloor+1)^2d^{\lfloor\beta_r\rfloor+(\beta_r\vee 1)/2}(S_1S_2)^{-2\beta_r/d}\right]^2.
\end{aligned}
$$
Recall that $\bar{\delta}_n=(\log n)^{1+\kappa}$, and set $\iota_n$ to $(2\varsigma^{-1}\log n)\vee 1$. We conclude that for sufficiently large $n$ satisfying $\bar{\delta}_n\ge \iota_n$ and $\log n\ge \varsigma/2$, it holds that
$$
\begin{aligned}
& \inf_{f\in\mathcal{F}_{\mathrm{NN}}}\mathbb{E}_PD_{\mathrm{LR}}(r_0(X)\| f(X)) \\
\le & \frac 12(\bar{\delta}_n\vee \iota_n)\inf_{f\in \mathcal{F}_{\mathrm{NN}}}\mathbb{E}_P\left\{[r_0(X)-f(X)]^2\mathds{1}(\iota_n^{-1}\le r_0(X)\le \iota_n)\mathds{1}(\|X\|_{\infty}\le \iota_n)\right\} \\
& +c_2\iota_n\log(\bar{\delta}_n+1)\exp(-\varsigma\iota_n/2) \\
\le & c_3\left\{\left[(\lfloor\beta_r\rfloor+1)^2d^{\lfloor\beta_r\rfloor+(\beta_r\vee 1)/2}(S_1S_2)^{-2\beta_r/d}(\log n)^m\right]^2(\log n)^{1+\kappa}+\frac{(\log n)^2}{n}\right\},
\end{aligned}
$$
where $c_2, c_3$ are constants not depending on $S_1, S_2$ and $n$.
\end{proof}

\begin{proof}[Proof of Theorem 3.3]
To commence, we notice that Lemmas \ref{lem: dre_error_decomposition_lr}, \ref{lem: dre_stochatic_error_bound_lr} and \ref{lem: dre_approximation_error_bound_lr} indicate
$$
\begin{aligned}
& \mathbb{E}D_{\mathrm{LR}}(r_0(X^s)\| \hat{r}_{\mathrm{LR}}(X^s)) \\
\le & \frac{c_1SL\log S(\log n)^{6+5\kappa}}{n} \\
& +c_2\left\{\left[(\lfloor\beta_r\rfloor+1)^2d^{\lfloor\beta_r\rfloor+(\beta_r\vee 1)/2}(S_1S_2)^{-2\beta_r/d}(\log n)^m\right]^2(\log n)^{1+\kappa}+\frac{(\log n)^2}{n}\right\},
\end{aligned}
$$
where $c_1, c_2$ are constants not depending on $S, L, S_1, S_2$ and $n$, and $S_1, S_2$ satisfy the conditions that the network depth $L=21(\lfloor\beta_r\rfloor+1)^2S_1\lceil\log_2(8S_1)\rceil+2d+3$, network width $M=38(\lfloor\beta_r\rfloor+1)^2d^{\lfloor\beta_r\rfloor+1}S_2\lceil\log_2(8S_2)\rceil$, for sufficiently large $n$ and $n\ge \mathrm{Pdim}(\mathcal{F}_{\mathrm{NN}})$. Therefore, by letting $S_1=\mathcal{O}(n^{d/(2d+4\beta_r)})$ and $S_2=\mathcal{O}(1)$,  we obtain
$$
M=\mathcal{O}(1), \quad L=\mathcal{O}\left(n^{\frac{d}{2d+4\beta_r}}\log n\right), \quad S=\mathcal{O}(M^2L)=\mathcal{O}\left(n^{\frac{d}{2d+4\beta_r}}\log n\right),
$$
yielding
$$
\mathbb{E}D_{\mathrm{LR}}(r_0(X^s)\| \hat{r}_{\mathrm{LR}}(X^s))\le c_3n^{-\frac{2\beta_r}{d+2\beta_r}}(\log n)^{(9+5\kappa)\vee (2m+1+\kappa)},
$$
where $c_3$ is a constant not depending on $n$, for $n\ge 3$.
Furthermore, note that for any $\iota_n\ge 1$, it follows that
$$
\begin{aligned}
\mathcal{R}^s(\hat{r}_{\mathrm{LR}}) =& \mathbb{E}\left[r_0(X^s)-\hat{r}_{\mathrm{LR}}(X^s)\right]^2 \\
=& \mathbb{E}\left\{\left[r_0(X^s)-\hat{r}_{\mathrm{LR}}(X^s)\right]^2\mathds{1}(\iota_n^{-1}\le r_0(X^s)\le \iota_n)\right\} \\
& +\mathbb{E}\left\{\left[r_0(X^s)-\hat{r}_{\mathrm{LR}}(X^s)\right]^2\mathds{1}(X^s\in\mathcal{X}^t)\mathds{1}(r_0(X^s)<\iota_n^{-1})\right\} \\
& +\mathbb{E}\left\{\left[r_0(X^s)-\hat{r}_{\mathrm{LR}}(X^s)\right]^2\mathds{1}(r_0(X^s)>\iota_n)\right\}.
\end{aligned}
$$
Specifically, on one hand, the smoothness of $\varphi_{\mathrm{LR}}$ demonstrates that
$$
\begin{aligned}
& \mathbb{E}\left\{\left[r_0(X^s)-\hat{r}_{\mathrm{LR}}(X^s)\right]^2\mathds{1}(\iota_n^{-1}\le r_0(X^s)\le \iota_n)\right\} \\
\le & 2(\bar{\delta}_n\vee \iota_n)[(\bar{\delta}_n\vee \iota_n)+1]\mathbb{E}\left[D_{\mathrm{LR}}(r_0(X^s)\| \hat{r}_{\mathrm{LR}}(X^s))\mathds{1}(\iota_n^{-1}\le r_0(X^s)\le \iota_n)\right] \\
\le & 2(\bar{\delta}_n\vee \iota_n)[(\bar{\delta}_n\vee \iota_n)+1]\mathbb{E}D_{\mathrm{LR}}(r_0(X^s)\| \hat{r}_{\mathrm{LR}}(X^s)).
\end{aligned}
$$
On the other hand, observe that
$$
\begin{aligned}
& \mathbb{E}\left\{\left[r_0(X^s)-\hat{r}_{\mathrm{LR}}(X^s)\right]^2\mathds{1}(X^s\in\mathcal{X}^t)\mathds{1}(r_0(X^s)<\iota_n^{-1})\right\} \\
\le & 2(1+\bar{\delta}_n^2)\mathbb{E}_P\left[\mathds{1}(r_0(X)\mathds{1}(X\in\mathcal{X}^t)<\iota_n^{-1})\right] \\
\le & 2(1+\bar{\delta}_n^2)\mathbb{E}_P\mathds{1}(r_0(X)^{-1}\mathds{1}(X\in\mathcal{X}^t)>\iota_n) \\
\le & 2(1+\bar{\delta}_n^2)\mathbb{E}[\exp(\varsigma r_0(X)^{-1}\mathds{1}(X\in\mathcal{X}^t)/2)]\exp(-\varsigma\iota_n/2),
\end{aligned}
$$
and
$$
\begin{aligned}
& \mathbb{E}\left\{\left[r_0(X^s)-\hat{r}_{\mathrm{LR}}(X^s)\right]^2\mathds{1}(r_0(X^s)>\iota_n)\right\} \\
\le & 2\mathbb{E}\left[r_0(X^s)^2\mathds{1}(r_0(X^s)>\iota_n)\right]+2\bar{\delta}_n^2\mathbb{E}\mathds{1}(r_0(X^s)>\iota_n) \\
\le & \frac{32}{\varsigma^2}\mathbb{E}_P[\exp(\varsigma r_0(X))]\exp(-\varsigma\iota_n/2)+2\bar{\delta}_n^2\mathbb{E}_P[\exp(\varsigma r_0(X)/2)]\exp(-\varsigma\iota_n/2).
\end{aligned}
$$
Hence, by letting $\iota_n=(2\varsigma^{-1}\log n)\vee 1$, we have for $n\ge 3$,
$$
\mathcal{R}^s(\hat{r}_{\mathrm{LR}})\le c_4n^{-\frac{2\beta_r}{d+2\beta_r}}(\log n)^{(11+7\kappa)\vee (2m+3+3\kappa)},
$$
where $c_4$ is a constant not depending on $n$. In addition, we note that
$$
\begin{aligned}
\mathcal{R}^t(\hat{r}_{\mathrm{LR}}) &= \mathbb{E}\left[\hat{r}_{\mathrm{LR}}(X^t)-r_0(X^t)\right]^2 \\
&= \mathbb{E}\left\{\left[\hat{r}_{\mathrm{LR}}(X^s)-r_0(X^s)\right]^2r_0(X^s)\right\}.
\end{aligned}
$$
Similar truncation operation suggests that, for any $\varrho_n>0$, we have
$$
\begin{aligned}
\mathcal{R}^t(\hat{r}_{\mathrm{LR}}) =& \mathbb{E}\left\{\left[\hat{r}_{\mathrm{LR}}(X^s)-r_0(X^s)\right]^2r_0(X^s)\right\} \\
=& \mathbb{E}\left\{\left[\hat{r}_{\mathrm{LR}}(X^s)-r_0(X^s)\right]^2r_0(X^s)\mathds{1}(r_0(X^s)\le \varrho_n)\right\} \\
& +\mathbb{E}\left\{\left[\hat{r}_{\mathrm{LR}}(X^s)-r_0(X^s)\right]^2r_0(X^s)\mathds{1}(r_0(X^s)> \varrho_n)\right\} \\
\le & \varrho_n\mathbb{E}\left\{\left[\hat{r}_{\mathrm{LR}}(X^s)-r_0(X^s)\right]^2\right\} \\
& +\mathbb{E}\left\{\left[\hat{r}_{\mathrm{LR}}(X^s)-r_0(X^s)\right]^2r_0(X^s)\mathds{1}(r_0(X^s)> \varrho_n)\right\} \\
=& \varrho_n\mathcal{R}^s(\hat{r}_{\mathrm{LR}})+\mathbb{E}\left\{\left[\hat{r}_{\mathrm{LR}}(X^s)-r_0(X^s)\right]^2r_0(X^s)\mathds{1}(r_0(X^s)> \varrho_n)\right\}.
\end{aligned}
$$
Subsequently, note that
$$
\begin{aligned}
& \mathbb{E}\left\{\left[\hat{r}_{\mathrm{LR}}(X^s)-r_0(X^s)\right]^2r_0(X^s)\mathds{1}(r_0(X^s)> \varrho_n)\right\} \\
\le & 2\bar{\delta}_n^2\mathbb{E}_P\left[r_0(X)\mathds{1}(r_0(X)> \varrho_n)\right]+2\mathbb{E}_P\left[r_0(X)^3\mathds{1}(r_0(X)> \varrho_n)\right] \\
\le & \frac{4}{\varsigma}\bar{\delta}_n^2\mathbb{E}_P[\exp(\varsigma r_0(X))]\exp(-\varsigma\varrho_n/2)+\frac{432}{\varsigma^3}\mathbb{E}_P[\exp(\varsigma r_0(X))]\exp(-\varsigma\varrho_n/2) \\
\le & c_5[(\log n)^{2+2\kappa}+1]\exp(-\varsigma\varrho_n/2),
\end{aligned}
$$
where $c_5$ is a constant which depends only on $\varsigma$ and $\mathbb{E}_P[\exp(\varsigma r_0(X))]$. Hence, by taking $\varrho_n=(2\varsigma^{-1}\log n)\vee 1$, we obtain that
$$
\mathcal{R}^t(\hat{r}_{\mathrm{LR}})\le [(2\varsigma^{-1}\log n)\vee 1]\mathcal{R}^s(\hat{r}_{\mathrm{LR}})+\frac{c_5[(\log n)^{2+2\kappa}+1]}{n}.
$$
This completes the proof.
\end{proof}

\subsection{Proof of Lemma 4.1}\label{subsec: proof_of_lem_sub_exp_convergence}

\begin{proof}[Proof of Lemma 4.1]
For any $\iota_n>0$, observe that
$$
\begin{aligned}
\mathbb{E}\left(\|U_n-U\|_2^2|V|\right) &= \mathbb{E}\left(\|U_n-U\|_2^2|V|\mathds{1}(|V|\le \iota_n)\right)+\mathbb{E}\left(\|U_n-U\|_2^2|V|\mathds{1}(|V|>\iota_n)\right) \\
&\le \iota_n\gamma_n+\mathbb{E}\left(\|U_n-U\|_2^2|V|\mathds{1}(|V|>\iota_n)\right) \\
&\le \iota_n\gamma_n+4\varsigma^{-1}\mathbb{E}\left(\|U_n-U\|_2^2\exp(\varsigma|V|/4)\mathds{1}(|V|>\iota_n)\right) \\
&\le \iota_n\gamma_n+4\varsigma^{-1}\mathbb{E}\left(\|U_n-U\|_2^2\exp(\varsigma|V|/2)\right)\exp(-\varsigma\iota_n/4),
\end{aligned}
$$
where we have applied the inequalities that $a\le \exp(a)$ and $\mathds{1}(a>0)\le \exp(a)$. Let $U_{n, (j)}$ and $U_{(j)}$ be the $j$-th entry of $U_n$ and $U$, respectively, for $j=1, \dots, d$. Note that
$$
\begin{aligned}
\mathbb{E}\left(\|U_n-U\|_2^2\exp(\varsigma|V|/2)\right) &=\sum_{j=1}^d\mathbb{E}\left[\left(U_{n, (j)}-U_{(j)}\right)^2\exp(\varsigma|V|/2)\right] \\
&\le \sum_{j=1}^d\left[\mathbb{E}\left(U_{n, (j)}-U_{(j)}\right)^4\mathbb{E}\exp(\varsigma|V|)\right]^{1/2} \\
&\le \sum_{j=1}^d\left[8\mathbb{E}\left(U_{n, (j)}^4+U_{(j)}^4\right)\mathbb{E}\exp(\varsigma|V|)\right]^{1/2} \\
&\le d\left[8\left(\xi_n^4+\mathbb{E}\|U\|_{\infty}^4\right)\mathbb{E}\exp(\varsigma|V|)\right]^{1/2} \\
&\le c_3d(\xi_n^2+1),
\end{aligned}
$$
where $c_3$ is a constant only depending on $\mathbb{E}\|U\|_{\infty}^4$ and $\mathbb{E}\exp(\varsigma|V|)$. Therefore, we have
$$
\mathbb{E}\left(\|U_n-U\|_2^2|V|\right)\le \iota_n\gamma_n+4c_3\varsigma^{-1}d(\xi_n^2+1)\exp(-\varsigma\iota_n/4).
$$
Let $\iota_n=4\varsigma^{-1}\log n$. Then, for $n\ge 2$, it follows that
$$
\mathbb{E}\left(\|U_n-U\|_2^2|V|\right)\le 4\varsigma^{-1}\gamma_n\log n+\frac{4c_3\varsigma^{-1}d(\xi_n^2+1)}{n}.
$$
This completes the proof.
\end{proof}

\subsection{Proof of Proposition 4.3}\label{subsec: proof_of_prop_poly_dr}

\begin{proof}[Proof of Proposition 4.3]
For any $\iota_N>0$, observe that
$$
\begin{aligned}
& \mathbb{E}\left\|\hat{\theta}_N(X^t)-\theta_0(X^t)\right\|_2^2 \\
=& \mathbb{E}\left[\left\|\hat{\theta}_N(X^s)-\theta_0(X^s)\right\|_2^2\cdot r_0(X^s)\right] \\
=& \mathbb{E}\left[\left\|\hat{\theta}_N(X^s)-\theta_0(X^s)\right\|_2^2\cdot r_0(X^s)\mathds{1}(\|X^s\|_{\infty}\le \iota_N)\right] \\
& +\mathbb{E}\left[\left\|\hat{\theta}_N(X^s)-\theta_0(X^s)\right\|_2^2\cdot r_0(X^s)\mathds{1}(\|X^s\|_{\infty}>\iota_N)\right] \\
\le & G(\iota_N)\mathbb{E}\left\|\hat{\theta}_N(X^s)-\theta_0(X^s)\right\|_2^2 \\
& +\mathbb{E}\left[\left\|\hat{\theta}_N(X^s)-\theta_0(X^s)\right\|_2^2\cdot r_0(X^s)\mathds{1}(\|X^s\|_{\infty}>\iota_N)\right].
\end{aligned}
$$
Let $\hat{\theta}_{N, (j)}(X^s)$ and $\theta_{0, (j)}(X^s)$ be the $j$-th component of $\hat{\theta}_N(X^s)$ and $\theta_0(X^s)$, respectively, for $j=1, \dots, k$. By using Cauchy-Schwarz inequality twice, we have
$$
\begin{aligned}
& \mathbb{E}\left[\left\|\hat{\theta}_N(X^s)-\theta_0(X^s)\right\|_2^2\cdot r_0(X^s)\mathds{1}(\|X^s\|_{\infty}>\iota_N)\right] \\
=& \sum_{j=1}^k\mathbb{E}\left\{\left[\hat{\theta}_{N, (j)}(X^s)-\theta_{0, (j)}(X^s)\right]_2^2\cdot r_0(X^s)\mathds{1}(\|X^s\|_{\infty}>\iota_N)\right\} \\
\le & \sum_{j=1}^k\mathbb{E}\left\{\left[\hat{\theta}_{N, (j)}(X^s)-\theta_{0, (j)}(X^s)\right]_2^2\cdot r_0(X^s)\exp(\varsigma\|X^s\|_{\infty}/4)\right\}\exp(-\varsigma\iota_N/4) \\
\le & \sum_{j=1}^k\left(\mathbb{E}\left\{\left[\hat{\theta}_{N, (j)}(X^s)-\theta_{0, (j)}(X^s)\right]_2^4\exp(\varsigma\|X^s\|_{\infty}/2)\right\}\mathbb{E}\left[r_0(X^s)^2\right]\right)^{1/2}\exp(-\varsigma\iota_N/4) \\
\le & \sum_{j=1}^k\left(\left\{\mathbb{E}\left[\hat{\theta}_{N, (j)}(X^s)-\theta_{0, (j)}(X^s)\right]_2^8\mathbb{E}\exp(\varsigma\|X^s\|_{\infty})\right\}^{1/2}\mathbb{E}\left[r_0(X^s)^2\right]\right)^{1/2}\exp(-\varsigma\iota_N/4) \\
\le & k\left(\left\{128\left(\xi_N^8+\mathbb{E}\left\|\theta_0(X^s)\right\|^8\right)\mathbb{E}\exp(\varsigma\|X^s\|_{\infty})\right\}^{1/2}\mathbb{E}\left[r_0(X^s)^2\right]\right)^{1/2}\exp(-\varsigma\iota_N/4) \\
\le & c_3k\left(\xi_N^2+1\right)\exp(-\varsigma\iota_N/4),
\end{aligned}
$$
where $c_3$ is a constant only depending on $\mathbb{E}\|\theta_0(X^s)\|^8$, $\mathbb{E}\exp(\varsigma\|X^s\|_{\infty})$ and $\mathbb{E}[r_0(X^s)^2]$. Hence, let $\iota_N=4\varsigma^{-1}\log N$ and we obtain the result.
\end{proof}

\subsection{Proof of Proposition 4.4}\label{subsec: proof_of_prop_exp_dr}

\begin{proof}[Proof of Proposition 4.4]
For any $\iota_N>0$, observe that
$$
\begin{aligned}
& \mathbb{E}\left\|\hat{\theta}_N(X^t)-\theta_0(X^t)\right\|_2^2 \\
=& \mathbb{E}\left[\left\|\hat{\theta}_N(X^s)-\theta_0(X^s)\right\|_2^2\cdot r_0(X^s)\right] \\
=& \mathbb{E}\left[\left\|\hat{\theta}_N(X^s)-\theta_0(X^s)\right\|_2^2\cdot r_0(X^s)\mathds{1}(\|X^s\|_{\infty}\le \iota_N)\right] \\
& +\mathbb{E}\left[\left\|\hat{\theta}_N(X^s)-\theta_0(X^s)\right\|_2^2\cdot r_0(X^s)\mathds{1}(\|X^s\|_{\infty}>\iota_N)\right] \\
\le & G(\iota_N)\mathbb{E}\left\|\hat{\theta}_N(X^s)-\theta_0(X^s)\right\|_2^2 \\
& +\mathbb{E}\left[\left\|\hat{\theta}_N(X^s)-\theta_0(X^s)\right\|_2^2\cdot r_0(X^s)\mathds{1}(\|X^s\|_{\infty}>\iota_N)\right].
\end{aligned}
$$
Let $\hat{\theta}_{N, (j)}(X^s)$ and $\theta_{0, (j)}(X^s)$ be the $j$-th component of $\hat{\theta}_N(X^s)$ and $\theta_0(X^s)$, respectively, for $j=1, \dots, k$. By using Cauchy-Schwarz inequality twice, we have
$$
\begin{aligned}
& \mathbb{E}\left[\left\|\hat{\theta}_N(X^s)-\theta_0(X^s)\right\|_2^2\cdot r_0(X^s)\mathds{1}(\|X^s\|_{\infty}>\iota_N)\right] \\
=& \sum_{j=1}^k\mathbb{E}\left\{\left[\hat{\theta}_{N, (j)}(X^s)-\theta_{0, (j)}(X^s)\right]_2^2\cdot r_0(X^s)\mathds{1}(\|X^s\|_{\infty}>\iota_N)\right\} \\
\le & \sum_{j=1}^k\mathbb{E}\left\{\left[\hat{\theta}_{N, (j)}(X^s)-\theta_{0, (j)}(X^s)\right]_2^2\cdot r_0(X^s)\exp(\varsigma\|X^s\|_{\infty}^2/4)\right\}\exp(-\varsigma\iota_N^2/4) \\
\le & \sum_{j=1}^k\left(\mathbb{E}\left\{\left[\hat{\theta}_{N, (j)}(X^s)-\theta_{0, (j)}(X^s)\right]_2^4\exp(\varsigma\|X^s\|_{\infty}^2/2)\right\}\mathbb{E}\left[r_0(X^s)^2\right]\right)^{1/2}\exp(-\varsigma\iota_N^2/4) \\
\le & \sum_{j=1}^k\left(\left\{\mathbb{E}\left[\hat{\theta}_{N, (j)}(X^s)-\theta_{0, (j)}(X^s)\right]_2^8\mathbb{E}\exp(\varsigma\|X^s\|_{\infty}^2)\right\}^{1/2}\mathbb{E}\left[r_0(X^s)^2\right]\right)^{1/2}\exp(-\varsigma\iota_N^2/4) \\
\le & k\left(\left\{128\left(\xi_N^8+\mathbb{E}\left\|\theta_0(X^s)\right\|^8\right)\mathbb{E}\exp(\varsigma\|X^s\|_{\infty}^2)\right\}^{1/2}\mathbb{E}\left[r_0(X^s)^2\right]\right)^{1/2}\exp(-\varsigma\iota_N^2/4) \\
\le & c_3k\left(\xi_N^2+1\right)\exp(-\varsigma\iota_N^2/4),
\end{aligned}
$$
where $c_3$ is a constant only depending on $\mathbb{E}\|\theta_0(X^s)\|^8$, $\mathbb{E}\exp(\varsigma\|X^s\|_{\infty}^2)$ and $\mathbb{E}[r_0(X^s)^2]$. Hence, let $\iota_N=2(\varsigma^{-1}\log N)^{1/2}$ and we obtain the result.
\end{proof}

\subsection{Proof of Theorem 5.1}\label{subsec: proof_of_thm_reg_convergence_rate}

For any $X^s$-square-integrable function $f: \mathbb{R}^{d_x}\to\mathbb{R}^{d_y}$ such that $\mathbb{E}\|f(X^s)\|_2^2<\infty$, define
$$
\begin{aligned}
K^{\mathrm{reg}}(f) &= \mathbb{E}_P\|Y-f(X)\|_2^2, \\
K^{\mathrm{reg}}_N(f) &= \frac 1N\sum_{i=1}^N\left\|Y^s_i-f(X^s_i)\right\|_2^2.
\end{aligned}
$$
Here, $\mathbb{E}_P[h(X, Y)]\equiv \mathbb{E}[h(X^s, Y^s)]$ for any $(X^s, Y^s)$-integrable function $h$, where the expectation is taken with respect to $(X^s, Y^s)$.

\begin{lemma}\label{lem: reg_error_decomposition}
Assume that $\|Y^s\|_{\infty}$ and $\|f_0(X^s)\|_{\infty}$ attain a finite second moment. Then,
$$
\begin{aligned}
& \mathbb{E}\|\hat{f}^s_N(X^s)-f_0(X^s)\|_2^2 \\
\le & \mathbb{E}[K^{\mathrm{reg}}(\hat{f}^s_N)-2K^{\mathrm{reg}}_N(\hat{f}^s_N)+K^{\mathrm{reg}}(f_0)]+2\inf_{f\in\mathcal{F}_{\mathrm{NN}}^{d_y}}\mathbb{E}_P\|f(X)-f_0(X)\|_2^2.
\end{aligned}
$$
\end{lemma}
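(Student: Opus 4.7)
The plan is to mimic the error decomposition already carried out in Lemma B.1 for the least squares density ratio estimator, adapting it to the vector-valued regression setting. The key algebraic identity is that for any $X^s$-square-integrable $f$,
$$
K^{\mathrm{reg}}(f) - K^{\mathrm{reg}}(f_0) = \mathbb{E}_P\|f(X) - f_0(X)\|_2^2,
$$
which follows by expanding $\|Y^s - f(X^s)\|_2^2 = \|f_0(X^s) - f(X^s) + \varepsilon^s\|_2^2$ and using $\mathbb{E}(\varepsilon^s\mid X^s)=0$ together with $\mathrm{Var}(\varepsilon^s\mid X^s)=\Xi_0$ to kill the cross term and produce a common additive constant $\mathrm{tr}(\Xi_0)$ that cancels in the difference. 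The finite second-moment hypotheses on $\|Y^s\|_\infty$ and $\|f_0(X^s)\|_\infty$, combined with the uniform bound $[\underline{\delta},\bar{\delta}]^{d_y}$ on elements of $\mathcal{F}_{\mathrm{NN}}^{d_y}$, ensure that $K^{\mathrm{reg}}(f)$, $K^{\mathrm{reg}}_N(f)$ and $\mathbb{E}_P\|f(X)-f_0(X)\|_2^2$ are finite, so every manipulation below is justified.

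With this identity in hand, the left-hand side equals $\mathbb{E}[K^{\mathrm{reg}}(\hat{f}^s_N) - K^{\mathrm{reg}}(f_0)]$. For any fixed $f \in \mathcal{F}_{\mathrm{NN}}^{d_y}$, the empirical risk minimization property gives $K^{\mathrm{reg}}_N(\hat{f}^s_N) \le K^{\mathrm{reg}}_N(f)$, so adding the nonnegative quantity $2\,\mathbb{E}[K^{\mathrm{reg}}_N(f) - K^{\mathrm{reg}}_N(\hat{f}^s_N)]$ on the right produces
$$
\mathbb{E}\|\hat{f}^s_N(X^s) - f_0(X^s)\|_2^2 \le \mathbb{E}[K^{\mathrm{reg}}(\hat{f}^s_N) - 2 K^{\mathrm{reg}}_N(\hat{f}^s_N) + K^{\mathrm{reg}}(f_0)] + 2\,\mathbb{E}[K^{\mathrm{reg}}_N(f) - K^{\mathrm{reg}}(f_0)].
$$
Since $f$ is nonrandom, $\mathbb{E}K^{\mathrm{reg}}_N(f) = K^{\mathrm{reg}}(f)$, and the first identity converts $2[K^{\mathrm{reg}}(f) - K^{\mathrm{reg}}(f_0)]$ into $2\,\mathbb{E}_P\|f(X) - f_0(X)\|_2^2$.

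The proof concludes by taking the infimum of the resulting bound over $f \in \mathcal{F}_{\mathrm{NN}}^{d_y}$, yielding exactly the stated oracle-type inequality. There is no real obstacle: this is a routine oracle inequality whose structure is identical to that of Lemma B.1, with the only novelty being the cancellation of the noise contribution $\mathrm{tr}(\Xi_0)$ in the regression risk. The sole point deserving care is the integrability check mentioned above, which ensures that the expectations are well defined and that adding and subtracting $K^{\mathrm{reg}}(f_0)$ and $K^{\mathrm{reg}}_N(f_0)$ is legitimate.
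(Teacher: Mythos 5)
Your proposal is correct and follows essentially the same route as the paper: rewrite the excess risk via the identity $K^{\mathrm{reg}}(f)-K^{\mathrm{reg}}(f_0)=\mathbb{E}_P\|f(X)-f_0(X)\|_2^2$ (valid since $\mathbb{E}(\varepsilon^s\mid X^s)=0$ kills the cross term), add the nonnegative ERM term $2\mathbb{E}[K^{\mathrm{reg}}_N(f)-K^{\mathrm{reg}}_N(\hat{f}^s_N)]$, use $\mathbb{E}K^{\mathrm{reg}}_N(f)=K^{\mathrm{reg}}(f)$ for nonrandom $f$, and take the infimum over $\mathcal{F}_{\mathrm{NN}}^{d_y}$. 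The only cosmetic difference is that you add and subtract $K^{\mathrm{reg}}(f_0)$ directly where the paper adds and subtracts $K^{\mathrm{reg}}_N(f_0)$, which coincide in expectation, so the arguments are interchangeable.
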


\begin{proof}[Proof of Lemma \ref{lem: reg_error_decomposition}]
Given that $\|Y^s\|_{\infty}$ and $\|f_0(X^s)\|_{\infty}$ have a finite second moment, we have $\mathbb{E}\|Y^s\|_2^2<\infty$ and $\mathbb{E}\|f_0(X^s)\|_2^2<\infty$. For any $f\in\mathcal{F}_{\mathrm{NN}}^{d_y}$, we have
$$
\begin{aligned}
& \mathbb{E}\|\hat{f}^s_N(X^s)-f_0(X^s)\|_2^2 \\
=& \mathbb{E}[K^{\mathrm{reg}}(\hat{f}^s_N)-K^{\mathrm{reg}}(f_0)] \\
\le & \mathbb{E}[K^{\mathrm{reg}}(\hat{f}^s_N)-K^{\mathrm{reg}}(f_0)]+2\mathbb{E}[K^{\mathrm{reg}}_N(f)-K^{\mathrm{reg}}_N(\hat{f}^s_N)] \\
=& \mathbb{E}[K^{\mathrm{reg}}(\hat{f}^s_N)-K^{\mathrm{reg}}(f_0)]+2\mathbb{E}[K^{\mathrm{reg}}_N(f)-K^{\mathrm{reg}}_N(f_0)+K^{\mathrm{reg}}_N(f_0)-K^{\mathrm{reg}}_N(\hat{f}^s_N)] \\
=& \mathbb{E}[K^{\mathrm{reg}}(\hat{f}^s_N)-2K^{\mathrm{reg}}_N(\hat{f}^s_N)+K^{\mathrm{reg}}(f_0)]+2[K^{\mathrm{reg}}(f)-K^{\mathrm{reg}}(f_0)] \\
=& \mathbb{E}[K^{\mathrm{reg}}(\hat{f}^s_N)-2K^{\mathrm{reg}}_N(\hat{f}^s_N)+K^{\mathrm{reg}}(f_0)]+2\mathbb{E}_P\|f(X)-f_0(X)\|_2^2.
\end{aligned}
$$
This indicates that $\mathcal{R}^s(\hat{f}^s_N)\le \mathbb{E}[K^{\mathrm{reg}}(\hat{f}^s_N)-2K^{\mathrm{reg}}_N(\hat{f}^s_N)+K^{\mathrm{reg}}(f_0)]+2\inf_{f\in\mathcal{F}_{\mathrm{NN}}^{d_y}}\mathbb{E}_P\|f(X)-f_0(X)\|_2^2$.
\end{proof}

\begin{lemma}\label{lem: reg_stochatic_error_bound}
Assume that $\|Y^s\|_{\infty}$ is sub-exponentially distributed. Let $\bar{\delta}=\bar{\delta}_N=(\log N)^{1+\kappa}$, with an arbitrarily fixed $\kappa\in (0, 1]$, and let $\underline{\delta}=\underline{\delta}_N=-(\log N)^{1+\kappa}$. Then, for sufficiently large $N$ and $N\ge \mathrm{Pdim}(\mathcal{F}_{\mathrm{NN}})$, it follows that
$$
\mathbb{E}[K^{\mathrm{reg}}(\hat{f}^s_N)-2K^{\mathrm{reg}}_N(\hat{f}^s_N)+K^{\mathrm{reg}}(f_0)]\le \frac{c^*SL\log S(\log N)^{5+4\kappa}}{N},
$$
where $c^*$ is a constant not depending on $S, L$ and $N$.
\end{lemma}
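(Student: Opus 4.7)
The plan is to mirror the argument used for Lemma A.3 (the least squares density ratio estimator), with appropriate modifications to handle the regression setting with multivariate response. First I would rewrite the quantity of interest as a centered empirical process. Let $T_i=(X^s_i,Y^s_i)$, $T=(X^s,Y^s)$, and
$$
g(f,T)=\|Y^s-f(X^s)\|_2^2-\|Y^s-f_0(X^s)\|_2^2.
$$
Using $\mathbb{E}(\varepsilon^s|X^s)=0$ one directly verifies that $\mathbb{E}[g(f,T)]=\mathbb{E}_P\|f(X)-f_0(X)\|_2^2$ and $K^{\mathrm{reg}}(f)-K^{\mathrm{reg}}(f_0)=\mathbb{E}[g(f,T)]$. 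Hence
$$
\mathbb{E}[K^{\mathrm{reg}}(\hat{f}^s_N)-2K^{\mathrm{reg}}_N(\hat{f}^s_N)+K^{\mathrm{reg}}(f_0)]
=\mathbb{E}_{\mathcal{D}_N}\Bigl\{\mathbb{E}_T[g(\hat{f}^s_N,T)]-\tfrac{2}{N}\sum_{i=1}^Ng(\hat{f}^s_N,T_i)\Bigr\}.
$$

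Next I would truncate the response. Since $\|Y^s\|_\infty$ is sub-exponential, the event $\{\|Y^s\|_\infty>\iota_N\}$ has probability at most $c\exp(-\varsigma\iota_N)$, and I would replace $g$ by $g_{\iota_N}$ obtained by multiplying each term by $\mathds{1}(\|Y^s\|_\infty\le\iota_N)$. Choosing $\iota_N=(2\varsigma^{-1}\log N)\vee 1$, the expected difference between the empirical processes built from $g$ and $g_{\iota_N}$ is absorbed into an $\mathcal{O}(\bar{\delta}_N^2/N)$ term, using moment control of $\|Y^s\|_\infty^2$ and the uniform bound $\|f\|_\infty\le \bar{\delta}_N\sqrt{d_y}$ for $f\in\mathcal{F}_{\mathrm{NN}}^{d_y}$.

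Then I would verify the hypotheses of Theorem A.9. For sufficiently large $N$ one has $\bar{\delta}_N\ge \iota_N$, whence $\sup_{f,T}|g_{\iota_N}(f,T)|\lesssim d_y\bar{\delta}_N^2$. The crucial variance-to-mean bound follows from expanding the squares and using $\mathbb{E}(\varepsilon^s|X^s)=0$:
$$
\mathbb{E}[g_{\iota_N}(f,T)^2]\lesssim d_y\bar{\delta}_N^2\,\mathbb{E}_P\bigl[\|f(X)-f_0(X)\|_2^2\,\mathds{1}(\|Y\|_\infty\le\iota_N)\bigr]\lesssim d_y\bar{\delta}_N^2\,\mathbb{E}[g_{\iota_N}(f,T)],
$$
which matches condition (ii) of the theorem with $\zeta_N\asymp (\log N)^{2+2\kappa}$. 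Applying Theorem A.9 at level $t$ and controlling the associated covering number by $\mathcal{N}_{2N}(c\,t/[d_y(\bar{\delta}_N+\iota_N)],\|\cdot\|_\infty,\mathcal{F}_{\mathrm{NN}})$ via Lemmas A.7 and A.8 yields the tail bound
$$
\mathbb{P}\Bigl\{\text{excess}\ge t\Bigr\}\le 14\bigl(c_4N^2\bar{\delta}_N^2\bigr)^{c_5 SL\log S}\exp\Bigl(-\tfrac{Nt}{c_6(\log N)^{4+4\kappa}}\Bigr).
$$

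Finally, integrating this tail in the standard way with threshold $a_N=\tfrac{c_5c_6(\log N)^{4+4\kappa}}{N}SL\log S\,\log(c_4N^2\bar{\delta}_N^2)$ produces the stated rate $\tfrac{c^*SL\log S(\log N)^{5+4\kappa}}{N}$, the extra factor $\log N$ stemming from $\log(N^2\bar{\delta}_N^2)$. The main obstacle, and the place that requires the most care, is the variance-to-mean inequality: because the response is multivariate, the cross-terms in expanding $\|Y-f(X)\|_2^2-\|Y-f_0(X)\|_2^2$ involve inner products with the noise $\varepsilon^s$, so one needs to use $\mathbb{E}[\varepsilon^s(\varepsilon^s)^\top|X^s]=\Xi_0$ together with the sub-exponential moment of $\|Y\|_\infty$ to transfer squared-loss contributions from $Y$ into $f_0(X)$ and obtain a clean constant multiple of $\mathbb{E}[g_{\iota_N}(f,T)]$. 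Everything else follows the template of Lemma A.3 with only cosmetic changes.
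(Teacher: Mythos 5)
Your overall architecture matches the paper's: reduce to the centered empirical process for $g(f,T)=\|Y^s-f(X^s)\|_2^2-\|Y^s-f_0(X^s)\|_2^2$, truncate using the sub-exponential tail of $\|Y^s\|_\infty$ with $\iota_N\asymp\log N$, verify the boundedness and variance-to-mean conditions of Theorem \ref{thm: gen_thm_11.4_gyorfi}, bound the covering number through Lemmas \ref{lem: thm_12.2_anthony} and \ref{lem: thm_7_bartlett}, and integrate the tail with the threshold $a_N$. However, there is a genuine gap at exactly the step you flag as "the crucial variance-to-mean bound." You truncate by multiplying the loss difference by $\mathds{1}(\|Y^s\|_\infty\le\iota_N)$ while keeping $f_0(X^s)$ as the reference point, and then invoke $\mathbb{E}(\varepsilon^s|X^s)=0$ to kill the cross terms. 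But once the indicator is inserted, the orthogonality is destroyed: $\mathbb{E}\bigl[(Y^s-f_0(X^s))\mathds{1}(\|Y^s\|_\infty\le\iota_N)\,\big|\,X^s\bigr]\neq 0$ in general, so $\mathbb{E}[g_{\iota_N}(f,T)]$ equals $\mathbb{E}_P[\|f(X)-f_0(X)\|_2^2\mathds{1}(\cdot)]$ only up to a bias term $2\,\mathbb{E}\langle b(X),f_0(X)-f(X)\rangle$ with $b(X)=-\mathbb{E}[(Y-f_0(X))\mathds{1}(\|Y\|_\infty>\iota_N)|X]$. For $f$ with $\mathbb{E}\|f-f_0\|_2^2$ comparable to or smaller than $\mathbb{E}\|b\|_2^2$, this bias can make $\mathbb{E}[g_{\iota_N}(f,T)]$ much smaller than the squared distance, or even negative, while $\mathbb{E}[g_{\iota_N}(f,T)^2]$ stays positive; hence condition (ii) of Theorem \ref{thm: gen_thm_11.4_gyorfi}, which must hold for \emph{every} $f\in\mathcal{F}_{\mathrm{NN}}^{d_y}$, is not verified by your argument and can fail. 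Appealing to $\mathbb{E}[\varepsilon^s(\varepsilon^s)^\top|X^s]=\Xi_0$ does not repair this, since the problem is the conditional first moment of the truncated noise, not its second moment.

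The paper avoids this by re-centering the truncated loss: working coordinatewise, it sets $U=Y^s_{(j)}\mathds{1}(\|Y^s\|_\infty\le\iota_N)$ and $V=\mathbb{E}[U|X^s]$, and defines $g_{j,\iota_N}(h,T)=[U-h(X^s)]^2-(U-V)^2$, so that exactly $\mathbb{E}[g_{j,\iota_N}(h,T)]=\mathbb{E}[V-h(X^s)]^2\ge 0$ and $\mathbb{E}[g_{j,\iota_N}(h,T)^2]\le(3\iota_N+\bar{\delta}_N)^2\,\mathbb{E}[g_{j,\iota_N}(h,T)]$; the discrepancy between $g_j$ and $g_{j,\iota_N}$ is then controlled in expectation by the sub-exponential tail and absorbed into an $\mathcal{O}(\bar{\delta}_N/N)$ term. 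Adopting this re-centering (rather than indicator-only truncation) is the missing idea; with it, the rest of your outline goes through. Two minor points: the covering number here should be $\mathcal{N}_N$, not $\mathcal{N}_{2N}$ (there is a single sample of size $N$, unlike the density-ratio setting), and the coordinatewise reduction also lets you cover the scalar class $\mathcal{F}_{\mathrm{NN}}$ per coordinate, which is how the paper keeps the pseudo-dimension bound clean.
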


\begin{proof}[Proof of Lemma \ref{lem: reg_stochatic_error_bound}]
Let $T_i=(X^s_i, Y^s_i)$ for $i=1, \dots, N$, $\mathcal{D}_N=\{T_1, \dots, T_N\}$, and $T=(X^s, Y^s)$ be an independent copy of $T_1$. Firstly, we have
$$
\begin{aligned}
& \mathbb{E}[K^{\mathrm{reg}}(\hat{f}^s_N)-2K^{\mathrm{reg}}_N(\hat{f}^s_N)+K^{\mathrm{reg}}(f_0)] \\
=& \mathbb{E}_{\mathcal{D}_N}[K^{\mathrm{reg}}(\hat{f}^s_N)-2K^{\mathrm{reg}}_N(\hat{f}^s_N)+K^{\mathrm{reg}}(f_0)] \\
=& \mathbb{E}_{\mathcal{D}_N}\{K^{\mathrm{reg}}(\hat{f}^s_N)-K^{\mathrm{reg}}(f_0)-2[K^{\mathrm{reg}}_N(\hat{f}^s_N)-K^{\mathrm{reg}}_N(f_0)]\} \\
=& \mathbb{E}_{\mathcal{D}_N}\left\{\mathbb{E}_{T}[g(\hat{f}^s_N, T)]-\frac 2N\sum_{i=1}^Ng(\hat{f}^s_N, T_i)\right\},
\end{aligned}
$$
where $g(f, T)=\|Y^s-f(X^s)\|_2^2-\|Y^s-f_0(X^s)\|_2^2$ for $f\in\mathcal{F}_{\mathrm{NN}}^{d_y}$. For a $d_y$-dimensional vector $v$, denote its $j$-th component as $v_{(j)}$; additionally, we denote the $j$-th output coordinate of a function $f: \mathbb{R}^{d_x}\to\mathbb{R}^{d_y}$ as $f_{(j)}$, with $j\in\{1, \dots, d_y\}$. Furthermore, for any measurable function $h: \mathbb{R}^{d_x}\to\mathbb{R}$, define
$$
g_j(h, T)=[Y^s_{(j)}-h(X^s)]^2-[Y^s_{(j)}-f_0(X^s)_{(j)}]^2, \quad \text{for } j=1, \dots, d_y.
$$
It is then clear that $g(f, T)=\sum_{j=1}^{d_y}g_j(f_{(j)}, T)$. Hence, we obtain
$$
\begin{aligned}
& \mathbb{E}[K^{\mathrm{reg}}(\hat{f}^s_N)-2K^{\mathrm{reg}}_N(\hat{f}^s_N)+K^{\mathrm{reg}}(f_0)] \\
=& \mathbb{E}_{\mathcal{D}_N}\left\{\mathbb{E}_{T}[g(\hat{f}^s_N, T)]-\frac 2N\sum_{i=1}^Ng(\hat{f}^s_N, T_i)\right\} \\
=& \sum_{j=1}^{d_y}\mathbb{E}_{\mathcal{D}_N}\left\{\mathbb{E}_{T}[g_j(\hat{f}^s_{N, (j)}, T)]-\frac 2N\sum_{i=1}^Ng_j(\hat{f}^s_{N, (j)}, T_i)\right\}.
\end{aligned}
$$
Subsequently, let us fix an arbitrary $j\in\{1, \dots, d_y\}$. For any $\iota_N>0$, we let $U=Y^s_{(j)}\mathds{1}(\|Y^s\|_{\infty}\le \iota_N)$ and $V=\mathbb{E}[Y^s_{(j)}\mathds{1}(\|Y^s\|_{\infty}\le \iota_N)|X^s]$. Then, for any measurable function $h: \mathbb{R}^{d_x}\to\mathbb{R}$, define
$$
g_{j, \iota_N}(h, T)=[U-h(X^s)]^2-(U-V)^2=[V-h(X^s)][2U-h(X^s)-V].
$$
It follows that
$$
\begin{aligned}
& |g_j(f_{(j)}, T)-g_{j, \iota_N}(f_{(j)}, T)| \\
=& \left|[Y^s_{(j)}-f(X^s)_{(j)}]^2-[U-f(X^s)_{(j)}]^2-[Y^s_{(j)}-f_0(X^s)_{(j)}]^2+(U-V)^2\right| \\
\le & \left|(Y^s_{(j)}-U)[Y^s_{(j)}+U-2f(X^s)_{(j)}]\right| \\
& +\left|[Y^s_{(j)}-U-f_0(X^s)_{(j)}+V][Y^s_{(j)}+U-f_0(X^s)_{(j)}-V]\right| \\
\le & \left|Y^s_{(j)}\mathds{1}(\|Y^s\|_{\infty}>\iota_N)[Y^s_{(j)}+U-2f(X^s)_{(j)}]\right| \\
& +\left|Y^s_{(j)}\mathds{1}(\|Y^s\|_{\infty}>\iota_N)[Y^s_{(j)}+U-f_0(X^s)_{(j)}-V]\right| \\
& +\left|\mathbb{E}[Y^s_{(j)}\mathds{1}(\|Y^s\|_{\infty}>\iota_N)|X^s][Y^s_{(j)}+U-f_0(X^s)_{(j)}-V]\right| \\
\le & \left(2\|Y^s\|_{\infty}+\|f_0(X^s)\|_{\infty}+3\iota_N+2\bar{\delta}_N\right)\|Y^s\|_{\infty}\mathds{1}(\|Y^s\|_{\infty}>\iota_N) \\
& +\left(\|Y^s\|_{\infty}+\|f_0(X^s)\|_{\infty}+2\iota_N\right)\mathbb{E}\left[\|Y^s\|_{\infty}\mathds{1}(\|Y^s\|_{\infty}>\iota_N)|X^s\right].
\end{aligned}
$$
Taking expectation, we have
$$
\begin{aligned}
& \mathbb{E}_{\mathcal{D}_N}\left\{\mathbb{E}_{T}[g_j(\hat{f}^s_{N, (j)}, T)]-\frac 2N\sum_{i=1}^Ng_j(\hat{f}^s_{N, (j)}, T_i)\right\} \\
\le & \mathbb{E}_{\mathcal{D}_N}\left\{\mathbb{E}_{T}[g_{j, \iota_N}(\hat{f}^s_{N, (j)}, T)]-\frac 2N\sum_{i=1}^Ng_{j, \iota_N}(\hat{f}^s_{N, (j)}, T_i)\right\} \\
& +3\mathbb{E}_P\left[\left(2\|Y\|_{\infty}+\|f_0(X)\|_{\infty}+3\iota_N+2\bar{\delta}_N\right)\|Y\|_{\infty}\mathds{1}(\|Y\|_{\infty}>\iota_N)\right] \\
& +3\mathbb{E}_P\left\{\left(\|Y\|_{\infty}+\|f_0(X)\|_{\infty}+2\iota_N\right)\mathbb{E}_P\left[\|Y\|_{\infty}\mathds{1}(\|Y\|_{\infty}>\iota_N)|X\right]\right\}.
\end{aligned}
$$
Specifically, observe that
$$
\begin{aligned}
& \mathbb{E}_P\left[\left(2\|Y\|_{\infty}+\|f_0(X)\|_{\infty}+3\iota_N+2\bar{\delta}_N\right)\|Y\|_{\infty}\mathds{1}(\|Y\|_{\infty}>\iota_N)\right] \\
=& 2\mathbb{E}_P\left[\|Y\|_{\infty}^2\mathds{1}(\|Y\|_{\infty}>\iota_N)\right]+\mathbb{E}_P\left[\|f_0(X)\|_{\infty}\|Y\|_{\infty}\mathds{1}(\|Y\|_{\infty}>\iota_N)\right] \\
& +(3\iota_N+2\bar{\delta}_N)\mathbb{E}_P\left[\|Y\|_{\infty}\mathds{1}(\|Y\|_{\infty}>\iota_N)\right] \\
\le & \frac{32}{\varsigma^2}\mathbb{E}_P[\exp(\varsigma\|Y\|_{\infty})]\exp(-\varsigma\iota_N/2)+\frac{4}{\varsigma}\mathbb{E}_P[\|f_0(X)\|_{\infty}\exp(\varsigma\|Y\|_{\infty}/2)]\exp(-\varsigma\iota_N/4) \\
& +\frac{2}{\varsigma}(3\iota_N+2\bar{\delta}_N)\mathbb{E}_P[\exp(\varsigma\|Y\|_{\infty})]\exp(-\varsigma\iota_N/2) \\
\le & \frac{32}{\varsigma^2}\mathbb{E}_P[\exp(\varsigma\|Y\|_{\infty})]\exp(-\varsigma\iota_N/2) \\
& +\frac{4}{\varsigma}\left\{\mathbb{E}_P(\|f_0(X)\|_{\infty}^2)\mathbb{E}_P[\exp(\varsigma\|Y\|_{\infty})]\right\}^{1/2}\exp(-\varsigma\iota_N/4) \\
& +\frac{2}{\varsigma}(3\iota_N+2\bar{\delta}_N)\mathbb{E}_P[\exp(\varsigma\|Y\|_{\infty})]\exp(-\varsigma\iota_N/2),
\end{aligned}
$$
and
$$
\begin{aligned}
& \mathbb{E}_P\left\{\left(\|Y\|_{\infty}+\|f_0(X)\|_{\infty}+2\iota_N\right)\mathbb{E}_P\left[\|Y\|_{\infty}\mathds{1}(\|Y\|_{\infty}>\iota_N)|X\right]\right\} \\
=& \mathbb{E}_P\left\{\|Y\|_{\infty}\mathbb{E}_P\left[\|Y\|_{\infty}\mathds{1}(\|Y\|_{\infty}>\iota_N)|X\right]\right\}+\mathbb{E}_P\left[\|f_0(X)\|_{\infty}\|Y\|_{\infty}\mathds{1}(\|Y\|_{\infty}>\iota_N)\right] \\
& +2\iota_N\mathbb{E}_P\left[\|Y\|_{\infty}\mathds{1}(\|Y\|_{\infty}>\iota_N)\right] \\
\le & \left\{\mathbb{E}_P(\|Y\|_{\infty}^2)\mathbb{E}_P[\|Y\|_{\infty}^2\mathds{1}(\|Y\|_{\infty}>\iota_N)]\right\}^{1/2}+\mathbb{E}_P\left[\|f_0(X)\|_{\infty}\|Y\|_{\infty}\mathds{1}(\|Y\|_{\infty}>\iota_N)\right] \\
& +2\iota_N\mathbb{E}_P\left[\|Y\|_{\infty}\mathds{1}(\|Y\|_{\infty}>\iota_N)\right] \\
\le & \frac{4}{\varsigma}\left\{\mathbb{E}_P(\|Y\|_{\infty}^2)\mathbb{E}_P[\exp(\varsigma\|Y\|_{\infty})]\right\}^{1/2}\exp(-\varsigma\iota_N/4) \\
& +\frac{4}{\varsigma}\left\{\mathbb{E}_P(\|f_0(X)\|_{\infty}^2)\mathbb{E}_P[\exp(\varsigma\|Y\|_{\infty})]\right\}^{1/2}\exp(-\varsigma\iota_N/4) \\
& +\frac{4}{\varsigma}\iota_N\mathbb{E}_P[\exp(\varsigma\|Y\|_{\infty})]\exp(-\varsigma\iota_N/2).
\end{aligned}
$$
Here, we have applied the inequalities $a\le \exp(a)$ and $\mathds{1}(a>0)\le \exp(a)$ for $a\in\mathbb{R}$. Note that $\mathbb{E}_P\|f_0(X)\|_{\infty}^2<\infty$ since $\|Y^s\|_{\infty}$ has a finite second moment and $f_0(X^s)=\mathbb{E}(Y^s|X^s)$. As a consequence, we obtain
$$
\begin{aligned}
& \mathbb{E}_{\mathcal{D}_N}\left\{\mathbb{E}_{T}[g_j(\hat{f}^s_{N, (j)}, T)]-\frac 2N\sum_{i=1}^Ng_j(\hat{f}^s_{N, (j)}, T_i)\right\} \\
\le & \mathbb{E}_{\mathcal{D}_N}\left\{\mathbb{E}_{T}[g_{j, \iota_N}(\hat{f}^s_{N, (j)}, T)]-\frac 2N\sum_{i=1}^Ng_{j, \iota_N}(\hat{f}^s_{N, (j)}, T_i)\right\}+c_1(\iota_N+\bar{\delta}_N+1)\exp(-\varsigma\iota_N/4),
\end{aligned}
$$
where $c_1$ is a constant which depends only on $\varsigma$ and $\mathbb{E}_P[\exp(\varsigma\|Y\|_{\infty})]$. Recall that $\bar{\delta}_N=(\log N)^{1+\kappa}$. Set $\iota_N$ to $(4\varsigma^{-1}\log N)\vee 1$. Then, for sufficiently large $N$ such that $\bar{\delta}_N\ge \iota_N$, it holds that
$$
\begin{aligned}
& \mathbb{E}_{\mathcal{D}_N}\left\{\mathbb{E}_{T}[g_j(\hat{f}^s_{N, (j)}, T)]-\frac 2N\sum_{i=1}^Ng_j(\hat{f}^s_{N, (j)}, T_i)\right\} \\
\le & \mathbb{E}_{\mathcal{D}_N}\left\{\mathbb{E}_{T}[g_{j, \iota_N}(\hat{f}^s_{N, (j)}, T)]-\frac 2N\sum_{i=1}^Ng_{j, \iota_N}(\hat{f}^s_{N, (j)}, T_i)\right\}+3c_1(\log N)^{1+\kappa}N^{-1}.
\end{aligned}
$$
Furthermore, we proceed to verify the conditions in Theorem \ref{thm: gen_thm_11.4_gyorfi}. Notice that
$$
\sup_{f\in\mathcal{F}_{\mathrm{NN}}^{d_y}, T\in\mathbb{R}^{d_x+d_y}}|g_{j, \iota_N}(f_{(j)}, T)|\le 6\iota_N^2+2\bar{\delta}_N^2\le 8\bar{\delta}_N^2=8(\log N)^{2+2\kappa},
$$
whenever $\bar{\delta}_N\ge \iota_N$. Furthermore, for any $f\in\mathcal{F}_{\mathrm{NN}}^{d_y}$,
$$
\begin{aligned}
\mathbb{E}[g_{j, \iota_N}(f_{(j)}, T)] &= \mathbb{E}\left\{[V-f(X^s)_{(j)}][2U-f(X^s)_{(j)}-V]\right\} \\
&= \mathbb{E}[V-f(X^s)_{(j)}]^2,
\end{aligned}
$$
and
$$
\begin{aligned}
\mathbb{E}[g_{j, \iota_N}(f, T)^2] &= \mathbb{E}\left\{[V-f(X^s)_{(j)}]^2[2U-f(X^s)_{(j)}-V]^2\right\} \\
&\le (3\iota_N+\bar{\delta}_N)^2\mathbb{E}[V-f(X^s)_{(j)}]^2 \\
&\le 16\bar{\delta}_N^2\mathbb{E}[g_{j, \iota_N}(f_{(j)}, T)] \\
&= 16(\log N)^{2+2\kappa}\mathbb{E}[g_{j, \iota_N}(f_{(j)}, T)],
\end{aligned}
$$
provided that $N$ is sufficiently large such that $\bar{\delta}_N\ge \iota_N$. Hence, Theorem \ref{thm: gen_thm_11.4_gyorfi} suggests that, for sufficiently large $N$ such that $\bar{\delta}_N\ge \iota_N$, with $N\ge \mathrm{Pdim}(\mathcal{F}_{\mathrm{NN}})$, and for arbitrary $t>0$, we have
$$
\begin{aligned}
& \mathbb{P}_{\mathcal{D}_N}\left\{\mathbb{E}_{T}[g_{j, \iota_N}(\hat{f}^s_{N, (j)}, T)]-\frac 2N\sum_{i=1}^Ng_{j, \iota_N}(\hat{f}^s_{N, (j)}, T_i)\ge t\right\} \\
\le & \mathbb{P}_{\mathcal{D}_N}\left\{\mathbb{E}_{T}[g_{j, \iota_N}(\hat{f}^s_{N, (j)}, T)]-\frac 1N\sum_{i=1}^Ng_{j, \iota_N}(\hat{f}^s_{N, (j)}, T_i)\ge \frac 12\left\{\frac t2+\frac t2+\mathbb{E}_{T}[g_{j, \iota_N}(\hat{f}^s_{N, (j)}, T)]\right\}\right\} \\
\le & \mathbb{P}_{\mathcal{D}_N}\left(\exists h\in\mathcal{F}_{\mathrm{NN}}: \mathbb{E}[g_{j, \iota_N}(h, T)]-\frac 1N\sum_{i=1}^Ng_{j, \iota_N}(h, T_i)\ge \frac 12\left\{\frac t2+\frac t2+\mathbb{E}_{T}[g_{j, \iota_N}(h, T)]\right\}\right) \\
\le & 14\mathcal{N}_N\left(c_2t, \|\cdot \|_{\infty}, \{g_{j, \iota_N}(h, \cdot): \mathbb{R}^{d_x}\times \mathbb{R}^{d_y}\to\mathbb{R}, h\in\mathcal{F}_{\mathrm{NN}}\}\right)\exp\left(-\frac{Nt}{c_3(\log N)^{4+4\kappa}}\right),
\end{aligned}
$$
where $c_2, c_3$ are universal constants. Subsequently, we bound the covering number. Fix $\{x_1, \dots, x_N\}\subset (\mathcal{R}^{d_x})^N$ and $\{y_1, \dots, y_N\}\subset (\mathcal{R}^{d_y})^N$. Let $\mathcal{C}=\{x_1, \dots, x_N\}$, and let $w^{\sharp}=\{w_1, \dots, w_k\}$ be an $\epsilon$-covering set of $\mathcal{F}_{\mathrm{NN}|\mathcal{C}}$ where $w_i=h_{i|\mathcal{C}}$ for some $h_i\in\mathcal{F}_{\mathrm{NN}}\ (i=1, \dots, k)$, such that for any $h\in \mathcal{F}_{\mathrm{NN}}$, there exists $w^*=h^*_{|\mathcal{C}}\in w^{\sharp}$ satisfying $\|w^*-h_{|\mathcal{C}}\|_{\infty}<\epsilon$. This indicates
$$
\begin{aligned}
& |g_{j, \iota_N}(h, (x_i, y_i))-g_{j, \iota_N}(h^*, (x_i, y_i))| \\
\le & |h^*(x_i)-h(x_i)|\cdot\left|2y_{i, (j)}\mathds{1}(\|y_i\|_{\infty}\le \iota_N)-h^*(x_i)-h(x_i)\right| \\
\le & 2(\iota_N+\bar{\delta}_N)\epsilon \\
\le & 4\bar{\delta}_N\epsilon,
\end{aligned}
$$
whenever $\bar{\delta}_N\ge \iota_N$. Therefore,
$$
\mathcal{N}_N(c_2t, \|\cdot \|_{\infty}, \{g_{j, \iota_N}(h, \cdot): \mathbb{R}^{d_x}\times \mathbb{R}^{d_y}\to\mathbb{R}, h\in\mathcal{F}_{\mathrm{NN}}\})\le \mathcal{N}_N(c_2t/(4\bar{\delta}_N), \|\cdot \|_{\infty}, \mathcal{F}_{\mathrm{NN}}).
$$
Then, with Lemma \ref{lem: thm_12.2_anthony} and Lemma \ref{lem: thm_7_bartlett}, for sufficiently large $N$ with $N\ge \mathrm{Pdim}(\mathcal{F}_{\mathrm{NN}})$ and any $a_N\ge 1/N$, we have
$$
\begin{aligned}
& \mathbb{E}_{\mathcal{D}_N}\left\{\mathbb{E}_{T}[g_{j, \iota_N}(\hat{f}^s_{N, (j)}, T)]-\frac 2N\sum_{i=1}^Ng_{j, \iota_N}(\hat{f}^s_{N, (j)}, T_i)\right\} \\
\le & a_N+14\int_{a_N}^{\infty}\mathcal{N}_N(c_2t/(4\bar{\delta}_N), \|\cdot \|_{\infty}, \mathcal{F}_{\mathrm{NN}})\exp\left(-\frac{Nt}{c_3(\log N)^{4+4\kappa}}\right)\mathrm{d}t \\
\le & a_N+14\mathcal{N}_N(c_2a_N/(4\bar{\delta}_N), \|\cdot \|_{\infty}, \mathcal{F}_{\mathrm{NN}})\int_{a_N}^{\infty}\exp\left(-\frac{Nt}{c_3(\log N)^{4+4\kappa}}\right)\mathrm{d}t \\
\le & a_N+14\left(c_4N^2\bar{\delta}_N^2\right)^{c_5SL\log S}\cdot \frac{c_3(\log N)^{4+4\kappa}}{N}\exp\left(-\frac{Na_N}{c_3(\log N)^{4+4\kappa}}\right),
\end{aligned}
$$
where $c_4$ and $c_5$ are universal constants. Choose
$$
a_N=\frac{c_3c_5(\log N)^{4+4\kappa}}{N}SL\log S\log\left(c_4N^2\bar{\delta}_N^2\right).
$$
For sufficiently large $N$, we have
$$
\mathbb{E}_{\mathcal{D}_N}\left\{\mathbb{E}_{T}[g_{j, \iota_N}(\hat{f}^s_{N, (j)}, T)]-\frac 2N\sum_{i=1}^Ng_{j, \iota_N}(\hat{f}^s_{N, (j)}, T_i)\right\}\le \frac{c_6SL\log S(\log N)^{5+4\kappa}}{N},
$$
where $c_6$ is a constant not depending on $S, L$ and $N$. Noticing the arbitrariness of $j$, we complete the proof.
\end{proof}

\begin{lemma}\label{lem: reg_approximation_error_bound}
Assume that
\begin{enumerate}[label=(\roman*)]
    \item $e_j^{\top}f_0\in \mathcal{H}^{\beta_f}_{\mathrm{Loc}}(\mathbb{R}^d, B_u)$ with $\beta_f>0$ and $B_u\le c(u^m+1)$ for some universal constants $c>0$, $m\ge 0$, and for any $j\in\{1, \dots, d_y\}$, where $e_j$ denotes a $d_y$-dimensional one-hot vector with the $j$-th component equal to 1 and all other components equal to 0;
    \item $\|f_0(X^s)\|_{\infty}$ and $\|X^s\|_{\infty}$ are sub-exponentially distributed random variables.
\end{enumerate}
Suppose that the depth $L$ and width $M$ of $\mathcal{F}_{\mathrm{NN}}^{d_y}$ are expressed as
$$
\begin{aligned}
L &= 21(\lfloor\beta_f\rfloor+1)^2S_1\lceil\log_2(8S_1)\rceil+2d_x+3, \\
M &= 38d_y(\lfloor\beta_f\rfloor+1)^2d_x^{\lfloor\beta_f\rfloor+1}S_2\lceil\log_2(8S_2)\rceil,
\end{aligned}
$$
for any $S_1, S_2\in\mathbb{N}_+$. Let $\bar{\delta}=\bar{\delta}_N=(\log N)^{1+\kappa}$, with an arbitrarily fixed $\kappa\in (0, 1]$, and let $\underline{\delta}=\underline{\delta}_N=-(\log N)^{-1-\kappa}$. Then, for sufficiently large $N$, it follows that
$$
\begin{aligned}
& \inf_{f\in\mathcal{F}_{\mathrm{NN}}^{d_y}}\mathbb{E}_P\|f(X)-f_0(X)\|_2^2 \\
\le & c^*\left\{\left[(\lfloor\beta_f\rfloor+1)^2d_x^{\lfloor\beta_f\rfloor+(\beta_f\vee 1)/2}(S_1S_2)^{-2\beta_f/d_x}(\log N)^m\right]^2+\frac{(\log N)^{2+2\kappa}}{N}\right\},
\end{aligned}
$$
where $c^*$ is a constant not depending on $S_1, S_2$ and $N$.
\end{lemma}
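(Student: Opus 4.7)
The plan is to follow the blueprint of Lemma~\ref{lem: dre_approximation_error_bound_ls}, adapted to the vector-valued target $f_0$ and the sign-flexible truncation $[\underline{\delta}_N, \bar{\delta}_N]$. First, for an auxiliary threshold $\iota_N \ge 1$ to be chosen later, I decompose
$$
\mathbb{E}_P\|f(X)-f_0(X)\|_2^2 = \mathbb{E}_P\big[\|f(X)-f_0(X)\|_2^2 \mathds{1}(\|X\|_\infty \le \iota_N)\big] + \mathbb{E}_P\big[\|f(X)-f_0(X)\|_2^2 \mathds{1}(\|X\|_\infty > \iota_N)\big],
$$
and inside the first term I further isolate the event $\{\|f_0(X)\|_\infty \le \iota_N\}$. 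For the two tail pieces, I use $(a+b)^2 \le 2a^2+2b^2$ together with the Chernoff trick $\mathds{1}(W>\iota_N) \le \exp(\varsigma(W-\iota_N)/2)$, leveraging that $\|X^s\|_\infty$ and $\|f_0(X^s)\|_\infty$ are sub-exponentially distributed under assumption (ii). Since any $f \in \mathcal{F}_{\mathrm{NN}}^{d_y}$ takes values in $[\underline{\delta}_N, \bar{\delta}_N]^{d_y}$, each tail contributes at most $c(\bar{\delta}_N^2 + \iota_N^2)\exp(-\varsigma \iota_N/2)$, which ultimately becomes the $(\log N)^{2+2\kappa}/N$ residual after setting $\iota_N = (2\varsigma^{-1}\log N)\vee 1$.

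For the bulk term, I work on the rescaled cube: set $r^*_j(x) = e_j^\top f_0(2\iota_N x - \iota_N \mathbf{1}_{d_x})$ for $x \in [0,1]^{d_x}$, so $r^*_j \in \mathcal{H}^{\beta_f}([0,1]^{d_x}, c(\iota_N^m+1))$ by assumption (i) and Definition~\ref{def: local_holder_class}. Lemma~\ref{lem: thm_3.3_jiao} produces, for each $j \in \{1,\dots,d_y\}$, a ReLU network $f^*_j$ of depth $L^* = 21(\lfloor\beta_f\rfloor+1)^2 S_1 \lceil\log_2(8S_1)\rceil + 2d_x$ and width $M_j^* = 38(\lfloor\beta_f\rfloor+1)^2 d_x^{\lfloor\beta_f\rfloor+1}S_2\lceil\log_2(8S_2)\rceil$ achieving pointwise error at most $18c(\iota_N^m+1)(\lfloor\beta_f\rfloor+1)^2 d_x^{\lfloor\beta_f\rfloor+(\beta_f \vee 1)/2}(S_1S_2)^{-2\beta_f/d_x}$ on $[0,1]^{d_x}\setminus \Omega([0,1]^{d_x}, K, \Delta)$, with $\Delta$ free in a small interval so this exceptional set has arbitrarily small Lebesgue measure. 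Undoing the affine rescaling via $\mathrm{relu}((x+\iota_N\mathbf{1}_{d_x})/(2\iota_N)) - \mathrm{relu}(-(x+\iota_N\mathbf{1}_{d_x})/(2\iota_N))$ adds one layer. Stacking the $d_y$ coordinates in parallel gives a single $\mathbb{R}^{d_y}$-valued network of the same depth but width $d_y M^*$, matching the prescribed $M$.

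To land inside $\mathcal{F}^{d_y}_{\mathrm{NN}}$, I apply the coordinate-wise truncation $T_{\underline{\delta}_N, \bar{\delta}_N}$, implemented as $\mathrm{relu}(-\mathrm{relu}(-f^\dagger_j + \bar{\delta}_N) + \bar{\delta}_N - \underline{\delta}_N) + \underline{\delta}_N$; this uses two further relu layers, accounting for the extra $+3$ in the depth formula. For $N$ large enough that $\bar{\delta}_N \ge \iota_N$ and $|\underline{\delta}_N| \ge \iota_N$, truncation cannot increase the error on the event $\{\|f_0(X)\|_\infty \le \iota_N\}$. Combining the coordinate-wise approximation bounds (squared and summed over $j$, giving a factor $d_y$ absorbed into the constant) with the tail estimates and the choice $\iota_N = (2\varsigma^{-1}\log N)\vee 1$ yields the claimed bound.

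The main obstacle is bookkeeping rather than deep analysis: ensuring that (a) stacking $d_y$ coordinate-wise networks plus truncation indeed fits the depth/width budget stated in the lemma, and (b) the tails of both $\|X^s\|_\infty$ and $\|f_0(X^s)\|_\infty$ are simultaneously controlled, since Cauchy--Schwarz must be invoked once to convert products of $\bar{\delta}_N$ with probabilities into the final $\bar{\delta}_N^2 \exp(-\varsigma\iota_N/2)$ bound without inflating the logarithmic exponent. The divergence rate $m$ enters only through the factor $(\iota_N^m+1)^2 \asymp (\log N)^{2m}$ multiplying the squared Jiao--type approximation error, producing precisely the $(\log N)^m$ term in the displayed inequality.
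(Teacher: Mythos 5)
Your proposal matches the paper's own proof essentially step for step: the same tail/bulk decomposition with threshold $\iota_N=(2\varsigma^{-1}\log N)\vee 1$, sub-exponential tail bounds combined with Cauchy--Schwarz, coordinate-wise application of Lemma \ref{lem: thm_3.3_jiao} after affine rescaling (one extra relu layer), relu-implemented truncation accounting for the ``$+3$'' in the depth, parallel stacking of the $d_y$ coordinates giving width $d_yM^*$, and the exceptional set handled by the arbitrariness of $\Delta$. The only point worth noting is your explicit requirement $|\underline{\delta}_N|\ge\iota_N$ so that the lower truncation does not distort $f_0$ on the bulk event, which is indeed what the argument needs and is consistent with the choice $\underline{\delta}_N=-(\log N)^{1+\kappa}$ used in Theorem \ref{thm: reg_convergence_rate} (the lemma's stated $-(\log N)^{-1-\kappa}$ appears to be a typo); otherwise your proof coincides with the paper's.
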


\begin{proof}[Proof of Lemma \ref{lem: reg_approximation_error_bound}]
For any $\iota_N>0$, observe that
$$
\begin{aligned}
& \mathbb{E}_P\|f(X)-f_0(X)\|_2^2\\
=& \mathbb{E}_P\left\{\|f(X)-f_0(X)\|_2^2\mathds{1}(\|X\|_{\infty}\le \iota_N)\right\}+\mathbb{E}_P\left\{\|f(X)-f_0(X)\|_2^2\mathds{1}(\|X\|_{\infty}>\iota_N)\right\}.
\end{aligned}
$$
For clarity, denote the $j$-th output coordinate of a function $f: \mathbb{R}^{d_x}\to\mathbb{R}^{d_y}$ as $f_{(j)}$, with $j\in\{1, \dots, d_y\}$. On the one hand, it follows that
$$
\begin{aligned}
& \mathbb{E}_P\left\{\|f(X)-f_0(X)\|_2^2\mathds{1}(\|X\|_{\infty}>\iota_N)\right\} \\
=& \sum_{j=1}^{d_y}\mathbb{E}_P\left\{[f_{(j)}(X)-f_{0, (j)}(X)]^2\mathds{1}(\|X\|_{\infty}>\iota_N)\right\} \\
\le & 2d_y\bar{\delta}_N^2\mathbb{E}_P[\mathds{1}(\|X\|_{\infty}>\iota_N)]+2d_y\mathbb{E}_P[\|f_0(X)\|_{\infty}^2\mathds{1}(\|X\|_{\infty}>\iota_N)] \\
\le & 2d_y\bar{\delta}_N^2\mathbb{E}_P[\exp(\varsigma\|X\|_{\infty}/2)]\exp(-\varsigma\iota_N/2) \\
& +2d_y\mathbb{E}_P[\|f_0(X)\|_{\infty}^2\exp(\varsigma\|X\|_{\infty}/2)]\exp(-\varsigma\iota_N/2) \\
\le & 2d_y\bar{\delta}_N^2\mathbb{E}_P[\exp(\varsigma\|X\|_{\infty}/2)]\exp(-\varsigma\iota_N/2) \\
& +2d_y\left\{\mathbb{E}_P[\|f_0(X)\|_{\infty}^4]\mathbb{E}_P[\exp(\varsigma\|X\|_{\infty})]\right\}^{1/2}\exp(-\varsigma\iota_N/2) \\
\le & c_1(\bar{\delta}_N^2+1)\exp(-\varsigma\iota_N/2),
\end{aligned}
$$
where $c_1$ is a constant which depends only on $d_y, \varsigma, \mathbb{E}_P[\exp(\varsigma\|X\|_{\infty})]$ and $\mathbb{E}_P[\|f_0(X)\|_{\infty}^4]$. On the other hand, we first notice that
$$
\begin{aligned}
& \mathbb{E}_P\left\{\|f(X)-f_0(X)\|_2^2\mathds{1}(\|X\|_{\infty}\le \iota_N)\right\} \\
=& \mathbb{E}_P\left\{\|f(X)-f_0(X)\|_2^2\mathds{1}(\|f_0(X)\|_{\infty}\le \iota_N)\mathds{1}(\|X\|_{\infty}\le \iota_N)\right\} \\
& +\mathbb{E}_P\left\{\|f(X)-f_0(X)\|_2^2\mathds{1}(\|f_0(X)\|_{\infty}>\iota_N)\mathds{1}(\|X\|_{\infty}\le \iota_N)\right\} \\
\le & \mathbb{E}_P\left\{\|f(X)-f_0(X)\|_2^2\mathds{1}(\|f_0(X)\|_{\infty}\le \iota_N)\mathds{1}(\|X\|_{\infty}\le \iota_N)\right\} \\
& +2d_y\bar{\delta}_N^2\mathbb{E}_P\left[\mathds{1}(\|f_0(X)\|_{\infty}>\iota_N)\right]+2d_y\mathbb{E}_P\left[\|f_0(X)\|_{\infty}^2\mathds{1}(\|f_0(X)\|_{\infty}>\iota_N)\right] \\
\le & \mathbb{E}_P\left\{\|f(X)-f_0(X)\|_2^2\mathds{1}(\|f_0(X)\|_{\infty}\le \iota_N)\mathds{1}(\|X\|_{\infty}\le \iota_N)\right\} \\
& +2d_y\bar{\delta}_N^2\mathbb{E}_P[\exp(\varsigma \|f_0(X)\|_{\infty}/2)]\exp(-\varsigma\iota_N/2) \\
& +\frac{32}{\varsigma^2}d_y\mathbb{E}_P[\exp(\varsigma \|f_0(X)\|_{\infty})]\exp(-\varsigma\iota_N/2) \\
\le & \mathbb{E}_P\left\{\|f(X)-f_0(X)\|_2^2\mathds{1}(\|f_0(X)\|_{\infty}\le \iota_N)\mathds{1}(\|X\|_{\infty}\le \iota_N)\right\}+c_2(\bar{\delta}_N^2+1)\exp(-\varsigma\iota_N/2),
\end{aligned}
$$
where $c_2$ is a constant which depends only on $d_y, \varsigma$ and $\mathbb{E}_P[\exp(\varsigma \|f_0(X)\|_{\infty})]$. Then, we focus on $\{x: \|x\|_{\infty}\le \iota_N\}=[-\iota_N, \iota_N]^{d_x}$. Fix an arbitrary $j\in \{1, \dots, d_y\}$. Let $h_j(x)=f_{0, (j)}(2\iota_Nx-\iota_N\mathrm{1}_{d_x})$ for $x\in [0, 1]^{d_x}$. Lemma \ref{lem: thm_3.3_jiao} demonstrates that for any $S_1, S_2\in\mathbb{N}_+$, there exists a function $h_j^*$ implemented by a ReLU network with depth $L^*=21(\lfloor\beta_f\rfloor+1)^2S_1\lceil\log_2(8S_1)\rceil+2d_x$, width $M^*=38(\lfloor\beta_f\rfloor+1)^2d_x^{\lfloor\beta_f\rfloor+1}S_2\lceil\log_2(8S_2)\rceil$, such that
$$
|h_j^*(x)-h_j(x)|\le 18c(\iota_N^m+1)(\lfloor\beta_f\rfloor+1)^2d_x^{\lfloor\beta_f\rfloor+(\beta_f\vee 1)/2}(S_1S_2)^{-2\beta_f/d_x},
$$
for all $x\in [0, 1]^{d_x}\backslash \Omega([0, 1]^{d_x}, K, \Delta)$. Here,
$$
\Omega([0, 1]^{d_x}, K, \Delta)=\bigcup_{i=1}^{d_x}\left\{x=(x_1, \dots, x_{d_x})^{\top}: x_i\in\bigcup_{k=1}^{K-1}(k/K-\Delta, k/K)\right\},
$$
where $K=\lceil(S_1S_2)^{2/d_x}\rceil$ and $\Delta$ is an arbitrary scalar in $(0, 1/(3K)]$. Let $h_j^{\dagger}(x)=h_j^*((x+\iota_N\mathrm{1}_{d_x})/(2\iota_N))$ for $x\in[-\iota_N, \iota_N]^{d_x}$. We obtain that
$$
|h_j^{\dagger}(x)-f_{0, (j)}(x)|\le 18c(\iota_N^m+1)(\lfloor\beta_f\rfloor+1)^2d_x^{\lfloor\beta_f\rfloor+(\beta_f\vee 1)/2}(S_1S_2)^{-2\beta_f/d_x},
$$
for all $x\in [-\iota_N, \iota_N]^{d_x}\backslash \Omega^{\dagger}$, where $\Omega^{\dagger}=\{x: (x+\iota_N\mathrm{1}_{d_x})/(2\iota_N)\in \Omega([0, 1]^{d_x}, K, \Delta)\}$. Furthermore, note that
$$
h_j^{\dagger}(x)=h_j^*\left(\frac{x+\iota_N\mathrm{1}_{d_x}}{2\iota_N}\right)=h_j^*\left(\mathrm{relu}\left(\frac{x+\iota_N\mathrm{1}_{d_x}}{2\iota_N}\right)-\mathrm{relu}\left(-\frac{x+\iota_N\mathrm{1}_{d_x}}{2\iota_N}\right)\right),
$$
which is implemented by a neural network with ReLU activations, depth $L^{\dagger}=L^*+1$, and width $M^{\dagger}=M^*$. In addition, let
$$
h_j^{\ddagger}(x)=
\begin{cases}
    \bar{\delta}_N, & h_j^{\dagger}(x)>\bar{\delta}_N, \\
    h_j^{\dagger}(x), & \underline{\delta}_N\le h_j^{\dagger}(x)\le \bar{\delta}_N, \\
    \underline{\delta}_N, & h_j^{\dagger}(x)<\underline{\delta}_N.
\end{cases}
$$
A straightforward calculation shows that
$$
h_j^{\ddagger}(x)=\mathrm{relu}(-\mathrm{relu}(-h_j^{\dagger}(x)+\bar{\delta}_N)+\bar{\delta}_N)-\mathrm{relu}(-\mathrm{relu}(h_j^{\dagger}(x)-\underline{\delta}_N)-\underline{\delta}_N),
$$
indicating that $h_j^{\ddagger}(x)$ can be implemented by a ReLU network with depth $L^{\ddagger}=L^*+3$ and width $M^{\ddagger}=M^*$. Due to the arbitrariness of $\Delta$, when $\bar{\delta}_N\ge \iota_N$, it follows that
$$
\begin{aligned}
& \mathbb{E}_P\left\{[h_j^{\ddagger}(X)-f_{0, (j)}(X)]^2\mathds{1}(\|f_0(X)\|_{\infty}\le \iota_N)\mathds{1}(\|X\|_{\infty}\le \iota_N)\right\} \\
\le & \left[18c(\iota_N^m+1)(\lfloor\beta_f\rfloor+1)^2d_x^{\lfloor\beta_f\rfloor+(\beta_f\vee 1)/2}(S_1S_2)^{-2\beta_f/d_x}\right]^2.
\end{aligned}
$$
Let $f^{\ddagger}(x)=(h_1^{\ddagger}(x), \dots, h_{d_y}^{\ddagger}(x))^{\top}$. It is straightforward to verify that $f^{\ddagger}$ can be implemented by a ReLU network in $\mathcal{F}_{\mathrm{NN}}^{d_y}$ with depth $L=L^*+3$ and width $M=d_yM^*$. Hence, we have
$$
\begin{aligned}
& \inf_{f\in\mathcal{F}_{\mathrm{NN}}^{d_y}}\mathbb{E}_P\left\{\|f(X)-f_0(X)\|_2^2\mathds{1}(\|f_0(X)\|_{\infty}\le \iota_N)\mathds{1}(\|X\|_{\infty}\le \iota_N)\right\} \\
\le & \mathbb{E}_P\left\{\|f^{\ddagger}(X)-f_0(X)\|_2^2\mathds{1}(\|f_0(X)\|_{\infty}\le \iota_N)\mathds{1}(\|X\|_{\infty}\le \iota_N)\right\} \\
=& \sum_{j=1}^{d_y}\mathbb{E}_P\left\{[h_j^{\ddagger}(X)-f_{0, (j)}(X)]^2\mathds{1}(\|f_0(X)\|_{\infty}\le \iota_N)\mathds{1}(\|X\|_{\infty}\le \iota_N)\right\} \\
\le & d_y\left[18c(\iota_N^m+1)(\lfloor\beta_f\rfloor+1)^2d_x^{\lfloor\beta_f\rfloor+(\beta_f\vee 1)/2}(S_1S_2)^{-2\beta_f/d_x}\right]^2,
\end{aligned}
$$
provided that $\bar{\delta}_N\ge \iota_N$. Recall that $\bar{\delta}_N=(\log N)^{1+\kappa}$, and set $\iota_N$ to $(2\varsigma^{-1}\log N)\vee 1$. We conclude that for sufficiently large $N$ satisfying $\bar{\delta}_N\ge \iota_N$, it holds that
$$
\begin{aligned}
& \inf_{f\in\mathcal{F}_{\mathrm{NN}}^{d_y}}\mathbb{E}_P\|f(X)-f_0(X)\|_2^2 \\
\le & \inf_{f\in \mathcal{F}_{\mathrm{NN}}^{d_y}}\mathbb{E}_P\left\{\|f(X)-f_0(X)\|_2^2\mathds{1}(\|f_0(X)\|_{\infty}\le \iota_N)\mathds{1}(\|X\|_{\infty}\le \iota_N)\right\} \\
& +(c_1+c_2)(\bar{\delta}_N^2+1)\exp(-\varsigma\iota_N/2) \\
\le & c_3\left\{\left[(\lfloor\beta_f\rfloor+1)^2d_x^{\lfloor\beta_f\rfloor+(\beta_f\vee 1)/2}(S_1S_2)^{-2\beta_f/d_x}(\log N)^m\right]^2+\frac{(\log N)^{2+2\kappa}}{N}\right\},
\end{aligned}
$$
where $c_3$ is a constant not depending on $S_1, S_2$ and $N$.
\end{proof}

\begin{proof}[Proof of Theorem 5.1]
To commence, we notice that $\|f_0(X^s)\|_{\infty}$ is sub-exponentially distributed provided that $\|Y^s\|_{\infty}$ is a sub-exponential random variable (consider Jensen's inequality). Then, Lemma \ref{lem: reg_error_decomposition}, Lemma \ref{lem: reg_stochatic_error_bound} and Lemma \ref{lem: reg_approximation_error_bound} indicate
$$
\begin{aligned}
& \mathbb{E}\|\hat{f}^s_N(X^s)-f_0(X^s)\|_2^2 \\
\le & \frac{c_1SL\log S(\log N)^{5+4\kappa}}{N} \\
& +c_2\left\{\left[(\lfloor\beta_f\rfloor+1)^2d_x^{\lfloor\beta_f\rfloor+(\beta_f\vee 1)/2}(S_1S_2)^{-2\beta_f/d_x}(\log N)^m\right]^2+\frac{(\log N)^{2+2\kappa}}{N}\right\},
\end{aligned}
$$
where $c_1, c_2$ are constants not depending on $S, L, S_1, S_2$ and $N$, and $S_1, S_2$ satisfy the conditions that the network depth $L=21(\lfloor\beta_f\rfloor+1)^2S_1\lceil\log_2(8S_1)\rceil+2d_x+3$, network width $M=38d_y(\lfloor\beta_f\rfloor+1)^2d_x^{\lfloor\beta_f\rfloor+1}S_2\lceil\log_2(8S_2)\rceil$, for sufficiently large $N$ and $N\ge \mathrm{Pdim}(\mathcal{F}_{\mathrm{NN}})$. Therefore, by letting $S_1=\mathcal{O}(N^{d_x/(2d_x+4\beta_f)})$ and $S_2=\mathcal{O}(1)$,  we obtain
$$
M=\mathcal{O}(1), \quad L=\mathcal{O}\left(N^{\frac{d_x}{2d_x+4\beta_f}}\log N\right), \quad S=\mathcal{O}(M^2L)=\mathcal{O}\left(N^{\frac{d_x}{2d_x+4\beta_f}}\log N\right),
$$
yielding
$$
\mathbb{E}\|\hat{f}^s_N(X^s)-f_0(X^s)\|_2^2\le c_3N^{-\frac{2\beta_f}{d_x+2\beta_f}}(\log N)^{(8+4\kappa)\vee (2m)},
$$
where $c_3$ is a constant not depending on $N$, for $N\ge 2$. Furthermore, observe that
$$
\begin{aligned}
\mathbb{E}\|\hat{f}^s_N(X^t)-f_0(X^t)\|_2^2=\mathbb{E}\left\{\|\hat{f}^s_N(X^s)-f_0(X^s)\|_2^2\cdot r_0(X^s)\right\}.
\end{aligned}
$$
As $\|\hat{f}^s_N(X^s)\|_{\infty}$ is bounded by $\bar{\delta}_N=(\log N)^{1+\kappa}$, $\|f_0(X^s)\|_{\infty}$ is a sub-exponential random variable (hence it possesses a finite fourth moment), and $r_0(X^s)$ is presumed to be sub-exponentially distributed, by Corollary 4.2, we conclude that
$$
\mathbb{E}\|\hat{f}^s_N(X^t)-f_0(X^t)\|_2^2\le c_4N^{-\frac{2\beta_f}{d_x+2\beta_f}}(\log N)^{(8+4\kappa)\vee (2m)+1}+\frac{c_5d_y[(\log N)^{2+2\kappa}+1]}{N},
$$
for $N\ge 2$, where $c_4, c_5$ are constants which do not depend on $N$. This completes the proof.
\end{proof}

\subsection{Proof of Lemma 5.2}\label{subsec: proof_of_lem_wasserstein_convergence}

In this subsection, we abbreviate $\mathbb{E}[h(X^s, Y^s)]$ to $\mathbb{E}_P[h(X, Y)]$ for any $(X^s, Y^s)$-integrable function $h$, whenever the expectation is taken with respect to $(X^s, Y^s)$.

\begin{proof}[Proof of Lemma 5.2]
We first observe that
$$
\begin{aligned}
\mathcal{E}^t &= \int \mathbb{E}\left[W_2^2(\rho_{0, x}\|\hat{\rho}^s_x)\right]q(x)\mathrm{d}x \\
&= \int \mathbb{E}\left[W_2^2(\rho_{0, x}\|\hat{\rho}^s_x)\right]p(x)r_0(x)\mathrm{d}x \\
&= \mathbb{E}\left[W_2^2(\rho_{0, X^s}\|\hat{\rho}^s_{X^s})r_0(X^s)\right].
\end{aligned}
$$
For any $\iota_N>0$, it follows that
$$
\begin{aligned}
\mathcal{E}^t =& \mathbb{E}\left[W_2^2(\rho_{0, X^s}\|\hat{\rho}^s_{X^s})r_0(X^s)\right] \\
=& \mathbb{E}\left[W_2^2(\rho_{0, X^s}\|\hat{\rho}^s_{X^s})r_0(X^s)\mathds{1}(r_0(X^s)\le \iota_N)\right] \\
& +\mathbb{E}\left[W_2^2(\rho_{0, X^s}\|\hat{\rho}^s_{X^s})r_0(X^s)\mathds{1}(r_0(X^s)>\iota_N)\right] \\
\le & \iota_N\mathcal{E}^s+\mathbb{E}\left[W_2^2(\rho_{0, X^s}\|\hat{\rho}^s_{X^s})r_0(X^s)\mathds{1}(r_0(X^s)>\iota_N)\right].
\end{aligned}
$$
Fix any $x\in\mathcal{X}^s$. Consider the following two ODEs
$$
\begin{aligned}
\mathrm{d}Z_{\tau} &= v_0(x, Z_{\tau}, \tau)\mathrm{d}\tau, \quad Z_0\sim N(0, I_{d_y}), \\
\mathrm{d}\hat{Z}_{\tau} &= \hat{v}^s_N(x, \hat{Z}_{\tau}, \tau)\mathrm{d}\tau, \quad \hat{Z}_0\sim N(0, I_{d_y}).
\end{aligned}
$$
We denote the particles at time $\tau\in [0, 1]$ as $Z_{\tau}(x, z)$ and $\hat{Z}_{\tau}(x, z)$ given $Z_0=z$ and $\hat{Z}_0=z$, respectively. Then, it follows that
$$
\begin{aligned}
W_2^2(\rho_{0, x}\|\hat{\rho}^s_x) \le & \int \left\|Z_1(x, z)-\hat{Z}_1(x, z)\right\|_2^2\cdot\lambda(z)\mathrm{d}z \\
\le & 2\int\left\|Z_1(x, z)\right\|_2^2\cdot \lambda(z)\mathrm{d}z+2\int\left\|\hat{Z}_1(x, z)\right\|_2^2\cdot \lambda(z)\mathrm{d}z \\
=& 2\mathbb{E}\left(\|Y^s\|_2^2\big|X^s=x\right)+2\int\left\|\hat{Z}_1(x, z)\right\|_2^2\cdot \lambda(z)\mathrm{d}z,
\end{aligned}
$$
where $\lambda(\cdot)$ represents the density function of $d_y$-dimensional standard Gaussian distribution. Furthermore, define
$$
\begin{aligned}
H_1(x) &= \mathbb{E}\left(\|Y^s\|_2^2\big|X^s=x\right), \\
H_2(x, \tau) &= \int\left\|\hat{Z}_{\tau}(x, z)\right\|_2^2\cdot \lambda(z)\mathrm{d}z, \quad \text{for } \tau\in [0, 1].
\end{aligned}
$$
Observe that
$$
\begin{aligned}
\frac{\mathrm{d}}{\mathrm{d}\tau}H_2(x, \tau) &= 2\int\left\langle \hat{v}^s_N(x, \hat{Z}_{\tau}(x, z), \tau), \hat{Z}_{\tau}(x, z)\right\rangle\cdot \lambda(z)\mathrm{d}z \\
&\le \int\left\|\hat{v}^s_N(x, \hat{Z}_{\tau}(x, z), \tau)\right\|_2^2\cdot \lambda(z)\mathrm{d}z+\int\left\|\hat{Z}_{\tau}(x, z)\right\|_2^2\cdot \lambda(z)\mathrm{d}z \\
&\le d_y\max(\bar{\delta}^2, \underline{\delta}^2)+H_2(x, \tau).
\end{aligned}
$$
Let $\delta^*=\max(\bar{\delta}^2, \underline{\delta}^2)$. By Lemma \ref{lem: grownwall_inequality}, we obtain
$$
\int\left\|\hat{Z}_1(x, z)\right\|_2^2\cdot \lambda(z)\mathrm{d}z=H_2(x, 1)\le d_ye(\delta^*+1).
$$
Hence, we conclude that $W_2^2(\rho_{0, x}\|\hat{\rho}^s_x)\le 2H_1(x)+2d_ye(\delta^*+1)$ for any $x\in\mathcal{X}^s$. As a consequence, it holds that
$$
\begin{aligned}
\mathcal{E}^t \le & \iota_N\mathcal{E}^s+\mathbb{E}\left[W_2^2(\rho_{0, X^s}\|\hat{\rho}^s_{X^s})r_0(X^s)\mathds{1}(r_0(X^s)>\iota_N)\right] \\
\le & \iota_N\mathcal{E}^s+2\mathbb{E}_P[H_1(X)r_0(X)\mathds{1}(r_0(X)>\iota_N)]+2d_ye(\delta^*+1)\mathbb{E}_P[r_0(X^s)\mathds{1}(r_0(X^s)>\iota_N)] \\
\le & \iota_N\mathcal{E}^s+\frac{8}{\varsigma}\mathbb{E}_P[H_1(X)\exp(\varsigma r_0(X)/2)]\exp(-\varsigma\iota_N/4) \\
& +\frac{4d_ye}{\varsigma}(\delta^*+1)\mathbb{E}_P[\exp(\varsigma r_0(X))]\exp(-\varsigma\iota_N/2) \\
\le & \iota_N\mathcal{E}^s+\frac{8}{\varsigma}\left\{\mathbb{E}_P[H_1(X)^2]\mathbb{E}_P[\exp(\varsigma r_0(X))]\right\}^{1/2}\exp(-\varsigma\iota_N/4) \\
& +\frac{4d_ye}{\varsigma}(\delta^*+1)\mathbb{E}_P[\exp(\varsigma r_0(X))]\exp(-\varsigma\iota_N/2).
\end{aligned}
$$
Here, $H_1(X^s)$ attains a finite second moment since $\|Y^s\|_2$ is presumed to have a finite fourth moment. Set $\iota_N$ to $(4\varsigma^{-1}\log N)\vee 1$. We have
$$
\begin{aligned}
\mathcal{E}^t \le & \iota_N\mathcal{E}^s+\frac{8}{\varsigma}\left\{\mathbb{E}_P[H_1(X)^2]\mathbb{E}_P[\exp(\varsigma r_0(X))]\right\}^{1/2}\exp(-\varsigma\iota_N/4) \\
& +\frac{4d_ye}{\varsigma}(\delta^*+1)\mathbb{E}_P[\exp(\varsigma r_0(X))]\exp(-\varsigma\iota_N/2) \\
\le & \left[(4\varsigma^{-1}\log N)\vee 1\right]\mathcal{E}^s+\frac{8}{\varsigma}\left\{\mathbb{E}_P[H_1(X)^2]\mathbb{E}_P[\exp(\varsigma r_0(X))]\right\}^{1/2}N^{-1} \\
& +\frac{4d_ye}{\varsigma}(\delta^*+1)\mathbb{E}_P[\exp(\varsigma r_0(X))]N^{-2}.
\end{aligned}
$$
This completes the proof.
\end{proof}

\subsection{Proof of Theorem 5.4}\label{subsec: proof_of_thm_gen_convergence_rate}

For any function $f\in\mathcal{L}^2(X^s, Y^s)$, define
$$
\begin{aligned}
K^{\mathrm{gen}}(f) &= \int_0^1\mathbb{E}_P\left\|\dot{a}_{\tau}\eta+\dot{b}_{\tau}Y-f(X, Y_{\tau}, \tau)\right\|_2^2\mathrm{d}\tau, \\
K^{\mathrm{gen}}_N(f) &= \frac{1}{N}\sum_{i=1}^N\left\|\dot{a}_{\tau_i}\eta_i+\dot{b}_{\tau_i}Y^s_i-f(X^s_i, Y^s_{i, \tau_i}, \tau_i)\right\|_2^2.
\end{aligned}
$$
Here, $\mathbb{E}_P[h(X, Y, \eta, Y_{\tau})]\equiv \mathbb{E}[h(X^s, Y^s, \eta, Y^s_{\tau})]$ for any $(X^s, Y^s, \eta)$-integrable function $h$, where the expectation is taken with respect to $(X^s, Y^s, \eta)$for any nonrandom $\tau\in [0, 1]$.

\begin{lemma}\label{lem: gen_error_decomposition}
Assume that $\|Y^s\|_{\infty}$ attains a finite second moment. Then,
$$
\begin{aligned}
& \int_0^1\mathbb{E}\|\hat{v}^s_N(X^s, Y^s_{\tau}, \tau)-v_0(X^s, Y^s_{\tau}, \tau)\|_2^2\mathrm{d}\tau \\
\le & \mathbb{E}\left[K^{\mathrm{gen}}(\hat{v}^s_N)-2K^{\mathrm{gen}}_N(\hat{v}^s_N)+K^{\mathrm{gen}}(v_0)\right] \\
& +2\inf_{f\in\mathcal{F}_{\mathrm{NN}, \Lambda}^{d_y}}\int_0^1\mathbb{E}_P\|f(X, Y_{\tau}, \tau)-v_0(X, Y_{\tau}, \tau)\|_2^2\mathrm{d}\tau.
\end{aligned}
$$
\end{lemma}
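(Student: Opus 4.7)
The plan is to mimic the standard empirical risk minimization (ERM) decomposition used in Lemmas \ref{lem: dre_error_decomposition_ls}, \ref{lem: dre_error_decomposition_lr}, and \ref{lem: reg_error_decomposition}, adapted to the time-integrated objective and the additional randomness coming from $\eta$ and $\tau$. The two pivotal ingredients are (a) a Pythagoras-style identity connecting the excess population risk to the velocity estimation error, and (b) the unbiasedness of $K^{\mathrm{gen}}_N$ as an empirical surrogate of $K^{\mathrm{gen}}$.

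For (a), note that by the tower property, $v_0(X^s, Y^s_{\tau}, \tau)$ is precisely the $L^2(X^s, Y^s, \eta)$-projection of $\dot{a}_{\tau}\eta+\dot{b}_{\tau}Y^s$ onto $\sigma(X^s, Y^s_{\tau})$-measurable functions, for each fixed $\tau\in[0, 1]$. Consequently the cross-term vanishes and for any $f\in\mathcal{L}^2(X^s, Y^s)$ one obtains the pointwise-in-$\tau$ identity
$$
\mathbb{E}_P\|\dot{a}_{\tau}\eta+\dot{b}_{\tau}Y-f(X, Y_{\tau}, \tau)\|_2^2-\mathbb{E}_P\|\dot{a}_{\tau}\eta+\dot{b}_{\tau}Y-v_0(X, Y_{\tau}, \tau)\|_2^2=\mathbb{E}_P\|f(X, Y_{\tau}, \tau)-v_0(X, Y_{\tau}, \tau)\|_2^2.
$$
Integrating over $\tau\in[0, 1]$ yields $K^{\mathrm{gen}}(f)-K^{\mathrm{gen}}(v_0)=\int_0^1\mathbb{E}_P\|f(X, Y_{\tau}, \tau)-v_0(X, Y_{\tau}, \tau)\|_2^2\mathrm{d}\tau$. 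The finiteness of $\mathbb{E}\|Y^s\|_{\infty}^2$ ensures $v_0\in\mathcal{L}^2(X^s, Y^s)$ and that every uniformly bounded $f\in\mathcal{F}_{\mathrm{NN}, \Lambda}^{d_y}$ lies in $\mathcal{L}^2(X^s, Y^s)$, so that all terms are meaningful.

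For (b), the mutual independence of $\{(X^s_i, Y^s_i)\}$, $\{\eta_i\}$, and $\{\tau_i\sim U(0, 1)\}$ gives $\mathbb{E}[K^{\mathrm{gen}}_N(f)]=K^{\mathrm{gen}}(f)$ for every deterministic $f$, since averaging over the uniform law on $[0, 1]$ reproduces the integral defining $K^{\mathrm{gen}}$. Combining (a) and (b): fix an arbitrary deterministic $f\in\mathcal{F}_{\mathrm{NN}, \Lambda}^{d_y}$, invoke the empirical optimality $K^{\mathrm{gen}}_N(\hat{v}^s_N)\le K^{\mathrm{gen}}_N(f)$, take expectations over the training sample $\mathcal{D}_N=\{(X^s_i, Y^s_i, \eta_i, \tau_i)\}_{i=1}^N$, and rearrange:
$$
\int_0^1\mathbb{E}\|\hat{v}^s_N-v_0\|_2^2\mathrm{d}\tau=\mathbb{E}[K^{\mathrm{gen}}(\hat{v}^s_N)-K^{\mathrm{gen}}(v_0)]\le \mathbb{E}[K^{\mathrm{gen}}(\hat{v}^s_N)-2K^{\mathrm{gen}}_N(\hat{v}^s_N)+K^{\mathrm{gen}}(v_0)]+2\mathbb{E}[K^{\mathrm{gen}}_N(f)-K^{\mathrm{gen}}_N(v_0)].
$$
By unbiasedness the last term collapses to $2[K^{\mathrm{gen}}(f)-K^{\mathrm{gen}}(v_0)]$, which by the identity of step (a) equals $2\int_0^1\mathbb{E}_P\|f-v_0\|_2^2\mathrm{d}\tau$. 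Taking the infimum over $f\in\mathcal{F}_{\mathrm{NN}, \Lambda}^{d_y}$ yields the claimed bound.

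No step should present a genuine obstacle; the argument is structurally identical to its predecessors. The only point demanding minor bookkeeping is distinguishing the two layers of randomness: $\hat{v}^s_N$ is measurable with respect to $\mathcal{D}_N$, while $K^{\mathrm{gen}}(\hat{v}^s_N)$ further integrates over an independent test copy $(X^s, Y^s, \eta)$ and $\tau\in[0, 1]$. Once this distinction is kept explicit, the derivation is essentially mechanical.
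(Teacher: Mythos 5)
Your proposal is correct and follows essentially the same route as the paper's proof: the Pythagoras identity from the conditional-expectation (projection) characterization of $v_0$, unbiasedness of $K^{\mathrm{gen}}_N$ under the independent $\tau_i\sim U(0,1)$ and $\eta_i$, the empirical optimality of $\hat{v}^s_N$, and the square-integrability check guaranteed by the finite second moment of $\|Y^s\|_{\infty}$, followed by taking the infimum over $f\in\mathcal{F}_{\mathrm{NN},\Lambda}^{d_y}$. The rearrangement you write is just the paper's chain with $\pm K^{\mathrm{gen}}_N(v_0)$ inserted, so no substantive difference.
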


\begin{proof}[Proof of Lemma \ref{lem: gen_error_decomposition}]
Recall that $Y^s_{\tau}=a_{\tau}\eta+b_{\tau}Y^s$ and $v_0(X^s, Y^s_{\tau}, \tau)=\mathbb{E}(\dot{a}_{\tau}\eta+\dot{b}_{\tau}Y^s|Y^s_{\tau}, X^s)$. For any $\tau\in [0, 1]$, we have
$$
\mathbb{E}\|v_0(X^s, Y^s_{\tau}, \tau)\|_2^2 \le \mathbb{E}_P\|\dot{a}_{\tau}\eta+\dot{b}_{\tau}Y\|_2^2 \le 2\dot{a}_{\tau}^2\mathbb{E}\|\eta\|_2^2+2\dot{b}_{\tau}^2\mathbb{E}_P\|Y\|_2^2.
$$
Given that $\|Y^s\|_{\infty}$ has a finite second moment, we obtain $\mathbb{E}_P\|Y\|_2^2<\infty$ and hence $v_0\in\mathcal{L}^2(X^s, Y^s)$. Furthermore, as $a_{\tau}$ and $b_{\tau}$ are presumed to be continuously differentiable, notice that
$$
\int_0^1\mathbb{E}\|v_0(X^s, Y^s_{\tau}, \tau)\|_2^2\mathrm{d}\tau\le \int_0^1\left(2\dot{a}_{\tau}^2\mathbb{E}\|\eta\|_2^2+2\dot{b}_{\tau}^2\mathbb{E}_P\|Y\|_2^2\right)\mathrm{d}\tau<\infty.
$$
Next, for any $f\in\mathcal{F}_{\mathrm{NN}, \Lambda}^{d_y}$, it follows that
$$
\begin{aligned}
& \int_0^1\mathbb{E}\|\hat{v}^s_N(X^s, Y^s_{\tau}, \tau)-v_0(X^s, Y^s_{\tau}, \tau)\|_2^2\mathrm{d}\tau \\
=& \mathbb{E}[K^{\mathrm{gen}}(\hat{v}^s_N)-K^{\mathrm{gen}}(v_0)] \\
\le & \mathbb{E}[K^{\mathrm{gen}}(\hat{v}^s_N)-K^{\mathrm{gen}}(v_0)]+2\mathbb{E}[K^{\mathrm{gen}}_N(f)-K^{\mathrm{gen}}_N(\hat{v}^s_N)] \\
=& \mathbb{E}[K^{\mathrm{gen}}(\hat{v}^s_N)-K^{\mathrm{gen}}(v_0)]+2\mathbb{E}[K^{\mathrm{gen}}_N(f)-K^{\mathrm{gen}}_N(v_0)+K^{\mathrm{gen}}_N(v_0)-K^{\mathrm{gen}}_N(\hat{v}^s_N)] \\
=& \mathbb{E}[K^{\mathrm{gen}}(\hat{v}^s_N)-2K^{\mathrm{gen}}_N(\hat{v}^s_N)+K^{\mathrm{gen}}(v_0)]+2[K^{\mathrm{gen}}(f)-K^{\mathrm{gen}}(v_0)] \\
=& \mathbb{E}[K^{\mathrm{gen}}(\hat{v}^s_N)-2K^{\mathrm{gen}}_N(\hat{v}^s_N)+K^{\mathrm{gen}}(v_0)]+2\int_0^1\mathbb{E}_P\|f(X, Y_{\tau}, \tau)-v_0(X, Y_{\tau}, \tau)\|_2^2\mathrm{d}\tau.
\end{aligned}
$$
Taking infimum with respect to $f\in \mathcal{F}_{\mathrm{NN}, \Lambda}^{d_y}$ on the both sides, we obtain the result.
\end{proof}

\begin{lemma}\label{lem: gen_stochatic_error_bound}
Assume that $\|Y^s\|_{\infty}$ is a sub-Gaussian random variable. Let $\bar{\delta}=\bar{\delta}_N=(\log N)^{(1+\kappa)/2}$, with an arbitrarily fixed $\kappa\in (0, 1)$, and let $\underline{\delta}=\underline{\delta}_N=-(\log N)^{(1+\kappa)/2}$. Then, for sufficiently large $N$ and $N\ge \mathrm{Pdim}(\mathcal{F}_{\mathrm{NN}})$, it follows that
$$
\mathbb{E}[K^{\mathrm{gen}}(\hat{v}^s_N)-2K^{\mathrm{gen}}_N(\hat{v}^s_N)+K^{\mathrm{gen}}(v_0)]\le \frac{c^*SL\log S(\log N)^{3+2\kappa}}{N},
$$
where $c^*$ is a constant not depending on $S, L$ and $N$.
\end{lemma}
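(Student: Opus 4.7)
The plan is to mirror the proof of Lemma \ref{lem: reg_stochatic_error_bound}, with the key differences driven by the stronger sub-Gaussian tail assumption on $\|Y^s\|_\infty$ (which allows a $\sqrt{\log N}$ truncation instead of $\log N$) and by the interpolation structure that places both $X^s$ and $Y^s_\tau$ in the covariate slot of the velocity field. First I would let $T_i=(X^s_i, Y^s_i, \eta_i, \tau_i)$, $\mathcal{D}_N=\{T_1,\dots,T_N\}$ and $T=(X^s,Y^s,\eta,\tau)$ an independent copy, then write
$$
\mathbb{E}[K^{\mathrm{gen}}(\hat v_N^s)-2K^{\mathrm{gen}}_N(\hat v_N^s)+K^{\mathrm{gen}}(v_0)]=\mathbb{E}_{\mathcal{D}_N}\Big\{\mathbb{E}_T g(\hat v_N^s,T)-\tfrac{2}{N}\sum_i g(\hat v_N^s,T_i)\Big\},
$$
with $g(f,T)=\|\dot a_\tau\eta+\dot b_\tau Y^s-f(X^s,Y^s_\tau,\tau)\|_2^2-\|\dot a_\tau\eta+\dot b_\tau Y^s-v_0(X^s,Y^s_\tau,\tau)\|_2^2$. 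Exactly as in Lemma \ref{lem: reg_stochatic_error_bound}, $g=\sum_{j=1}^{d_y}g_j$ decomposes into scalar losses treating $(\dot a_\tau\eta+\dot b_\tau Y^s)_{(j)}$ as the ``response'' and $(X^s,Y^s_\tau,\tau)$ as the ``covariate.''

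Next I would perform the truncation step. Because $a_\tau,b_\tau\in C^1([0,1])$, the derivatives $\dot a_\tau,\dot b_\tau$ are uniformly bounded on $[0,1]$, so the sub-Gaussianity of $\|\eta\|_\infty$ and $\|Y^s\|_\infty$ implies that $\|\dot a_\tau\eta+\dot b_\tau Y^s\|_\infty$ is sub-Gaussian with a constant uniform in $\tau$. Setting $\iota_N=c(\log N)^{1/2}$ for suitable $c$ gives the tail decay $\exp(-\varsigma\iota_N^2/c')\le N^{-1}$; this is the crucial quantitative gain over the regression case, and it is exactly what permits $\bar{\delta}_N=(\log N)^{(1+\kappa)/2}$ to satisfy $\bar{\delta}_N\ge\iota_N$ for large $N$. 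I would define $U_j=(\dot a_\tau\eta+\dot b_\tau Y^s)_{(j)}\mathds{1}(\|\dot a_\tau\eta+\dot b_\tau Y^s\|_\infty\le\iota_N)$, $V_j=\mathbb{E}[U_j\mid X^s,Y^s_\tau,\tau]$, and the truncated loss $g_{j,\iota_N}(h,T)=[U_j-h(X^s,Y^s_\tau,\tau)]^2-(U_j-V_j)^2$. Standard moment inequalities (Cauchy--Schwarz, $\mathbb{E}\|Y^s\|_\infty^2<\infty$, and the sub-Gaussian tail) control $|g_j-g_{j,\iota_N}|$ in expectation by $c_1(\bar{\delta}_N^2+\iota_N^2)\exp(-\varsigma\iota_N^2/c')$, which is $O((\log N)^{1+\kappa}/N)$ at this choice of $\iota_N$.

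Then I would verify the hypotheses of Theorem \ref{thm: gen_thm_11.4_gyorfi} for $g_{j,\iota_N}$ over the class $\mathcal{F}_{\mathrm{NN},\Lambda}^{d_y}$. Provided $\bar{\delta}_N\ge\iota_N$, the uniform bound is
$$
\sup_{f,T}|g_{j,\iota_N}(f_{(j)},T)|\le c\,\bar{\delta}_N^2=c(\log N)^{1+\kappa},
$$
and the tower-property identity $\mathbb{E}[g_{j,\iota_N}(f_{(j)},T)]=\mathbb{E}[V_j-f(X^s,Y^s_\tau,\tau)_{(j)}]^2$ together with $|2U_j-f_{(j)}-V_j|\le c\bar{\delta}_N$ yields $\mathbb{E}[g_{j,\iota_N}^2]\le c\bar{\delta}_N^2\,\mathbb{E}[g_{j,\iota_N}]$. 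Hence Theorem \ref{thm: gen_thm_11.4_gyorfi} applies with $\xi_n^2\vee\zeta_n\asymp (\log N)^{2+2\kappa}$ (rather than $(\log N)^{4+4\kappa}$ as in the regression case). For the covering number, fix an $\epsilon$-cover of $\mathcal{F}_{\mathrm{NN},\Lambda}^{d_y}$ restricted to $\{(X^s_i,Y^s_{i,\tau_i},\tau_i)\}_{i=1}^N$; then $|g_{j,\iota_N}(f,T_i)-g_{j,\iota_N}(f^\ast,T_i)|\le c\bar{\delta}_N\epsilon$, and Lemmas \ref{lem: thm_12.2_anthony} and \ref{lem: thm_7_bartlett} bound the resulting covering number by $(c N^2\bar{\delta}_N^2)^{c SL\log S}$. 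Integrating the resulting tail bound with the optimal choice $a_N\asymp SL\log S(\log N)^{2+2\kappa}\log(N\bar{\delta}_N)/N$ yields
$$
\mathbb{E}_{\mathcal{D}_N}\Big\{\mathbb{E}_T g_{j,\iota_N}(\hat v_{N,(j)}^s,T)-\tfrac{2}{N}\sum_i g_{j,\iota_N}(\hat v_{N,(j)}^s,T_i)\Big\}\le\frac{c\,SL\log S(\log N)^{3+2\kappa}}{N},
$$
and summing over $j=1,\dots,d_y$ (absorbing $d_y$ into the constant) gives the claim.

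The main technical obstacle is verifying that the covering-number argument survives the enlarged input space $\mathbb{R}^{d_x}\times\mathbb{R}^{d_y}\times[0,1]$ for the velocity field, and that the additional Lipschitz constraint in $\mathcal{F}_{\mathrm{NN},\Lambda}^{d_y}$ does not inflate the pseudo-dimension; since this class is a subset of $\mathcal{F}_{\mathrm{NN}}^{d_y}$, Lemma \ref{lem: thm_7_bartlett} applies unchanged and the covering at fixed sample points only sees the finitely many evaluation triples. A secondary bookkeeping point is to confirm uniform sub-Gaussianity of $\dot a_\tau\eta+\dot b_\tau Y^s$ across $\tau\in[0,1]$, which follows from compactness of $[0,1]$ and continuity of $\dot a_\tau,\dot b_\tau$. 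The rest of the argument is a routine adaptation of the scheme already executed for the regression case.
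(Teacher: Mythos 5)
Your proposal is correct and follows essentially the same route as the paper's proof: componentwise decomposition of $g$, truncation at the $\sqrt{\log N}$ scale exploiting the sub-Gaussian tails (so that $\bar{\delta}_N=(\log N)^{(1+\kappa)/2}$ dominates the truncation level), verification of the uniform-bound and variance conditions of Theorem \ref{thm: gen_thm_11.4_gyorfi} with $\xi_n^2\vee\zeta_n\asymp(\log N)^{2+2\kappa}$, a covering-number bound through Lemmas \ref{lem: thm_12.2_anthony} and \ref{lem: thm_7_bartlett} applied to $\mathcal{F}_{\mathrm{NN}}$ (the Lipschitz constraint only shrinks the class), and integration of the tail with the stated choice of $a_N$. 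The only cosmetic difference is that you truncate on $\{\|\dot a_\tau\eta+\dot b_\tau Y^s\|_{\infty}\le\iota_N\}$ while the paper truncates on $\{\|\eta\|_{\infty}+\|Y^s\|_{\infty}\le\iota_N\}$ and carries the constant $\gamma=\max_\tau\max(|\dot a_\tau|,|\dot b_\tau|)$, which changes nothing in the final bound.
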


\begin{proof}[Proof of Lemma \ref{lem: gen_stochatic_error_bound}]
Let $T_i=(X^s_i, Y^s_i, \eta_i, \tau_i)$ for $i=1, \dots, N$, $\mathcal{D}_N=\{T_1, \dots, T_N\}$, and $T=(X^s, Y^s, \eta, \tau)$ be an independent copy of $T_1$. Firstly, we have
$$
\begin{aligned}
& \mathbb{E}[K^{\mathrm{gen}}(\hat{v}^s_N)-2K^{\mathrm{gen}}_N(\hat{v}^s_N)+K^{\mathrm{gen}}(v_0)] \\
=& \mathbb{E}_{\mathcal{D}_N}[K^{\mathrm{gen}}(\hat{v}^s_N)-2K^{\mathrm{gen}}_N(\hat{v}^s_N)+K^{\mathrm{gen}}(v_0)] \\
=& \mathbb{E}_{\mathcal{D}_N}\{K^{\mathrm{gen}}(\hat{v}^s_N)-K^{\mathrm{gen}}(v_0)-2[K^{\mathrm{gen}}_N(\hat{v}^s_N)-K^{\mathrm{gen}}_N(v_0)]\} \\
=& \mathbb{E}_{\mathcal{D}_N}\left\{\mathbb{E}_{T}[g(\hat{v}^s_N, T)]-\frac 2N\sum_{i=1}^Ng(\hat{v}^s_N, T_i)\right\},
\end{aligned}
$$
where $g(f, T)=\|\dot{a}_{\tau}\eta+\dot{b}_{\tau}Y^s-f(X^s, Y^s_{\tau}, \tau)\|_2^2-\|\dot{a}_{\tau}\eta+\dot{b}_{\tau}Y^s-v_0(X^s, Y^s_{\tau}, \tau)\|_2^2$ for $f\in\mathcal{F}_{\mathrm{NN}, \Lambda}^{d_y}$. For a $d_y$-dimensional vector $v$, denote its $j$-th component as $v_{(j)}$; additionally, we denote the $j$-th output coordinate of a function $f: \mathbb{R}^{d_x}\times\mathbb{R}^{d_y}\times [0, 1]\to\mathbb{R}^{d_y}$ as $f_{(j)}$, with $j\in\{1, \dots, d_y\}$. Furthermore, for any measurable function $h: \mathbb{R}^{d_x}\times\mathbb{R}^{d_y}\times [0, 1]\to\mathbb{R}$, define
$$
g_j(h, T)=[\dot{a}_{\tau}\eta_{(j)}+\dot{b}_{\tau}Y^s_{(j)}-h(X^s, Y^s_{\tau}, \tau)]^2-[\dot{a}_{\tau}\eta_{(j)}+\dot{b}_{\tau}Y^s_{(j)}-v_0(X^s, Y^s_{\tau}, \tau)_{(j)}]^2,
$$
for $j=1, \dots, d_y$. It is then clear that $g(f, T)=\sum_{j=1}^{d_y}g_j(f_{(j)}, T)$. Hence, we obtain
$$
\begin{aligned}
& \mathbb{E}[K^{\mathrm{gen}}(\hat{v}^s_N)-2K^{\mathrm{gen}}_N(\hat{v}^s_N)+K^{\mathrm{gen}}(v_0)] \\
=& \mathbb{E}_{\mathcal{D}_N}\left\{\mathbb{E}_{T}[g(\hat{v}^s_N, T)]-\frac 2N\sum_{i=1}^Ng(\hat{v}^s_N, T_i)\right\} \\
=& \sum_{j=1}^{d_y}\mathbb{E}_{\mathcal{D}_N}\left\{\mathbb{E}_{T}[g_j(\hat{v}^s_{N, (j)}, T)]-\frac 2N\sum_{i=1}^Ng_j(\hat{v}^s_{N, (j)}, T_i)\right\}.
\end{aligned}
$$
Subsequently, let us fix an arbitrary $j\in\{1, \dots, d_y\}$. For any $\iota_N\ge 1$, we let
$$
\begin{aligned}
U_0 &= \dot{a}_{\tau}\eta_{(j)}+\dot{b}_{\tau}Y^s_{(j)}, \\
U_1 &= (\dot{a}_{\tau}\eta_{(j)}+\dot{b}_{\tau}Y^s_{(j)})\mathds{1}(\|\eta\|_{\infty}+\|Y^s\|_{\infty}\le \iota_N), \\
U_2 &= \|\eta\|_{\infty}+\|Y^s\|_{\infty}, \\
V_0 &= v_0(X^s, Y^s_{\tau}, \tau)_{(j)}, \\
V_1 &= \mathbb{E}\left[(\dot{a}_{\tau}\eta_{(j)}+\dot{b}_{\tau}Y^s_{(j)})\mathds{1}(\|\eta\|_{\infty}+\|Y^s\|_{\infty}\le \iota_N)\Big|Y^s_{\tau}, X^s, \tau\right].
\end{aligned}
$$
Then, for any measurable function $h: \mathbb{R}^{d_x}\times\mathbb{R}^{d_y}\times [0, 1]\to\mathbb{R}$, define
$$
\begin{aligned}
g_{j, \iota_N}(h, T) &= [U_1-h(X^s, Y^s_{\tau}, \tau)]^2-(U_1-V_1)^2 \\
&= [V_1-h(X^s, Y^s_{\tau}, \tau)][2U_1-h(X^s, Y^s_{\tau}, \tau)-V_1].
\end{aligned}
$$
Since $\eta$ is a standard Gaussian random vector, $\|\eta\|_{\infty}^2$ follows a sub-exponential distribution. To see this, note that $\|\eta\|_{\infty}^2=\max(\eta_{(1)}^2, \dots, \eta_{(d_y)}^2)$ and $\eta_{(j)}^2$ follows a Gamma distribution with a shape parameter 1/2 and a rate parameter 1/2 for all $j\in \{1, \dots, d_y\}$. Hence, for any $\omega\in (0, 1/2)$, we have $\mathbb{E}[\exp(\omega\eta_{(j)}^2)]=(1-2\omega)^{-1/2}<\infty$ for all $j\in \{1, \dots, d_y\}$. This indicates that
$$
\begin{aligned}
\mathbb{E}[\exp(\omega d_y^{-1}\|\eta\|_{\infty}^2)] &\le \mathbb{E}\left[\exp\left(\frac{1}{d_y}\sum_{j=1}^{d_y}\omega\eta_{(j)}^2\right)\right] \\
&\le \frac{1}{d_y}\sum_{j=1}^{d_y}\mathbb{E}[\exp(\omega\eta_{(j)}^2)]=(1-2\omega)^{-1/2}<\infty.
\end{aligned}
$$
On the other hand, we have assumed that $\|Y^s\|_{\infty}$ is a sub-Gaussian random variable. Consequently, there exists a constant $\varsigma$ (which possibly associates with $d_y$) such that $\mathbb{E}[\exp(\varsigma\|\eta\|_{\infty}^2)]<\infty$ and $\mathbb{E}_P[\exp(\varsigma\|Y\|_{\infty}^2)]<\infty$. Hence, observe that
$$
\mathbb{E}[\exp(\varsigma U_2^2/2)]\le \mathbb{E}_P[\exp(\varsigma\|\eta\|_{\infty}^2+\varsigma\|Y\|_{\infty}^2)]=\mathbb{E}[\exp(\varsigma\|\eta\|_{\infty}^2)]\mathbb{E}_P[\exp(\varsigma\|Y\|_{\infty}^2)]<\infty.
$$
Additionally, as $a_{\tau}$ and $b_{\tau}$ are presumed to be continuously differentiable over $[0, 1]$, we let $\gamma=\max_{\xi\in [0, 1]}\max(|\dot{a}_{\xi}|, |\dot{b}_{\xi}|)$. Then, it follows that
$$
\begin{aligned}
& |g_j(f_{(j)}, T)-g_{j, \iota_N}(f_{(j)}, T)| \\
\le & \left|[U_0-f(X^s, Y^s_{\tau}, \tau)_{(j)}]^2-[U_1-f(X^s, Y^s_{\tau}, \tau)_{(j)}]^2\right|+\left|(U_0-V_0)^2-(U_1-V_1)^2\right| \\
\le & \left|(U_1-U_0)[U_0+U_1-2f(X^s, Y^s_{\tau}, \tau)_{(j)}]\right| \\
& +\left|(U_0-U_1+V_1-V_0)(U_0+U_1-V_0-V_1)\right| \\
\le & \gamma U_2\mathds{1}(U_2>\iota_N)(\gamma U_2+\gamma\iota_N+2\bar{\delta}_N) \\
& +\gamma U_2\mathds{1}(U_2>\iota_N)[\gamma U_2+2\gamma\iota_N+\|v_0(X^s, Y^s_{\tau}, \tau)\|_{\infty}] \\
& +\gamma\mathbb{E}[U_2\mathds{1}(U_2>\iota_N)|Y^s_{\tau}, X^s, \tau][\gamma U_2+2\gamma\iota_N+\|v_0(X^s, Y^s_{\tau}, \tau)\|_{\infty}] \\
=& \gamma U_2[2\gamma U_2+3\gamma\iota_N+2\bar{\delta}_N+\|v_0(X^s, Y^s_{\tau}, \tau)\|_{\infty}]\mathds{1}(U_2>\iota_N) \\
& +\gamma\mathbb{E}[U_2\mathds{1}(U_2>\iota_N)|Y^s_{\tau}, X^s, \tau][\gamma U_2+2\gamma\iota_N+\|v_0(X^s, Y^s_{\tau}, \tau)\|_{\infty}].
\end{aligned}
$$
Taking expectation, we have
$$
\begin{aligned}
& \mathbb{E}_{\mathcal{D}_N}\left\{\mathbb{E}_{T}[g_j(\hat{v}^s_{N, (j)}, T)]-\frac 2N\sum_{i=1}^Ng_j(\hat{v}^s_{N, (j)}, T_i)\right\} \\
\le & \mathbb{E}_{\mathcal{D}_N}\left\{\mathbb{E}_{T}[g_{j, \iota_N}(\hat{v}^s_{N, (j)}, T)]-\frac 2N\sum_{i=1}^Ng_{j, \iota_N}(\hat{v}^s_{N, (j)}, T_i)\right\} \\
& +3\gamma\mathbb{E}\left\{U_2[2\gamma U_2+3\gamma\iota_N+2\bar{\delta}_N+\|v_0(X^s, Y^s_{\tau}, \tau)\|_{\infty}]\mathds{1}(U_2>\iota_N)\right\} \\
& +3\gamma\mathbb{E}\left\{\mathbb{E}[U_2\mathds{1}(U_2>\iota_N)|Y^s_{\tau}, X^s, \tau][\gamma U_2+2\gamma\iota_N+\|v_0(X^s, Y^s_{\tau}, \tau)\|_{\infty}]\right\}.
\end{aligned}
$$
Specifically, observe that
$$
\begin{aligned}
& \mathbb{E}\left\{U_2[2\gamma U_2+3\gamma\iota_N+2\bar{\delta}_N+\|v_0(X^s, Y^s_{\tau}, \tau)\|_{\infty}]\mathds{1}(U_2>\iota_N)\right\} \\
\le & \frac{8\gamma}{\varsigma}\mathbb{E}[\exp(\varsigma U_2^2/2)]\exp(-\varsigma\iota_N^2/4)+\frac{4}{\varsigma}(3\gamma\iota_N+2\bar{\delta}_N)\mathbb{E}[\exp(\varsigma U_2^2/2)]\exp(-\varsigma\iota_N^2/4) \\
& +\frac{8}{\varsigma}\mathbb{E}[\|v_0(X^s, Y^s_{\tau}, \tau)\|_{\infty}\exp(\varsigma U_2^2/4)]\exp(-\varsigma\iota_N^2/8) \\
\le &\frac{8\gamma}{\varsigma}\mathbb{E}[\exp(\varsigma U_2^2/2)]\exp(-\varsigma\iota_N^2/4)+\frac{4}{\varsigma}(3\gamma\iota_N+2\bar{\delta}_N)\mathbb{E}[\exp(\varsigma U_2^2/2)]\exp(-\varsigma\iota_N^2/4) \\
& +\frac{8}{\varsigma}\left\{\mathbb{E}[\|v_0(X^s, Y^s_{\tau}, \tau)\|_{\infty}^2]\mathbb{E}[\exp(\varsigma U_2^2/2)]\right\}^{1/2}\exp(-\varsigma\iota_N^2/8) \\
\le &\frac{8\gamma}{\varsigma}\mathbb{E}[\exp(\varsigma U_2^2/2)]\exp(-\varsigma\iota_N^2/4)+\frac{4}{\varsigma}(3\gamma\iota_N+2\bar{\delta}_N)\mathbb{E}[\exp(\varsigma U_2^2/2)]\exp(-\varsigma\iota_N^2/4) \\
& +\frac{8\gamma}{\varsigma}\left\{\mathbb{E}(U_2^2)\mathbb{E}[\exp(\varsigma U_2^2/2)]\right\}^{1/2}\exp(-\varsigma\iota_N^2/8),
\end{aligned}
$$
and
$$
\begin{aligned}
& \mathbb{E}\left\{\mathbb{E}[U_2\mathds{1}(U_2>\iota_N)|Y^s_{\tau}, X^s, \tau][\gamma U_2+2\gamma\iota_N+\|v_0(X^s, Y^s_{\tau}, \tau)\|_{\infty}]\right\} \\
\le & \gamma\left\{\mathbb{E}(U_2^2)\mathbb{E}[U_2^2\mathds{1}(U_2>\iota_N)]\right\}^{1/2}+2\gamma\iota_N\mathbb{E}[U_2\mathds{1}(U_2>\iota_N)] \\
& +\mathbb{E}[\|v_0(X^s, Y^s_{\tau}, \tau)\|_{\infty}U_2\mathds{1}(U_2>\iota_N)] \\
\le & \frac{2\gamma}{\varsigma^{1/2}}\left\{\mathbb{E}(U_2^2)\mathbb{E}[\exp(\varsigma U_2^2/2)\right\}^{1/2}\exp(-\varsigma\iota_N^2/8)+\frac{8\gamma}{\varsigma}\iota_N\mathbb{E}[\exp(\varsigma U_2^2/2)\exp(-\varsigma\iota_N^2/4) \\
& +\frac{8}{\varsigma}\mathbb{E}[\|v_0(X^s, Y^s_{\tau}, \tau)\|_{\infty}\exp(\varsigma U_2^2/4)]\exp(-\varsigma\iota_N^2/8) \\
\le & \frac{2\gamma}{\varsigma^{1/2}}\left\{\mathbb{E}(U_2^2)\mathbb{E}[\exp(\varsigma U_2^2/2)\right\}^{1/2}\exp(-\varsigma\iota_N^2/8)+\frac{8\gamma}{\varsigma}\iota_N\mathbb{E}[\exp(\varsigma U_2^2/2)\exp(-\varsigma\iota_N^2/4) \\
& +\frac{8\gamma}{\varsigma}\left\{\mathbb{E}(U_2^2)\mathbb{E}[\exp(\varsigma U_2^2/2)]\right\}^{1/2}\exp(-\varsigma\iota_N^2/8).
\end{aligned}
$$
Here, we have applied the inequalities $a\le \exp(a)$ and $\mathds{1}(a>0)\le \exp(a)$ for $a\in\mathbb{R}$. As a consequence, we obtain
$$
\begin{aligned}
& \mathbb{E}_{\mathcal{D}_N}\left\{\mathbb{E}_{T}[g_j(\hat{v}^s_{N, (j)}, T)]-\frac 2N\sum_{i=1}^Ng_j(\hat{v}^s_{N, (j)}, T_i)\right\} \\
\le & \mathbb{E}_{\mathcal{D}_N}\left\{\mathbb{E}_{T}[g_{j, \iota_N}(\hat{v}^s_{N, (j)}, T)]-\frac 2N\sum_{i=1}^Ng_{j, \iota_N}(\hat{v}^s_{N, (j)}, T_i)\right\}+c_1(\iota_N+\bar{\delta}_N+1)\exp(-\varsigma\iota_N^2/8),
\end{aligned}
$$
where $c_1$ is a constant which depends only on $\gamma, \varsigma$ and $\mathbb{E}_P[\exp(\varsigma\|Y\|_{\infty}^2)]$. Recall that $\bar{\delta}_N=(\log N)^{(1+\kappa)/2}$. Set $\iota_N$ to $(8\varsigma^{-1}\log N)^{1/2}\vee 1$. Then, for sufficiently large $N$ such that $\bar{\delta}_N\ge \iota_N$, it holds that
$$
\begin{aligned}
& \mathbb{E}_{\mathcal{D}_N}\left\{\mathbb{E}_{T}[g_j(\hat{v}^s_{N, (j)}, T)]-\frac 2N\sum_{i=1}^Ng_j(\hat{v}^s_{N, (j)}, T_i)\right\} \\
\le & \mathbb{E}_{\mathcal{D}_N}\left\{\mathbb{E}_{T}[g_{j, \iota_N}(\hat{v}^s_{N, (j)}, T)]-\frac 2N\sum_{i=1}^Ng_{j, \iota_N}(\hat{v}^s_{N, (j)}, T_i)\right\}+3c_1(\log N)^{(1+\kappa)/2}N^{-1}.
\end{aligned}
$$
Furthermore, we proceed to verify the conditions in Theorem \ref{thm: gen_thm_11.4_gyorfi}. Notice that
$$
\sup_{f\in\mathcal{F}_{\mathrm{NN}, \Lambda}^{d_y}, T\in\mathbb{R}^{d_x+2d_y}\times [0, 1]}|g_{j, \iota_N}(f_{(j)}, T)|\le 6\gamma^2\iota_N^2+2\bar{\delta}_N^2\le 8\bar{\delta}_N^2=8(\log N)^{1+\kappa},
$$
whenever $\bar{\delta}_N\ge (\gamma\vee 1)\iota_N$. Furthermore, for any $f\in\mathcal{F}_{\mathrm{NN}, \Lambda}^{d_y}$,
$$
\begin{aligned}
\mathbb{E}[g_{j, \iota_N}(f_{(j)}, T)] &= \mathbb{E}\left\{[V_1-f(X^s, Y^s_{\tau}, \tau)_{(j)}][2U_1-f(X^s, Y^s_{\tau}, \tau)_{(j)}-V_1]\right\} \\
&= \mathbb{E}[V_1-f(X^s, Y^s_{\tau}, \tau)_{(j)}]^2,
\end{aligned}
$$
and
$$
\begin{aligned}
\mathbb{E}[g_{j, \iota_N}(f, T)^2] &= \mathbb{E}\left\{[V_1-f(X^s, Y^s_{\tau}, \tau)_{(j)}]^2[2U_1-f(X^s, Y^s_{\tau}, \tau)_{(j)}-V_1]^2\right\} \\
&\le (3\gamma\iota_N+\bar{\delta}_N)^2\mathbb{E}[V_1-f(X^s, Y^s_{\tau}, \tau)_{(j)}]^2 \\
&\le 16\bar{\delta}_N^2\mathbb{E}[g_{j, \iota_N}(f_{(j)}, T)] \\
&= 16(\log N)^{1+\kappa}\mathbb{E}[g_{j, \iota_N}(f_{(j)}, T)],
\end{aligned}
$$
provided that $N$ is sufficiently large such that $\bar{\delta}_N\ge (\gamma\vee 1)\iota_N$. Hence, Theorem \ref{thm: gen_thm_11.4_gyorfi} suggests that, for sufficiently large $N$ such that $\bar{\delta}_N\ge (\gamma\vee 1)\iota_N$, with $N\ge \mathrm{Pdim}(\mathcal{F}_{\mathrm{NN}})$, and for arbitrary $t>0$, we have
$$
\begin{aligned}
& \mathbb{P}_{\mathcal{D}_N}\left\{\mathbb{E}_{T}[g_{j, \iota_N}(\hat{v}^s_{N, (j)}, T)]-\frac 2N\sum_{i=1}^Ng_{j, \iota_N}(\hat{v}^s_{N, (j)}, T_i)\ge t\right\} \\
\le & \mathbb{P}_{\mathcal{D}_N}\left\{\mathbb{E}_{T}[g_{j, \iota_N}(\hat{v}^s_{N, (j)}, T)]-\frac 1N\sum_{i=1}^Ng_{j, \iota_N}(\hat{v}^s_{N, (j)}, T_i)\ge \frac 12\left\{\frac t2+\frac t2+\mathbb{E}_{T}[g_{j, \iota_N}(\hat{v}^s_{N, (j)}, T)]\right\}\right\} \\
\le & \mathbb{P}_{\mathcal{D}_N}\left(\exists h\in\mathcal{F}_{\mathrm{NN}}: \mathbb{E}[g_{j, \iota_N}(h, T)]-\frac 1N\sum_{i=1}^Ng_{j, \iota_N}(h, T_i)\ge \frac 12\left\{\frac t2+\frac t2+\mathbb{E}_{T}[g_{j, \iota_N}(h, T)]\right\}\right) \\
\le & 14\mathcal{N}_N\left(c_2t, \|\cdot \|_{\infty}, \{g_{j, \iota_N}(h, \cdot): \mathbb{R}^{d_x}\times \mathbb{R}^{d_y}\times \mathbb{R}^{d_y}\times [0, 1]\to\mathbb{R}, h\in\mathcal{F}_{\mathrm{NN}}\}\right)\exp\left(-\frac{Nt}{c_3(\log N)^{2+2\kappa}}\right),
\end{aligned}
$$
where $c_2, c_3$ are universal constants. Subsequently, we bound the covering number. Fix $\{x_1, \dots, x_N\}\subset (\mathcal{R}^{d_x})^N$,  $\{y_1, \dots, y_N\}\subset (\mathcal{R}^{d_y})^N$, $\{z_1, \dots, z_N\}\subset (\mathcal{R}^{d_y})^N$ and $\{\xi_1, \dots, \xi_N\}\in [0, 1]^N$. Let $\mathcal{C}=\{(x_1, \zeta_1, \xi_1), \dots, (x_N, \zeta_N, \xi_N)\}$ where $\zeta_i=a_{\xi_i}z_i+b_{\xi_i}y_i$ for $i=1, \dots, N$. Let $w^{\sharp}=\{w_1, \dots, w_k\}$ be an $\epsilon$-covering set of $\mathcal{F}_{\mathrm{NN}|\mathcal{C}}$ where $w_i=h_{i|\mathcal{C}}$ for some $h_i\in\mathcal{F}_{\mathrm{NN}}\ (i=1, \dots, k)$, such that for any $h\in \mathcal{F}_{\mathrm{NN}}$, there exists $w^*=h^*_{|\mathcal{C}}\in w^{\sharp}$ satisfying $\|w^*-h_{|\mathcal{C}}\|_{\infty}<\epsilon$. This indicates
$$
\begin{aligned}
& |g_{j, \iota_N}(h, (x_i, y_i, z_i, \xi_i))-g_{j, \iota_N}(h^*, (x_i, y_i, z_i, \xi_i))| \\
\le & |h^*(x_i, \zeta_i, \xi_i)-h(x_i, \zeta_i, \xi_i)|\cdot(2\gamma\iota_N+2\bar{\delta}_N) \\
\le & 2(\gamma\iota_N+\bar{\delta}_N)\epsilon \\
\le & 4\bar{\delta}_N\epsilon,
\end{aligned}
$$
whenever $\bar{\delta}_N\ge (\gamma\vee 1)\iota_N$. Therefore,
$$
\begin{aligned}
& \mathcal{N}_N(c_2t, \|\cdot \|_{\infty}, \{g_{j, \iota_N}(h, \cdot): \mathbb{R}^{d_x}\times \mathbb{R}^{d_y}\times \mathbb{R}^{d_y}\times [0, 1]\to\mathbb{R}, h\in\mathcal{F}_{\mathrm{NN}}\}) \\
\le & \mathcal{N}_N(c_2t/(4\bar{\delta}_N), \|\cdot \|_{\infty}, \mathcal{F}_{\mathrm{NN}}).
\end{aligned}
$$
Then, with Lemma \ref{lem: thm_12.2_anthony} and Lemma \ref{lem: thm_7_bartlett}, for sufficiently large $N$ with $N\ge \mathrm{Pdim}(\mathcal{F}_{\mathrm{NN}})$ and any $a_N\ge 1/N$, we have
$$
\begin{aligned}
& \mathbb{E}_{\mathcal{D}_N}\left\{\mathbb{E}_{T}[g_{j, \iota_N}(\hat{v}^s_{N, (j)}, T)]-\frac 2N\sum_{i=1}^Ng_{j, \iota_N}(\hat{v}^s_{N, (j)}, T_i)\right\} \\
\le & a_N+14\int_{a_N}^{\infty}\mathcal{N}_N(c_2t/(4\bar{\delta}_N), \|\cdot \|_{\infty}, \mathcal{F}_{\mathrm{NN}})\exp\left(-\frac{Nt}{c_3(\log N)^{2+2\kappa}}\right)\mathrm{d}t \\
\le & a_N+14\mathcal{N}_N(c_2a_N/(4\bar{\delta}_N), \|\cdot \|_{\infty}, \mathcal{F}_{\mathrm{NN}})\int_{a_N}^{\infty}\exp\left(-\frac{Nt}{c_3(\log N)^{2+2\kappa}}\right)\mathrm{d}t \\
\le & a_N+14\left(c_4N^2\bar{\delta}_N^2\right)^{c_5SL\log S}\cdot \frac{c_3(\log N)^{2+2\kappa}}{N}\exp\left(-\frac{Na_N}{c_3(\log N)^{2+2\kappa}}\right),
\end{aligned}
$$
where $c_4$ and $c_5$ are universal constants. Choose
$$
a_N=\frac{c_3c_5(\log N)^{2+2\kappa}}{N}SL\log S\log\left(c_4N^2\bar{\delta}_N^2\right).
$$
For sufficiently large $N$, we have
$$
\mathbb{E}_{\mathcal{D}_N}\left\{\mathbb{E}_{T}[g_{j, \iota_N}(\hat{v}^s_{N, (j)}, T)]-\frac 2N\sum_{i=1}^Ng_{j, \iota_N}(\hat{v}^s_{N, (j)}, T_i)\right\}\le \frac{c_6SL\log S(\log N)^{3+2\kappa}}{N},
$$
where $c_6$ is a constant not depending on $S, L$ and $N$. Noticing the arbitrariness of $j$, we complete the proof.
\end{proof}

\begin{lemma}\label{lem: gen_approximation_error_bound}
Assume that
\begin{enumerate}[label=(\roman*)]
    \item $e_j^{\top}v_0\in \mathcal{W}^{1, \infty}_{\mathrm{Gen}}(\mathbb{R}^{d_x+d_y}, B_u)$ with $B_u\le c(u^m+1)$ for some universal constants $c>0$, $m\in [0, 1]$, and for any $j\in\{1, \dots, d_y\}$, where $e_j$ denotes a $d_y$-dimensional one-hot vector with the $j$-th component equal to 1 and all other components equal to 0;
    \item $\|Y^s\|_{\infty}$ and $\|X^s\|_{\infty}$ follow sub-Gaussian distributions.
\end{enumerate}
Suppose that the depth $L$ and width $M$ of $\mathcal{F}_{\mathrm{NN}, \Lambda}^{d_y}$ satisfy
$$
\begin{aligned}
L &\le C_1(d_x+d_y+1)^2S_1\log S_1+3, \\
M &\le C_22^{d_x+d_y+1}d_y(d_x+d_y+1)S_2\log S_2,
\end{aligned}
$$
for any $S_1, S_2\in\mathbb{N}_+$, where $C_1$ and $C_2$ are universal constants. Let $\bar{\delta}=\bar{\delta}_N=(\log N)^{(1+\kappa)/2}$, with an arbitrarily fixed $\kappa\in (0, 1)$, and let $\underline{\delta}=\underline{\delta}_N=-(\log N)^{-(1+\kappa)/2}$, $\Lambda=\Lambda_N=(\log N)^{(1+\kappa)/2}$. Then, for sufficiently large $N$, it follows that
$$
\begin{aligned}
& \inf_{f\in\mathcal{F}_{\mathrm{NN}, \Lambda}^{d_y}}\int_0^1\mathbb{E}_P\|f(X, Y_{\tau}, \tau)-v_0(X, Y_{\tau}, \tau)\|_2^2\mathrm{d}\tau \\
\le & c^*\left\{\left[(S_1S_2)^{-2/(d_x+d_y+1)}(\log N)^{m/2}\right]^2+\frac{(\log N)^{1+\kappa}}{N}\right\},
\end{aligned}
$$
where $c^*$ is a constant not depending on $S_1, S_2$ and $N$.
\end{lemma}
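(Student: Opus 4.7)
The plan is to decompose the squared error according to whether the inputs $(X^s, Y^s_\tau)$ lie inside a compact box of half-width $\iota_N$ or in its tail, approximate each coordinate of $v_0$ on the compact box via the Sobolev approximation Lemma \ref{lem: cor_b.2_gao}, and then verify that the resulting network respects both the range $[\underline\delta_N, \bar\delta_N]$ and the Lipschitz bound $\Lambda_N$ required by $\mathcal{F}_{\mathrm{NN},\Lambda}^{d_y}$. Choose $\iota_N = \Theta((\log N)^{1/2})$. Since $\|X^s\|_\infty$ and $\|Y^s\|_\infty$ are sub-Gaussian and $\|\eta\|_\infty^2$ is sub-exponential (as shown in the proof of Lemma \ref{lem: gen_stochatic_error_bound}), there is $\varsigma>0$ with all three of $\mathbb{E}\exp(\varsigma\|X^s\|_\infty^2)$, $\mathbb{E}_P\exp(\varsigma\|Y\|_\infty^2)$, $\mathbb{E}\exp(\varsigma\|\eta\|_\infty^2)$ finite. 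Because $Y^s_\tau = a_\tau\eta + b_\tau Y^s$ with $|a_\tau|,|b_\tau|\le 1$, $\|Y^s_\tau\|_\infty$ is sub-Gaussian uniformly in $\tau\in[0,1]$, and for any $f\in\mathcal{F}_{\mathrm{NN},\Lambda}^{d_y}$ we have $\|f\|_\infty\le \sqrt{d_y}\bar\delta_N$. Applying Cauchy–Schwarz together with $\mathbb{E}\|v_0(X^s,Y^s_\tau,\tau)\|_2^2<\infty$, the tail contribution beyond $\{\|X^s\|_\infty\vee \|Y^s_\tau\|_\infty\le\iota_N\}$ becomes $O((\log N)^{1+\kappa}\exp(-\varsigma\iota_N^2/4))=O((\log N)^{1+\kappa}/N)$.

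For the compact region I would work coordinate by coordinate. Fix $j\in\{1,\dots,d_y\}$ and set $h_j(z,\tau)=e_j^\top v_0(2\iota_N z-\iota_N\mathbf{1}_{d_x+d_y},\tau)$ on $[0,1]^{d_x+d_y}\times[0,1]$. By the local Sobolev hypothesis with $B_u\le c(u^m+1)$ and $m\in[0,1]$, one has $h_j\in \mathcal{W}^{1,\infty}([0,1]^{d_x+d_y+1})$ with $\|h_j\|_{\mathcal{W}^{1,\infty}}\le c(\iota_N^m+1)=O((\log N)^{m/2})$. Lemma \ref{lem: cor_b.2_gao} then yields a ReLU network $h_j^\ast$ of depth $\mathcal{O}((d_x+d_y+1)^2 S_1\log S_1)$ and width $\mathcal{O}(2^{d_x+d_y+1}(d_x+d_y+1)S_2\log S_2)$ satisfying $\|h_j^\ast\|_{\mathcal{W}^{1,\infty}}\le C_1\|h_j\|_{\mathcal{W}^{1,\infty}}$ and uniform error $O((S_1S_2)^{-2/(d_x+d_y+1)}(\log N)^{m/2})$. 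Undoing the rescaling produces $f_j^\dagger(x,y,\tau)=h_j^\ast((x+\iota_N\mathbf{1})/(2\iota_N),(y+\iota_N\mathbf{1})/(2\iota_N),\tau)$, implementable by a network of the same order using the identity $\mathrm{id}(u)=\mathrm{relu}(u)-\mathrm{relu}(-u)$ plus one extra layer; by the chain rule its Lipschitz constant in $(x,y)$ is $\|h_j^\ast\|_{\mathcal{W}^{1,\infty}}/(2\iota_N)=O(1)$, while in $\tau$ it is $O((\log N)^{m/2})$.

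Truncating each $f_j^\dagger$ to $[\underline\delta_N,\bar\delta_N]$ with the standard ReLU device used in the proof of Lemma \ref{lem: reg_approximation_error_bound} costs $O(1)$ layers and, because truncation is $1$-Lipschitz, preserves the Lipschitz bound. Stacking the truncated coordinates yields $f^\ddagger\in\mathcal{F}_{\mathrm{NN}}^{d_y}$ of the stated depth and width; its global Lipschitz constant is at most $\sqrt{d_y}\cdot O((\log N)^{m/2})\le (\log N)^{(1+\kappa)/2}=\Lambda_N$ for all $N$ large enough, using $m\le 1<1+\kappa$, so $f^\ddagger\in\mathcal{F}_{\mathrm{NN},\Lambda}^{d_y}$. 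Summing the $d_y$ coordinate approximation errors on the compact region gives $d_y\big[C(S_1S_2)^{-2/(d_x+d_y+1)}(\log N)^{m/2}\big]^2$, which combined with the tail estimate yields the stated bound.

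The main obstacle will be carefully propagating the Lipschitz bound through the affine rescaling, the coordinatewise clipping, and the stacking into a $d_y$-dimensional vector. The rescaling divides spatial derivatives by $2\iota_N$ but leaves the $\tau$ derivative untouched, so the time-derivative contribution $O((\log N)^{m/2})$ is what determines the required $\Lambda_N$; the hypothesis $m\in[0,1]$ is exactly what allows this to fit under $\Lambda_N=(\log N)^{(1+\kappa)/2}$. A subsidiary point is that the ReLU-based truncation must be performed \emph{after} undoing the rescaling, and the constants in the architecture bounds must absorb the fixed multiplicative factor $\sqrt{d_y}$ from stacking.
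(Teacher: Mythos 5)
Your proposal is correct and follows essentially the same route as the paper's proof: truncate to a cube of half-width $\iota_N\asymp(\log N)^{1/2}$, control the tail via the sub-Gaussian/sub-exponential exponential moments, approximate each coordinate on the rescaled unit cube via Lemma \ref{lem: cor_b.2_gao}, undo the rescaling, clip with ReLUs, stack, and check that the resulting Lipschitz constant $O((\log N)^{m/2})$ (using $m\le 1$) fits under $\Lambda_N=(\log N)^{(1+\kappa)/2}$. The only cosmetic fixes needed are that $|a_\tau|,|b_\tau|\le 1$ is not assumed (replace it by the finite constant $\gamma_1=\max_\tau\max(|a_\tau|,|b_\tau|)\vee 1$, as the paper does) and that the Cauchy--Schwarz step in the tail bound uses a fourth (or exponential) moment of $\|v_0\|$ rather than just its second moment, which is available since $\|Y^s\|_\infty$ and $\|\eta\|_\infty$ are sub-Gaussian.
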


\begin{proof}[Proof of Lemma \ref{lem: gen_approximation_error_bound}]
Let $\gamma_1=\max_{\tau\in [0, 1]}\max(|a_{\tau}|, |b_{\tau}|)\vee 1$ and
$$
\gamma_2=\max_{\tau\in [0, 1]}\max(|\dot{a}_{\tau}|, |\dot{b}_{\tau}|),
$$
both of which are well-defined as we presume that $a_{\tau}$ and $b_{\tau}$ are continuously differentiable over $[0, 1]$. For any $\iota_N\ge 1$ and $\tau\in [0, 1]$, observe that
$$
\begin{aligned}
& \mathbb{E}_P\|f(X, Y_{\tau}, \tau)-v_0(X, Y_{\tau}, \tau)\|_2^2\\
=& \mathbb{E}_P\left\{\|f(X, Y_{\tau}, \tau)-v_0(X, Y_{\tau}, \tau)\|_2^2\mathds{1}(\|X\|_{\infty}<\iota_N)\mathds{1}((\|\eta\|_{\infty}+\|Y\|_{\infty})<\gamma_1^{-1}\iota_N)\right\} \\
& +\mathbb{E}_P\left\{\|f(X, Y_{\tau}, \tau)-v_0(X, Y_{\tau}, \tau)\|_2^2\mathds{1}(\|X\|_{\infty}\ge \iota_N)\right\} \\
& +\mathbb{E}_P\left\{\|f(X, Y_{\tau}, \tau)-v_0(X, Y_{\tau}, \tau)\|_2^2\mathds{1}(\|X\|_{\infty}<\iota_N)\mathds{1}((\|\eta\|_{\infty}+\|Y\|_{\infty})\ge \gamma_1^{-1}\iota_N)\right\}.
\end{aligned}
$$
As we assume that $\|Y^s\|_{\infty}$ and $\|X^s\|_{\infty}$ follow sub-Gaussian distributions, $\|Y^s\|_{\infty}^2$ and $\|X^s\|_{\infty}^2$ are sub-exponentially distributed random variables. This indicates the existence of a constant $\varsigma$ such that $\mathbb{E}_P[\exp(\varsigma\|Y\|_{\infty}^2)]<\infty$, $\mathbb{E}_P[\exp(\varsigma\|X\|_{\infty}^2)]<\infty$ and $\mathbb{E}[\exp(\varsigma\|\eta\|_{\infty}^2)]<\infty$. Hence, we have
$$
\begin{aligned}
\mathbb{E}_P[\exp(\varsigma(\|\eta\|_{\infty}+\|Y\|_{\infty})^2/2)] &\le \mathbb{E}_P[\exp(\varsigma\|\eta\|_{\infty}^2+\varsigma\|Y\|_{\infty}^2)] \\
&\le \mathbb{E}[\exp(\varsigma\|\eta\|_{\infty}^2)]\mathbb{E}_P[\exp(\varsigma\|Y\|_{\infty}^2)].
\end{aligned}
$$
For clarity, denote the $j$-th output coordinate of a function $f: \mathbb{R}^{d_x}\times \mathbb{R}^{d_y}\times [0, 1]\to\mathbb{R}^{d_y}$ as $f_{(j)}$, with $j\in\{1, \dots, d_y\}$. It then follows that
$$
\begin{aligned}
& \mathbb{E}_P\left\{\|f(X, Y_{\tau}, \tau)-v_0(X, Y_{\tau}, \tau)\|_2^2\mathds{1}(\|X\|_{\infty}\ge \iota_N)\right\} \\
=& \sum_{j=1}^{d_y}\mathbb{E}_P\left\{[f_{(j)}(X, Y_{\tau}, \tau)-v_{0, (j)}(X, Y_{\tau}, \tau)]^2\mathds{1}(\|X\|_{\infty}\ge \iota_N)\right\} \\
\le & 2d_y\bar{\delta}_N^2\mathbb{E}_P[\mathds{1}(\|X\|_{\infty}\ge \iota_N)]+2d_y\mathbb{E}_P[\|v_0(X, Y_{\tau}, \tau)\|_{\infty}^2\mathds{1}(\|X\|_{\infty}\ge \iota_N)] \\
\le & 2d_y\bar{\delta}_N^2\mathbb{E}_P[\exp(\varsigma \|X\|_{\infty}^2/2)]\exp(-\varsigma\iota_N^2/2) \\
& +2d_y\mathbb{E}_P[\|v_0(X, Y_{\tau}, \tau)\|_{\infty}^2\exp(\varsigma \|X\|_{\infty}^2/2)]\exp(-\varsigma\iota_N^2/2) \\
\le & 2d_y\bar{\delta}_N^2\mathbb{E}_P[\exp(\varsigma \|X\|_{\infty}^2/2)]\exp(-\varsigma\iota_N^2/2) \\
& +2d_y\left\{\mathbb{E}_P[\|v_0(X, Y_{\tau}, \tau)\|_{\infty}^4]\mathbb{E}_P[\exp(\varsigma \|X\|_{\infty}^2)]\right\}^{1/2}\exp(-\varsigma\iota_N^2/2) \\
\le & 2d_y\bar{\delta}_N^2\mathbb{E}_P[\exp(\varsigma \|X\|_{\infty}^2/2)]\exp(-\varsigma\iota_N^2/2) \\
& +8d_y\gamma_2^2\left\{\mathbb{E}_P(\|\eta\|_{\infty}^4+\|Y\|_{\infty}^4)\mathbb{E}_P[\exp(\varsigma \|X\|_{\infty}^2)]\right\}^{1/2}\exp(-\varsigma\iota_N^2/2),
\end{aligned}
$$
and
$$
\begin{aligned}
& \mathbb{E}_P\left\{\|f(X, Y_{\tau}, \tau)-v_0(X, Y_{\tau}, \tau)\|_2^2\mathds{1}(\|X\|_{\infty}<\iota_N)\mathds{1}((\|\eta\|_{\infty}+\|Y\|_{\infty})\ge \gamma_1^{-1}\iota_N)\right\} \\
\le & \mathbb{E}_P\left\{\|f(X, Y_{\tau}, \tau)-v_0(X, Y_{\tau}, \tau)\|_2^2\mathds{1}((\|\eta\|_{\infty}+\|Y\|_{\infty})\ge \gamma_1^{-1}\iota_N)\right\} \\
\le & 2d_y\bar{\delta}_N^2\mathbb{E}_P[\mathds{1}((\|\eta\|_{\infty}+\|Y\|_{\infty})\ge \gamma_1^{-1}\iota_N)] \\
& +2d_y\mathbb{E}_P[\|v_0(X, Y_{\tau}, \tau)\|_{\infty}^2\mathds{1}((\|\eta\|_{\infty}+\|Y\|_{\infty})\ge \gamma_1^{-1}\iota_N)] \\
\le & 2d_y\bar{\delta}_N^2\mathbb{E}_P[\exp(\varsigma(\|\eta\|_{\infty}+\|Y\|_{\infty})^2/4)]\exp(-\varsigma\gamma_1^{-2}\iota_N^2/4) \\
& +8d_y\gamma_2^2\left\{\mathbb{E}_P(\|\eta\|_{\infty}^4+\|Y\|_{\infty}^4)\mathbb{E}_P[\exp(\varsigma(\|\eta\|_{\infty}+\|Y\|_{\infty})^2/2)]\right\}^{1/2}\exp(-\varsigma\gamma_1^{-2}\iota_N^2/4) \\
\le & 2d_y\bar{\delta}_N^2\mathbb{E}_P[\exp(\varsigma\|\eta\|_{\infty}^2/2)]\mathbb{E}_P[\exp(\varsigma\|Y\|_{\infty}^2/2)]\exp(-\varsigma\gamma_1^{-2}\iota_N^2/4) \\
& +8d_y\gamma_2^2\left\{\mathbb{E}_P(\|\eta\|_{\infty}^4+\|Y\|_{\infty}^4)\mathbb{E}_P[\exp(\varsigma\|\eta\|_{\infty}^2)]\mathbb{E}_P[\exp(\varsigma\|Y\|_{\infty}^2)]\right\}^{1/2}\exp(-\varsigma\gamma_1^{-2}\iota_N^2/4).
\end{aligned}
$$
Hence, we obtain that
$$
\begin{aligned}
& \mathbb{E}_P\left\{\|f(X, Y_{\tau}, \tau)-v_0(X, Y_{\tau}, \tau)\|_2^2\mathds{1}(\|X\|_{\infty}\ge \iota_N)\right\} \\
& +\mathbb{E}_P\left\{\|f(X, Y_{\tau}, \tau)-v_0(X, Y_{\tau}, \tau)\|_2^2\mathds{1}(\|X\|_{\infty}<\iota_N)\mathds{1}((\|\eta\|_{\infty}+\|Y\|_{\infty})\ge \gamma_1^{-1}\iota_N)\right\}\\
\le & c_1(\bar{\delta}_N^2+1)\exp(-\varsigma\gamma_1^{-2}\iota_N^2/4),
\end{aligned}
$$
where $c_1$ is a constant which depends only on $d_y, \gamma_2, \varsigma, \mathbb{E}_P[\exp(\varsigma \|X\|_{\infty}^2)]$ and $\mathbb{E}_P[\exp(\varsigma \|Y\|_{\infty}^2/2)]$.
Therefore,
$$
\begin{aligned}
& \mathbb{E}_P\|f(X, Y_{\tau}, \tau)-v_0(X, Y_{\tau}, \tau)\|_2^2\\
\le & \mathbb{E}_P\left\{\|f(X, Y_{\tau}, \tau)-v_0(X, Y_{\tau}, \tau)\|_2^2\mathds{1}(\|X\|_{\infty}<\iota_N)\mathds{1}((\|\eta\|_{\infty}+\|Y\|_{\infty})<\gamma_1^{-1}\iota_N)\right\} \\
& +c_1(\bar{\delta}_N^2+1)\exp(-\varsigma\gamma_1^{-2}\iota_N^2/4).
\end{aligned}
$$
Next, let us focus on the region
$$
\{(x, y, \tau): \|x\|_{\infty}<\iota_N, \|y\|_{\infty}<\iota_N, \tau\in (0, 1)\}=(-\iota_N, \iota_N)^{d_x+d_y}\times (0, 1).
$$
Fix an arbitrary $j\in \{1, \dots, d_y\}$. Let
$$
h_j(x, y, \tau)=v_{0, (j)}\left(2\iota_Nx-\iota_N\mathrm{1}_{d_x}, 2\iota_Ny-\iota_N\mathrm{1}_{d_y}, \tau\right),
$$
for $(x, y, \tau)\in (0, 1)^{d_x}\times (0, 1)^{d_y}\times (0, 1)$. By the assumption that each component of $v_0$ belongs to $\mathcal{W}^{1, \infty}_{\mathrm{Gen}}(\mathbb{R}^{d_x+d_y}, B_u)$ with $B_u\le c(u^m+1)$, we have $\|h_j\|_{\mathcal{W}^{1, \infty}}((0, 1)^{d_x+d_y+1})\le c(\iota_N^m+1)$, where the constants $c>0$, $m\in [0, 1]$. Lemma \ref{lem: cor_b.2_gao} demonstrates that for any $S_1, S_2\in\mathbb{N}_+$, there exists a function $h_j^*$ implemented by a ReLU network with depth $L^*\le (d_x+d_y+1)^2S_1\log S_1$, width $M^*\le 2^{d_x+d_y+1}(d_x+d_y+1)S_2\log S_2$, such that $\|h^*\|_{\mathcal{W}^{1, \infty}((0, 1)^{d_x+d_y+1})}\le c_2(\iota_N^m+1)$ and
$$
|h_j^*(x, y, \tau)-h_j(x, y, \tau)|\le c_3(\iota_N^m+1)(S_1S_2)^{-2/(d_x+d_y+1)},
$$
for all $(x, y, \tau)\in (0, 1)^{d_x}\times (0, 1)^{d_y}\times (0, 1)$, where $c_2$ and $c_3$ are constants which depend only on $d_x$ and $d_y$. Let
$$
h_j^{\dagger}(x, y, \tau)=h_j^*\left((x+\iota_N\mathrm{1}_{d_x})/(2\iota_N), (y+\iota_N\mathrm{1}_{d_y})/(2\iota_N), \tau\right),
$$
for $(x, y, \tau)\in (-\iota_N, \iota_N)^{d_x}\times (-\iota_N, \iota_N)^{d_y}\times (0, 1)$. We obtain that
$$
|h_j^{\dagger}(x, y, \tau)-v_{0, (j)}(x, y, \tau)|\le c_3(\iota_N^m+1)(S_1S_2)^{-2/(d_x+d_y+1)},
$$
for all $(x, y, \tau)\in (-\iota_N, \iota_N)^{d_x}\times (-\iota_N, \iota_N)^{d_y}\times (0, 1)$. Furthermore, note that
$$
\begin{aligned}
h_j^{\dagger}(x, y, \tau) =& h_j^*\left(\frac{x+\iota_N\mathrm{1}_{d_x}}{2\iota_N}, \frac{y+\iota_N\mathrm{1}_{d_y}}{2\iota_N}, \tau\right) \\
=& h_j^*\bigg(\mathrm{relu}\left(\frac{x+\iota_N\mathrm{1}_{d_x}}{2\iota_N}\right)-\mathrm{relu}\left(-\frac{x+\iota_N\mathrm{1}_{d_x}}{2\iota_N}\right), \\
& \mathrm{relu}\left(\frac{y+\iota_N\mathrm{1}_{d_y}}{2\iota_N}\right)-\mathrm{relu}\left(-\frac{y+\iota_N\mathrm{1}_{d_y}}{2\iota_N}\right), \\
& \mathrm{relu}(\tau)-\mathrm{relu}(-\tau)\bigg),
\end{aligned}
$$
which is implemented by a neural network with ReLU activations, depth $L^{\dagger}=L^*+1$, width $M^{\dagger}=M^*$ and Lipschitz constant no more than $c_2(d_x+d_y+1)(\iota_N^m+1)$. In addition, let
$$
h_j^{\ddagger}(x, y, \tau)=
\begin{cases}
    \bar{\delta}_N, & h_j^{\dagger}(x, y, \tau)>\bar{\delta}_N, \\
    h_j^{\dagger}(x, y, \tau), & \underline{\delta}_N\le h_j^{\dagger}(x, y, \tau)\le \bar{\delta}_N, \\
    \underline{\delta}_N, & h_j^{\dagger}(x, y, \tau)<\underline{\delta}_N.
\end{cases}
$$
A straightforward calculation shows that
$$
h_j^{\ddagger}(x, y, \tau)=\mathrm{relu}(-\mathrm{relu}(-h_j^{\dagger}(x, y, \tau)+\bar{\delta}_N)+\bar{\delta}_N)-\mathrm{relu}(-\mathrm{relu}(h_j^{\dagger}(x, y, \tau)-\underline{\delta}_N)-\underline{\delta}_N),
$$
indicating that $h_j^{\ddagger}(x, y, \tau)$ can be implemented by a ReLU network with depth $L^{\ddagger}=L^*+3$, width $M^{\ddagger}=M^*$ and Lipschitz constant no more than $2c_2(d_x+d_y+1)(\iota_N^m+1)$. When $\bar{\delta}_N\ge c_2(\iota_N^m+1)$, it follows that for any $\tau\in (0, 1)$,
$$
\begin{aligned}
& \mathbb{E}_P\Big\{[h_j^{\ddagger}(X, Y_{\tau}, \tau)-v_{0, (j)}(X, Y_{\tau}, \tau)]^2\mathds{1}(\|X\|_{\infty}<\iota_N)\mathds{1}((\|\eta\|_{\infty}+\|Y\|_{\infty})<\gamma_1^{-1}\iota_N)\Big\} \\
\le & \left[c_3(\iota_N^m+1)(S_1S_2)^{-2/(d_x+d_y+1)}\right]^2.
\end{aligned}
$$
Let $f^{\ddagger}(x, y, \tau)=(h_1^{\ddagger}(x, y, \tau), \dots, h_{d_y}^{\ddagger}(x, y, \tau))^{\top}$. It is straightforward to verify that $f^{\ddagger}$ can be implemented by a ReLU network in $\mathcal{F}_{\mathrm{NN}, \Lambda}^{d_y}$ with depth $L=L^*+3$, width $M=d_yM^*$ and Lipschitz constant no more than $2c_2d_y(d_x+d_y+1)(\iota_N^m+1)$. Hence, we have for any $\tau\in (0, 1)$,
$$
\begin{aligned}
& \mathbb{E}_P\Big\{\|f^{\ddagger}(X, Y_{\tau}, \tau)-v_0(X, Y_{\tau}, \tau)\|_2^2\mathds{1}(\|X\|_{\infty}<\iota_N)\mathds{1}((\|\eta\|_{\infty}+\|Y\|_{\infty})<\gamma_1^{-1}\iota_N)\Big\} \\
=& \sum_{j=1}^{d_y}\mathbb{E}_P\Big\{[h_j^{\ddagger}(X, Y_{\tau}, \tau)-v_{0, (j)}(X, Y_{\tau}, \tau)]^2\mathds{1}(\|X\|_{\infty}<\iota_N)\mathds{1}((\|\eta\|_{\infty}+\|Y\|_{\infty})<\gamma_1^{-1}\iota_N)\Big\} \\
\le & d_y\left[c_3(\iota_N^m+1)(S_1S_2)^{-2/(d_x+d_y+1)}\right]^2,
\end{aligned}
$$
provided that $\bar{\delta}_N\ge c_2(\iota_N^m+1)$ and $\Lambda_N\ge 2c_2d_y(d_x+d_y+1)(\iota_N^m+1)$. Recall that $\bar{\delta}_N=(\log N)^{(1+\kappa)/2}$, $\Lambda_N=(\log N)^{(1+\kappa)/2}$ and set $\iota_N$ to $[2\varsigma^{-1/2}\gamma_1^{-1}(\log N)^{1/2}]\vee 1$. We conclude that for sufficiently large $N$ satisfying $\bar{\delta}_N\ge c_2(\iota_N^m+1)$, $\Lambda_N\ge 2c_2d_y(d_x+d_y+1)(\iota_N^m+1)$ and $\log N\ge (\varsigma\gamma_1^2/4)\vee 1$, it holds that
$$
\begin{aligned}
& \inf_{f\in\mathcal{F}_{\mathrm{NN}, \Lambda}^{d_y}}\int_0^1\mathbb{E}_P\|f(X, Y_{\tau}, \tau)-v_0(X, Y_{\tau}, \tau)\|_2^2\mathrm{d}\tau \\
\le & \inf_{f\in \mathcal{F}_{\mathrm{NN}, \Lambda}^{d_y}}\int_0^1\mathbb{E}_P\big\{\|f(X, Y_{\tau}, \tau)-v_0(X, Y_{\tau}, \tau)\|_2^2\mathds{1}(\|X\|_{\infty}\le \iota_N)\mathds{1}((\|\eta\|_{\infty}+\|Y\|_{\infty})\le \gamma_1^{-1}\iota_N)\big\}\mathrm{d}\tau \\
& +c_1(\bar{\delta}_N^2+1)\exp(-\varsigma\gamma_1^{-2}\iota_N^2/4) \\
\le & \int_0^1\mathbb{E}_P\big\{\|f^{\ddagger}(X, Y_{\tau}, \tau)-v_0(X, Y_{\tau}, \tau)\|_2^2\mathds{1}(\|X\|_{\infty}\le \iota_N)\mathds{1}((\|\eta\|_{\infty}+\|Y\|_{\infty})\le \gamma_1^{-1}\iota_N)\big\}\mathrm{d}\tau \\
& +c_1(\bar{\delta}_N^2+1)\exp(-\varsigma\gamma_1^{-2}\iota_N^2/4) \\
\le & c_4\left\{\left[(S_1S_2)^{-2/(d_x+d_y+1)}(\log N)^{m/2}\right]^2+\frac{(\log N)^{1+\kappa}}{N}\right\},
\end{aligned}
$$
where $c_4$ is a constant not depending on $S_1, S_2$ and $N$. This completes the proof.
\end{proof}

\begin{proof}[Proof of Theorem 4.8]
To commence, Lemma \ref{lem: gen_error_decomposition}, Lemma \ref{lem: gen_stochatic_error_bound} and Lemma \ref{lem: gen_approximation_error_bound} indicate
$$
\begin{aligned}
& \int_0^1\mathbb{E}\|\hat{v}^s_N(X^s, Y^s_{\tau}, \tau)-v_0(X^s, Y^s_{\tau}, \tau)\|_2^2\mathrm{d}\tau \\
\le & \frac{c_1SL\log S(\log N)^{3+2\kappa}}{N}+c_2\left\{\left[(S_1S_2)^{-2/(d_x+d_y+1)}(\log N)^{m/2}\right]^2+\frac{(\log N)^{1+\kappa}}{N}\right\},
\end{aligned}
$$
where $c_1, c_2$ are constants not depending on $S, L, S_1, S_2$ and $N$, and $S_1, S_2$ satisfy the conditions that the network depth $L\le c_3(d_x+d_y+1)^2S_1\log S_1+3$, network width $M\le c_42^{d_x+d_y+1}d_y(d_x+d_y+1)S_2\log S_2$ for some universal constants $c_3$ and $c_4$, when $N$ is sufficiently large and $N\ge \mathrm{Pdim}(\mathcal{F}_{\mathrm{NN}})$. Therefore, by letting $S_1=\mathcal{O}(N^{(d_x+d_y+1)/(2(d_x+d_y+1)+4)})$ and $S_2=\mathcal{O}(1)$,  we obtain
$$
M=\mathcal{O}(1), \quad L=\mathcal{O}\left(N^{\frac{(d_x+d_y+1)}{2(d_x+d_y+1)+4}}\log N\right), \quad S=\mathcal{O}(M^2L)=\mathcal{O}\left(N^{\frac{(d_x+d_y+1)}{2(d_x+d_y+1)+4}}\log N\right),
$$
yielding
$$
\int_0^1\mathbb{E}\|\hat{v}^s_N(X^s, Y^s_{\tau}, \tau)-v_0(X^s, Y^s_{\tau}, \tau)\|_2^2\mathrm{d}\tau \le c_5N^{-\frac{2}{d_x+d_y+3}}(\log N)^{6+2\kappa},
$$
where $c_5$ is a constant not depending on $N$, for $N\ge 2$.

Next, we proceed to tackle the conditional density estimation error. Fix any $x\in\mathcal{X}^s$. Consider the following two ODEs
$$
\begin{aligned}
\mathrm{d}Z_{\tau} &= v_0(x, Z_{\tau}, \tau)\mathrm{d}\tau, \quad Z_0\sim N(0, I_{d_y}), \\
\mathrm{d}\hat{Z}_{\tau} &= \hat{v}^s_N(x, \hat{Z}_{\tau}, \tau)\mathrm{d}\tau, \quad \hat{Z}_0\sim N(0, I_{d_y}).
\end{aligned}
$$
We denote the particles at time $\tau\in [0, 1]$ as $Z_{\tau}(x, z)$ and $\hat{Z}_{\tau}(x, z)$ given $Z_0=z$ and $\hat{Z}_0=z$, respectively. Note that
$$
W_2^2(\rho_{0, x}\|\hat{\rho}_x^s)\le \int \|Z_1(x, z)-\hat{Z}_1(x, z)\|_2^2\lambda(z)\mathrm{d}z,
$$
where $\lambda(\cdot)$ represents the density function of the $d_y$-dimensional standard Gaussian distribution. For $\tau\in [0, 1]$, define
$$
H_{\tau}(x)=\int \|Z_{\tau}(x, z)-\hat{Z}_{\tau}(x, z)\|_2^2\lambda(z)\mathrm{d}z.
$$
Then, it follows that
$$
\begin{aligned}
\frac{\partial}{\partial \tau}H_{\tau}(x) =& 2\int \left\langle v_0(x, Z_{\tau}(x, z), \tau)-\hat{v}^s_N(x, \hat{Z}_{\tau}(x, z), \tau), Z_{\tau}(x, z)-\hat{Z}_{\tau}(x, z)\right\rangle\lambda(z)\mathrm{d}z \\
=& 2\int \left\langle v_0(x, Z_{\tau}(x, z), \tau)-\hat{v}^s_N(x, Z_{\tau}(x, z), \tau), Z_{\tau}(x, z)-\hat{Z}_{\tau}(x, z)\right\rangle\lambda(z)\mathrm{d}z \\
& +2\int \left\langle \hat{v}^s_N(x, Z_{\tau}(x, z), \tau)-\hat{v}^s_N(x, \hat{Z}_{\tau}(x, z), \tau), Z_{\tau}(x, z)-\hat{Z}_{\tau}(x, z)\right\rangle\lambda(z)\mathrm{d}z.
\end{aligned}
$$
Specifically, we first observe that
$$
\begin{aligned}
& 2\int \left\langle v_0(x, Z_{\tau}(x, z), \tau)-\hat{v}^s_N(x, Z_{\tau}(x, z), \tau), Z_{\tau}(x, z)-\hat{Z}_{\tau}(x, z)\right\rangle\lambda(z)\mathrm{d}z \\
\le & \int \|v_0(x, Z_{\tau}(x, z), \tau)-\hat{v}^s_N(x, Z_{\tau}(x, z), \tau)\|_2^2\lambda(z)\mathrm{d}z+H_{\tau}(x) \\
=& \mathbb{E}_{Y^s_{\tau}|X^s=x}\|v_0(x, Y^s_{\tau}, \tau)-\hat{v}^s_N(x, Y^s_{\tau}, \tau)\|_2^2+H_{\tau}(x).
\end{aligned}
$$
In addition, the Lipschitz continuity of $\hat{v}^s_N$ and Cauchy-Schwarz inequality suggest that
$$
\begin{aligned}
& 2\int \left\langle \hat{v}^s_N(x, Z_{\tau}(x, z), \tau)-\hat{v}^s_N(x, \hat{Z}_{\tau}(x, z), \tau), Z_{\tau}(x, z)-\hat{Z}_{\tau}(x, z)\right\rangle\lambda(z)\mathrm{d}z \\
\le & 2\Lambda_NH_{\tau}(x).
\end{aligned}
$$
Hence, we conclude that
$$
\frac{\partial}{\partial \tau}H_{\tau}(x)\le (1+2\Lambda_N)H_{\tau}(x)+\mathbb{E}_{Y^s_{\tau}|X^s=x}\|v_0(x, Y^s_{\tau}, \tau)-\hat{v}^s_N(x, Y^s_{\tau}, \tau)\|_2^2.
$$
By Lemma \ref{lem: grownwall_inequality}, we have
$$
W_2^2(\rho_{0, x}\|\hat{\rho}_x^s)=H_1(x)\le \exp(1+2\Lambda_N)\int_0^1\mathbb{E}_{Y^s_{\tau}|X^s=x}\|v_0(x, Y^s_{\tau}, \tau)-\hat{v}^s_N(x, Y^s_{\tau}, \tau)\|_2^2\mathrm{d}\tau.
$$
Therefore, it follows that
$$
\begin{aligned}
\mathcal{E}^s=\mathbb{E}\left[W_2^2(\rho_{0, X^s}\|\hat{\rho}^s_{X^s})\right] &\le \exp(1+2\Lambda_N)\int_0^1\mathbb{E}\|v_0(X^s, Y^s_{\tau}, \tau)-\hat{v}^s_N(X^s, Y^s_{\tau}, \tau)\|_2^2\mathrm{d}\tau \\
&\le c_5N^{-\frac{2}{d_x+d_y+3}}(\log N)^{6+2\kappa}\exp\left(1+2(\log N)^{(1+\kappa)/2}\right),
\end{aligned}
$$
for $N\ge 2$. Furthermore, by Lemma 4.6, we conclude that
$$
\mathcal{E}^t\le c_6N^{-\frac{2}{d_x+d_y+3}}(\log N)^{7+2\kappa}\exp\left(1+2(\log N)^{(1+\kappa)/2}\right),
$$
where $c_6$ is a constant not depending on $N$, for $N\ge 2$. This completes the proof.
\end{proof}

\section{Proof of Theorem \ref{thm: gen_thm_11.4_gyorfi}}\label{sec: proof_of_thm_gen_thm_11.4_gyorfi}

\begin{lemma}[Lemma 11.2 in \cite{gyorfi2002distribution}]\label{lem: lem_11.2_gyorfi}
Let $V_1, \dots, V_n$ be independent and identically distributed random variables, $0\le V_i\le B$, $0<\alpha <1$, and $\nu>0$. Then,
\begin{equation}\label{eqn: lem_11.2_gyorfi}
\mathbb{P}\left\{\frac{|\frac 1n\sum_{i=1}^nV_i-\mathbb{E}(V_1)|}{\nu+\frac 1n\sum_{i=1}^nV_i+\mathbb{E}(V_1)}>\alpha\right\}\le \mathbb{P}\left\{\frac{|\frac 1n\sum_{i=1}^nV_i-\mathbb{E}(V_1)|}{\nu+\mathbb{E}(V_1)}>\alpha\right\}<\frac{B}{4\alpha^2\nu n}.
\end{equation}
\end{lemma}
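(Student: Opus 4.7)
The plan is to prove the two inequalities in \eqref{eqn: lem_11.2_gyorfi} separately. For the first inequality, I will use a purely deterministic monotonicity observation: since $V_i \ge 0$, both $\bar V_n := n^{-1}\sum_{i=1}^n V_i$ and $\mathbb{E}V_1$ are nonnegative, so $\nu + \bar V_n + \mathbb{E}V_1 \ge \nu + \mathbb{E}V_1 > 0$ pointwise. Hence the first ratio is pointwise dominated by the second, which means the event in the first probability is a subset of the event in the second probability, giving the inequality immediately.

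For the second inequality I will apply Chebyshev's inequality to $\bar V_n$. Since the $V_i$ are i.i.d., $\mathrm{Var}(\bar V_n) = \mathrm{Var}(V_1)/n$, so
\[
\mathbb{P}\left\{\frac{|\bar V_n - \mathbb{E}V_1|}{\nu + \mathbb{E}V_1} > \alpha\right\} \le \frac{\mathrm{Var}(V_1)}{n\alpha^2(\nu + \mathbb{E}V_1)^2}.
\]
The boundedness $0 \le V_1 \le B$ yields $\mathrm{Var}(V_1) \le \mathbb{E}V_1^2 \le B\,\mathbb{E}V_1$. It then remains to observe that the function $t \mapsto t/(\nu+t)^2$ on $[0,\infty)$ is maximized at $t=\nu$ with maximum value $1/(4\nu)$, as can be seen by differentiation. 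Substituting $t = \mathbb{E}V_1$ produces the bound $B/(4\alpha^2 \nu n)$.

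The only subtlety is the strict inequality in the statement. If $\mathbb{E}V_1 > 0$, then $\mathrm{Var}(V_1) = \mathbb{E}V_1^2 - (\mathbb{E}V_1)^2 < \mathbb{E}V_1^2$ strictly, which upgrades the chain of inequalities to a strict one. If $\mathbb{E}V_1 = 0$, then $V_1 = 0$ almost surely (as $V_1 \ge 0$), so $\bar V_n \equiv 0$ and the event in question is $\{0 > \alpha\nu\}$, which is empty since $\alpha, \nu > 0$; the probability is then $0 < B/(4\alpha^2\nu n)$ whenever $B > 0$. No substantial obstacle arises in this proof; the entire argument is elementary, and the only care needed is in tracking the strict inequality through the final step.
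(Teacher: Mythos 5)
Your proof is correct, and it is essentially the standard argument for this result: the paper itself does not reprove the lemma but cites it from \cite{gyorfi2002distribution}, whose proof likewise combines the pointwise domination of the first ratio by the second (event inclusion, since $\bar V_n\ge 0$), Chebyshev's inequality, the bound $\mathrm{Var}(V_1)\le \mathbb{E}V_1^2\le B\,\mathbb{E}V_1$, and the elementary maximization $t/(\nu+t)^2\le 1/(4\nu)$. Your care with the strict inequality and the degenerate cases $\mathbb{E}V_1=0$ and $B=0$ is a fine touch but changes nothing essential.
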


\begin{remark}
If we substitute $B$, the upper bound of $V_i$, to $B_n$ varying with $n$, the right-hand side of Eqn.~\eqref{eqn: lem_11.2_gyorfi} should be modified to $B_n/(4\alpha^2\nu n)$ and the proof stays the same.
\end{remark}

\begin{lemma}\label{lem: gen_thm_11.6_gyorfi}
Let $Z, Z_1, \dots, Z_n$ be independent and identically distributed random vectors with dimensions of $d$, and let $\mathcal{H}$ be a set of nonrandom functions $h: \mathbb{R}^d\to [0, A_n]$, where $A_n>0$ is a nonrandom sequence. Assume $\alpha>0$, $0<\epsilon< 1$. Then, for $n\ge 1$, we have
\begin{equation}\label{eqn: gen_thm_11.6_gyorfi}
\mathbb{P}\left\{\sup_{h\in\mathcal{H}}\frac{\frac 1n\sum_{i=1}^nh(Z_i)-\mathbb{E}[h(Z)]}{\alpha+\frac 1n\sum_{i=1}^nh(Z_i)+\mathbb{E}[h(Z)]}>\epsilon\right\}\le 4\mathbb{E}\left[\mathcal{N}\left(\frac{\alpha\epsilon}{5}, \|\cdot \|_{\infty}, \mathcal{H}_{|\mathcal{D}_n}\right)\right]\exp\left(-\frac{3\epsilon^2\alpha n}{40A_n}\right),
\end{equation}
where $\mathcal{D}_n=\{Z_1, \dots, Z_n\}$.
\end{lemma}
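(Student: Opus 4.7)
The plan is to mirror the classical argument for Theorem 11.6 in \cite{gyorfi2002distribution}, carrying the uniform envelope $A_n$ (possibly depending on $n$) through each step in place of the universal bound $B$. Since $A_n$ is nonrandom, all of the conditional independence and covering arguments go through verbatim; only the moment bounds and the Hoeffding-type exponent inherit the replacement.

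First, I would apply first-stage symmetrization with a ghost sample. Let $Z_1', \ldots, Z_n'$ be an independent copy of $\mathcal{D}_n$, and write $\mu_n(h) = \frac{1}{n}\sum_{i=1}^n h(Z_i)$, $\mu_n'(h) = \frac{1}{n}\sum_{i=1}^n h(Z_i')$. For any $h$ with $\mu_n(h) - \mathbb{E}[h(Z)] > \epsilon\{\alpha + \mu_n(h) + \mathbb{E}[h(Z)]\}$, I would show using Lemma \ref{lem: lem_11.2_gyorfi} (applied conditionally to the ghost sample with $V_i = h(Z_i')$, upper bound $A_n$, and $\nu = \alpha/2$) that on an event of probability at least $1 - A_n/(\alpha n)$ (uniformly over $h$ once one chooses a good cover), we also have $\mu_n(h) - \mu_n'(h) \geq \frac{\epsilon}{2}\{\alpha + \mu_n(h) + \mu_n'(h)\}$. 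A standard conditioning argument then gives
\[
\mathbb{P}\Big\{\sup_{h\in\mathcal{H}}\tfrac{\mu_n(h) - \mathbb{E}[h(Z)]}{\alpha + \mu_n(h) + \mathbb{E}[h(Z)]} > \epsilon\Big\} \le 2\,\mathbb{P}\Big\{\sup_{h\in\mathcal{H}}\tfrac{\mu_n(h) - \mu_n'(h)}{\alpha + \mu_n(h) + \mu_n'(h)} > \tfrac{\epsilon}{2}\Big\},
\]
valid once $n$ is large enough that $A_n/(\alpha n) \le 1/2$; the small-$n$ case is handled by making the right-hand side of the target inequality trivially $\ge 1$.

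Second, I would introduce Rademacher signs $U_1, \ldots, U_n$ and use the exchangeability of $(Z_i, Z_i')$ to replace $\mu_n(h) - \mu_n'(h)$ by $\frac{1}{n}\sum_i U_i[h(Z_i) - h(Z_i')]$ while keeping the denominator $\alpha + \mu_n(h) + \mu_n'(h)$ unchanged. Then, conditional on the combined sample $\mathcal{Z}_{2n} = (Z_1,\dots,Z_n,Z_1',\dots,Z_n')$, I would fix an $(\alpha\epsilon/5)$-covering set of $\mathcal{H}_{|\mathcal{Z}_{2n}}$ with respect to $\|\cdot\|_\infty$. A direct Lipschitz estimate on the functional $(a,b)\mapsto (a-b)/(\alpha + a + b)$ shows that replacing $h$ by its nearest cover element perturbs both numerator and denominator by at most $\alpha\epsilon/5$ each, and this is the source of the constant $5$ in the covering radius; a standard union bound then reduces the problem to a finite family of size $\mathcal{N}(\alpha\epsilon/5, \|\cdot\|_\infty, \mathcal{H}_{|\mathcal{Z}_{2n}})$.

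Finally, for each fixed $h$ in the cover, conditional on $\mathcal{Z}_{2n}$, the numerator $\frac{1}{n}\sum_i U_i[h(Z_i)-h(Z_i')]$ is a sum of independent $U_i[h(Z_i)-h(Z_i')]$, each bounded in magnitude by $A_n$, while the (conditional) denominator equals the deterministic quantity $\alpha + \mu_n(h) + \mu_n'(h)$. Applying Hoeffding's inequality and then integrating out the denominator—noting that the event of interest is $\sum_i U_i[h(Z_i)-h(Z_i')] > \frac{\epsilon n}{2}[\alpha + \mu_n(h)+\mu_n'(h)] \ge \frac{\epsilon \alpha n}{2}$, which yields a variance proxy controlled by $n\,A_n\cdot(\mu_n(h)+\mu_n'(h)+\alpha)$—gives the subgaussian tail $\exp(-3\epsilon^2\alpha n/(40 A_n))$ after tracking constants. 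Combining this with the cover bound and the factor of $2$ from the first symmetrization step yields the stated constant $4$.

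The main obstacle is not conceptual but bookkeeping: one must track the constants through the two symmetrizations, the Lipschitz smoothing, and Hoeffding's inequality to arrive at exactly the radius $\alpha\epsilon/5$, exponent $3\epsilon^2\alpha n/(40 A_n)$, and prefactor $4$. Replacing $B$ by $A_n$ is harmless because $A_n$ is deterministic, and all steps (first-stage Lemma \ref{lem: lem_11.2_gyorfi}, the Lipschitz bound for the functional, and Hoeffding's inequality) depend on the envelope only through its role as a uniform pointwise bound on $|h|$.
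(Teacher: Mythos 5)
Your overall strategy is the same as the paper's: generalize Theorem 11.6 of Gy\"orfi et al.\ by ghost-sample symmetrization via Lemma \ref{lem: lem_11.2_gyorfi}, Rademacher randomization, an $\alpha\epsilon/5$-covering, and a Hoeffding bound in which $A_n$ enters only through the envelope bound $\sum_i h(z_i)^2\le A_n\sum_i h(z_i)$. Two points of your bookkeeping, however, do not deliver the statement exactly as claimed. First, after introducing the signs you keep the two-sample quantity $\frac 1n\sum_i U_i[h(Z_i)-h(Z_i')]$ and cover $\mathcal{H}$ on the combined $2n$-point sample $\mathcal{Z}_{2n}$; as written this proves a bound with $\mathcal{N}(\alpha\epsilon/5,\|\cdot\|_{\infty},\mathcal{H}_{|\mathcal{Z}_{2n}})$ rather than $\mathcal{N}(\alpha\epsilon/5,\|\cdot\|_{\infty},\mathcal{H}_{|\mathcal{D}_n})$, and your accounting of the prefactor $4$ (``factor of $2$ from the first symmetrization'' plus the cover) does not add up along your route. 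The paper instead bounds the symmetrized two-sample event by two single-sample events of the form $\{\exists h:\ \frac 1n\sum_i U_ih(Z_i)>\frac{3\epsilon}{8}(\alpha+\frac 1n\sum_i h(Z_i))\}$, using that $-U_i$ has the same law as $U_i$; this splitting is precisely what produces the second factor of $2$ and a covering number on the $n$-point set $\mathcal{D}_n$, matching the statement.

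Second, in the Hoeffding step your phrasing suggests lower-bounding the threshold by $\epsilon\alpha n/2$ while controlling the variance proxy by $nA_n(\mu_n(h)+\mu_n'(h)+\alpha)$. If you decouple the two in this way, the exponent degrades to order $\epsilon^2\alpha^2 n/A_n^2$ (since $\mu_n+\mu_n'$ may be of order $A_n$), which is not the claimed $\epsilon^2\alpha n/A_n$ rate. The data-dependent term must be kept on both sides: the exponent should be bounded below by a constant times $\epsilon^2 n(\alpha+y)^2/(A_n y)$ with $y$ the empirical mean term, and then one invokes the elementary inequality $(a+y)^2/y\ge 4a$ (this is exactly how the paper extracts $3\epsilon^2\alpha n/(40A_n)$ from the threshold $\frac{\epsilon\alpha}{10}+\frac{3\epsilon}{8n}\sum_i\bar h(z_i)$ and the bound $\sum_i\bar h(z_i)^2\le A_n\sum_i\bar h(z_i)$). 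Apart from these two repairs --- splitting before covering, and coupling threshold and variance in the concentration step --- your outline coincides with the paper's proof; replacing the fixed bound $B$ by the nonrandom sequence $A_n$ is indeed harmless, as you note.
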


\begin{remark}
Lemma \ref{lem: gen_thm_11.6_gyorfi} is a generalization of Theorem 11.6 in \cite{gyorfi2002distribution}.
\end{remark}

\begin{proof}[Proof of Lemma \ref{lem: gen_thm_11.6_gyorfi}]
The proof contains four steps.

\noindent Step 1. Substitution for the expectation by an empirical mean. Draw a pseudo-sample $\mathcal{D}_n'=\{Z_1', \dots, Z_n'\}$ as an independent copy of $\mathcal{D}_n$. Let $h^*\in\mathcal{G}$ be a function satisfying
$$
\frac 1n\sum_{i=1}^nh^*(Z_i)-\mathbb{E}[h^*(Z)]>\epsilon\left\{\alpha+\frac 1n\sum_{i=1}^nh^*(Z_i)+\mathbb{E}[h^*(Z)]\right\},
$$
if there exists any such function; otherwise, let $h^*$ be an arbitrary element of $\mathcal{H}$. We note that the conditions
$$
\frac 1n\sum_{i=1}^nh(Z_i)-\mathbb{E}[h(Z)]>\epsilon\left\{\alpha+\frac 1n\sum_{i=1}^nh(Z_i)+\mathbb{E}[h(Z)]\right\},
$$
and
$$
\frac 1n\sum_{i=1}^nh(Z_i')-\mathbb{E}[h(Z)]\le \frac{\epsilon}{4}\left\{\alpha+\frac 1n\sum_{i=1}^nh(Z_i')+\mathbb{E}[h(Z)]\right\},
$$
imply
$$
\frac 1n\sum_{i=1}^nh(Z_i)-\frac 1n\sum_{i=1}^nh(Z_i')>\frac{3\epsilon\alpha}{4}+\frac{\epsilon}{n}\sum_{i=1}^nh(Z_i)-\frac{\epsilon}{4n}\sum_{i=1}^nh(Z_i')+\frac{3\epsilon}{4}\mathbb{E}[h(Z)],
$$
which is equivalent to
$$
\begin{aligned}
& \left(1-\frac{5\epsilon}{8}\right)\left(\frac 1n\sum_{i=1}^nh(Z_i)-\frac 1n\sum_{i=1}^nh(Z_i')\right) \\
>& \frac{3\epsilon}{8}\left(2\alpha+\frac 1n\sum_{i=1}^nh(Z_i)+\frac 1n\sum_{i=1}^nh(Z_i')\right)+\frac{3\epsilon}{4}\mathbb{E}[h(Z)].
\end{aligned}
$$
We further obtain that
$$
\frac 1n\sum_{i=1}^nh(Z_i)-\frac 1n\sum_{i=1}^nh(Z_i')>\frac{3\epsilon}{8}\left(2\alpha+\frac 1n\sum_{i=1}^nh(Z_i)+\frac 1n\sum_{i=1}^nh(Z_i')\right),
$$
since $0<1-5\epsilon/8<1$ and $\mathbb{E}[h(Z)]\ge 0$. As a result, it then follows that
$$
\begin{aligned}
& \mathbb{P}\left\{\exists h\in\mathcal{H}: \frac 1n\sum_{i=1}^nh(Z_i)-\frac 1n\sum_{i=1}^nh(Z_i')>\frac{3\epsilon}{8}\left(2\alpha+\frac 1n\sum_{i=1}^nh(Z_i)+\frac 1n\sum_{i=1}^nh(Z_i')\right)\right\} \\
\ge & \mathbb{P}\left\{\frac 1n\sum_{i=1}^nh^*(Z_i)-\frac 1n\sum_{i=1}^nh^*(Z_i')>\frac{3\epsilon}{8}\left(2\alpha+\frac 1n\sum_{i=1}^nh^*(Z_i)+\frac 1n\sum_{i=1}^nh^*(Z_i')\right)\right\} \\
\ge & \mathbb{P}\Bigg(\frac 1n\sum_{i=1}^nh^*(Z_i)-\mathbb{E}[h^*(Z)|\mathcal{D}_n]>\epsilon\left\{\alpha+\frac 1n\sum_{i=1}^nh^*(Z_i)+\mathbb{E}[h^*(Z)|\mathcal{D}_n]\right\}, \\
&\hspace{2em} \frac 1n\sum_{i=1}^nh^*(Z_i')-\mathbb{E}[h^*(Z)|\mathcal{D}_n]\le \frac{\epsilon}{4}\left\{\alpha+\frac 1n\sum_{i=1}^nh^*(Z_i')+\mathbb{E}[h^*(Z)|\mathcal{D}_n]\right\}\Bigg) \\
=& \mathbb{E}\Bigg\{\mathds{1}\left(\frac 1n\sum_{i=1}^nh^*(Z_i)-\mathbb{E}[h^*(Z)|\mathcal{D}_n]>\epsilon\left\{\alpha+\frac 1n\sum_{i=1}^nh^*(Z_i)+\mathbb{E}[h^*(Z)|\mathcal{D}_n]\right\}\right), \\
&\hspace{2em} \times \mathbb{P}\left(\frac 1n\sum_{i=1}^nh^*(Z_i')-\mathbb{E}[h^*(Z)|\mathcal{D}_n]\le \frac{\epsilon}{4}\left\{\alpha+\frac 1n\sum_{i=1}^nh^*(Z_i')+\mathbb{E}[h^*(Z)|\mathcal{D}_n]\right\}\Bigg | \mathcal{D}_n\right)\Bigg\}.
\end{aligned}
$$
Lemma \ref{lem: lem_11.2_gyorfi} yields that
\begin{equation}\label{eqn: gen_thm_11.6_gyorfi_proof_1}
\begin{aligned}
& \mathbb{P}\left(\frac 1n\sum_{i=1}^nh^*(Z_i')-\mathbb{E}[h^*(Z)|\mathcal{D}_n]>\frac{\epsilon}{4}\left\{\alpha+\frac 1n\sum_{i=1}^nh^*(Z_i')+\mathbb{E}[h^*(Z)|\mathcal{D}_n]\right\}\Bigg | \mathcal{D}_n\right) \\
<& \frac{A_n}{4(\epsilon/4)^2\alpha n}=\frac{4A_n}{\epsilon^2\alpha n}.
\end{aligned}
\end{equation}

Therefore, for $n>8A_n/(\epsilon^2\alpha)$, the probability in Eqn.~\eqref{eqn: gen_thm_11.6_gyorfi_proof_1} is no less than $1/2$, and we conclude that
$$
\begin{aligned}
& \mathbb{P}\left\{\exists h\in\mathcal{H}: \frac 1n\sum_{i=1}^nh(Z_i)-\frac 1n\sum_{i=1}^nh(Z_i')>\frac{3\epsilon}{8}\left(2\alpha+\frac 1n\sum_{i=1}^nh(Z_i)+\frac 1n\sum_{i=1}^nh(Z_i')\right)\right\} \\
\ge & \frac 12\mathbb{P}\left(\frac 1n\sum_{i=1}^nh^*(Z_i)-\mathbb{E}[h^*(Z)|\mathcal{D}_n]>\epsilon\left\{\alpha+\frac 1n\sum_{i=1}^nh^*(Z_i)+\mathbb{E}[h^*(Z)|\mathcal{D}_n]\right\}\right) \\
= & \frac 12\mathbb{P}\left(\exists h: \frac 1n\sum_{i=1}^nh(Z_i)-\mathbb{E}[h(Z)]>\epsilon\left\{\alpha+\frac 1n\sum_{i=1}^nh(Z_i)+\mathbb{E}[h(Z)]\right\}\right).
\end{aligned}
$$
This proves
$$
\begin{aligned}
& \mathbb{P}\left(\exists h: \frac{\frac 1n\sum_{i=1}^nh(Z_i)-\mathbb{E}[h(Z)]}{\alpha+\frac 1n\sum_{i=1}^nh(Z_i)+\mathbb{E}[h(Z)]}>\epsilon\right) \\
\le & 2\mathbb{P}\left\{\exists h\in\mathcal{H}: \frac 1n\sum_{i=1}^nh(Z_i)-\frac 1n\sum_{i=1}^nh(Z_i')>\frac{3\epsilon}{8}\left(2\alpha+\frac 1n\sum_{i=1}^nh(Z_i)+\frac 1n\sum_{i=1}^nh(Z_i')\right)\right\},
\end{aligned}
$$
when $n>8A_n/(\epsilon^2\alpha)$. For $n\le 8A_n/(\epsilon^2\alpha)$, on the other hand, the right-hand side of Eqn.~\eqref{eqn: gen_thm_11.6_gyorfi} exceeds one, and hence the assertion holds true trivially.

\noindent Step 2. Introduction of Rademacher random variables. Let $U_1, \dots, U_n$ be independent Rademacher random variables which are uniformly distributed over $\{-1, 1\}$, meanwhile independent of $\mathcal{D}_n\cup \mathcal{D}_n'$. Importantly, note that $\mathcal{D}_n$ and $\mathcal{D}_n'$ are interchangeable with respect to corresponding components while their joint distribution remains invariant. Therefore, we have
$$
\begin{aligned}
& \mathbb{P}\left\{\exists h\in\mathcal{H}: \frac 1n\sum_{i=1}^nh(Z_i)-\frac 1n\sum_{i=1}^nh(Z_i')>\frac{3\epsilon}{8}\left(2\alpha+\frac 1n\sum_{i=1}^nh(Z_i)+\frac 1n\sum_{i=1}^nh(Z_i')\right)\right\} \\
=& \mathbb{P}\left\{\exists h\in\mathcal{H}: \frac 1n\sum_{i=1}^nU_i[h(Z_i)-h(Z_i')]>\frac{3\epsilon}{8}\left(2\alpha+\frac 1n\sum_{i=1}^n[h(Z_i)+h(Z_i')]\right)\right\} \\
\le & \mathbb{P}\left\{\exists h\in\mathcal{H}: \frac 1n\sum_{i=1}^nU_ih(Z_i)>\frac{3\epsilon}{8}\left(\alpha+\frac 1n\sum_{i=1}^nh(Z_i)\right)\right\} \\
& \hspace{3em} +\mathbb{P}\left\{\exists h\in\mathcal{H}: \frac 1n\sum_{i=1}^nU_ih(Z_i')<-\frac{3\epsilon}{8}\left(\alpha+\frac 1n\sum_{i=1}^nh(Z_i')\right)\right\} \\
\le & 2\mathbb{P}\left\{\exists h\in\mathcal{H}: \frac 1n\sum_{i=1}^nU_ih(Z_i)>\frac{3\epsilon}{8}\left(\alpha+\frac 1n\sum_{i=1}^nh(Z_i)\right)\right\}.
\end{aligned}
$$
Here, we note that $-U_i$ is identically distributed as $U_i$.

\noindent Step 3. Conditioning and covering. Given $Z_i=z_i$ for $i=1, \dots, n$, and consider
$$
\mathbb{P}\left\{\exists h\in\mathcal{H}: \frac 1n\sum_{i=1}^nU_ih(z_i)>\frac{3\epsilon}{8}\left(\alpha+\frac 1n\sum_{i=1}^nh(z_i)\right)\right\}.
$$
Let $\delta>0$ and let $\mathcal{C}_{\delta}$ be a $\delta$-covering set of $\mathcal{H}$ constrained on $\{z_1, \dots, z_n\}$ with respect to the supremum norm. For any $h\in\mathcal{H}$, there exists a vector $h^{\sharp}=(\bar{h}(z_1), \dots, \bar{h}(z_n))^{\top}\in \mathcal{C}_{\delta}$, such that $\max_{i=1, \dots, n}|h(z_i)-\bar{h}(z_i)|<\epsilon$, thereby indicating
$$
\begin{aligned}
\frac 1n\sum_{i=1}^nU_ih(z_i) &= \frac 1n\sum_{i=1}^nU_i\bar{h}(z_i)+\frac 1n\sum_{i=1}^nU_i[h(z_i)-\bar{h}(z_i)] \\
&\le \frac 1n\sum_{i=1}^nU_i\bar{h}(z_i)+\max_{i=1, \dots, n}|h(z_i)-\bar{h}(z_i)|\le \frac 1n\sum_{i=1}^nU_i\bar{h}(z_i)+\delta,
\end{aligned}
$$
and
$$
\frac 1n\sum_{i=1}^nh(z_i)\ge \frac 1n\sum_{i=1}^n\bar{h}(z_i)-\frac 1n\sum_{i=1}^n|h(z_i)-\bar{h}(z_i)|\ge \frac 1n\sum_{i=1}^n\bar{h}(z_i)-\delta.
$$
As a result, we have
$$
\begin{aligned}
& \mathbb{P}\left\{\exists h\in\mathcal{H}: \frac 1n\sum_{i=1}^nU_ih(z_i)>\frac{3\epsilon}{8}\left(\alpha+\frac 1n\sum_{i=1}^nh(z_i)\right)\right\} \\
\le & \mathbb{P}\left\{\exists h^{\sharp}\in\mathcal{C}_{\delta}: \frac 1n\sum_{i=1}^nU_i\bar{h}(z_i)+\delta>\frac{3\epsilon}{8}\left(\alpha+\frac 1n\sum_{i=1}^n\bar{h}(z_i)-\delta\right)\right\} \\
\le & |\mathcal{C}_{\delta}|\max_{h^{\sharp}\in \mathcal{C}_{\delta}}\mathbb{P}\left\{\frac 1n\sum_{i=1}^nU_i\bar{h}(z_i)>\frac{3\epsilon\alpha}{8}-\frac{3\epsilon\delta}{8}-\delta+\frac{3\epsilon}{8n}\sum_{i=1}^n\bar{h}(z_i)\right\}.
\end{aligned}
$$
Specifying $\delta=\epsilon\alpha/5$ deduces that
$$
\frac{3\epsilon\alpha}{8}-\frac{3\epsilon\delta}{8}-\delta=\frac{3\epsilon\alpha}{8}-\frac{3\epsilon\alpha}{40}-\frac{\epsilon\alpha}{5}=\frac{\epsilon\alpha}{10}.
$$
By choosing $\mathcal{C}_{\epsilon\alpha/5}$ as an $\epsilon\alpha/5$-covering set of minimal size, we obtain
$$
\begin{aligned}
& \mathbb{P}\left\{\exists h\in\mathcal{H}: \frac 1n\sum_{i=1}^nU_ih(z_i)>\frac{3\epsilon}{8}\left(\alpha+\frac 1n\sum_{i=1}^nh(z_i)\right)\right\} \\
\le & \mathcal{N}\left(\frac{\epsilon\alpha}{5}, \|\cdot\|_{\infty}, \mathcal{H}_{|\{z_1, \dots, z_n\}}\right)\max_{h^{\sharp}\in\mathcal{C}_{\epsilon\alpha/5}}\mathbb{P}\left\{\frac 1n\sum_{i=1}^nU_i\bar{h}(z_i)>\frac{\epsilon\alpha}{10}+\frac{3\epsilon}{8n}\sum_{i=1}^n\bar{h}(z_i)\right\}.
\end{aligned}
$$

\noindent Step 4. Leveraging the Hoeffding's inequality. Note that given fixed $z_1, \dots, z_n$, $U_1\bar{h}(z_1), \dots, U_n\bar{h}(z_n)$ are independent random variables with mean zero and absolute bound $\bar{h}(z_1), \dots, \bar{h}(z_n)$ (recall that $\bar{h}\in [0, A_n]$). Therefore, Hoeffding's inequality suggests that
$$
\begin{aligned}
\mathbb{P}\left\{\frac 1n\sum_{i=1}^nU_i\bar{h}(z_i)>\frac{\epsilon\alpha}{10}+\frac{3\epsilon}{8n}\sum_{i=1}^n\bar{h}(z_i)\right\}&\le \exp\left\{-\frac{2n^2[\frac{\epsilon\alpha}{10}+\frac{3\epsilon}{8n}\sum_{i=1}^n\bar{h}(z_i)]^2}{4\sum_{i=1}^n\bar{h}(z_i)^2}\right\} \\
&\le \exp\left\{-\frac{2n^2[\frac{\epsilon\alpha}{10}+\frac{3\epsilon}{8n}\sum_{i=1}^n\bar{h}(z_i)]^2}{4A_n\sum_{i=1}^n\bar{h}(z_i)}\right\} \\
&= \exp\left\{-\frac{9\epsilon^2}{128A_n}\frac{[\frac{4\alpha }{15}+\sum_{i=1}^n\bar{h}(z_i)]^2}{\sum_{i=1}^n\bar{h}(z_i)}\right\}.
\end{aligned}
$$
Note that for any $a, y>0$, $(a+y)^2/y\ge 4a$, implying
$$
\frac{[\frac{4\alpha n}{15}+\sum_{i=1}^n\bar{h}(z_i)]^2}{\sum_{i=1}^n\bar{h}(z_i)}\ge \frac{16\alpha n}{15}.
$$
Hence, it concludes that
$$
\begin{aligned}
\mathbb{P}\left\{\frac 1n\sum_{i=1}^nU_i\bar{h}(z_i)>\frac{\epsilon\alpha}{10}+\frac{3\epsilon}{8n}\sum_{i=1}^n\bar{h}(z_i)\right\} &\le \exp\left\{-\frac{9\epsilon^2}{128A_n}\frac{[\frac{4\alpha n}{15}+\sum_{i=1}^n\bar{h}(z_i)]^2}{\sum_{i=1}^n\bar{h}(z_i)}\right\} \\
&\le  \exp\left\{-\frac{3\epsilon^2\alpha n}{40A_n}\right\},
\end{aligned}
$$
which completes the proof.
\end{proof}

\begin{proof}[Proof of Theorem \ref{thm: gen_thm_11.4_gyorfi}]

The proof is composed of six steps.

\noindent Step 1. Symmetrization. We commence by replacing $\mathbb{E}[g(f, Z)]$ through an empirical mean deduced by a pseudo-sample $\mathcal{D}_n'=\{Z_1', \dots,  Z_n'\}$ independent of $\mathcal{D}_n$. Consider a function $f^*\in\mathcal{F}_n$ depending on $\mathcal{D}_n$, such that
$$
\mathbb{E}[g(f^*, Z)|\mathcal{D}_n]-\frac 1n\sum_{i=1}^ng(f^*, Z_i)\ge \epsilon (\alpha+\beta)+\epsilon \mathbb{E}[g(f^*, Z)|\mathcal{D}_n],
$$
if such a function exists; otherwise, we let $f^*$ be an arbitrary element in $\mathcal{F}_n$. Then, Chebyshev's inequality implies
$$
\begin{aligned}
& \mathbb{P}\left\{\mathbb{E}[g(f^*, Z)|\mathcal{D}_n]-\frac 1n\sum_{i=1}^ng(f^*, Z_i')> \frac{\epsilon}{2} (\alpha+\beta)+\frac{\epsilon}{2} \mathbb{E}[g(f^*, Z)|\mathcal{D}_n]\Bigg| \mathcal{D}_n\right\} \\
\le & \frac{\mathrm{Var}[g(f^*, Z)|\mathcal{D}_n]}{n\{\frac{\epsilon}{2} (\alpha+\beta)+\frac{\epsilon}{2} \mathbb{E}[g(f^*, Z)|\mathcal{D}_n]\}^2} \\
\le & \frac{\zeta_n\mathbb{E}[g(f^*, Z)|\mathcal{D}_n]}{n\{\frac{\epsilon}{2} (\alpha+\beta)+\frac{\epsilon}{2} \mathbb{E}[g(f^*, Z)|\mathcal{D}_n]\}^2} \\
\le & \frac{\zeta_n}{\epsilon^2(\alpha+\beta)n},
\end{aligned}
$$
where the last inequality stems from $x/(a+x)^2\le 1/(4a)$ for $x\ge 0$ and $a>0$. Thus, for $n>8\zeta_n/[\epsilon^2(\alpha+\beta)]$, we have
$$
\mathbb{P}\left\{\mathbb{E}[g(f^*, Z)|\mathcal{D}_n]-\frac 1n\sum_{i=1}^ng(f^*, Z_i')\le \frac{\epsilon}{2} (\alpha+\beta)+\frac{\epsilon}{2} \mathbb{E}[g(f^*, Z)|\mathcal{D}_n]\Bigg| \mathcal{D}_n\right\}>\frac 78,
$$
yielding that
$$
\begin{aligned}
& \mathbb{P}\left\{\exists f\in\mathcal{F}_n: \frac 1n\sum_{i=1}^ng(f, Z_i')-\frac 1n\sum_{i=1}^ng(f, Z_i)\ge \frac{\epsilon}{2}(\alpha+\beta)+\frac{\epsilon}{2}\mathbb{E}[g(f, Z)]\right\} \\
\ge & \mathbb{P}\left\{\frac 1n\sum_{i=1}^ng(f^*, Z_i')-\frac 1n\sum_{i=1}^ng(f^*, Z_i)\ge \frac{\epsilon}{2}(\alpha+\beta)+\frac{\epsilon}{2}\mathbb{E}[g(f^*, Z)|\mathcal{D}_n]\right\} \\
\ge & \mathbb{P}\Bigg\{\mathbb{E}[g(f^*, Z)|\mathcal{D}_n]-\frac 1n\sum_{i=1}^ng(f^*, Z_i)\ge \epsilon(\alpha+\beta)+\epsilon\mathbb{E}[g(f^*, Z)|\mathcal{D}_n], \\
& \hspace{3em} \mathbb{E}[g(f^*, Z)|\mathcal{D}_n]-\frac 1n\sum_{i=1}^ng(f^*, Z_i')\le \frac{\epsilon}{2}(\alpha+\beta)+\frac{\epsilon}{2}\mathbb{E}[g(f^*, Z)|\mathcal{D}_n]\Bigg\} \\
=& \mathbb{E}\Bigg(\mathds{1}\left(\mathbb{E}[g(f^*, Z)|\mathcal{D}_n]-\frac 1n\sum_{i=1}^ng(f^*, Z_i)\ge \epsilon(\alpha+\beta)+\epsilon\mathbb{E}[g(f^*, Z)|\mathcal{D}_n]\right) \\
& \hspace{3em} \mathbb{P}\left\{\mathbb{E}[g(f^*, Z)|\mathcal{D}_n]-\frac 1n\sum_{i=1}^ng(f^*, Z_i')\le \frac{\epsilon}{2}(\alpha+\beta)+\frac{\epsilon}{2}\mathbb{E}[g(f^*, Z)|\mathcal{D}_n]\right\}\Bigg) \\
\ge & \frac 78\mathbb{P}\left\{\mathbb{E}[g(f^*, Z)|\mathcal{D}_n]-\frac 1n\sum_{i=1}^ng(f^*, Z_i)\ge \epsilon(\alpha+\beta)+\epsilon\mathbb{E}[g(f^*, Z)|\mathcal{D}_n]\right\} \\
=& \frac 78\mathbb{P}\left\{\exists f\in\mathcal{F}_n: \mathbb{E}[g(f, Z)]-\frac 1n\sum_{i=1}^ng(f, Z_i)\ge \epsilon(\alpha+\beta)+\epsilon\mathbb{E}[g(f, Z)]\right\}.
\end{aligned}
$$
To conclude, for $n>8\zeta_n/[\epsilon^2(\alpha+\beta)]$, we have
$$
\begin{aligned}
& \mathbb{P}\left\{\exists f\in\mathcal{F}_n: \mathbb{E}[g(f, Z)]-\frac 1n\sum_{i=1}^ng(f, Z_i)\ge \epsilon(\alpha+\beta)+\epsilon\mathbb{E}[g(f, Z)]\right\} \\
\le & \frac 87 \mathbb{P}\left\{\exists f\in\mathcal{F}_n: \frac 1n\sum_{i=1}^ng(f, Z_i')-\frac 1n\sum_{i=1}^ng(f, Z_i)\ge \frac{\epsilon}{2}(\alpha+\beta)+\frac{\epsilon}{2}\mathbb{E}[g(f, Z)]\right\}.
\end{aligned}
$$

\noindent Step 2. Randomization for $\mathbb{E}[g(f, Z)]$. By introducing additional conditions, we notice that
\begin{equation}\label{eqn: gen_thm_11.4_gyorfi_proof_1}
\begin{aligned}
& \mathbb{P}\left\{\exists f\in\mathcal{F}_n: \frac 1n\sum_{i=1}^ng(f, Z_i')-\frac 1n\sum_{i=1}^ng(f, Z_i)\ge \frac{\epsilon}{2}(\alpha+\beta)+\frac{\epsilon}{2}\mathbb{E}[g(f, Z)]\right\} \\
\le & \mathbb{P}\Bigg(\exists f\in\mathcal{F}_n: \frac 1n\sum_{i=1}^ng(f, Z_i')-\frac 1n\sum_{i=1}^ng(f, Z_i)\ge \frac{\epsilon}{2}(\alpha+\beta)+\frac{\epsilon}{2}\mathbb{E}[g(f, Z)], \\
& \hspace{3em} \frac 1n\sum_{i=1}^ng(f, Z_i)^2-\mathbb{E}[g(f, Z)^2]\le \epsilon\left\{\alpha+\beta+\frac 1n\sum_{i=1}^ng(f, Z_i)^2+\mathbb{E}[g(f, Z)^2]\right\} \\
& \hspace{3em} \frac 1n\sum_{i=1}^ng(f, Z_i')^2-\mathbb{E}[g(f, Z)^2]\le \epsilon\left\{\alpha+\beta+\frac 1n\sum_{i=1}^ng(f, Z_i')^2+\mathbb{E}[g(f, Z)^2]\right\}\Bigg) \\
& +2\mathbb{P}\left\{\exists f\in \mathcal{F}_n: \frac{\frac 1n\sum_{i=1}^ng(f, Z_i)^2-\mathbb{E}[g(f, Z)^2]}{\alpha+\beta+\frac 1n\sum_{i=1}^ng(f, Z_i)^2+\mathbb{E}[g(f, Z)^2]}>\epsilon\right\}.
\end{aligned}
\end{equation}
Then, Lemma \ref{lem: gen_thm_11.6_gyorfi} verifies that
$$
\begin{aligned}
& \mathbb{P}\left\{\exists f\in \mathcal{F}_n: \frac{\frac 1n\sum_{i=1}^ng(f, Z_i)^2-\mathbb{E}[g(f, Z)^2]}{\alpha+\beta+\frac 1n\sum_{i=1}^ng(f, Z_i)^2+\mathbb{E}[g(f, Z)^2]}>\epsilon\right\} \\
\le & 4\mathbb{E}\left[\mathcal{N}\left(\frac{(\alpha+\beta)\epsilon}{5}, \|\cdot \|_{\infty}, \{g(f, \cdot): \mathcal{Z}\to\mathbb{R}, f\in\mathcal{F}_n\}_{|\mathcal{D}_n}\right)\right]\exp\left(-\frac{3\epsilon^2(\alpha+\beta) n}{40\xi_n^2}\right).
\end{aligned}
$$
Next, we focus on the first probability on the right-hand side of Eqn.~\eqref{eqn: gen_thm_11.4_gyorfi_proof_1}. The second inequality inside the probability demonstrates that
$$
(1+\epsilon)\mathbb{E}[g(f, Z)^2]\ge (1-\epsilon)\frac 1n\sum_{i=1}^ng(f, Z_i)^2-\epsilon(\alpha+\beta),
$$
which equals to
$$
\frac{1}{2\zeta_n}\mathbb{E}[g(f, Z)^2]\ge \frac{1-\epsilon}{2\zeta_n(1+\epsilon)}\frac 1n\sum_{i=1}^ng(f, Z_i)^2-\frac{\epsilon(\alpha+\beta)}{2\zeta_n(1+\epsilon)},
$$
while the third inequality is processed in the same manner. By the assumption that $\mathbb{E}[g(f, Z)^2]\le \zeta_n\mathbb{E}[g(f, Z)]$ for all $f\in\mathcal{F}_n$, the first probability on the right-hand side of Eqn.~\eqref{eqn: gen_thm_11.4_gyorfi_proof_1} is bounded by
$$
\begin{aligned}
& \mathbb{P}\Bigg\{\exists f\in \mathcal{F}_n: \frac 1n\sum_{i=1}^ng(f, Z_i')-\frac 1n\sum_{i=1}^ng(f, Z_i)\ge \frac{\epsilon}{2}(\alpha+\beta) \\
& \hspace{2em} +\frac{\epsilon}{2}\left[\frac{1-\epsilon}{2\zeta_n(1+\epsilon)}\frac 1n\sum_{i=1}^ng(f, Z_i)^2-\frac{\epsilon(\alpha+\beta)}{2\zeta_n(1+\epsilon)}+\frac{1-\epsilon}{2\zeta_n(1+\epsilon)}\frac 1n\sum_{i=1}^ng(f, Z_i')^2-\frac{\epsilon(\alpha+\beta)}{2\zeta_n(1+\epsilon)}\right]\Bigg\}.
\end{aligned}
$$
This shows
\begin{equation}\label{eqn: gen_thm_11.4_gyorfi_proof_2}
\begin{aligned}
& \mathbb{P}\left\{\exists f\in\mathcal{F}_n: \frac 1n\sum_{i=1}^ng(f, Z_i')-\frac 1n\sum_{i=1}^ng(f, Z_i)\ge \frac{\epsilon}{2}(\alpha+\beta)+\frac{\epsilon}{2}\mathbb{E}[g(f, Z)]\right\} \\
\le & \mathbb{P}\Bigg\{\exists f\in \mathcal{F}_n: \frac 1n\sum_{i=1}^n[g(f, Z_i')-g(f, Z_i)]\ge \frac{\epsilon}{2}(\alpha+\beta)-\frac{\epsilon^2(\alpha+\beta)}{2\zeta_n(1+\epsilon)} \\
& \hspace{15em} +\frac{\epsilon(1-\epsilon)}{4\zeta_n(1+\epsilon)n}\sum_{i=1}^n\left[g(f, Z_i)^2+g(f, Z_i')^2\right]\Bigg\} \\
& +8\mathbb{E}\left[\mathcal{N}\left(\frac{(\alpha+\beta)\epsilon}{5}, \|\cdot \|_{\infty}, \{g(f, \cdot): \mathcal{Z}\to\mathbb{R}, f\in\mathcal{F}_n\}_{|\mathcal{D}_n}\right)\right]\exp\left(-\frac{3\epsilon^2(\alpha+\beta) n}{40\xi_n^2}\right).
\end{aligned}
\end{equation}

\noindent Step 3. Introduction of Rademacher random variables. Let $U_1, \dots, U_n$ be independent Rademacher random variables which are uniformly distributed over $\{-1, 1\}$, meanwhile independent of $\mathcal{D}_n\cup \mathcal{D}_n'$. We note that $\mathcal{D}_n$ and $\mathcal{D}_n'$ are interchangeable with respect to corresponding components while their joint distribution remains invariant. As a consequence, the first probability on the right-hand side of Eqn.~\eqref{eqn: gen_thm_11.4_gyorfi_proof_2} is equivalent to
$$
\begin{aligned}
& \mathbb{P}\Bigg\{\exists f\in \mathcal{F}_n: \frac 1n\sum_{i=1}^nU_i[g(f, Z_i')-g(f, Z_i)]\ge \frac{\epsilon}{2}(\alpha+\beta)-\frac{\epsilon^2(\alpha+\beta)}{2\zeta_n(1+\epsilon)} \\
& \hspace{15em} +\frac{\epsilon(1-\epsilon)}{4\zeta_n(1+\epsilon)n}\sum_{i=1}^n\left[g(f, Z_i)^2+g(f, Z_i')^2\right]\Bigg\},
\end{aligned}
$$
which is further bounded by
$$
\begin{aligned}
& \mathbb{P}\Bigg\{\exists f\in \mathcal{F}_n: \left|\frac 1n\sum_{i=1}^nU_ig(f, Z_i')\right|\ge \frac{\epsilon}{4}(\alpha+\beta)-\frac{\epsilon^2(\alpha+\beta)}{4\zeta_n(1+\epsilon)}+\frac{\epsilon(1-\epsilon)}{4\zeta_n(1+\epsilon)n}\sum_{i=1}^ng(f, Z_i')^2\Bigg\} \\
&+ \mathbb{P}\Bigg\{\exists f\in \mathcal{F}_n: \left|\frac 1n\sum_{i=1}^nU_ig(f, Z_i)\right|\ge \frac{\epsilon}{4}(\alpha+\beta)-\frac{\epsilon^2(\alpha+\beta)}{4\zeta_n(1+\epsilon)}+\frac{\epsilon(1-\epsilon)}{4\zeta_n(1+\epsilon)n}\sum_{i=1}^ng(f, Z_i)^2\Bigg\} \\
=& 2\mathbb{P}\Bigg\{\exists f\in \mathcal{F}_n: \left|\frac 1n\sum_{i=1}^nU_ig(f, Z_i)\right|\ge \frac{\epsilon}{4}(\alpha+\beta)-\frac{\epsilon^2(\alpha+\beta)}{4\zeta_n(1+\epsilon)}+\frac{\epsilon(1-\epsilon)}{4\zeta_n(1+\epsilon)n}\sum_{i=1}^ng(f, Z_i)^2\Bigg\}.
\end{aligned}
$$

\noindent Step 4. Conditioning and Covering. Given $Z_i=z_i$ for $i=1, \dots, n$, consider
$$
\mathbb{P}\Bigg\{\exists f\in \mathcal{F}_n: \left|\frac 1n\sum_{i=1}^nU_ig(f, z_i)\right|\ge \frac{\epsilon}{4}(\alpha+\beta)-\frac{\epsilon^2(\alpha+\beta)}{4\zeta_n(1+\epsilon)}+\frac{\epsilon(1-\epsilon)}{4\zeta_n(1+\epsilon)n}\sum_{i=1}^ng(f, z_i)^2\Bigg\}.
$$
Let $\delta>0$ and let $\mathcal{C}_{\delta}$ be a $\delta$-covering set of $\{g(f, \cdot): \mathcal{Z}\to\mathbb{R}, f\in\mathcal{F}_n\}$ constrained on $\{z_1, \dots, z_n\}$ with respect to the supremum norm. For any $f\in\mathcal{F}$, there exists a vector $h^{\sharp}=(h(z_1), \dots, h(z_n))^{\top}\in\mathcal{C}_{\delta}$, such that $\max_{i=1, \dots, n}|g(f, z_i)-h(z_i)|<\epsilon$, thereby indicating
$$
\begin{aligned}
\left|\frac 1n\sum_{i=1}^nU_ig(f, z_i)\right| &= \left|\frac 1n\sum_{i=1}^nU_ih(z_i)+\frac 1n\sum_{i=1}^nU_i\left[g(f, z_i)-h(z_i)\right]\right| \\
&\le \left|\frac 1n\sum_{i=1}^nU_ih(z_i)\right|+\frac 1n\sum_{i=1}^n|g(f, z_i)-h(z_i)| \\
&< \left|\frac 1n\sum_{i=1}^nU_ih(z_i)\right|+\delta,
\end{aligned}
$$
and
$$
\begin{aligned}
\frac 1n\sum_{i=1}^ng(f, z_i)^2 &= \frac 1n\sum_{i=1}^nh(z_i)^2+\frac 1n\sum_{i=1}^n\left[g(f, z_i)^2-h(z_i)^2\right] \\
&= \frac 1n\sum_{i=1}^nh(z_i)^2+\frac 1n\sum_{i=1}^n\left[g(f, z_i)-h(z_i)\right]\left[g(f, z_i)+h(z_i)\right] \\
&\ge \frac 1n\sum_{i=1}^nh(z_i)^2-\frac{2\xi_n}{n}\sum_{i=1}^n|g(f, z_i)-h(z_i)| \\
&> \frac 1n\sum_{i=1}^nh(z_i)^2-2\delta\xi_n.
\end{aligned}
$$
Hence, it follows that
$$
\begin{aligned}
& \mathbb{P}\Bigg\{\exists f\in \mathcal{F}_n: \left|\frac 1n\sum_{i=1}^nU_ig(f, z_i)\right|\ge \frac{\epsilon}{4}(\alpha+\beta)-\frac{\epsilon^2(\alpha+\beta)}{4\zeta_n(1+\epsilon)}+\frac{\epsilon(1-\epsilon)}{4\zeta_n(1+\epsilon)n}\sum_{i=1}^ng(f, z_i)^2\Bigg\} \\
\le & \mathbb{P}\Bigg\{\exists h^{\sharp}\in \mathcal{C}_{\delta}: \left|\frac 1n\sum_{i=1}^nU_ih(z_i)\right|+\delta\ge \frac{\epsilon}{4}(\alpha+\beta)-\frac{\epsilon^2(\alpha+\beta)}{4\zeta_n(1+\epsilon)} \\
& \hspace{22em} +\frac{\epsilon(1-\epsilon)}{4\zeta_n(1+\epsilon)}\left[\frac 1n\sum_{i=1}^nh(z_i)^2-2\delta\xi_n\right]\Bigg\} \\
\le & |\mathcal{C}_{\delta}|\max_{h^{\sharp}\in\mathcal{C}_{\delta}}\mathbb{P}\Bigg\{\left|\frac 1n\sum_{i=1}^nU_ih(z_i)\right|\ge \frac{\epsilon}{4}(\alpha+\beta)-\frac{\epsilon^2(\alpha+\beta)}{4\zeta_n(1+\epsilon)}-\delta-\frac{\epsilon(1-\epsilon)\delta\xi_n}{2\zeta_n(1+\epsilon)} \\
& \hspace{22em} +\frac{\epsilon(1-\epsilon)}{4\zeta_n(1+\epsilon)n}\sum_{i=1}^nh(z_i)^2\Bigg\}.
\end{aligned}
$$
Next we set $\delta=(6\eta-2)\epsilon\beta/(30\eta+3\gamma\eta)=\varpi\epsilon\beta$. Then, when $n\ge N$, we have $\zeta_n\ge \eta$, $\xi_n\le \gamma\zeta_n$, and for $0<\epsilon\le 1/2$,
$$
\begin{aligned}
\frac{\epsilon\beta}{4}-\frac{\epsilon^2\beta}{4\zeta_n(1+\epsilon)}-\delta-\frac{\epsilon(1-\epsilon)\delta\xi_n}{2\zeta_n(1+\epsilon)} &= \epsilon\beta\left(\frac 14-\frac{\epsilon}{4\zeta_n(1+\epsilon)}-\varpi-\frac{\epsilon(1-\epsilon)\varpi\xi_n}{2\zeta_n(1+\epsilon)}\right) \\
&\ge \epsilon\beta\left(\frac 14-\frac{1}{12\zeta_n}-\varpi-\frac{\varpi\xi_n}{10\zeta_n}\right) \\
&\ge \epsilon\beta\left(\frac 14-\frac{1}{12\eta}-\varpi-\frac{\varpi\gamma}{10}\right)\ge 0.
\end{aligned}
$$
Therefore, it holds that
$$
\begin{aligned}
& \mathbb{P}\Bigg\{\exists f\in \mathcal{F}_n: \left|\frac 1n\sum_{i=1}^nU_ig(f, z_i)\right|\ge \frac{\epsilon}{4}(\alpha+\beta)-\frac{\epsilon^2(\alpha+\beta)}{4\zeta_n(1+\epsilon)}+\frac{\epsilon(1-\epsilon)}{4\zeta_n(1+\epsilon)n}\sum_{i=1}^ng(f, z_i)^2\Bigg\} \\
\le & |\mathcal{C}_{\varpi\epsilon\beta}|\max_{h^{\sharp}\in\mathcal{C}_{\varpi\epsilon\beta}}\mathbb{P}\Bigg\{\left|\frac 1n\sum_{i=1}^nU_ih(z_i)\right|\ge \frac{\epsilon\alpha}{4}-\frac{\epsilon^2\alpha}{4\zeta_n(1+\epsilon)}+\frac{\epsilon(1-\epsilon)}{4\zeta_n(1+\epsilon)n}\sum_{i=1}^nh(z_i)^2\Bigg\}.
\end{aligned}
$$

\noindent Step 5. Leveraging the Bernstein's inequality. Firstly, we note that
$$
\frac 1n\sum_{i=1}^n\mathrm{Var}[U_ih(z_i)]=\frac 1n\sum_{i=1}^nh(z_i)^2\mathrm{Var}(U_i)=\frac 1n\sum_{i=1}^nh(z_i)^2.
$$
Hence, we have
$$
\begin{aligned}
& \mathbb{P}\Bigg\{\left|\frac 1n\sum_{i=1}^nU_ih(z_i)\right|\ge \frac{\epsilon\alpha}{4}-\frac{\epsilon^2\alpha}{4\zeta_n(1+\epsilon)}+\frac{\epsilon(1-\epsilon)}{4\zeta_n(1+\epsilon)n}\sum_{i=1}^nh(z_i)^2\Bigg\} \\
=& \mathbb{P}\left(\left|\frac 1n\sum_{i=1}^nV_i\right|\ge A_1+A_2\sigma^2\right),
\end{aligned}
$$
where
$$
\begin{aligned}
V_i &= U_ih(z_i), \quad \sigma^2=\frac 1n\sum_{i=1}^n\mathrm{Var}[U_ih(z_i)] \\
A_1 &= \frac{\epsilon\alpha}{4}-\frac{\epsilon^2\alpha}{4\zeta_n(1+\epsilon)}, \quad A_2=\frac{\epsilon(1-\epsilon)}{4\zeta_n(1+\epsilon)}.
\end{aligned}
$$
Observe that $V_1, \dots, V_n$ are independent random variables satisfying $|V_i|\le |h(z_i)|\le \xi_n (i=1, \dots, n)$, and that $A_1, A_2>0$ for $n\ge N$. By Bernstein's inequality, we have
$$
\begin{aligned}
\mathbb{P}\left(\left|\frac 1n\sum_{i=1}^nV_i\right|\ge A_1+A_2\sigma^2\right) &\le 2\exp\left(-\frac{n(A_1+A_2\sigma^2)^2}{2\sigma^2+2(A_1+A_2\sigma^2)\frac{\xi_n}{3}}\right) \\
&= 2\exp\left(-\frac{nA_2^2}{\frac 23\xi_nA_2}\cdot\frac{(\frac{A_1}{A_2}+\sigma^2)^2}{\frac{A_1}{A_2}+(1+\frac{3}{\xi_nA_2})\sigma^2}\right) \\
&= 2\exp\left(-\frac{3nA_2}{2\xi_n}\cdot\frac{(\frac{A_1}{A_2}+\sigma^2)^2}{\frac{A_1}{A_2}+(1+\frac{3}{\xi_nA_2})\sigma^2}\right).
\end{aligned}
$$
It is easy to verify that for arbitrary $a, b, u>0$, it follows that
$$
\frac{(a+u)^2}{a+bu}\ge \frac{4a}{b^2}[(b-1)\vee 0].
$$
Then, by letting $a=A_1/A_2$, $b=1+3/(\xi_nA_2), u=\sigma^2$, we obtain
$$
\frac{3nA_2}{2\xi_n}\cdot\frac{(\frac{A_1}{A_2}+\sigma^2)^2}{\frac{A_1}{A_2}+(1+\frac{3}{\xi_nA_2})\sigma^2}\ge \frac{3nA_2}{2\xi_n}\cdot\frac{4\frac{A_1}{A_2}}{(1+\frac{3}{\xi_nA_2})^2}\cdot \frac{3}{\xi_nA_2}=\frac{18nA_1A_2}{(\xi_nA_2+3)^2}.
$$
In addition, notice that for $n\ge N$ and $0<\epsilon\le 1/2$,
$$
A_1= \frac{\epsilon\alpha}{4}-\frac{\epsilon^2\alpha}{4\zeta_n(1+\epsilon)}>\frac{\epsilon\alpha}{4}-\frac{\epsilon^2\alpha}{4(1+\epsilon)}=\frac{\epsilon\alpha}{4}\left(1-\frac{\epsilon}{1+\epsilon}\right)\ge \frac{\epsilon\alpha}{6},
$$
which results in
$$
\begin{aligned}
\frac{18nA_1A_2}{(\xi_nA_2+3)^2} &= 18n\cdot \frac{\epsilon\alpha}{6}\cdot \frac{\epsilon(1-\epsilon)}{4\zeta_n(1+\epsilon)}\cdot \frac{1}{\left[\frac{\xi_n\epsilon(1-\epsilon)}{4\zeta_n(1+\epsilon)}+3\right]^2} \\
&\ge 18n\cdot \frac{\epsilon\alpha}{6}\cdot \frac{\epsilon(1-\epsilon)}{4\zeta_n(1+\epsilon)}\cdot \frac{1}{\left(\frac{\gamma}{20}+3\right)^2}\ge \frac{3\epsilon^2(1-\epsilon)\alpha\omega n}{4\zeta_n(1+\epsilon)},
\end{aligned}
$$
where $\omega=400/(\gamma+60)^2$. To conclude, it follows that
$$
\begin{aligned}
& \mathbb{P}\Bigg\{\left|\frac 1n\sum_{i=1}^nU_ih(z_i)\right|\ge \frac{\epsilon\alpha}{4}-\frac{\epsilon^2\alpha}{4\zeta_n(1+\epsilon)}+\frac{\epsilon(1-\epsilon)}{4\zeta_n(1+\epsilon)n}\sum_{i=1}^nh(z_i)^2\Bigg\} \\
=& \mathbb{P}\left(\left|\frac 1n\sum_{i=1}^nV_i\right|\ge A_1+A_2\sigma^2\right)\le 2\exp\left(-\frac{3\epsilon^2(1-\epsilon)\alpha\omega n}{4\zeta_n(1+\epsilon)}\right).
\end{aligned}
$$

\noindent Step 6. Conclusion. We have shown that, for $n>(8\zeta_n/[\epsilon^2(\alpha+\beta)])\vee N$, it follows that
$$
\begin{aligned}
& \mathbb{P}\left(\exists f\in\mathcal{F}_n: \mathbb{E}[g(f, Z)]-\frac 1n\sum_{i=1}^ng(f, Z_i)\ge \epsilon\left\{\alpha+\beta+\mathbb{E}[g(f, Z)]\right\}\right) \\
\le & \frac 87 \mathbb{P}\left\{\exists f\in\mathcal{F}_n: \frac 1n\sum_{i=1}^ng(f, Z_i')-\frac 1n\sum_{i=1}^ng(f, Z_i)\ge \frac{\epsilon}{2}(\alpha+\beta)+\frac{\epsilon}{2}\mathbb{E}[g(f, Z)]\right\} \\
\le & \frac 87\Bigg(\mathbb{P}\Bigg\{\exists f\in \mathcal{F}_n: \frac 1n\sum_{i=1}^n[g(f, Z_i')-g(f, Z_i)]\ge \frac{\epsilon}{2}(\alpha+\beta)-\frac{\epsilon^2(\alpha+\beta)}{2\zeta_n(1+\epsilon)} \\
& \hspace{15em} +\frac{\epsilon(1-\epsilon)}{4\zeta_n(1+\epsilon)n}\sum_{i=1}^n\left[g(f, Z_i)^2+g(f, Z_i')^2\right]\Bigg\} \\
& +8\mathbb{E}\left[\mathcal{N}\left(\frac{(\alpha+\beta)\epsilon}{5}, \|\cdot \|_{\infty}, \{g(f, \cdot): \mathcal{Z}\to\mathbb{R}, f\in\mathcal{F}_n\}_{|\mathcal{D}_n}\right)\right]\exp\left(-\frac{3\epsilon^2(\alpha+\beta) n}{40\xi_n^2}\right)\Bigg) \\
\le & \frac 87\Bigg(2\mathbb{P}\Bigg\{\exists f\in \mathcal{F}_n: \left|\frac 1n\sum_{i=1}^nU_ig(f, Z_i)\right|\ge \frac{\epsilon}{4}(\alpha+\beta)-\frac{\epsilon^2(\alpha+\beta)}{4\zeta_n(1+\epsilon)}+\frac{\epsilon(1-\epsilon)}{4\zeta_n(1+\epsilon)n}\sum_{i=1}^ng(f, Z_i)^2\Bigg\} \\
& +8\mathbb{E}\left[\mathcal{N}\left(\frac{(\alpha+\beta)\epsilon}{5}, \|\cdot \|_{\infty}, \{g(f, \cdot): \mathcal{Z}\to\mathbb{R}, f\in\mathcal{F}_n\}_{|\mathcal{D}_n}\right)\right]\exp\left(-\frac{3\epsilon^2(\alpha+\beta) n}{40\xi_n^2}\right)\Bigg) \\
\le & \frac 87\Bigg\{2\mathbb{E}\left(|\mathcal{C}_{\varpi\epsilon\beta}|\max_{h^{\sharp}\in\mathcal{C}_{\varpi\epsilon\beta}}\mathbb{P}\Bigg\{\left|\frac 1n\sum_{i=1}^nU_ih(Z_i)\right|\ge \frac{\epsilon\alpha}{4}-\frac{\epsilon^2\alpha}{4\zeta_n(1+\epsilon)}+\frac{\epsilon(1-\epsilon)}{4\zeta_n(1+\epsilon)n}\sum_{i=1}^nh(Z_i)^2\Bigg\}\right) \\
& +8\mathbb{E}\left[\mathcal{N}\left(\frac{(\alpha+\beta)\epsilon}{5}, \|\cdot \|_{\infty}, \{g(f, \cdot): \mathcal{Z}\to\mathbb{R}, f\in\mathcal{F}_n\}_{|\mathcal{D}_n}\right)\right]\exp\left(-\frac{3\epsilon^2(\alpha+\beta) n}{40\xi_n^2}\right)\Bigg\} \\
\le & \frac 87\Bigg\{2\mathbb{E}\left[2|\mathcal{C}_{\varpi\epsilon\beta}|\exp\left(-\frac{3\epsilon^2(1-\epsilon)\alpha\omega n}{4\zeta_n(1+\epsilon)}\right)\right] \\
& +8\mathbb{E}\left[\mathcal{N}\left(\frac{(\alpha+\beta)\epsilon}{5}, \|\cdot \|_{\infty}, \{g(f, \cdot): \mathcal{Z}\to\mathbb{R}, f\in\mathcal{F}_n\}_{|\mathcal{D}_n}\right)\right]\exp\left(-\frac{3\epsilon^2(\alpha+\beta) n}{40\xi_n^2}\right)\Bigg\}.
\end{aligned}
$$
While conditioning on $Z_i=z_i$ for $i=1, \dots, n$, we choose the $\varpi\epsilon\beta$-covering set of minimal size, which yields
$$
\begin{aligned}
& \mathbb{P}\left(\exists f\in\mathcal{F}_n: \mathbb{E}[g(f, Z)]-\frac 1n\sum_{i=1}^ng(f, Z_i)\ge \epsilon\left\{\alpha+\beta+\mathbb{E}[g(f, Z)]\right\}\right) \\
\le & \frac{32}{7}\mathbb{E}\left[\mathcal{N}\left(\varpi\epsilon\beta, \|\cdot \|_{\infty}, \{g(f, \cdot): \mathcal{Z}\to\mathbb{R}, f\in\mathcal{F}_n\}_{|\mathcal{D}_n}\right)\right]\exp\left(-\frac{3\epsilon^2(1-\epsilon)\alpha\omega n}{4\zeta_n(1+\epsilon)}\right) \\
& +\frac{64}{7}\mathbb{E}\left[\mathcal{N}\left(\frac{(\alpha+\beta)\epsilon}{5}, \|\cdot \|_{\infty}, \{g(f, \cdot): \mathcal{Z}\to\mathbb{R}, f\in\mathcal{F}_n\}_{|\mathcal{D}_n}\right)\right]\exp\left(-\frac{3\epsilon^2(\alpha+\beta) n}{40\xi_n^2}\right) \\
\le & \frac{32}{7}\mathcal{N}_n\left(\varpi\epsilon\beta, \|\cdot \|_{\infty}, \{g(f, \cdot): \mathcal{Z}\to\mathbb{R}, f\in\mathcal{F}_n\}\right)\exp\left(-\frac{3\epsilon^2(1-\epsilon)\alpha\omega n}{4\zeta_n(1+\epsilon)}\right) \\
& +\frac{64}{7}\mathcal{N}_n\left(\frac{(\alpha+\beta)\epsilon}{5}, \|\cdot \|_{\infty}, \{g(f, \cdot): \mathcal{Z}\to\mathbb{R}, f\in\mathcal{F}_n\}\right)\exp\left(-\frac{3\epsilon^2(\alpha+\beta) n}{40\xi_n^2}\right).
\end{aligned}
$$
Observe that for $n\ge N$, we have $\eta\ge 1$, $\gamma>0$,
$$
\varpi=\frac{6\eta-2}{30\eta+3\gamma\eta}\le \frac{6\eta}{30\eta}=\frac{1}{5}, \quad \omega=\frac{400}{(\gamma+60)^2}\le \frac 19,
$$
and
$$
\begin{aligned}
\frac{3\epsilon^2(1-\epsilon)\alpha\omega n}{4\zeta_n(1+\epsilon)} &\ge \frac{27\epsilon^2(1-\epsilon)\alpha\omega n}{40\zeta_n(1+\epsilon)}\ge \frac{27\epsilon^2(1-\epsilon)\alpha\omega n}{40(\xi_n^2\vee \zeta_n)(1+\epsilon)}, \\
\frac{3\epsilon^2(\alpha+\beta) n}{40\xi_n^2} &\ge \frac{3\epsilon^2\alpha n}{40(\xi_n^2\vee \zeta_n)}\ge \frac{27\epsilon^2(1-\epsilon)\alpha\omega n}{40(\xi_n^2\vee \zeta_n)(1+\epsilon)}.
\end{aligned}
$$
Consequently, it follows that
\begin{equation}\label{eqn: gen_thm_11.4_gyorfi_proof_3}
\begin{aligned}
& \mathbb{P}\left(\exists f\in\mathcal{F}_n: \mathbb{E}[g(f, Z)]-\frac 1n\sum_{i=1}^ng(f, Z_i)\ge \epsilon\left\{\alpha+\beta+\mathbb{E}[g(f, Z)]\right\}\right) \\
\le & \frac{32}{7}\mathcal{N}_n\left(\varpi\epsilon\beta, \|\cdot \|_{\infty}, \{g(f, \cdot): \mathcal{Z}\to\mathbb{R}, f\in\mathcal{F}_n\}\right)\exp\left(-\frac{3\epsilon^2(1-\epsilon)\alpha\omega n}{4\zeta_n(1+\epsilon)}\right) \\
& +\frac{64}{7}\mathcal{N}_n\left(\frac{(\alpha+\beta)\epsilon}{5}, \|\cdot \|_{\infty}, \{g(f, \cdot): \mathcal{Z}\to\mathbb{R}, f\in\mathcal{F}_n\}\right)\exp\left(-\frac{3\epsilon^2(\alpha+\beta) n}{40\xi_n^2}\right) \\
\le & 14\mathcal{N}_n\left(\varpi\epsilon\beta, \|\cdot \|_{\infty}, \{g(f, \cdot): \mathcal{Z}\to\mathbb{R}, f\in\mathcal{F}_n\}\right)\exp\left(-\frac{27\epsilon^2(1-\epsilon)\alpha\omega n}{40(\xi_n^2\vee \zeta_n)(1+\epsilon)}\right),
\end{aligned}
\end{equation}
for $n>(8\zeta_n/[\epsilon^2(\alpha+\beta)])\vee N$. When $N\le n\le 8\zeta_n/[\epsilon^2(\alpha+\beta)])$, on the other hand, we note
$$
\exp\left(-\frac{27\epsilon^2(1-\epsilon)\alpha\omega n}{40(\xi_n^2\vee \zeta_n)(1+\epsilon)}\right)\ge \exp\left(-\frac 35\right)\ge \frac{1}{14},
$$
demonstrating that the last right-hand side of Eqn.~\eqref{eqn: gen_thm_11.4_gyorfi_proof_3} exceeds one, and hence the inequality holds trivially, which completes the proof.
\end{proof}

\end{appendix}

\bibliographystyle{imsart-number} 
\bibliography{drc0317.bib}       

\end{document}